\documentclass[11pt]{article}
\usepackage[margin=1in]{geometry}
\usepackage{amssymb,amsfonts,amsmath,amsthm,amscd,dsfont,mathrsfs,bbold,pifont}
\usepackage{blkarray}
\usepackage{graphicx,float,psfrag,epsfig,color}

\usepackage{subcaption}

\usepackage{algorithm}
\usepackage[noend]{algpseudocode}

\makeatletter
\def\BState{\State\hskip-\ALG@thistlm}
\makeatother

	\usepackage{microtype}
	\usepackage[pdftex,pagebackref=true,colorlinks]{hyperref}
	\hypersetup{linkcolor=[rgb]{.7,0,0}}
	\hypersetup{citecolor=[rgb]{0,.7,0}}
	\hypersetup{urlcolor=[rgb]{.7,0,.7}}

\footnotesep 14pt
\floatsep 27pt plus 2pt minus 4pt      
\textfloatsep 40pt plus 2pt minus 4pt
\intextsep 27pt plus 4pt minus 4pt


\newcommand\independent{\protect\mathpalette{\protect\independent}{\perp}} 
\def\independent#1#2{\mathrel{\rlap{$#1#2$}\mkern2mu{#1#2}}}

\newcommand{\F}{\mathcal{F}} 
\newcommand{\mR}{\mathbb{R}}

\newcommand{\pp}{\mathbb{P}}
\newcommand{\E}{\mathbb{E}}
\DeclareMathOperator{\Var}{Var}

\newcommand{\X}{\mathcal{X}}
\newcommand{\Y}{\mathcal{Y}}
\newcommand{\Z}{\mathcal{Z}}

\newcommand{\D}{\mathcal{D}}

\newcommand{\1}{\mathbb{1}}
\newcommand{\B}{\mathbb{B}}

\newcommand{\tX}{\tilde{X}}
\newcommand{\tY}{\tilde{Y}}
\newcommand{\tW}{\tilde{W}}

\newtheorem{theorem}{Theorem}

\newtheorem{lemma}{Lemma}
\newtheorem{corollary}{Corollary}
\newtheorem{definition}{Definition}
\newtheorem{remark}{Remark}
\newtheorem{example}{Example}


\hyphenation{op-tical net-works semi-conduc-tor}


\begin{document}



\title{Poly-time universality and limitations of deep learning}

\author{Emmanuel Abbe \\ EPFL
\and  Colin Sandon \\ MIT 
}

\date{}
\maketitle

\begin{abstract}
The goal of this paper is to characterize function distributions that deep learning can or cannot learn in poly-time. A universality result is proved for SGD-based deep learning and a non-universality result is proved for GD-based deep learning; this also gives a separation between SGD-based deep learning and statistical query algorithms:

(1) {\it Deep learning with SGD is efficiently universal.} Any function distribution that can be learned from samples in poly-time can also be learned by a poly-size neural net trained with SGD on a poly-time initialization with poly-steps, poly-rate and possibly poly-noise. 

 Therefore deep learning  provides a universal learning paradigm: it was known that the approximation and estimation errors could be controlled with poly-size neural nets, using ERM that is NP-hard; this new result shows that the optimization error can also be controlled with SGD in poly-time. 
The picture changes for GD with large enough batches:

(2) {\it Result (1) does not hold for GD:} Neural nets of poly-size trained with GD (full gradients or large enough batches) on any initialization with poly-steps, poly-range and at least poly-noise cannot learn any function distribution that has super-polynomial {\it cross-predictability,} where the cross-predictability gives a measure of ``average'' function correlation -- relations and distinctions to the statistical dimension are discussed. In particular, GD with these constraints can learn efficiently monomials of degree $k$ if and only if $k$ is constant. 

Thus (1) and (2) point to an interesting contrast: SGD is universal even with some poly-noise while full GD or SQ algorithms are not (e.g., parities). 
 This thus gives a separation between SGD-based deep learning and SQ algorithms. 
Finally, we complete these by showing that the cross-predictability also impedes SGD once larger amounts of noise are added on the initialization and gradients, or when sufficiently few weight are updating per time step (as in coordinate descent). 

\end{abstract}

\newpage
\tableofcontents

\newpage


\newpage
\section{Introduction}

\subsection{Context and this paper}

It is known that the class of neural networks (NNs) with polynomial network size can express any function that can be implemented in a given polynomial time \cite{parberry,sipser}, and that their sample complexity scales polynomially with the network size \cite{anthony}. Thus NNs have favorable approximation and estimation errors. The main challenge is with the optimization error, as there is no known efficient training algorithm for NNs with provable guarantees, in particular, it is NP-hard to implement the ERM rule  \cite{net_hard,daniely}. The success behind deep learning is to train {\it deep} NNs with  stochastic gradient descent or the like; this gives record performances\footnote{While deep learning operates in an overparametrized regime, and while SGD optimizes a highly non-convex objective function, the training by SGD gives astonishingly low generalization errors for  these types of signals.} 
 in image \cite{imagenet}, speech \cite{speech}, document recognitions \cite{document} and increasingly more applications \cite{deep_nature,deep_book}. This raises the question of whether SGD  complements neural networks to a universal learning paradigm \cite{shaishai}, i.e., capable of learning efficiently any efficiently learnable function distribution.

(i) This paper answers this question in the affirmative. It is shown that training poly-size neural nets with SGD in poly-steps allows one to learn any function distribution that is learnable by some algorithm running in poly-time with poly-many samples. This part is resolved using a specific non-random net initialization that is implemented in poly-time and not dependent on the function to be learned, and that allows to emulate any efficient learning algorithm under SGD training.  

(ii) We further show that this positive result is achieved with some robustness to noise: polynomial noise can be added to the gradients and weights can be of polynomial precision and the result still holds. Therefore, in a computational theoretic sense, deep learning gives a universal learning framework. 

(iii) This positive result is also put in contrast with the following one: the same universality result does not hold when using full gradient descent or large enough batches\footnote{Some of the negative results presented here appeared in a preliminary version of the paper \cite{our_first_arxiv}; a few changes are obtained in the current version, with in particular the dependency in the batch size for the negative  result on GD. This allows to show that as the GD queries become more random (smaller batches), the negative result breaks down.}, due to the existence of efficiently learnable function distribution having low cross-predictability (see definitions below). This also creates a separation between deep learning and statistical query (SQ) algorithms, which cannot afford such noise-robustness on function classes having high statistical (see more below). 

In a practical setting, there may be no obvious reason to use the SGD replacement to a general learning algorithm, but this universality result shows that negative results about deep learning cannot be obtained without further constraints.

To obtain negative results about GD, we show that GD cannot learn in poly-steps and with poly-noise certain function distributions that have a low cross-predictability (a measure of average function correlation defined in Section \ref{init}). This is similar to the type of negative results that SQ algorithms provide, except for the differences that our results apply to statistical noise, to a weaker  learning requirement that focuses on an average-case rather than worst-case guarantee on the function class, and to possibly non-statistical queries as in SGD (with an account given on the batch size dependencies). We refer to Section \ref{sq} for further discussions on SQ algorithms and statistical dimension, as well as to \cite{boix_mds} for further comparisons. Note that the dependency on the batch size is particularly important: with batch-size 1, we show that SGD is universal, and this breaks down as the batch size gets polynomial.  

Therefore, while SGD can be viewed as a surrogate to GD that is  computationally less expensive (but less effective in convex settings), SGD turns out to be universal while GD is not. Note the stochasticity of SGD has already been advocated in different contexts, such as stability, implicit regularization or to avoid bad critical points \cite{faster,rethink,bad_min,bad_min2}.

As mentioned earlier, the amount of noise under which SGD can still learn in our positive result is large enough to break down not only GD, but more generally SQ algorithms. For example, our positive result shows that SGD can learn efficiently parities with some poly-noise, while GD or SQ algorithms break down in such cases. Note that parities were also known to be hard as far back as Minsky and Papert for the perceptron \cite{perceptron}, and our positive result requires indeed more than a single hidden layer to succeed.

Thus deep nets trained with SGD can be more powerful for generalization than deep nets trained with GD or than SQ algorithms. 

To complement the story, we also obtain negative results about SGD under low cross-predictability if additional constraints are added on the number of weights that can be updated per time steps (as in coordinate descent), or when larger amounts of noise are added on the initialization and on the gradients. 

Informal results are discussed in Section \ref{init} and formal definitions and results are given in Section \ref{results}.

\subsection{Problem formulations and learning objectives}
We focus on Boolean functions to simplify the setting.
Since it is known that any Boolean function that can be computed in time $O(T(n))$ can also be expressed by a neural network of size $O(T(n)^2)$ \cite{parberry,sipser}, it is not meaningful to ask whether any such function $f_0$ can be learned with a poly-size NN and a descent algorithm that has degree of freedom on the initialization and knowledge of $f_0$; one can simply pre-set the net to express $f_0$. Two more meaningful questions that one can ask are:
\begin{enumerate}
    \item Can one learn a given function with an agnostic/random\footnote{A random initialization means  i.i.d.\  weights as discussed in Section \ref{init}.} initialization?
    \item Can one learn an unknown function from a class or distribution with some choice of the initialization?
\end{enumerate}
For the second question, one is not give a specific function $f_0$ but a class of functions, or more generally, a distribution on functions. 

We focus here mainly on question 2, which gives a more general framework than restricting the initialization to be random.  Moreover, in the case of symmetric function distributions, such as the parities discussed below, failure at 2 implies failure at 1. Namely, if we cannot learn a parity function for a random selection of the support $S$ (see definitions below), we cannot learn any given parity function on a typical support  $S_0$ with a random initialization of the net, because the latter is symmetrical. Nonetheless, question 1 may also be interesting for applications, as random (or random-like) initializations may be used in practice. We discuss in Section \ref{init} how we expect that our results and the notion of cross-predictability export to the setting of question 1.

We thus have the following setting:
\begin{itemize}
    \item Let $\D=\{+1,-1\}$ and $\X=\D^n$ be the data domain and let $\Y=\{+1,-1\}$ be the label domain. We work with binary vectors and binary labels for convenience (several of the results  extend beyond this setting with appropriate reformulation of  definitions).
    \item Let $P_\X$ be a probability distribution on the data domain $\X$ and $P_\F$ be a probability distribution on $\Y^\X$ (the set of functions from $\X$ to $\Y$). We also assume for convenience that these distributions lead to balanced classes, i.e., that $P(F(X)=1)=1/2+o_n(1)$ when $(X,F) \sim P_\X \times P_\F$ (non-balanced cases require adjustments of the definitions).
    \item Our goal is to learn a function $F$ drawn under $P_\F$ by observing labelled examples $(X,Y)$ with $X \sim P_\X$, $Y=F(X)$. 
    \item In order to learn $F$ we can train our algorithm on labelled examples with a descent algorithm starting with an initialization $f^{(0)}$ and running for a number of steps $T=T(n)$ (other parameters of the algorithm such as the learning rate are also specified). In the case of GD, each step accesses the full distribution of labelled examples, while for SGD, it only accesses a single labelled example per step (see definitions below). In all cases, after the training with $(f^{(0)},T)$, the algorithm produces an estimator $\hat{F}_{f^{(0)},T}$ of $F$. 
    In order to study negative results, we will set what is arguably the least demanding learning requirement: we say that `typical-weak learning' is solvable in $T$ time steps for the considered $(P_\X,P_\F)$, if a net with initialization $f^{(0)}$ can be constructed such that: 
    \begin{align}
       \text{{\it Typical-weak learning:}} \quad P(\hat{F}_{f^{(0)},S}(X)=F(X))=1/2+\Omega_n(1), \label{twl}
    \end{align}
    where the above probability is over $(X,F)\sim (P_\X\times P_\F)$ and any randomness potentially used by the algorithm. 
    In other words, after training the algorithm on some initialization, we can predict the label of a new fresh sample from $P_\X$ with accuracy strictly better than random guessing, and this takes place when the unknown function is drawn under $P_\F$. 
\end{itemize}
Failing at typical-weak learning implies failing at most other learning requirements. For example, failing at typical weak learning for a uniform distribution on a certain class of  functions implies failing at PAC learning that class of functions. 
However, for our positive results with SGD, we will not only show that one can typically weakly learn efficiently any function distribution that is typically weakly learnable, but that we can in fact reproduce whatever accuracy an algorithm can achieve for the considered distribution. To be complete we need to define accuracy and typical weak learning for more general algorithms: 

\begin{definition}
Let $n>0$, $P_{\X}$ be a probability distribution on $\X=\mathcal{D}^n$ for some set $\mathcal{D}$, and $P_\F$ be a probability distribution on the set of functions from $\X$ to $\{+1,-1\}$. Assume that these distributions lead to balanced classes, i.e., $P(F(X)=1)=1/2+o_n(1)$ when $(X,F) \sim P_\X \times P_\F$. 

Consider an algorithm $A$ that, given access to an oracle that uses $P_\X$ and $F \sim P_\F$ (e.g., samples under $P_\X$ labelled by $F$), outputs a function $\hat{F}$.
Then $A$ learns $(P_\F,P_{\X})$ with accuracy $\alpha$ if  $\pp\{\hat{F}(X)= F(X)\} \ge \alpha$, where the previous probability is taken over $(X,F) \sim P_\X \times P_\F$ and any randomness potentially used by $\hat{F}$. In particular, we say that $A$ (typically-weakly) learns $(P_\F,P_{\X})$ if it learns $(P_\F,P_{\X})$ with accuracy $1/2+\Omega_n(1)$.
\end{definition}

From now on we often shorten `typical-weak learning' to simply `learning'. We also talk about learning a `function distribution' or a `distribution' when referring to learning a pair $(P_\X,P_\F)$. 

{\bf Example.} The problem of learning parities corresponds normally to $P_\X$ being uniform on $\{+1,-1\}^n$ and $P_\F$ being uniform on the set of parity functions defined by $\mathcal{P}=\{ p_s:  s \subseteq [n] \}$, where $p_s: \{+1,-1\}^n  \to \{+1,-1\}$ is such that $p_s(x)=\prod_{i  \in s} x_i.$ 
So nature picks $S$ uniformly at random in $2^{[n]}$, and with knowledge of $\mathcal{P}$ but not $S$, the problem is to learn which set $S$ was picked from samples $(X,p_S(X))$.

\subsection{Informal results: Cross-predictability, junk-flow and universality}\label{init}
\begin{definition}
For a positive integer $m$, a probability measure $P_\X$ on the data domain $\X$, and a probability measure $P_{\F}$ on the class of functions $\F$ from $\X$ to $\Y=\{+1,-1\}$, we define the cross-predictability by 
\begin{align}
\mathrm{CP}_m(P_\X, P_{\F}) := \E_{(X^m,F,F')\sim P_\X^m \times P_\F \times P_\F} (\E_{X\sim P_{X^m}} F(X) F'(X))^2, \label{cpm}
\end{align}
where $X^m=(X_1,\dots,X_m)$ has i.i.d.\ components under $P_\X$, $F,F'$ are independent of $X^m$ and i.i.d.\ under $P_{\F}$, and $X$ is drawn independently of $(F,F')$ under the empirical measure of $X^m$, i.e., $P_{X^m}=\frac{1}{m}\sum_{i=1}^m \delta_{X_i}$.
\end{definition}
Note the following equivalent representations:
\begin{align}
\mathrm{CP}_m(P_\X, P_{\F}) = \frac{1}{m}+ \left(1-\frac{1}{m} \right)\mathrm{CP}_\infty(P_\X, P_{\F}),
\end{align}
where
\begin{align}
\mathrm{CP}_\infty(P_\X, P_{\F}) 
&:=\E_{F,F'\sim P_\F} (\E_{X\sim P_\X} F(X) F'(X))^2\\
&=\E_{X,X'\sim P_\X} (\E_{F\sim P_\F} F(X) F(X'))^2 \label{dual}\\
&= \| \E_F \mathcal{F}(F)^{\otimes 2} \|_2^2
\end{align}
and $\mathcal{F}(F)$ is the Fourier-Walsh transform of $F$ with respect to the measure $P_\X$.


This measures how predictable a sampled function is from another one on a typical data point, or equivalently, how predictable a sampled data label is from another one on a typical function. The data point is drawn either from the true distribution or the empirical one depending on whether $m$ is infinity or not, and $m$ will refer to the batch-size in the GD context (i.e., how many samples are used to compute gradients). 
Equivalently, this measures the typical correlation among functions.
For example, if $P_\X$ is a delta function, then $\mathrm{CP}_\infty$ achieves the largest possible value of 1, and for purely random input and purely random functions, $\mathrm{CP}_\infty$ is $2^{-n}$, the lowest possible value.

Our negative results primarily exploit a low cross-predictability (CP). 
We obtain the following lower bound on the generalization error\footnote{Here $\mathrm{gen}$ is $1$ minus the probability of guessing the right label, i.e., the complement of \eqref{twl}.} of the output of GD with noise $\sigma$ and batch-size $m$, 
\begin{align}
    \mathrm{gen} &\ge \frac{1}{2}-\frac{1}{\sigma} \cdot \mathrm{JF} \cdot \left(\frac{1}{m}+ \mathrm{CP}_\infty \right)^{1/4} \label{cocktail}
\end{align}
where JF is the junk flow, a quantity that does not depend on $F$ and $P_\F$ but that depends on the net initialization, and that consists of the accumulation of gradient norm when GD
is run on randomly labelled data (i.e., junk labels; see Definition \ref{}):
\begin{align}
\mathrm{JF}:=\sum_{i=1}^T \gamma_t \| \E_{X_i, Z_i} \nabla L_{W^{(i)}}(X_i,Z_i) \|_2 
\end{align}
In particular, no matter what the initialization is, JF and $1/\sigma$ are polynomial if the neural net, the GD hyper-parameters (including the range of derivatives) and the time steps are all polynomial.    
Thus, if the batch-size is super-polynomial (or a large enough polynomial) and the CP is inverse-super-polynomial (or a low enough polynomial), no matter what the net initialization and architecture are, we do not generalize. This implies that full gradient does not learn, but SGD may still learn as the right hand side of \eqref{cocktail} does no longer tend to $1/2$ when $m=1$. In fact, this is no coincidence as we next show that SGD is indeed universal.

Namely, for any distribution that can be learned by some algorithm in poly-time, with poly-many samples and with accuracy $\alpha$, there exists an initialization (which means a neural net architecture with an initial assignment of the weights) that is constructed in poly-time and agnostic to the function to be learned, such that training this neural net with SGD and possibly poly-noise learns this distribution in poly-steps with accuracy $\alpha-o(1)$. Again, this does not take place once SGD is replaced by full gradient descent (or with large enough poly batches), or once SQ algorithms are used.

{\bf Example.} For random degree-$k$ monomials and uniform inputs, $CP_\infty \asymp {n\choose k}^{-1}$. Thus, GD with the above constraints can learn random degree $k$ monomials if and only if $k=O(1)$. The same outcome takes place for SQ algorithms. Other examples dealing with connectivity of graphs and community detection are discussed in Section \ref{others}.

The main insight for the negative results is that all of the deep learning algorithms that we consider essentially take a neural net, attempt to compute how well the functions computed by the net and slightly perturbed versions of the net correlate with the target function, and adjust the net in the direction of higher correlation. If none of these functions have significant correlation with the target function, this will generally make little or no progress. More precisely, if the target function is randomly drawn from a class with negligible cross-predictability, and if one cannot operate with noiseless GD, then no function is significantly correlated with the target function with nonnegligible probability and a descent algorithm will generally fail to learn the function in a polynomial time horizon.

{\bf Failures for random initializations.}
Consider the function  $f_s(x)=\prod_{i \in s} x_i$ for a specific subset $s$ of $[n]$. One can use our negative result for  function {\it distributions} on any initialization, to obtain a negative result for that specific function $f_s$ on a {\it random} initialization.  For this, construct the `orbit' of $f_s$, $\{f_S: S \subseteq [n]\}$; put a measure on subsets $S$ such that $s$ belongs to the typical set for that measure, i.e., the i.i.d.\ Ber$(p)$ measure such that $np=|s|$. Then, if one cannot learn under this distribution with any initialization, one cannot learn a typical function such as $f_s$ with a random i.i.d.\ initialization due to the symmetry of the model.

We also conjecture that the cross-predictability measure can be used to understand when a given function $h$ cannot be learned in poly-time with GD/SGD on poly-size nets that are randomly initialized, without requiring the stronger negative result for all initializations and the argument of previous paragraph. 

Namely, define the cross-predictability between a target function and a random neural net as 
\begin{align}
\mathrm{Pred}( P_\X, h , \mu_{NN} ) = \E_{G} (\E_{X} h(X) \mathrm{eval}_{G,f}(X))^2, 
\end{align}
where $(G,f)$ is a random neural net under the distribution $\mu_{NN}$, i.e., $f$ is a fixed non-linearity, $G$ is a random graph that consists
of complete bipartite\footnote{One could consider other types of graphs but a certain amount of randomness has to be present in the model.} graphs between consecutive layers of a poly-size NN, with weights i.i.d.\ centered Gaussian of variance equal to one over the width of the previous layer, and $X \sim P_\X$ is independent of $G$. We then conjecture that if such a cross-predictability decays super-polynomially, training such a  random neural net with a polynomial number of steps of GD or SGD will fail at learning {\it even without noise or memory constraints}.
Again, as mentioned above, if the target function is permutation invariant, it cannot be learned with a random initialization and noisy GD with  small random noise. So the claim is that the random initialization gives already enough randomness in one step to cover all the added randomness from noisy GD.

\section{Results}\label{results}

\subsection{Definitions and models}
In this paper we will be using a fairly generic notion of neural nets, simply weighted directed acyclic graphs with a special set of vertices for the inputs, a special vertex for the output, and a non-linearity at the other vertices. The formal definition is as follows. 

\begin{definition}
A neural net is a pair of a function $f:\mathbb{R}\rightarrow \mathbb{R}$ and a weighted directed graph $G$ with some special vertices and the following properties. First of all, $G$ does not contain any cycle. Secondly, there exists $n>0$ such that $G$ has exactly $n+1$ vertices that have no edges ending at them, $v_0$, $v_1$,...,$v_n$. We will refer to $n$ as the input size, $v_0$ as the constant vertex and $v_1$, $v_2$,..., $v_n$ as the input vertices. Finally, there exists a vertex $v_{out}$ such that for any other vertex $v'$, there is a path from $v'$ to $v_{out}$ in $G$.  We also denote by $w(G)$ the weights on the edges of $G$. 
\end{definition}


\begin{definition}
Given a neural net $(f,G)$ with input size $n$, and $x\in\mathbb{R}^n$, the evaluation of $(f,G)$ at $x$, written as $eval_{(f,G)}(x)$ (or $eval_{(G)}(x)$ if $f$ is implicit), is the scalar computed by means of the following procedure:
(1) Define $y\in \mathbb{R}^{|G|}$ where $|G|$ is the number of vertices in $G$, set $y_{v_0}=1$, and set $y_{v_i}=x_i$ for each $i$; (2) Find an ordering $v'_1,...,v'_m$ of the vertices in $G$ other than the constant vertex and input vertices such that for all $j>i$, there is not an edge from $v'_j$ to $v'_i$; (3) For each $1\le i\le m$, set 
$y_{v'_i}=f\left(\sum_{v: (v,v'_i)\in E(G)} w_{v,v'_i} y_v\right)$; (4) Return $y_{v_{out}}$.
\end{definition}
We will also sometimes use a shortcut notation for the $\mathrm{eval}$ function; for a neural net $G$ with a set of weights $W$, we will sometimes use\footnote{There is an abuse of notation between $W(G)$ and $W(X)$ but the type of input in $W()$ makes the interpretation clear.} $W(x)$ for $\mathrm{eval}_G(x)$.

The trademark of deep learning is to do this by defining a loss function in terms of how much the network's outputs differ from the desired outputs, and then using a descent algorithm to try to adjust the weights based on some initialization. More formally, if our loss function is $L$, the function we are trying to learn is $h$, and our net is $(f,G)$, then the net's loss at a given input $x$ is $L(h(x)-eval_{(f,G)}(x))$ (or more generally $L(h(x),eval_{(f,G)}(x))$). Given a probability distribution for the function's inputs, we also define the net's expected loss as $\E[L(h(X)-eval_{(f,G)}(X))]$. 

We will focus in this paper on GD, SGD, and for one part on block-coordinate descent, i.e., updating not all the weights at once but only a subset based on some rule (e.g., steepest descent). 
We will also consider noisy versions of some of these algorithms. This would be the same as the noise-free version, except that in each time step, the algorithm independently draws a noise term for each edge from some probability distribution and adds it to that edge's weight. Adding noise is sometimes advocated to help avoiding getting stuck in local minima or regions where the derivatives are small \cite{perturbed_sgd}, however it can also drown out information provided by the gradient. 

\begin{remark}
As we have defined them, neural nets generally give outputs in $\mathbb{R}$ rather than $\{0,1\}$. As such, when talking about whether training a neural net by some method learns a given Boolean function, we will implicitly be assuming that the output of the net on the final input is thresholded at some predefined value or the like. None of our results depend on exactly how we deal with this part (one could have alternatively worked with the mutual information between the true label and the real-valued output of the net).
\end{remark}


We want to answer the question of whether or not training a neural net with these algorithms is a universal method of learning, in the sense that it can learn anything that is reasonably learnable. We next recall what this means. 

\begin{definition}\label{learn_sample} Let $n>0$, $\epsilon>0$, $P_{\X}$ be a probability distribution on $\{0,1\}^n$, and $P_\F$ be a probability distribution on the set of functions from $\{0,1\}^n$ to $\{0,1\}$. Also, let $X_0, X_1,...$ be independently drawn from $P_{\X}$ and $F\sim P_\F$. An algorithm learns $(P_\F,P_{\X})$ with accuracy $1/2+\epsilon$ in $T$ time steps if the algorithm is given the value of $(X_i, F(X_i))$ for each $i<T$ and, when given the value of $X_T \sim P_\X$ independent of $F$, it returns $Y_T$ such that $P[F(X_T)=Y_T]\ge 1/2+\epsilon$. \end{definition}
Algorithms such as SGD (or Gaussian elimination from samples) fit under this definition. For SGD, the algorithm starts with an initialization $W^{(0)}$ of the neural net weights, and updates it sequentially with each sample $(X_i,F(X_i))$ as 
$W^{(i)}=g(X_i,F(X_i),W^{(i-1)}):=W^{(i-1)}-\gamma \nabla L(\mathrm{eval}_{W^{(i-1)}}(X_i),F(X_i))$, $i \in [T-1]$. It then outputs $Y_T=\mathrm{eval}_{W^{(T-1)}}(X_T)$. 

For GD however, in the idealized case where the gradient is averaged over the entire sample set, or more formally, when one has access to the exact expected gradient under $P_\X$, we are not accessing samples as in the previous definition. We then talk about learning a distribution with an algorithm like GD under the following more general setup.

\begin{definition} Let $n>0$, $\epsilon>0$, $P_{\X}$ be a probability distribution on $\{0,1\}^n$, and $P_\F$ be a probability distribution on the set of functions from $\{0,1\}^n$ to $\{0,1\}$. An algorithm learns $(P_\F,P_{\X})$ with accuracy $1/2+\epsilon$, if 
given the value of $X \sim P_\X$ independent of $F$, 
 it returns $Y$ such that $P[F(X)=Y]\ge 1/2+\epsilon$. \end{definition}
 Obviously the algorithm must access  some information about the function $F$ to be learned.
In particular, GD proceeds successively with the following $(F,P_\X)$-dependent updates 
 $W^{(i)}=\E_{X\sim P_\X} g(X,F(X),W^{(i-1)})$ for $i \in [T-1]$ for the same function $g$ as in SGD. 

Recall also that we talk about  ``learning parities" in the case where $P_\F$ picks a parity function uniformly at random and $P_\X$ is uniform on $\{+1,-1\}^n$, as defined in Section \ref{model}.


\begin{definition}
For each $n>0$, let\footnote{Note that these are formally sequences of distributions.} $P_{\X}$ be a probability distribution on $\{0,1\}^n$, and $P_{\F}$ be a probability distribution on the set of functions from $\{0,1\}^n$ to $\{0,1\}$. We say that $(P_\F,P_{\X})$ is efficiently learnable if there exists $\epsilon>0$, $N>0$, and an algorithm with running time polynomial in $n$ such that for all $n\ge N$, the algorithm learns $(P_{\F},P_{\X})$ with accuracy $1/2+\epsilon$. In the setting of Definition \ref{learn_sample}, we further say that the algorithm takes a polynomial number of samples (or has polynomial sample complexity) if the algorithm learns $(P_\F,P_{\X})$ and $T$ is polynomial in $n$. Note that an algorithm that learns in poly-time using samples as in Definition \ref{learn_sample} must have a polynomial sample complexity as well as polynomial memory.
\end{definition}

\subsection{Positive  results}\label{positive}

We show that if SGD is initialized properly and run with enough resources, it is in fact possible to learn efficiently and with polynomial sample complexity any efficiently learnable distribution that has polynomial sample complexity.  

\begin{theorem}\label{thm_univ}
For each $n>0$, let $P_\X$ be a probability measure on $\{0,1\}^n$, and $P_{\F}$ be a probability measure on the set of functions from $\{0,1\}^n$ to $\{0,1\}$. Also, let $Ber(1/2)$ be the uniform distribution on $\{0,1\}$. Next, define $\alpha=\alpha_n$ such that there is some algorithm that takes a polynomial number of samples $(X_i,F(X_i))$ where the $X_i$ are i.i.d.\ under $P_{\X}$, runs in polynomial time, and learns $(P_\F,P_\X)$ with accuracy $\alpha$. Then there exists $\gamma=o(1)$, a polynomial-sized neural net $(G_n,\phi)$, and a polynomial $T_n$ such that using stochastic gradient descent with learning rate $\gamma$ to train $(G_n,\phi)$ on $T_n$ samples $((X_i,R_i,R'_i), F(X_i))$ where $(X_i,R_i,R'_i)\sim P_\X\times Ber(1/2)^2$ learns $(P_\F,P_\X)$ with accuracy $\alpha-o(1)$.
\end{theorem}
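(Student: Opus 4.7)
The plan is to reduce the universality of SGD-trained neural nets to the existence of an efficient learning algorithm by simulating that algorithm inside the net. Concretely, let $A$ be the hypothesized poly-time learner that achieves accuracy $\alpha$ from $S=\mathrm{poly}(n)$ samples. I would first convert $A$ into a polynomial-size Boolean circuit family $C$ that, given as input $S$ labeled examples and a test point $X_T$, outputs $A$'s prediction on $X_T$; this is standard since a poly-time algorithm admits a poly-size circuit, and such a circuit is in turn realized by a poly-size neural net with the chosen non-linearity $\phi$ using standard gadget constructions.

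The key construction is a neural net $G_n$ partitioned into two regions: a \emph{memory} region whose $S$ weight-slots are intended to store the $S$ labeled examples, and a \emph{simulation} region that is a frozen copy of the circuit net reading from memory. The random bits $R_i, R'_i$ play the role of an address: at step $i$ they are hashed (through a fixed sub-net) into an index $k_i\in\{1,\dots,S\}$, and the architecture is chosen so that the loss at step $i$ depends on the memory weights only through slot $k_i$, with a gradient that effectively deposits the current pair $(X_i,F(X_i))$ into that slot. This isolation is engineered by multiplying the memory sub-net's output by sharp indicator-like combinations of the random bits, built from the non-linearity $\phi$, so that $\partial L/\partial w\approx 0$ for weights outside slot $k_i$ and equals the desired write value inside it. The simulation region is placed downstream of the memory read and initialized so that its own weights carry essentially zero gradient (for instance by routing its outputs through a near-constant path at initialization), ensuring it remains unperturbed over $T$ steps.

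Next I would verify that after $T=\Theta(S\log S/\gamma)$ SGD steps, a coupon-collector argument gives that with probability $1-o(1)$ every slot has been addressed enough times, and the linear per-slot dynamics (with the chosen loss, the scalar updates concentrate around their means) yield an accurate real-valued encoding of a sample $(X_{j(k)},F(X_{j(k)}))$. Because the addressing depends only on the $R,R'$ bits, the $S$ stored examples are i.i.d.\ under $P_\X$, matching the input distribution that $A$ expects. At test time, with inputs $(X_T,R_T,R'_T)$, the net evaluates the frozen circuit on the stored samples and $X_T$, outputting a prediction distributionally identical to $A$'s up to the $o(1)$ encoding error, which gives the claimed $\alpha-o(1)$ accuracy.

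The main obstacle will be the gradient-isolation and non-interference property: guaranteeing, in a smooth differentiable architecture, that one SGD step writes into exactly one memory slot while leaving both the other slots and the (numerous) simulation weights essentially unmoved, and controlling the $o(1)$ leakage accumulated over $T$ steps so that it does not corrupt the simulation circuit. A secondary technical difficulty is tuning $\gamma=o(1)$ and $T$ together so that the effective per-slot update concentrates sharply enough to serve as a faithful write, while each addressed step still contributes a nontrivial signal—this is essentially a variance versus bias trade-off in the SGD dynamics of each isolated weight.
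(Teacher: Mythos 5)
Your high-level plan — emulate an efficient learning algorithm inside a neural net with a ``frozen'' computation region and a ``writable'' memory region, and use the flat parts of the activation function to prevent backpropagation from disturbing the circuit — is in the same spirit as the paper's construction. However, your specific mechanism misses the central technical obstacle that the paper's proof is built around, and uses the extra random bits for a different (and problematic) purpose.

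The key difficulty, which your proposal does not address, is that SGD produces \emph{zero gradient whenever the net's output equals the label}: since the loss $L(\mathrm{eval}(X)-Y)$ attains its minimum, every partial derivative vanishes, so no memory write can occur on that step. If your architecture tries to write $(X_i,F(X_i))$ into slot $k_i$ on every step, it will silently fail on exactly those steps where the current net's output happens to agree with $F(X_i)$, and the set of stored samples becomes biased toward samples the net ``misses.'' This breaks the claim that ``the $S$ stored examples are i.i.d.\ under $P_\X$.'' The paper resolves this by spending one of the two random bits ($R'_i$) to make the net output a \emph{uniformly random guess} during the learning phase: then a write happens precisely when the guess is wrong, an event of probability exactly $1/2$ \emph{independent of $(X_i,F(X_i))$}, so the distribution of learned-from samples matches $P_\X$. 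The other random bit ($R_i$) supplies the randomness the emulated algorithm itself may need. In your proposal both bits are consumed as a memory address, so neither of these two jobs is done, and the bias above is unrepaired.

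A secondary issue is the memory-write dynamics. You envisage a real-valued slot that is updated incrementally over $\Theta(S\log S/\gamma)$ coupon-collector rounds, with concentration around a mean. But if a slot is addressed multiple times by \emph{different} samples, a gradient-descent update accumulates contributions from all of them rather than overwriting, so the final stored value is a mixture, not an encoding of a single $(X_{j(k)},F(X_{j(k)}))$. The paper's memory gadget $M_s$ sidesteps this entirely: each bit is stored as the \emph{sign} of a designated edge weight, the learning rate is tuned so that a single wrong-output step multiplies that weight by exactly $-1$, and control vertices (driven by the frozen computation circuit, which sees a counter it maintains in memory) decide deterministically which bits to flip on each step. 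There is no addressing by randomness and no averaging; each write is a discrete one-shot sign flip gated by the computation circuit. You would need to redesign your memory component around this kind of one-shot, sign-flip semantics — and around the wrong-output-only write constraint — before the coupon-collector bookkeeping and the ``accurate real-valued encoding'' step could go through.
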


\begin{remark}
 One can construct in polynomial time in $n$ a neural net $(\phi,g)$ that has polynomial size in $n$ such that for a learning rate $\gamma$ that is at most polynomial in $n$ and an integer $T$ that is at most polynomial in $n$, $(\phi,g)$ trained by SGD with learning rate $\gamma$ and $T$ time steps learns parities with accuracy $1-o(1)$. In other words, random bits are not needed for parities, because  parities can be learned from a deterministic algorithms which can use only samples that are labelled 1 without producing bias. 
\end{remark} 

Further, previous result can be extended when sufficiently low amounts of inverse-polynomial noise are added to the weight of each edge in each time step. More formally, we have the following result.

\begin{theorem}\label{thm_univ2}
For each $n>0$, let $P_\X$ be a probability measure on $\{0,1\}^n$, and $P_{\F}$ be a probability measure on the set of functions from $\{0,1\}^n$ to $\{0,1\}$. Also, let $B_{1/2}$ be the uniform distribution on $\{0,1\}$, $t_n$ be polynomial in $n$, and $\delta\in [-1/n^2t_n,1/n^2t_n]^{t_n\times |E(G_n)|}$, $x^{(i)}\in\{0,1\}^{n}$. Next, define $\alpha_n$ such that there is some algorithm that takes $t_n$ samples $(x_i,F(x_i))$ where the $x_i$ are independently drawn from $P_{\X}$ and $F\sim P_{\F}$, runs in polynomial time, and learns $(P_\F,P_\X)$ with accuracy $\alpha$. Then there exists $\gamma=\Theta(1)$, and a polynomial-sized neural net $(G_n,f)$ such that using perturbed stochastic gradient descent with noise $\delta$, learning rate $\gamma$, and loss function $L(x)=x^2$ to train $(G_n,f)$ on $t_n$ samples\footnote{The samples are converted to take values in $\pm 1$ for consistency with other sections.} $((2x_i-1,2r_i-1), 2F(x_i)-1)$ where $(x_i,r_i)\sim P_\X\times B_{1/2}$ learns $(P_\F,P_\X)$ with accuracy $\alpha-o(1)$.
\end{theorem}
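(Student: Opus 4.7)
The plan is to reuse the universal SGD simulator underlying Theorem~\ref{thm_univ} and harden it against inverse-polynomial weight noise. I would construct a polynomial-size neural net $(G_n,f)$ whose weights split into two classes: \emph{structural} weights, fixed at initialization to implement arithmetic/logic gadgets that realize the purported efficient learning algorithm $A$ for $(P_\F,P_\X)$; and \emph{memory} weights, used by SGD to store the evolving state of $A$. With squared loss $L(x)=x^2$, the partial derivative with respect to each weight factors through the chain rule, and the architecture can be wired so that one SGD step at a constant rate $\gamma=\Theta(1)$ applies exactly one transition step of $A$ on the current example $(x_i,r_i,F(x_i))$, with the auxiliary bits $r_i$ supplying whatever internal randomness $A$ might require.

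\textbf{Robust gadget design.} The new difficulty relative to Theorem~\ref{thm_univ} is tolerating a per-edge, per-step perturbation of magnitude at most $1/(n^2 t_n)$. Two design principles control this. First, each binary memory cell is represented by a weight whose two legal values are separated by a $\Theta(1)$ margin, and the readout of every cell passes through a sufficiently steep (but bounded-derivative) smoothed threshold built from $f$; the total adversarial drift on any single edge over the whole run is bounded by $t_n\cdot 1/(n^2 t_n)=1/n^2$, which is dominated by the margin. Second, logic gates are built from parallel bundles of polynomially many redundant copies whose outputs are averaged, so that downstream nodes feel only the mean of the bundle's independent perturbations, which is $O(1/\mathrm{poly}(n))$ by concentration when the noise is stochastic or by direct bounds when it is adversarial. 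Feeding the argument through, the net's final real-valued output, after thresholding, agrees with the noise-free simulation with probability $1-o(1)$, giving accuracy $\alpha-o(1)$.

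\textbf{Controlling the gradient at constant rate.} The part I expect to be the main obstacle is pairing $\gamma=\Theta(1)$ with a noise budget that is only $\Theta(1/(n^2 t_n))$: a single spurious gradient coordinate of size $\Theta(1)$ on a structural weight would immediately break the simulation. The gadgets must therefore be engineered so that, throughout training, the loss gradient has essentially zero component along the structural coordinates and a prescribed, clean component along the memory coordinates. I would achieve this by (i) using pairs of symmetric neurons whose contributions to the structural-weight gradients cancel identically once $r_i$ is averaged out, leveraging $\E[2r_i-1]=0$; (ii) ensuring that at the prescribed operating values the smoothed threshold gadgets sit at saturation so that $f'$ is exponentially small away from the memory pathway; and (iii) choosing initialization so that the squared-loss derivative, when restricted to memory coordinates, reproduces exactly the increments used by $A$ up to an error smaller than any polynomial. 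Verifying these cancellations to an accuracy finer than $1/\mathrm{poly}(n)$, and then composing with the robustness analysis of the previous paragraph to show that noise accumulated over $t_n$ SGD steps never crosses the margin of any memory cell, is where the bulk of the technical work lies.

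\textbf{Conclusion.} Combining the hardened gadget construction, the memory-coordinate gradient analysis, and a union bound over the $t_n\cdot|E(G_n)|$ noise entries yields that the noisy-SGD trajectory tracks the noiseless simulation of $A$ with probability $1-o(1)$. Since $A$ learns $(P_\F,P_\X)$ with accuracy $\alpha$, the trained net achieves accuracy $\alpha-o(1)$, establishing Theorem~\ref{thm_univ2}.
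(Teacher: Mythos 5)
Your overall scaffolding---a structural region wired to saturated (derivative-zero) activations plus a set of memory weights that SGD is engineered to move---is the right picture and matches the paper's high-level plan. The gap is in the central design decision: you propose that ``one SGD step at a constant rate applies exactly one transition step of $A$,'' i.e.\ the memory cells are \emph{overwritten} from step to step, as in the paper's noiseless construction (Theorem~\ref{thm_univ}). The paper's proof of Theorem~\ref{thm_univ2} abandons that design precisely because it is not noise-tolerant. The obstruction is not the direct drift you bound by $t_n\cdot 1/(n^2 t_n)=1/n^2$; that bound controls only the accumulated noise terms themselves. What it misses is the feedback: a memory flip in $M_s$ is implemented by multiplying the weights along a path by exactly $-1$, and that factor is exact only when the net's output is exactly $\pm 1$. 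Once noise perturbs some weights, the output is off by a quantity that depends on all previously accumulated weight errors, so the next flip multiplies by $-1+O(\text{error})$, and the weight error itself grows. This ``weight error $\to$ output error $\to$ worse flip $\to$ larger weight error'' recurrence can compound multiplicatively over $t_n$ rewrites, and the margin argument does not cap it. The paper's fix is structural: each memory cell is written \emph{at most once}, using $O(n\,t_n)$ copies of a redesigned unit $M'$ indexed by $(\text{timestep},\text{input bit},\text{bit value})$, so the net simply memorizes every raw sample and evaluates $A$'s output as one circuit at the end. There is no rewrite to compound.

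Two further points your proposal does not handle. First, with the squared loss and a target of $\pm 1$, the gradient vanishes on steps where the net is right; in the noiseless case this is resolved with a random guess bit, but with noise the output is never exactly $\pm 1$, so the gradient is never exactly zero and the ``learned from with probability $1/2$'' argument breaks. The paper instead forces the output to sit at $1/2$ during the learning phase so the gradient has a fixed, nonzero, sample-independent magnitude and every intended memory write succeeds cleanly (this is why $M'$ has three states, not two). Second, your cancellation item (i) relies on $\E[2r_i-1]=0$; SGD uses one sample at a time, not the expectation over $r_i$, so a gradient component that cancels in expectation still moves a structural weight by $\Theta(\gamma)=\Theta(1)$ on individual steps---far larger than the $1/n^2$ budget. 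The mechanism that actually protects structural edges is your item (ii), but it needs to be exact, not merely ``exponentially small'': the paper replaces the activation with $f^\star$ that is identically $0$ on a neighborhood of $0$ and identically $\pm 2$ past $\pm 3/2$, so the derivative along any structural or idle-memory path is exactly $0$ in every step regardless of the inverse-polynomial perturbation. Without those three changes (write-once memory, forced output $1/2$, and a clipped-to-zero activation region), the hardening you sketch does not close the argument.
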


While the learning algorithm used does not put a bound on how high the edge weights can get during the learning process, we can do this in such a way that there is a constant that the weights will never exceed. Furthermore, instead of emulating an algorithm chosen for a specific distribution, we could, for any $c>0$, emulate a metaalgorithm that learns any distribution that is learnable by an algorithm working with an upper bound $n^c$ on the number of samples and the time needed per sample. Thus we could have an initialization of the net that is polynomial and agnostic to the specific distribution $(P_\F,P_\X)$ (and not only the actual function drawn from $P_\F$) as long as this one is learnable with the above $n^c$ constraints, and SGD run in poly-time with poly-many samples and possibly inverse-poly noise will succeed in learning. This is further explained in Remark \ref{kolmogorov}.

\subsection{Negative results}
We saw that training neural nets with SGD and polynomial parameters is universal in that it can learn any efficiently learnable distribution. We now show that this universality is broken once full gradient descent is used, or once larger noise on the initialization and gradients are used, or once fewer weights are updated as in coordinate descent. For this purpose, we look for function distributions that are efficiently learnable by some algorithm but not by the considered deep learning algorithms.

\subsubsection{GD with noise}\label{proof2}

\begin{definition}[Noisy GD with batches]
For each $n > 0$, take a neural net of size $|E(n)|$, with any differentiable\footnote{One merely needs to have gradients well-defined.} non-linearity and any initialization of the weights $W^{(0)}$, and train it with gradient descent with learning rate $\gamma_t$, any differentiable loss function, gradients computed at each step from $m$ fresh samples from the  distribution $P_\X$ with labels from $F$, a derivative range\footnote{We call the range or the overflow range of a function to be $A$ if any value of the function potentially exceeding $A$ (or $-A$) is rounded at $A$ (or $-A$).} of $A$, additive Gaussian noise of variance $\sigma^2$, and $T$ steps, i.e.,        
\begin{align}
W^{(t)}= W^{(t-1)} - \gamma_t  \E_{X \sim P_{S^{(t)}}} \left[\nabla L(W^{(t-1)}(X),F(X)) \right]_A + Z^{(t)}, \quad t=1,\dots, T,
\end{align}
where $\{Z^{(t)}\}_{t \in [T]}$ are i.i.d.\ $\mathcal{N}(0,\sigma^2)$ (independent of other random variables) and $\{S^{(t)}\}_{t \in [T]}$ are i.i.d.\ where $S^{(t)}=(X_1^{(t)},\dots, X_m^{(t)})$ has i.i.d.\ components under $P_\X$. \end{definition}

\begin{definition}[Junk Flow] Using the notation in previous definition, define the junk flow of an initialization $W^{(0)}$ with data distribution $P_\X$, $T$ steps and learning rate  $\{\gamma_t\}_{t \in [T]}$ by 
\begin{align}
\mathrm{JF}=\mathrm{JF}(W^{(0)}, P_\X, T, \{\gamma_t\}_{t \in [T]}):=
 \sum_{t=1}^T  \gamma_{t} \| \E_{X,Y} [\nabla L(W_{\star}^{(t-1)}(X),Y)]_A  \|_2.
\end{align}
where $(X,Y)\sim P_\X \times U_\Y$, $W_\star^{(0)}=W^{(0)}$ and $W_\star^{(t)}= W_\star^{(t-1)} - \gamma_t  \E_{X,Y} \left[\nabla L(W_\star^{(t-1)}(X),Y) \right]_A + Z^{(t)}$, $t\in [T]$. That is, the junk flow is the power series over all time steps of the norm of the expected gradient when running noisy GD on junk samples, i.e., $(X,Y)$ where $X$ is a random input under $P_\X$ and $Y$ is a (junk) label that is  independent of $X$ and uniform. 
\end{definition}

\begin{theorem}\label{thm2'}
Let $P_{\X}$ with $\X=\mathcal{D}^n$ for some finite set $\mathcal{D}$ and $P_{\F}$ such that the output distribution is balanced,\footnote{Non-balanced cases can be handled by modifying definitions appropriately.} i.e., $\pp\{F(X)=0\}=\pp\{F(X)=1\}+o_n(1)$ when $(X,F) \sim P_{\X} \times P_{\F}$. Recall the definitions of cross-predictability $\mathrm{CP}_m=CP(m,P_{\X}, P_{\F} )$ and junk-flow $\mathrm{JF}_T=\mathrm{JF}(W^{(0)}, P_\X, S, \{\gamma_t\}_{t \in [T]})$. 
Then,
\begin{align}
\pp\{ W^{(T)}(X)=F(X)\}&\le 1/2+\frac{1}{\sigma} \cdot  \mathrm{JF}_T \cdot  \mathrm{CP}_m^{1/4}\\
&\le 1/2+\frac{1}{\sigma} \cdot  \mathrm{JF}_T \cdot  (1/m+\mathrm{CP}_\infty)^{1/4}
\end{align}
\end{theorem}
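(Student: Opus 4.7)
The plan is to reduce the prediction accuracy to a distance between the law of $W^{(T)}$ (the real trajectory) and that of $W^{(T)}_\star$ (the junk trajectory, which is independent of $F$), and then bound this distance via an information budget accumulated over $T$ steps.

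\textbf{Reduction to TV distance via coupling.} First I would couple the two processes using the same Gaussian noise sequence $\{Z^{(t)}\}$. Because $W^{(T)}_\star$ depends only on $W^{(0)}$, the inputs and the noise — not on $F$ — the balance assumption $\pp\{F(X)=1\}=1/2+o_n(1)$ gives $\pp\{W^{(T)}_\star(X)=F(X)\}=1/2+o_n(1)$. A standard bounded-test-function argument then yields
\[
\pp\{W^{(T)}(X)=F(X)\}-\tfrac12 \;\le\; \E_F\,\mathrm{TV}\!\big(\mathrm{Law}(W^{(T)}\mid F),\,\mathrm{Law}(W^{(T)}_\star)\big) + o_n(1).
\]
Pinsker's inequality followed by Jensen's inequality in $F$ reduces this to bounding $\E_F\,\mathrm{KL}(\mu_F\|\mu_\star)$.

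\textbf{KL chain rule and per-step Gaussian transitions.} Along the coupled trajectories I would expand $\mathrm{KL}$ via the chain rule. At each step $t$, the real transition kernel (given the past weights $w$) is a mixture of Gaussians over the random batch $S^{(t)}\sim P_\X^m$; convexity of KL in its first argument bounds the mixture KL by the batch-averaged KL of shifted Gaussians. Two Gaussians $\mathcal{N}(\mu_i,\sigma^2 I)$ satisfy $\mathrm{KL}=\|\mu_1-\mu_2\|^2/(2\sigma^2)$, giving
\[
\E_F\,\mathrm{KL}(\mu_F\|\mu_\star)\;\le\;\sum_{t=1}^T\frac{\gamma_t^2}{2\sigma^2}\,\E_F\,\E_{\mathrm{traj},S}\,\big\|\nabla_{S,F}(W^{(t-1)})-\bar\nabla(W^{(t-1)})\big\|_2^2.
\]

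\textbf{Cross-predictability controls the per-step gradient mismatch.} For labels in $\{\pm1\}$, decompose the per-sample (clipped) gradient as $u(X,Y)=v(X)+Y\,\delta(X)$ (symmetric/antisymmetric parts); the junk drift is $\E_X v(X)$. The gradient mismatch splits into an $F$-independent batch-fluctuation of size $O(1/m)$ and an $F$-dependent signal $\tfrac{1}{m}\sum_i F(X_i)\delta(X_i)$. Expanding the squared norm of the signal and averaging over $F$ brings in the kernel $\rho(X,X')=\E_F F(X)F(X')$, whose $L^2(P_\X^2)$ norm is exactly $\mathrm{CP}_\infty^{1/2}$. The diagonal $i=j$ contributes $\tfrac{1}{m}$, while Cauchy--Schwarz on the $i\neq j$ block extracts $\mathrm{CP}_\infty^{1/2}$, yielding
\[
\E_F\E_S \Big\|\tfrac{1}{m}\sum_i F(X_i)\delta(X_i)\Big\|_2^2 \;\le\; \Big(\tfrac{1}{m}+\mathrm{CP}_\infty^{1/2}\Big)\,\E_X\|\delta(X)\|_2^2 \;=\; O\big(\mathrm{CP}_m^{1/2}\big)\cdot \E_X\|\delta(X)\|_2^2.
\]
Summing over $t$ and applying Cauchy--Schwarz across the time index, $\sum_t\gamma_t^2 \|\bar\nabla(W^{(t-1)}_\star)\|_2^2 \le \bigl(\sum_t\gamma_t\|\bar\nabla(W^{(t-1)}_\star)\|_2\bigr)^2=\mathrm{JF}_T^2$ (valid since the summands are nonnegative), gives $\E_F\mathrm{KL}\lesssim \mathrm{CP}_m^{1/2}\mathrm{JF}_T^2/\sigma^2$ and hence, after Pinsker, the stated bound.

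\textbf{Main obstacle.} The delicate step is relating the per-sample $L^2$ mass $\E_X\|\delta(X)\|_2^2$ (and the analogous batch-variance contribution) to the squared norm $\|\bar\nabla\|_2^2$ of the \emph{expected} clipped junk gradient that enters $\mathrm{JF}_T$: a priori these are incomparable, since the expected-gradient norm can be much smaller than the per-sample second moment. Exploiting the per-coordinate clipping range $A$ together with the structure of the loss (the antisymmetric part $\delta$ shares a common factor with the symmetric part $v$) should provide this link up to constants, but the bookkeeping is delicate. A related subtlety is that $\mathrm{JF}_T$ is evaluated along the junk trajectory $W^{(t-1)}_\star$, whereas the KL chain rule produces expectations along the real trajectory $W^{(t-1)}$; induction on $t$ using that a small cumulative TV keeps the two trajectories close, or a stopping-time coupling, should reconcile these.
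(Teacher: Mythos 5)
Your overall strategy matches the paper's almost exactly: couple the real trajectory against a junk trajectory driven by unlabeled data, telescope the total-variation distance over time steps, bound the per-step TV between shifted Gaussians via Pinsker/the Gaussian KL formula, and then use the cross-predictability to control the $F$-averaged squared drift mismatch. The one genuinely different step is how you bring in $\mathrm{CP}$: you decompose the per-sample clipped gradient into symmetric and antisymmetric parts $u(X,Y)=v(X)+Y\,\delta(X)$, expand the squared norm of $\tfrac1m\sum_i F(X_i)\delta(X_i)$, pull out the kernel $\rho(X,X')=\E_F F(X)F(X')$ with $\|\rho\|_{L^2(P_\X^2)}=\mathrm{CP}_\infty^{1/2}$, and split the diagonal ($1/m$) from the off-diagonal. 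The paper instead writes the $F$-averaged squared deviation as $\langle g_e^{\otimes 2},\E_F B_F^{\otimes 2}\rangle$ (a ``replica'' lifting) and then applies Cauchy--Schwarz directly to land on $\mathrm{CP}_m^{1/2}$. These are equivalent calculations; your route is more explicit about which sample pairs contribute and, as a small bonus, produces the sharper diagonal term $1/m$ rather than $1/m^{1/2}$. A minor difference worth noting: the paper's junk drift is taken over the \emph{same} mini-batch $S_m^{(t)}$ as the real drift, so the symmetric part $v$ cancels exactly and there is no separate ``batch-fluctuation'' term to track; you compare instead against the fully averaged junk drift $\E_X v(X)$, which forces you to absorb the $v$-fluctuation into the $O(1/m)$ budget. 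Both ways are fine since $1/m\le \mathrm{CP}_m^{1/2}$.

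Your ``Main obstacle'' paragraph is the most valuable part of the proposal, because it correctly puts its finger on a real issue that the paper does \emph{not} resolve. The bound that both arguments actually derive at the TV step is governed by a \emph{second moment}, $(\E_{X,Y}\|[\nabla L]_A\|_2^2)^{1/2}$, whereas $\mathrm{JF}_T$ is defined through the \emph{norm of the expectation}, $\|\E_{X,Y}[\nabla L]_A\|_2$. The paper closes this gap by asserting $\E_{X,Y}\|G\|_2^2=\|\E_{X,Y}G\|_2^2$, which is false in general (take any $G$ with $\E G=0$ but positive variance). So the theorem as literally stated does not follow from its own proof either; it would with $\mathrm{JF}_T$ redefined using the RMS of the clipped per-sample gradient, or one can simply skip to Corollary~\ref{looseb}, where the uniform pointwise bound $\|[\nabla L]_A\|_2\le A\sqrt{|E|}$ dominates both the RMS and the norm-of-mean, so all downstream conclusions (including Theorem~\ref{thm2} and the monomial corollary) go through unchanged. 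You should therefore not expect the speculative fix you sketch (relating $\E_X\|\delta\|_2^2$ to $\|\bar\nabla\|_2^2$ via the clipping range and the shared structure of $v$ and $\delta$) to succeed as stated: the quantities really are incomparable, and the correct resolution is to either change the definition of $\mathrm{JF}_T$ or content oneself with the $A\sqrt{|E|}$ bound. Your secondary concern — that $\mathrm{JF}_T$ is evaluated along the junk trajectory while the chain rule produces expectations along the real one — is not an issue here, because the paper (and your coupling) compares the two trajectories against each other rather than to an intrinsic quantity on the real path; everything is conditioned on and evaluated at $W_\star^{(t-1)}$.
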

\begin{corollary}\label{looseb}
If the derivatives of the gradient have an overflow range of $A$ and if the learning rate is constant at $\gamma$, then 
  $$\mathrm{JF}_T \le \gamma T \sqrt{|E|} A, $$
  and a deep learning system as in previous theorem with $M:=\max(\gamma ,\frac{1}{\sigma}, A , |E|, T)$ polynomial in $n$ cannot learn under $(P_\X,P_{\F})$ if $\mathrm{CP}_m$ decays super-polynomially in $n$ (or more precisely if $\mathrm{CP}_m^{-1/4}$ is a larger polynomial than $M$).
\end{corollary}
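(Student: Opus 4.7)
The plan is to derive the corollary as a near-immediate consequence of Theorem \ref{thm2'}; all the real work has already been done there and only two short bookkeeping steps remain.

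First I would bound the junk flow. Each summand in $\mathrm{JF}_T$ has the form $\gamma_t \|\E_{X,Y}[\nabla L(W_{\star}^{(t-1)}(X),Y)]_A\|_2$. Because the gradient vector is indexed by the $|E|$ edges of the network and each of its coordinates has been clipped to $[-A,A]$ by the overflow operator, the $\ell_2$-norm of any realization of $[\nabla L]_A$ is at most $\sqrt{|E|}\,A$. Taking an expectation over $(X,Y)$ does not increase the norm (by Jensen's inequality applied to $\|\cdot\|_2$, or equivalently by the triangle inequality in $L^2$), so the same bound holds for the expected clipped gradient. With a constant learning rate $\gamma$ summed over the $T$ steps, this yields $\mathrm{JF}_T \le \gamma T \sqrt{|E|}\,A$, proving the first claim.

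Second, I would plug this bound into the first inequality of Theorem \ref{thm2'} to get
\begin{align}
\pp\{W^{(T)}(X)=F(X)\} \;\le\; \tfrac{1}{2} + \tfrac{\gamma T \sqrt{|E|}\,A}{\sigma}\cdot \mathrm{CP}_m^{1/4}. \nonumber
\end{align}
With $M := \max(\gamma, 1/\sigma, A, |E|, T)$, the prefactor $\gamma T \sqrt{|E|}\,A/\sigma$ is bounded by a fixed polynomial in $M$. Consequently, whenever $M$ is polynomial in $n$ and $\mathrm{CP}_m^{-1/4}$ grows strictly faster than this polynomial --- in particular whenever $\mathrm{CP}_m$ decays super-polynomially --- the right-hand side is $\tfrac{1}{2} + o_n(1)$. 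Since typical-weak learning as defined in \eqref{twl} requires an $\Omega_n(1)$ advantage over $1/2$, this contradicts the learning requirement and delivers the impossibility claim.

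The main obstacle here is not really the corollary itself but the underlying Theorem \ref{thm2'}; granted that result, all one must verify is the trivial coordinatewise $\ell_\infty$-to-$\ell_2$ conversion and the observation that a polynomial denominator dominates a polynomial numerator of lower degree asymptotically. Both are routine, so no genuinely new tool is introduced at this step.
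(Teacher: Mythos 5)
Your proof is correct and follows the same route as the paper: bound the expected clipped gradient norm at each step by $\sqrt{|E|}\,A$, plug into Theorem \ref{thm2'}, and observe that a polynomial prefactor is dominated by a super-polynomially decaying $\mathrm{CP}_m^{1/4}$. The paper states the gradient-norm bound as ``trivially bounded by $AE^{1/2}$'' and otherwise matches your argument, so you have simply supplied the routine $\ell_\infty$-to-$\ell_2$ and Jensen steps that the paper leaves implicit.
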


\begin{corollary}
A deep learning system as in previous theorem with $\max(\gamma ,\frac{1}{\sigma}, A , |E|, T)$ polynomial in $n$ can learn a random degree-$k$ monomial with full GD if and only if $k=O(1)$. 
\end{corollary}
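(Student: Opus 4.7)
The plan is to treat the two directions separately, using Corollary \ref{looseb} for the negative half and an explicit one-hidden-layer readout net for the positive half. Take $P_\X$ uniform on $\{+1,-1\}^n$ and $P_\F$ uniform on the degree-$k$ parity functions $\chi_S(x) = \prod_{i\in S} x_i$ with $|S|=k$. Fourier orthogonality on the hypercube gives $\E_X[\chi_S(X)\chi_{S'}(X)] = \1[S=S']$, hence
\begin{align}
\mathrm{CP}_\infty(P_\X,P_\F) \;=\; \E_{S,S'}\bigl[(\E_X \chi_S(X)\chi_{S'}(X))^2\bigr] \;=\; \pp[S=S'] \;=\; \binom{n}{k}^{-1}.
\end{align}

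For the ``only if'' direction, we may assume $k\le n/2$ by the symmetry $\binom{n}{k}=\binom{n}{n-k}$ (a degree-$(n-k)$ monomial equals a degree-$k$ monomial times the global parity $\chi_{[n]}$, so the two learning problems are equivalent up to a known sign). Whenever $k$ is not $O(1)$, $\binom{n}{k}$ grows faster than every polynomial, so $\mathrm{CP}_\infty^{-1/4}$ is super-polynomial. Since full GD corresponds to $m=\infty$ and thus $\mathrm{CP}_m = \mathrm{CP}_\infty$, Corollary \ref{looseb} rules out learning by any initialization and architecture satisfying the stated polynomial parameters.

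For the ``if'' direction, with $k=O(1)$ there are only $\binom{n}{k}=O(n^k)$ candidate monomials and a single GD step suffices. Use a one-hidden-layer net in which, for each $T\subseteq[n]$ with $|T|=k$, a fixed constant-size smooth gadget computes $\chi_T(x)$ exactly on $\{+1,-1\}^n$; the output is the trainable combination $W(x)=\sum_{|T|=k} w_T\chi_T(x)$, with all $w_T$ initialized to zero and trained by full GD with squared loss $L(u,y)=(u-y)^2$. For target $F=\chi_S$, orthogonality gives $\partial_{w_T}\E_X[L(W(X),F(X))]\bigr|_{w=0} = -2\,\1[T=S]$, so one GD step at rate $\gamma$ sets $w_S=2\gamma$ while every other $w_T$ remains $0$. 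With additive Gaussian noise of variance $\sigma^2$ per coordinate per step across $T$ steps, each $w_T$ acquires an independent $\mathcal{N}(0,T\sigma^2)$ perturbation; Parseval then yields
\begin{align}
\frac{\E_X[W(X)\chi_S(X)]}{\sqrt{\E_X[W(X)^2]}} \;=\; \frac{2\gamma}{\sqrt{4\gamma^2 + \binom{n}{k}T\sigma^2}},
\end{align}
which is $\Omega_n(1)$ once $\gamma$ exceeds $\sqrt{\binom{n}{k}T}/\sigma$, itself a polynomial in $n$ for constant $k$. Thresholding $W$ at $0$ then yields accuracy $1/2+\Omega_n(1)$.

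The main obstacle is the positive direction: one must furnish a differentiable subnet respecting the overflow range $A$ (any smooth interpolant of $k$-bit parity on $\{+1,-1\}^k$ works for fixed $k$), and verify that the Gaussian noise accumulated across $\binom{n}{k}$ readout coordinates does not drown the lone informative coordinate $w_S$. Both are handled because $k$ is constant, keeping the number of hidden units and per-gadget size polynomial, while the signal $2\gamma$ can be made dominant by polynomial scaling of $\gamma$. The negative direction is then plug-and-play once $\mathrm{CP}_\infty = \binom{n}{k}^{-1}$ is in hand.
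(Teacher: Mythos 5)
Your overall plan is the one the paper intends: the negative half is a plug-in of Corollary \ref{looseb} via $\mathrm{CP}_\infty=\binom{n}{k}^{-1}$, and the positive half is an explicit construction (the paper does not prove it, merely citing \cite{etienne} and sketching the ``input all $\binom{n}{k}$ monomials, use a cosine-type nonlinearity'' construction in Section \ref{others}). Your negative direction is clean and matches the paper; your handling of the $k\leftrightarrow n-k$ symmetry is a good catch, and it is needed to make the ``only if'' literally correct (the paper tacitly assumes $k\le n/2$, since $\binom{n}{n-1}=n$ is polynomial even though $n-1\neq O(1)$).

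Your positive direction, however, has a couple of genuine loose ends. First, a sign/operation slip: from $2\gamma/\sqrt{4\gamma^2+\binom{n}{k}T\sigma^2}=\Omega(1)$ you need $\gamma\gtrsim\sqrt{\binom{n}{k}T}\cdot\sigma$, not $\gamma\gtrsim\sqrt{\binom{n}{k}T}/\sigma$; the latter happens to still be achievable with polynomial $\gamma$ so the conclusion survives, but the displayed condition is wrong. Second, the claim that after $T$ noisy GD steps each $w_T$ carries an independent $\mathcal{N}(0,T\sigma^2)$ perturbation ignores the feedback: with squared loss and orthogonal features the per-coordinate update is $w_T\mapsto(1-2\gamma)w_T+2\gamma\1[T=S]+Z_T$, an AR(1) process, so the accumulated noise is not simply $T\sigma^2$ unless $\gamma$ is small or you explicitly stop after $T=1$ step (which would be the cleanest choice and would also sidestep the overflow-range considerations). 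Third, and most substantively, you assert that the degree-$k$ parity ``gadget'' subnets survive GD unchanged; this is precisely the nontrivial part that \cite{etienne} handles (and that the paper's own emulation machinery in Section \ref{universality} handles via flat activation regions). You need to argue either that the gadget vertices sit in zero-derivative regions of the nonlinearity, or that the induced drift on their weights over $T$ steps is negligible relative to the decision margin; as written this is asserted rather than proven. Finally, the jump from ``correlation $\Omega(1)$'' to ``thresholded accuracy $1/2+\Omega(1)$'' deserves a one-line anti-concentration argument (Paley--Zygmund on $\sum_{T\neq S}w_T\chi_T(X)\chi_S(X)$), since correlation alone does not generically give a sign-prediction advantage.
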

The positive statement in the previous corollary uses the fact that it is easy to learn random degree-$k$ parities with neural nets and GD when $k$ is finite, see for example \cite{etienne} for a specific implementation.  

\begin{remark} We now argue that in the results above, all constraints are qualitatively needed. Namely, the requirement that the cross predictability is low is necessary because otherwise we could use an easily learnable function. Without bounds on $|E|$ we could build a net with sections designed for every possible value of $F$, and without a bound on $T$ we might be able to simply let the net change haphazardly until it stumbles upon a configuration similar to the target function. If we were allowed to set an arbitrarily large value of $\gamma$ we could use that to offset the small size of the function's effect on the gradient, and if there was no noise we could initialize parts of the net in local maxima so that whatever changes GD caused early on would get amplified over time. Without a bound on $A$ we could design the net so that some edge weights had very large impacts on the net's behavior in order to functionally increase the value of $\gamma$.
\end{remark}

In the following, we apply our proof technique from Theorem \ref{thm2'} to the specific case of parities, with a tighter bound obtained that results in the term $CP^{1/2}$ rather than $CP^{1/4}$. The following follows from this tighter version. 
\begin{theorem}\label{thm2}
For each $n > 0$, let $(f,g)$ be a neural net of polynomial size in $n$. Run gradient descent on $(f,g)$ with less than $2^{n/10}$ time steps, a learning rate of at most $2^{n/10}$, Gaussian noise with variance at least $2^{-n/10}$ and overflow range of at most $2^{n/10}$. For all sufficiently large $n$, this algorithm fails at learning parities with accuracy $1/2 + 2^{-n/10}$.
\end{theorem}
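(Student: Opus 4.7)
The plan is to obtain Theorem \ref{thm2} by combining a parity-specialized refinement of Theorem \ref{thm2'} with the junk-flow bound from Corollary \ref{looseb} and a direct computation of the cross-predictability of parities.

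First I would re-examine the proof of Theorem \ref{thm2'} and identify the step where a Cauchy-Schwarz inequality is used to pass from the expected correlation between the net's output and the target function to the $\mathrm{CP}^{1/4}$ term (this square root of a square root is what produces the $1/4$ exponent in the generic bound). In the parity case one may replace that generic step with a direct Parseval/Fourier-Walsh computation: parities $\{p_S\}_{S\subseteq[n]}$ form an orthonormal basis of $L^2(\{\pm 1\}^n, \mathrm{unif})$, so for any fixed weight configuration $W$ of the net, $\E_{X}[W(X) p_S(X)] = \widehat{W}(S)$, and Parseval gives $\sum_S \widehat{W}(S)^2 = \E_X W(X)^2$. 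This turns the two nested square roots used in the generic analysis into a single one, yielding a bound with $\mathrm{CP}_m^{1/2}$ rather than $\mathrm{CP}_m^{1/4}$ in front of the junk-flow factor. I would thus establish the parity-refined inequality
\[
\pp\{ W^{(T)}(X) = F(X)\} \le \frac{1}{2} + \frac{1}{\sigma} \cdot \mathrm{JF}_T \cdot \mathrm{CP}_m^{1/2}.
\]

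Second, I would compute $\mathrm{CP}_\infty$ for the parity distribution. For $F = p_S$, $F' = p_{S'}$ with $S, S'$ independent and uniform over $2^{[n]}$ and $P_\X$ uniform on $\{\pm 1\}^n$, the inner expectation $\E_X p_S(X) p_{S'}(X) = \mathbb{1}\{S = S'\}$, so $\mathrm{CP}_\infty = \pp(S = S') = 2^{-n}$. Since the statement is for GD (large/full batches), using $\mathrm{CP}_m \le 1/m + \mathrm{CP}_\infty$ is not the binding term; for concreteness I can simply take the full-gradient interpretation where $\mathrm{CP}_m = \mathrm{CP}_\infty = 2^{-n}$, giving $\mathrm{CP}^{1/2} = 2^{-n/2}$.

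Third, I would plug in the hypothesis bounds. By Corollary \ref{looseb}, $\mathrm{JF}_T \le \gamma T \sqrt{|E|}\, A \le 2^{n/10} \cdot 2^{n/10} \cdot \mathrm{poly}(n) \cdot 2^{n/10} = \mathrm{poly}(n) \cdot 2^{3n/10}$, and from $\sigma^2 \ge 2^{-n/10}$ we have $\sigma^{-1} \le 2^{n/20}$. Combining,
\[
\frac{1}{\sigma} \cdot \mathrm{JF}_T \cdot \mathrm{CP}^{1/2} \le \mathrm{poly}(n) \cdot 2^{n/20 + 3n/10 - n/2} = \mathrm{poly}(n) \cdot 2^{-3n/20},
\]
which is smaller than $2^{-n/10}$ for all sufficiently large $n$ since $3n/20 > 2n/20$ leaves a factor $2^{-n/20}$ to absorb any polynomial in $n$. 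This yields the claimed accuracy bound $1/2 + 2^{-n/10}$.

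The main obstacle I foresee is the first step: carefully redoing the proof of Theorem \ref{thm2'} to justify the improvement from $\mathrm{CP}^{1/4}$ to $\mathrm{CP}^{1/2}$ in the parity case. The generic proof likely averages over pairs $(F, F')$ via a Cauchy-Schwarz that sacrifices an extra square root; to tighten it one needs to exploit the specific structure (orthonormality and uniform input distribution) so that one of the two applications of Cauchy-Schwarz becomes an equality via Parseval. Everything after that step is a straightforward plug-in of the bounds, and the arithmetic on the exponents has slack $n/20$, so the choice of constants in the hypothesis is comfortable.
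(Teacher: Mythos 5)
Your proposal is correct and takes essentially the same approach as the paper: the paper proves exactly the Parseval refinement you sketch, as Lemma \ref{new-pred}, which bounds $\sum_{s\subseteq[n]}(\E f(X,Y)-\E f(X,p_s(X)))^2\le \E f^2(X,Y)$ and is applied to the gradient components $g_e$ (not the net output $W$ as your Fourier framing slightly suggests), replacing the tensor-lifting Cauchy--Schwarz at the step producing $\mathrm{CP}^{1/4}$ and yielding the $\mathrm{CP}^{1/2}=2^{-n/2}$ factor directly. Your exponent arithmetic with the junk-flow bound $\gamma T\sqrt{|E|}A$ and $1/\sigma\le 2^{n/20}$ then matches the paper's conclusion, with the $n/20$ slack absorbing the polynomial factors.
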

See Section \ref{sq} for more details on how the above compares to \cite{kearns}. In particular, an application of \cite{kearns} would not give the same exponents for the reasons explained in \ref{sq}. More generally, Theorem \ref{thm2'} applies to low cross-predictability functions which do not necessarily have large statistical dimension --- see Section \ref{related} for examples and further details. In the other cases the SQ framework gives the relevant qualitative bounds.


\begin{remark}
Note first that having GD run with a little noise is not equivalent to having noisy labels for which learning parities can be hard irrespective of the algorithm used \cite{parity_blum,regev}. In addition, the amount of noise needed for GD in the above theorem can be exponentially small, and if such amount of noise were added to the sample labels themselves, then the noise would essentially be ineffective (e.g., Gaussian elimination would still work with rounding, or if the noise were Boolean with such variance, no flip would take place with high probability). The  failure is thus due to the nature of the GD algorithm.
\end{remark}

\begin{remark}
Note that the positive results show that we could learn a random parity function using stochastic gradient descent under these conditions. The reason for the difference is that SGD lets us get the details of single samples, while GD averages all possible samples together. In the latter case, the averaging mixes together information provided by different samples in a way that makes it harder to learn about the function.
\end{remark}

\subsubsection{SGD with memory constraint}

\begin{theorem}\label{thm1'}
Let $\epsilon>0$, and $P_{\F}$ be a probability distribution over functions with a cross-predictability of $\mathrm{c_p}=o(1)$. For each $n > 0$, let $(f,g)$ be a neural net of polynomial size in $n$ such that each edge weight is recorded using $O(\log(n))$ bits of memory. Run stochastic gradient descent on $(f,g)$ with at most $\mathrm{c_p}^{-1/24}$ time steps and with $o(|\log(\mathrm{c_p})|/\log(n))$ edge weights updated per time step.  For all sufficiently large $n$, this algorithm fails at learning functions drawn from $P_{\F}$ with accuracy $1/2 + \epsilon$.
\end{theorem}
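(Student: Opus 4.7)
The plan is to combine a bit-counting argument that exploits the ``few weights updated per step'' constraint with a cross-predictability bound on the per-step signal, in the same spirit as the junk-flow analysis behind Theorem \ref{thm2'}. The goal is to show that the trajectory $W^{(0)}, W^{(1)}, \ldots, W^{(T)}$ carries so little information about $F$, in the appropriate correlation-sensitive sense, that no predictor derived from $W^{(T)}$ can beat $1/2$ by $\epsilon$ on a fresh sample.

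First, I would quantify the bandwidth of a single SGD step. Because only $k = o(|\log \mathrm{c_p}|/\log n)$ edges change per step, and each edge carries $O(\log n)$ bits, one step can be encoded by specifying the set of updated edges (at most $O(k \log |E|) = o(|\log \mathrm{c_p}|)$ bits, using that $|E|$ is polynomial) and their new values ($O(k \log n) = o(|\log \mathrm{c_p}|)$ bits). Summing over $T \le \mathrm{c_p}^{-1/24}$ steps gives $I(F; W^{(T)} \mid W^{(0)}) = o(\mathrm{c_p}^{-1/24} \cdot |\log \mathrm{c_p}|)$, so the conditional law of $F$ given $W^{(T)}$ still only concentrates on a ``small'' set of alternatives.

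Second, I would translate this bandwidth bound into a correlation bound using cross-predictability. The key lemma I would prove is: at each step $t$, couple the real SGD run with an auxiliary ``junk-label'' run that sees the same $X_t$ but an independent uniform label $Z_t$ in place of $F(X_t)$; then show that the expected per-step drift between the two trajectories, projected onto any single coordinate being updated, is at most $O(\mathrm{c_p}^{1/4})$ by a Cauchy--Schwarz argument essentially identical to the one used in Theorem \ref{thm2'}. The memory constraint enters here because it ensures that only $k$ such per-coordinate drifts accumulate per step, each carrying bounded ``informative'' signal. Summing, the total $L^2$ distance between the real and junk-trained states is at most $O(T \cdot \mathrm{c_p}^{1/4}) = O(\mathrm{c_p}^{-1/24+1/4}) = O(\mathrm{c_p}^{5/24}) = o(1)$; but the junk-trained state is by construction independent of $F$, so it gives accuracy $1/2 + o(1)$, which upper bounds the real run's accuracy up to $o(1)$ and contradicts $1/2+\epsilon$.

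The main obstacle, as in Theorem \ref{thm2'}, is that $W^{(t-1)}$ is itself correlated with $F$ through the earlier samples, so one cannot naively apply cross-predictability at step $t$. I would overcome this by inducting on $t$: the bit-counting bound of the previous paragraph gives that up through step $t-1$, the state $W^{(t-1)}$ conditional on $F$ has only $o(t \cdot |\log \mathrm{c_p}|)$ ``effective'' bits of dependence on $F$, which by a Cauchy--Schwarz moment estimate over the posterior of $F$ translates into a per-step informative signal still of order $\mathrm{c_p}^{1/4}$ (this is where the exponent $1/24$ is tuned: one needs the signal exponent $1/4$ to dominate the $T = \mathrm{c_p}^{-1/24}$ accumulation by a positive margin, and $1/24$ leaves a gap of $5/24$ that absorbs logarithmic factors such as the $|\log \mathrm{c_p}|$ appearing in the bandwidth count). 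The remaining steps --- reducing the generalization accuracy to the $L^2$ closeness of the two trajectories via the output function's Lipschitz properties, and verifying that the overflow / learning-rate regime inherited from the theorem's hypotheses is harmless --- are routine and follow the template already laid out in the proof of Theorem \ref{thm2'}.
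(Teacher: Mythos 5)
Your high-level intuition is right --- exploit that (i) only $o(|\log \mathrm{c_p}|/\log n)$ weights of $O(\log n)$ bits change per step, and (ii) the cross-predictability bounds how much a single labelled sample can shift the distribution of the recorded information --- and the arithmetic $|\mathcal{W}|\le \mathrm{c_p}^{-1/24}$, $T\le \mathrm{c_p}^{-1/24}$ is the right budget to aim for. But the mechanism you propose does not actually close the argument. Your central step is to import the ``Cauchy--Schwarz drift'' estimate from Theorem~\ref{thm2'} and claim a per-step $L^2$ drift of $O(\mathrm{c_p}^{1/4})$ between the real and junk trajectories, then accumulate over $T$ steps and call the result a bound on TV distance. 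In Theorem~\ref{thm2'} that conversion is legitimate only because a Gaussian noise term is injected at every step, so Pinsker turns a small $\ell_2$ gap between the two means into a small TV gap. Theorem~\ref{thm1'} has no noise at all: the update is a deterministic discrete map of the $O(\log n)$-bit edge weights, so a single step's ``drift'' between the real and junk runs is not a small perturbation that can be absorbed --- it is either zero or it produces macroscopically different finite-precision states. The thing that saves the paper's proof is not a coupling/drift bound but a \emph{distributional} likelihood-ratio argument on the finite alphabet $\mathcal{W}$: for a fixed trace prefix $w^{i-1}$, the per-step update $W_i$ takes values in a set of size at most $\mathrm{c_p}^{-1/24}$, and the chi-squared version of the ``learning from a bit'' lemma (Theorem~\ref{corol_unif2}) gives $\E_F \|P_{W_i\mid F, w^{i-1}} - P_{W_i\mid \star, w^{i-1}}\|_2^2 \le \sqrt{\mathrm{c_p}}$; this is then amplified to an $L_1$ bound $O(\mathrm{c_p}^{1/24})$ on the full trace by decomposing into ``typical'' prefixes (conditional mass $\ge q^3$), ``good'' pairs (per-step likelihood ratio within $q^2$ of $1$), and controlling the multiplicative accumulation $\prod_i (1 \pm q^2) = 1 + O(mq^2)$. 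The exponent $1/24$ is the number that simultaneously makes all three error terms ($mq^3|\mathcal{W}|$, $q^{-10}\,m\,q^{12}$ via Markov, and $e^{mq^2}-1$) of order $q$; your ``$1/4$ minus $1/24$ leaves a $5/24$ gap for logs'' accounting is not where that exponent comes from, and indeed your bandwidth count $o(T\,|\log\mathrm{c_p}|)$ never enters your final bound quantitatively.

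There is also a secondary gap at the very end: you say you will pass from ``the two trajectories are close'' to ``accuracy is at most $1/2+o(1)$'' via Lipschitz continuity of the output, but the net's output after thresholding is not Lipschitz and in any case the state space is discrete. The paper instead augments the data structure with a single extra bit $b$ that records whether the last prediction was correct; by the indistinguishability conclusion the distribution of $b$ under the real function is within $O(\mathrm{c_p}^{1/24})$ in TV of its distribution under junk labels, where $b$ is a fair coin, giving the $1/2+O(\mathrm{c_p}^{1/24})$ accuracy bound directly. You will also need to confront explicitly the issue you gesture at in your last paragraph --- that $W^{(t-1)}$ is correlated with $F$ --- and here the clean fix is not an induction on ``effective bits of dependence'' but conditioning the per-step chi-squared bound on the prefix $w^{i-1}$, which is valid because the prefix determines the map $Z_i \mapsto W_i$ and Theorem~\ref{corol_unif2} applies to any such fixed map.
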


\begin{corollary}
Block-coordinate descent with a polynomial number of steps and precision and $o(n/\log(n))$ edge updates per step fails at learning parities with non-trivial accuracy.   
\end{corollary}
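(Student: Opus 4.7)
The plan is to invoke Theorem \ref{thm1'} as a black box, after computing the cross-predictability of the parity distribution and verifying that each quantitative hypothesis of the theorem translates into the stated resources of the corollary. Take $P_\X$ uniform on $\{+1,-1\}^n$ and $P_\F$ uniform on the $2^n$ parity functions $\{p_S : S\subseteq [n]\}$ with $p_S(x)=\prod_{i\in S}x_i$. For two independent draws $S,S'$ we have $p_S(x)p_{S'}(x)=p_{S\triangle S'}(x)$, and $\E_{X\sim P_\X}[p_T(X)]=\1[T=\emptyset]$ by Fourier orthogonality. Hence
\[
\mathrm{CP}_\infty(P_\X,P_\F)=\E_{S,S'}\bigl(\E_X p_S(X)p_{S'}(X)\bigr)^2=\Pr[S=S']=2^{-n},
\]
so in particular $\mathrm{c_p}=2^{-n}=o(1)$, which is the regime Theorem \ref{thm1'} was designed for.

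Next I would match each parameter of Theorem \ref{thm1'} to the hypotheses of the corollary. The time budget $T\le \mathrm{c_p}^{-1/24}=2^{n/24}$ dominates any polynomial in $n$, so any polynomial number of steps is admissible. The ``polynomial precision'' assumption of the corollary is exactly the $O(\log n)$-bit-per-weight memory hypothesis in Theorem \ref{thm1'}. Finally, the corollary's bound $o(n/\log n)$ on the number of edge updates per step coincides with $o(|\log \mathrm{c_p}|/\log n)$ when $\mathrm{c_p}=2^{-n}$. Interpreting block-coordinate descent as an instance of stochastic gradient descent in which only a restricted subset of coordinates is perturbed at each step, all hypotheses of Theorem \ref{thm1'} are met, and the theorem directly yields failure to reach accuracy $1/2+\epsilon$ for every fixed $\epsilon>0$, which is the claim.

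The main obstacle, such as it is, is bookkeeping: one must argue that whatever rule chooses the block of coordinates to update at each step (e.g.\ steepest-coordinate, cyclic, or randomized) fits within the SGD framework of Theorem \ref{thm1'}. Since such a rule is either a deterministic function of the current $O(\log n)$-bit weights or relies on extra independent randomness, it introduces no additional memory about the target function beyond what the weights already carry, so the information-theoretic argument underlying Theorem \ref{thm1'} applies without modification. The cross-predictability computation for parities is immediate from Fourier orthogonality, so no additional estimates are required beyond the black-box invocation.
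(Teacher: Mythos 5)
Your proposal is correct and matches the paper's own route: the corollary is precisely the specialization of Theorem \ref{thm1'} to the parity distribution via the Fourier-orthogonality computation $\mathrm{CP}_\infty=2^{-n}$, and the paper's remark immediately following the corollary states exactly this instantiation (time budget $2^{n/24}$, $O(\log n)$-bit weights, $o(n/\log n)$ edge updates). Your parameter-matching and the observation that the block-selection rule adds no extra information channel beyond the updated weights (hence stays within the SLA framework) are exactly the points the paper leaves implicit.
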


\begin{remark}
Specializing the previous result to the case of parities, one obtains the following. 
Let $\epsilon>0$. For each $n > 0$, let $(f,g)$ be a neural net of polynomial size in $n$ such that each edge weight is recorded using $O(\log(n))$ bits of memory. Run stochastic gradient descent on $(f,g)$ with at most $2^{n/24}$ time steps and with $o(n/\log(n))$ edge weights updated per time step.  For all sufficiently large $n$, this algorithm fails at learning parities with accuracy $1/2 + \epsilon$.

As discussed in Section \ref{related},
one could obtain the special case of Theorem \ref{thm1'} for parities using \cite{parity_conj} with the following argument. If bounded-memory SGD could learn a random parity function with nontrivial accuracy, then we could run it a large number of times, check to see which iterations learned it reasonably successfully, and combine the outputs in order to compute the parity function with an accuracy that exceeded that allowed by Corollary 4 in \cite{parity_conj}. However, in order to obtain a generalization of this argument to low cross-predictability functions, one would need to address the points made in Section \ref{related} regarding statistical dimension and cross-predictability.
\end{remark}

\begin{remark}In the case of parities, the emulation argument allows us to show that one can learn a random parity function using SGD that updates $O(n)$ edge weights per time step. With some more effort we could have made the memory component encode multiple bits per edge. This would have allowed it to learn parity if it was restricted to updating $O(n/m)$ edges of our choice per step, where $m$ is the maximum number of bits each edge weight is recorded using.
\end{remark}

\subsubsection{SGD with additional randomness}

In the case of full gradient descent and low cross-predictability, the gradients of the losses with respect to different inputs mostly cancel out, so an exponentially small amount of noise is enough to drown out whatever is left. With stochastic gradient descent, that does not happen, and we have the following instead.

\begin{definition}
Let $(f,g)$ be a NN, and recall that $w(g)$ denotes the set of weights on the edges of $g$. Define the $\tau$-neighborhood of $(f,g)$ as 
\begin{align}
N_{\tau}(f,g)=\{(f,g'): E(g')=E(g),  |w_{u,v}(g)-w_{u,v}(g')| \le \tau , \forall (u,v) \in E(g) \}.    
\end{align}
\end{definition}

\begin{theorem}\label{thm3}
For each $n>0$, let $(f,g)$ be a neural net with size $m$ polynomial in $n$, and let $B,\gamma,T>0$. There exist $\sigma=O(m^2\gamma^2 B^2/n^2)$ and $\sigma'=O(m^3 \gamma^3 B^3/n^2)$ such that the following holds. Perturb the weight of every edge in the net by a Gaussian distribution of variance $\sigma$ and then train it with a noisy stochastic gradient descent algorithm with learning rate $\gamma$, $T$ time steps, and Gaussian noise with variance $\sigma'$. Also, let $p$ be the probability that at some point in the algorithm, there is a neural net $(f,g')$ in $N_{\tau}(f,g)$, $\tau=O(m^2\gamma B/n)$, such that at least one of the first three derivatives of the loss function on the current sample with respect to some edge weight(s) of $(f,g')$ has absolute value greater than $B$. Then this algorithm fails to learn parities with an accuracy greater than $1/2+2p+O(Tm^4B^2\gamma^2/n)+O(T[e/4]^{n/4})$.
\end{theorem}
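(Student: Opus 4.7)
The plan is to bound the total variation distance between the actual noisy‐SGD trajectory $\{W^{(t)}\}_{t\le T}$ when the target is a uniformly random parity $F=p_S$, and an idealized reference trajectory $\{\tilde W^{(t)}\}_{t\le T}$ whose law does not depend on $S$. Since parities form an orthonormal system with $\mathrm{CP}_\infty=2^{-n}$, any $S$-independent function predicts a fresh $(X,p_S(X))$ with accuracy exactly $1/2$; so once we establish a TV bound of the form $2p+O(Tm^4B^2\gamma^2/n)+O(T[e/4]^{n/4})$ between the two trajectories, the claimed accuracy ceiling follows immediately from the data processing / coupling inequality.

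First I would condition on the ``bounded-derivative'' event $\mathcal{E}$: at every step $t\le T$, no first, second or third derivative of the loss on the current sample, taken at any net in the $\tau$-neighborhood of $W^{(t)}$, exceeds $B$ in absolute value. By hypothesis this event fails with probability at most $p$ in either process; a union bound over the real and reference processes costs $2p$ in TV and lets me Taylor-expand freely in what follows. The reference process $\tilde W^{(t)}$ is built by replacing $F$ at each step with an independent fresh parity (equivalently, by replacing the per-sample gradient with its expectation over a uniformly random label), keeping the Gaussian weight perturbation and the Gaussian gradient noise identical.

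Next I would bound the per-step TV between the two processes by exploiting the Gaussian weight perturbation (variance $\sigma$) and Gaussian gradient noise (variance $\sigma'$). Writing the update as $W^{(t+1)}=W^{(t)}-\gamma\nabla L(W^{(t)};X^{(t)},F(X^{(t)}))+Z^{(t)}$ with $Z^{(t)}\sim\mathcal{N}(0,\sigma' I)$, I Taylor-expand the $F$-dependent part of the drift in the weight perturbation. The linear term vanishes in expectation over $S$ because parities are orthogonal to any fixed function that does not involve them, and the quadratic and cubic remainders are controlled by $B$ via $\mathcal{E}$. Convolution with the gradient-noise Gaussian of variance $\sigma'=O(m^3\gamma^3B^3/n^2)$ turns each such remainder into a TV contribution of order $\gamma^2B^2/\sigma'\cdot (\text{variance across }S)$; the choice $\sigma=O(m^2\gamma^2B^2/n^2)$ is exactly the level at which the parity-specific component of the smoothed gradient is a $\mathrm{CP}_\infty^{1/2}$-type fluctuation, giving a per-step TV of $O(m^4B^2\gamma^2/n)$. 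Summing over $t\le T$ by the triangle inequality yields the $O(Tm^4B^2\gamma^2/n)$ term. The $O(T[e/4]^{n/4})$ term comes from the separate event that the single sample $X^{(t)}$ used at step $t$ is ``too informative'' about $S$: by a standard anti-concentration bound for $\langle X,\mathbf{1}_S\rangle$ with $X$ uniform on $\{\pm 1\}^n$, the probability that any poly-size-computable feature of $X$ correlates with $p_S(X)$ beyond a threshold depending on $n/4$ is at most $[e/4]^{n/4}$, and a union bound over $T$ steps contributes $O(T[e/4]^{n/4})$.

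The main obstacle is the Gaussian-smoothing step: proving that the weight perturbation of variance $\sigma=O(m^2\gamma^2B^2/n^2)$ is enough to wash out the $S$-dependence of the per-step drift up to the claimed $O(m^4B^2\gamma^2/n)$ per-step TV budget, while only assuming third-derivative control via $\mathcal{E}$. The delicate point is tracking how the $S$-dependent part of $\nabla L$ varies as the weights move through the $\tau$-ball — the reason the theorem demands bounds on the first three derivatives rather than just the gradient is to permit a quadratic Taylor remainder whose Gaussian average in the perturbation gives a TV bound at the right order, and to handle the boundary between the ``smoothed'' regime and the rare event bounded by $[e/4]^{n/4}$. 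Assembling these pieces with the triangle inequality in TV completes the proof.
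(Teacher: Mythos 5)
You correctly identify the high-level program (compare to a null model, bound total variation, conclude failure), which is indeed the strategy of the paper. However the proposal misses the central technical idea, and mislocates some of the error terms.

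The real difficulty is not the per-step comparison under a Gaussian: once you grant that the weight distribution at step $t$ looks like a convolution with $\mathcal{N}(0,\sigma I)$, a density argument (not a moment/Taylor argument in the sense you sketch) plus Corollary~\ref{parityAverage} gives the per-step $L_1$ bound; your ``the linear term vanishes in expectation over $S$'' statement is not how this is done -- the bound goes through
$\sum_s |f^\star(w)-f^\star_s(w)|\le 2^{n/2}\max_z f_z(w)$
applied to probability density functions, not through a Taylor remainder computation. The genuinely hard step, which the proposal flags as ``the main obstacle'' but does not resolve, is proving that the distribution of the weights \emph{remains} approximately a Gaussian convolution after the nonlinear SGD update is applied, step after step, for $T$ steps. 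The paper handles this with the $(\sigma,\epsilon)$-blurring framework and $\sigma$-revisions: one shows that applying a map $x\mapsto x+f(x)$ with bounded first/second/third derivatives to $\mathcal{N}(0,\sigma I)$ gives a $((1-mB_1)^2\sigma,\epsilon)$-blurring, then carefully chooses $\sigma'=[2mB_1-m^2B_1^2]\sigma$ so that after re-adding the per-step noise the effective variance is restored to $\sigma$, and then uses a coupling (``revision'') so that the idealized Gaussian-blurred state agrees with the actual state at every step except with small probability. Without this recursion and this specific relation between $\sigma$ and $\sigma'$, the per-step argument has nothing to attach to.

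Two further concrete errors. First, your reference trajectory replaces $F$ by ``an independent fresh parity'' per step; the paper's null model replaces the label by a uniformly random bit (the distribution $\star$ on $\B^{n+1}$), which is what makes the null model \emph{trivially} unlearnable and what Lemma~\ref{new-pred}/Corollary~\ref{parityAverage} are designed to compare against. Second, the $O(T[e/4]^{n/4})$ term is not an anti-concentration bound on $\langle X,\mathbf{1}_S\rangle$; it arises from the Gaussian $\ell_1$ tail $e^{-(r/2\sqrt{\sigma}-m/\sqrt{2\pi})^2/m}$, i.e.\ the probability that the blur noise escapes the $\ell_1$ ball of radius $r$ inside which the stability assumptions ($(r,B_1,B_2)$-stability) guarantee the bounded-derivative Taylor control. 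Finally note that $\sigma$ in the theorem is the variance of the one-time initial perturbation, not a per-step weight perturbation, so ``convolution with the gradient-noise Gaussian of variance $\sigma'$'' alone does not maintain the smoothing; it is the accumulated noise plus the blurring bookkeeping that does.
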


\begin{remark}
Normally, we would expect that if training a neural net by means of SGD works, then the net will improve at a rate proportional to the learning rate, as long as the learning rate is small enough. As such, we would expect that the number of time steps needed to learn a function would be inversely proportional to the learning rate. This theorem shows that if we set $T=c/\gamma$ for any constant $c$ and slowly decrease $\gamma$, then the accuracy will approach $1/2+2p$ or less. If we also let $B$ slowly increase, we would expect that $p$ will go to $0$, so the accuracy will go to $1/2$. It is also worth noting that as $\gamma$ decreases, the typical size of the noise terms will scale as $\gamma^{3/2}$. So, for sufficiently small values of $\gamma$, the noise terms that are added to edge weights will generally be much smaller than the signal terms. 
\end{remark}

\begin{remark}
The bound on the derivatives of the loss function is essentially a requirement that the behavior of the net be stable under small changes to the weights. It is necessary because otherwise one could effectively multiply the learning rate by an arbitrarily large factor simply by ensuring that the derivative is very large. Alternately, excessively large derivatives could cause the probability distribution of the edge weights to change in ways that disrupt our attempts to approximate this probability distribution using Gaussian distributions. For any given initial value of the neural net, any given smooth activation function, and any given $M>0$, there must exists some $B$ such that as long as none of the edge weights become larger than $M$ this will always hold. However, that $B$ could be very large, especially if the net has many layers. 
\end{remark}

\begin{remark}
The positive results show that it is possible to learn a random parity function using a polynomial sized neural net trained by stochastic gradient descent with inverse-polynomial noise for a polynomial number of time steps. Furthermore, this can be done with a constant learning rate, a constant upper bound on all edge weights, a constant $\tau$, and $B$ polynomial in $n$ such that none of the first three derivatives of the loss function of any net within $\tau$ of ours are greater than $B$ at any point. So, this result would not continue to hold for all choices of exponents.
\end{remark}

\subsection{Proof techniques: undistinguishability, emulation and sequential learning algorithms}  
{\bf Negative results.} Our main approach to showing the failure of an algorithm (e.g., noisy GD) using data from a  model (e.g, parities)  for a desired task (e.g., typical weak learning), will be to show that under limited resources (e.g., limited number of time steps), the   output of the algorithm trained on the true model is {\it statistically indistinguishable} from the output of the algorithm trained on a null model, where the null model fails to provide the desired performance for trivial reasons. This forces the true model to fail as well.

The indistinguishability to null condition (INC) is obtained by manipulating information measures, bounding the total variation distance of the two posterior measures between the test and null models. The failure of achieving the desired algorithmic performance on the test model is then a consequence of the INC, either by converse arguments -- if one could achieve the claimed  performance, one would be able to use the  performance gap to distinguish the null and test models and thus contradict the INC -- or directly using the total variation distance between the two probability distributions to bound the difference in the probabilities that the nets drawn from those distributions compute the function correctly (and we know that it fails to do so on the null model).

An example with more details:
\begin{itemize}
\item Let $D_1$ be the  distribution of the data for the parity learning model, i.e., i.i.d.\ samples with labels from the parity model in dimension $n$;
\item Let $R=(R_1,R_2)$ be the resource in question, i.e., the number $R_1$ of edge weights of poly-memory that are updated and the number of steps $R_2$ of the algorithm;
\item Let $A$ be the coordinate descent algorithm used with a constraint $C$ on the resource $R$; 
\item Let $T$ be the task, i.e, achieving an accuracy of $1/2 + \Omega_n(1)$ on a random input.
\end{itemize}

\noindent
Our program then runs as follows:
\begin{enumerate}
    \item Chose $D_0$ as the null distribution that generates i.i.d.\ pure noise labels, such that the task $T$ is obviously not achievable for $D_0$.
\item Find a INC on $R$, i.e., a constraint $C$ on $R$ such that the trace of the algorithm $A$ is indistinguishable under $D_1$ and $D_0$; to show this,
\begin{enumerate}
    \item show that the total variation distance between the posterior distribution of the trace of $A$ under $D_0$ and $D_1$ vanishes if the INC holds; 
    \item to obtain this, it is sufficient to show that any $f$-mutual information between the algorithm's trace and the model hypotheses $D_0$ or $D_1$ (chosen equiprobably) vanishes.
\end{enumerate}
\item Conclude that the INC on $R$ prohibits the achievement of $T$ on the test model $D_0$, either by contradiction  as one could use $T$ to distinguish between $D_1$ and $D_0$ if only the latter fails at $T$ or using the fact that for any event $\mathrm{Success}$ and any random variables $Y(D_i)$ that depend on data drawn from $D_i$ (and represent for example the algorithms outputs), we have $\pp\{Y(D_1) \in \mathrm{Success} \} \le \pp\{Y(D_0) \in \mathrm{Success} \}+TV(D_0,D_1) = 1/2 + TV(D_0,D_1)$.
\end{enumerate}
Most of the work then lies in part 2(a)-(b), which consist in manipulating information measures to obtain the desired conclusion. In particular, the Chi-squared mutual information will be convenient for us, as its ``quadratic'' form will allow us to bring  the cross-predictability as an upper-bound, which is then ``easier'' to evaluate. This is carried out in Section \ref{proof2} in the context of GD and in Section \ref{sla} in the  context of so-called ``sequential learning algorithms''.

In the case of noisy GD (Theorems \ref{thm2} and \ref{thm2'}), the program is more direct from step 2, and runs with the following specifications. When computing the full gradient, the losses with respect to different inputs mostly cancel out, which makes the gradient updates reasonably  small, and a small amount of noise suffices to cover it. We then show a subadditivity property of the TV using the data processing inequality,  bound the one step  total variation distance with the KL distance (Pinsker's inequality), which in the Gaussian case gives the $\ell_2$ distance, and then use a change of measure argument to bring down the cross-predictability (using various generic inequalities).

In the case of the failure of SGD under noisy initialization and updates (Theorem \ref{thm3}), we rely on a more sophisticated version of the above program. We use again a step used for GD that consists in showing that the average value of any function on samples generated by a random parity function will be approximately the same as the average value of the function on true random samples.\footnote{This gives also a variant of a result in \cite{ohad} applying to the special case of 1-Lipschitz loss function.} This is essentially a consequence of the low cross-predictability. Most of the work then is using this to show that if we draw a set of weights in $\mathbb{R}^m$ from a sufficiently noisy probability distribution and then perturb it slightly in a manner dependent on a sample generated by a random parity function, the probability distribution of the result is essentially indistinguishable from what it would be if the samples were truly random. Then, we argue that if we do this repeatedly and add in some extra noise after each step, the probability distribution stays noisy enough that the previous result continues to apply. 
After that, we show that the probability distribution of the weights in a neural net trained by noisy stochastic gradient descent on a random parity function is indistinguishable from the the probability distribution of the weights in a neural net trained by noisy SGD on random samples, which represents most of the work.

\noindent
{\bf Sequential Learning algorithms.} Our negative results exploit the sequential nature of descent algorithms such as gradient, stochastic gradient or coordinate descent. That is, the fact that these algorithms proceed by querying some function on some samples (typically the gradient function), then update the memory structure according to some rule (typically the neural net weights using a descent algorithm step), and then forget about these samples. We next formalize this class of algorithms using the notion of sequential learning algorithms (SLA).

\begin{definition}   
A sequential learning algorithm $A$ on $(\mathcal{Z}, \mathcal{W})$ is an algorithm that for an input of the form $(Z,(W_1,...,W_{t-1}))$ in $\mathcal{Z} \times \mathcal{W}^{t-1}$ produces an output $A(Z,(W_1,...,W_{t-1}))$ valued in $\mathcal{W}$. 
Given a probability distribution $D$ on $\mathcal{Z}$, a sequential learning algorithm $A$ on $(\mathcal{Z}, \mathcal{W})$, and $T\ge 1$, a $T$-trace of $A$ for $D$ is a series of pairs $((Z_1, W_1), ...,(Z_T, W_T))$ such that for each $i \in [T]$, $Z_i\sim D$ independently of $(Z_1,Z_2,...,Z_{i-1})$ and $W_i=A(Z_i,(W_1,W_2,...,W_{i-1}))$.
\end{definition}
Note that $Z$ may represent a single sample with its label (and $D$ the corresponding distribution) as for SGD, or a collection of $m$ i.i.d.\ samples as for mini-batch GD.  
Our negative result for SGD in Theorem \ref{thm3} will apply more generally to such algorithms, with constraints added on the number of weights that can be updated per time step. For Theorems \ref{thm2'} and \ref{thm3}, we use further assumption on how the memory (weights) are updated, i.e., via the subtraction of gradients. These correspond to special cases of SLAs where the following memory update rules are used:    
\begin{align}
W^{(t)}= W^{(t-1)} -  \E_{X \sim \hat{P}_{S_m^{(t)}}} G_{t-1}(W^{(t-1)}(X),F(X)) + Z^{(t)}, \quad t=1,\dots,T \label{sda}
\end{align}
where $G_t$ is some function valued in some bounded range (like the query function in statistical query algorithms) and  $\hat{P}_{S_m^{(t)}}=\frac{1}{m}\sum_{i=1}^m \delta_{X_i^{(t)}}$ is the empirical distribution of $m$ samples (with $m=1$ for SGD and larger $m$ for GD).

{\bf Positive result.} For the positive result, we emulate any learning algorithm using poly-many samples and running in poly-time with poly-size neural nets trained by poly-step SGD. This requires emulating any poly-size circuit implementation with free access to reading and writing in memory using a particular computational model that computes, reads and writes memory solely via SGD steps on a fixed neural net. In particular, this requires designing subnets that perform arbitrary efficient computations in such a way that SGD does not alter them and subnet structures that cause SGD to change specific edge weights in a manner that we can control. One difficulty encountered with such an SGD implementation is that no update of the weights will take place when given a sample that is correctly predicted by the net. If one does not mitigate this, the net may end up being trained on a sample distribution that is mismatched to the original one, which can have unexpected consequences. A randomization mechanism is thus used to circumvent this issue.\footnote{This mechanism is not necessary for cases like parities.} See Section \ref{universality} for further details.

\section{Related literature}\label{related}

\subsection{Minsky and Papert}
The difficulty of learning  functions like parities with NNs is not new. Together with the connectivity case, the difficulty with parities was in fact one of the central focus in the perceptron book of Minksy and Papert \cite{perceptron}, which resulted in one of the main cause of skepticism regarding neural networks in the 70s \cite{bottou_perso}. The sensitivity of parities is also well-studied in the theoretical computer science literature, with the relation to circuit complexity, in particular the computational limitations of small-depth circuit   \cite{hastad,allender}. The seminal paper of Kearns on statistical query learning algorithms \cite{kearns} brings up the difficulties in learning parities with such algorithms, as discussed next.

\subsection{Statistical querry algorithms}\label{sq} The lack of correlations between two parity functions and its implication in learning parities is extensively studied in the context of statistical query learning algorithms \cite{kearns}. These algorithms have access to an oracle that gives estimates on the expected value of some query function over the underlying data distribution. The main result of \cite{kearns,query} gives a tradeoff for learning a  function class in terms of (i) the statistical dimension (SD) that captures the largest possible number of functions in the class that are weakly correlated, (ii) the precision range $\tau$, that controls the error added by the oracle to each query valued in the range of $[-1,1]$, (iii) the number of queries made to the oracle. In particular, parities have exponential SD and thus for a polynomial error $\tau$, an exponential number of queries are needed to learn them.      
Gradient-based algorithms with approximate oracle access are realizable as statistical query algorithms, since the gradient takes an expectation of some function (the derivative of the loss). In particular, \cite{kearns} implies that the class of parity functions cannot be learned by such algorithms, which  
implies a result similar in nature to our Theorem \ref{thm2} as further discussed below. 
The result from \cite{kearns} and its generalization in \cite{parity_blum} have however a few differences from those presented here. First these papers define successful learning for {\it all} function in a class of functions, whereas we work here with {\it typical} functions from a function distribution, i.e., succeeding with non-trivial probability according to some  function distribution that may not be a uniform distribution.  
Second these papers require the noise to be adversarial, while we use here statistical noise, i.e., a less powerful adversary. We also focus on guessing the label with a better chance than random guessing; this can also obtained for the SQ algorithms but the classical definition of SD is typically not designed for this case. Finally the proof techniques are different, mainly based on Fourier analysis in \cite{parity_blum} and on hypothesis testing and information theory here. 

Nonetheless, our Theorem \ref{thm2} admits a quantitative counter-part in the SQ framework \cite{kearns}. 
Technically \cite{kearns} only says that a SQ algorithm with a polynomial number of queries and inverse polynomial noise cannot learn a parity function, but the proof would still work with appropriately chosen exponential parameters. To further convert this to the setting with statistical noise, one could use an argument saying that the Gaussian noise is large enough to mostly drown out the adversarial noise if the latter is small enough, but the resulting bounds would be slightly looser than ours because that would force one to make trade offs between making the amount of adversarial noise in the SQ result low and minimizing the probability that one of the queries does provide meaningful information. Alternately, one could probably rewrite their proof  using Gaussian noise instead of bounded adversarial noise and bound sums of $L_1$ differences between the probability distributions corresponding to different functions instead of arguing that with high probability the bound on the noise quantity is high enough to allow the adversary to give a generic response to the query. 

To see how Theorem \ref{thm2'} departs from the setting of \cite{parity_blum} beyond the statistical noise discussed above, note that the cross-predictability captures the expected inner product $\langle F_1,F_2 \rangle_{P_\X}$ over two i.i.d.\ functions $F_1,F_2$ under $P_\F$, whereas the statistical dimension defined in \cite{parity_blum} is the largest number $d$ of functions $f_i \in \F$ that are nearly orthogonal, i.e, $|\langle f_i,f_j \rangle_{P_\X}| \le 1/d^3$, $1 \le i< j \le d$. Therefore, while the cross-predictability and statistical dimension tend to be negatively correlated, one can construct a family $\F$ that contains many almost orthogonal functions, yet with little mass under $P_\F$ on these so that the distribution has a high cross-predictability. For example, take a class containing two types of functions, hard and easy, such as parities on sets of components and almost-dictatorships which agree with the first input bit on all but $n$ of the inputs. The parity functions are orthogonal, so the union contains a set of size  $2^n$ that is pairwise orthogonal. However, there are about $2^n$ of the former and $2^{n^2}$ of the latter, so if one picks a function uniformly at random on the union, it will belong to the latter group with high probability, and the cross-predictability will be $1-o(1)$.
 So one can build examples of function classes where it is possible to learn with a moderate cross-predictability while the statistical dimension is large and learning fails in the sense of \cite{parity_blum}.

There have been many follow-up works and extensions of the statistical dimension and SQ models. We refer to \cite{boix_mds} for a more in-depth discussion and comparison between these and the results in this paper. In particular, \cite{vempala2} allows for a probability measure on the functions as well. The statistical dimension as defined in Definition 2.6 of \cite{vempala2} measures the maximum probability subdistribution with a sufficiently high correlation among its members (note that this defined in view of studying exact rather than weak learning). As a result, any probability distribution with a low cross predictability must have a high statistical dimension in that sense. However, a distribution of functions that are all moderately correlated with each other could have an arbitrarily high statistical dimension despite having a reasonably high cross-predictability. For example, using definition 2.6 of \cite{vempala2} with constant $\bar{\gamma}$ on the collection of functions from $\{0,1\}^n \to \{0,1\}$ that are either 1 on $1/2+\sqrt{\gamma}/4$ of the possible inputs or 1 on $1/2-\sqrt{\gamma}/4$ of the inputs, gives a statistical dimension with average correlation $\gamma$ that is doubly exponential in $n$. However, this has a cross predictability of $\gamma^2/16$.

In addition, queries in the SQ framework typically output the exact expected value with some error, but do not provide the tradeoff that occur by taking a number of samples and using these to estimate the expectation, as provided with the variable $m$ in Theorem \ref{thm2'}. In particular, as $m$ gets low, one can no longer obtain negative results as shown with Theorem \ref{thm_univ}.  

Regarding Theorem \ref{thm1'}, one could imagine a way to obtain it using prior SQ works by proving the following: (a) generalize the paper of \cite{parity_conj} that establishes a result similar to our Theorem \ref{thm1'} for the special case of parities to the class of low cross-predictability functions, (b) show that this class has the right notion of statistical dimension that is high. However, the distinction between low cross-predictability and high statistical dimension would kick in at this point. If we take the example mentioned in the previous paragraph, the version of SGD used in Theorem \ref{thm1'} could learn to compute a function drawn from this distribution with expected accuracy $1/2+\sqrt{\gamma}/8$ given $O(1/\gamma)$ samples, so the statistical dimension of the distribution is not limiting its learnability by such algorithms in an obvious way. One might be able to argue that a low cross-predictability implies a high statistical dimension with a value of $\gamma$ that vanishes sufficiently quickly and then work from there. However, it is not clear exactly how one would do that, or why it would give a preferred approach.

Paper \cite{query2} also shows that gradient-based algorithms with approximate oracle access are realizable as statistical query algorithms, however, \cite{query2} makes a convexity assumption that is not satisfied by non-trivial neural nets. SQ lower bounds for learning with data generated by neural networks is also investigated in \cite{song} and for neural network models with one hidden nonlinear activation layer in \cite{wilmes}.

Finally, the current SQ framework does not apply to noisy SGD (even for adversarial noise). 
One may consider instead 1-STAT oracles, that provide a query from random sample, but we did not find results comparable to our Theorem \ref{thm3} in the literature. 

In fact, we show that it is possible to learn parities with better noise-tolerance and complexity than any SQ algorithm will do (see Section \ref{positive}), so the variance in the random queries of SGD is crucial to make it a universal algorithm as opposed to GD or any SQ algorithm.

\subsection{Memory-sample trade-offs} In \cite{ran_memory}, it is shown that one needs either quadratic memory or an exponential number of samples in order to learn parities, settling a conjecture from \cite{parity_conj}. This gives a  non-trivial lower bound on the number of samples needed for a learning problem and a complete negative result in this context, with applications to bounded-storage cryptography  \cite{ran_memory}. 
Other works have extended the results of \cite{ran_memory}; in particular \cite{ran_sparse} applies to k-sparse sources, \cite{ran17} to other functions than parities, and \cite{grt} exploits properties of two-source extractors to obtain comparable memory v.s.\ sample complexity trade-offs, with similar results obtained in \cite{bogy}. The cross-predictability has also similarity with notions of almost orthogonal matrices used in $L_2$-extractors for two independent sources \cite{two_source,grt}.

    In contrast to this line of works (i.e., \cite{ran_memory} and follow-up papers), our Theorem \ref{thm1'} (when specialized to the case of parities) shows that one needs exponentially many samples to learn parities if less than $n/24$ pre-assigned bits of memory are used {\it per sample}. These are thus different models and results. Our result does not say anything interesting about our ability to learn parities with an algorithm that has free access to memory, while the result of \cite{ran_memory} says that it would need to have $\Omega(n^2)$ total memory or an exponential number of samples. On the flip side, our result shows that an algorithm with unlimited amounts of memory will still be unable to learn a random parity function from a subexponential number of samples if there are sufficiently tight limits on how much it can edit the memory while looking at each sample, which cannot be concluded from \cite{ran_memory}. The latter is relevant to study SGD with a bounded number of weight updates per time step as discussed in this paper. 
    
    Note also that for the special case of parities, one could aim for Theorem \ref{thm1'} using \cite{parity_conj} with the following argument. If bounded-memory SGD could learn a random parity function with nontrivial accuracy, then we could run it a large number of times, check to see which iterations learned it reasonably successfully, and combine the outputs in order to compute the parity function with an accuracy that exceeded that allowed by Corollary 4 in \cite{parity_conj}. However, in order to obtain a generalization of this argument to low cross-predictability functions, one would need to address the points made previously regarding Theorem \ref{thm1'} and \cite{parity_conj} (namely points (a) and (b) in the previous subsection).

\subsection{Gradient concentration} Finally, \cite{ohad}, with an earlier version in \cite{ohad2} from the first author, also give strong support to the impossibility of learning parities. 
In particular the latter discusses whether specific assumptions on the ``niceness'' of the input distribution or the target function (for example based on notions of smoothness, non-degeneracy, incoherence or random choice of parameters), are sufficient to guarantee learnability using gradient-based methods, and evidences are provided that neither class of assumptions alone is sufficient.

\cite{ohad} gives further theoretical insights  and practical experiments on the failure of learning parities in such context. More specifically, it proves that the gradient of the loss function of a neural network will be essentially independent of the parity function used. This is achieved by a variant of our  Lemma \ref{new-pred} below with the requirement in \cite{ohad} that the loss function is 1-Lipschitz\footnote{The proofs are both simple but slightly different, in particular our Lemma \ref{new-pred} does not make  regularity assumptions.}. This provides a strong intuition of why one should not be able to learn a random parity function using gradient descent or one of its variants, and this is backed up with theoretical and experimental evidence. However, it is not proved that one cannot learn parity using SGD, batch-SGD or the like. The implication is far from trivial, as with the right algorithm, it is indeed possible to reconstruct the parity function from the gradients of the loss function on a list of random inputs. In fact, we show here that it is possible to learn parities in polynomial time by SGD with small enough batches and a careful poly-time initialization of the net (that is agnostic to the parity function).Thus, obtaining formal negative results requires more specific assumptions and elaborate proofs, already for GD and particularly for SGD.

\section{Some challenging functions}\label{others}
\subsection{Parities}\label{model}
The problem of learning parities corresponds to $P_\X$ being uniform on $\{+1,-1\}^n$ and $P_\F$ being uniform on the set of parity functions defined by $\mathcal{P}=\{ p_s:  s \subseteq [n] \}$, where $p_s: \{+1,-1\}^n  \to \{+1,-1\}$ is such that $$p_s(x)=\prod_{i  \in s} x_i.$$ 
So nature picks $S$ uniformly at random in $2^{[n]}$, and with access to $\mathcal{P}$ but not to $S$, the problem is to learn which set $S$ was chosen from samples $(X,p_S(X))$ as defined in previous section. 

Note that without noise, this is {\it not} a hard problem.
Even exact learning of the set $S$ (with high probability) can be achieved if we do not restrict ourselves to using a NN trained with a descent algorithm. One can simply take an algorithm that builds a basis from enough samples (e.g., $n + \Omega(\log(n))$) and solves the resulting system of linear equations to reconstruct $S$.

This seems however far from how deep learning proceeds. For instance, descent algorithms are ``memoryless'' in that they update the weights of the NN at each step but do not a priori explicitly remember the previous steps. Since each sample (say for SGD) gives very little information about the true $S$, it thus seems unlikely for SGD to make any progress on a polynomial time horizon. However, it is far from trivial to argue this formally if we allow the NN to be arbitrarily large and with arbitrary initialization (albeit of polynomial complexity), and in particular inspecting the gradient will typically not suffice.

In fact, we will show that this is wrong, and SGD {\it can} learn the parity function with a proper initialization --- See Sections \ref{positive} and \ref{universality}. 
We will then show that using GD with small amounts of noise, as sometimes advocated in different forms \cite{perturbed_sgd,langevin1,langevin2}, or using  (block-)coordinate descent or more generally bounded-memory update rules, it is in fact not possible to learn parities with deep learning in poly-time steps. Parities corresponds in fact to an extreme instance of a distribution with low cross-predictability, to which failures apply, and which is related to statistical dimension in statistical query algorithms; see Section \ref{related}.

An important point is is that that the amount of noise that we will add is  smaller than the amount of noise\footnote{Note also that having GD run with little noise is not exactly equivalent to having noisy labels.} needed to make parities
hard to learn \cite{parity_blum,regev}. The amount of noise needed for GD to fail can be exponentially small, which would effectively represent no noise if that noise was added on the labels itself as in learning with errors (LWE); e.g., Gaussian elimination would still work in such regimes.

As discussed in Section \ref{init}, in the case of parities, our negative result for any initialization can be converted into a negative result for random initialization. 
We believe however that the randomness in a random initalization would actually be enough to account for any small randomness added subsequently in the algorithm steps. Namely, that one cannot learn parities with GD/SGD in poly-time with a random initialization.

To illustrate the phenomenon, we consider the following data set and numerical experiment in PyTorch \cite{paszke2017automatic}. The elements in $\X$ are images with a white background and either an even or odd number of black dots, with the parity of the dots determining the label  --- see Figure \ref{images}. 
The dots are drawn by building a $k \times k$ grid with white background and activating each square with probability $1/2$.

We then train a neural network to learn the parity label of these images with a random initalization. The architecture is a 3 hidden linear layer perceptron with 128 units and ReLU non linearities trained using binary cross entropy. The training\footnote{We pick  samples from a pre-set training set v.s.\ sampling  fresh samples; these are not expected to behave differently.} and testing dataset are composed of 1000 images of grid-size $k=13$. We used PyTorch implementation of SGD with step size 0.1 and i.i.d.\ rescaled uniform weight initialization \cite{imagenet2}.


\begin{figure}[H]
\centering
  \includegraphics[width=0.5\linewidth]{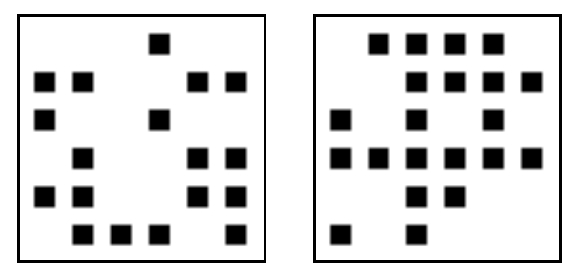}
\caption{Two images of $13^2=169$ squares colored black with probability $1/2$. The left (right) image has an even (odd) number of black squares. The experiment illustrates the incapability of deep learning to learn the parity.}
\label{images}
\end{figure}

Figure \ref{accuracy} show the evolution of the training loss, testing and training errors. As can be seen, the net can learn the training set but does not  generalize better than random guessing.  

\begin{figure}[H]
\centering
  \includegraphics[width=.9\linewidth]{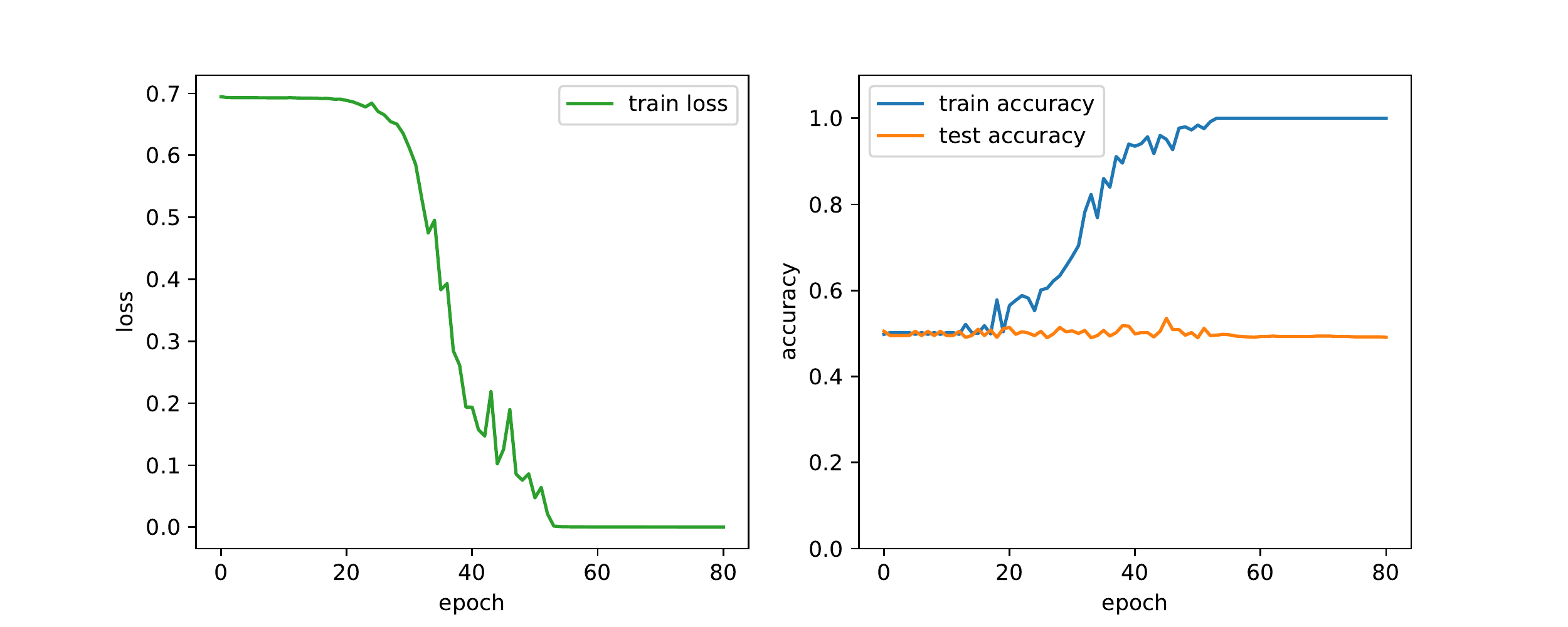}
\caption{Training loss (left) and training/testing errors (right) for up to 80 SGD epochs.}
\label{accuracy}
\end{figure}

\subsection{Community detection and connectivity}

Parities are not the most common type of functions used to generate real signals, but they are central to the construction of good codes (in particular the most important class of codes, i.e., linear codes, that rely heavily on parities). We mention now a few specific examples of functions that we believe would be also difficult to learn with deep learning. Connectivity is another notorious example discussed in the Perceptron book of Minsky-Papert \cite{perceptron}. In that vain, we provide here a different and concrete question related to connectivity and community detection. We then give another example of low cross-predictability distribution in arithmetic learning.   \\

Consider the problem of determining whether or not some graphs are connected. This could be difficult because it is a global property of the graph, and there is not necessarily any function of a small number of edges that is correlated with it. Of course, that depends on how the graphs are generated. In order to make it difficult, we define the following probability distribution for random graphs.

\begin{definition}
Given $n,m,r>0$, let $AER(n,m,r)$ be the probability distribution of $n$-vertex graphs generated by the following procedure. First of all, independently add an edge between each pair of vertices with probability $m/n$ (i.e., start with an Erd\H{o}s-R\'enyi random graph). Then, randomly select a cycle of length less than $r$ and delete one of its edges at random. Repeat this until there are no longer any cycles of length less than $r$.
\end{definition}

Now, we believe that deep learning with a random initialization will not be able to learn to distinguish a graph drawn from $AER(n,10\ln(n),\sqrt{\ln(n)})$ from a pair of graphs drawn from $AER(n/2,10\ln(n),\sqrt{\ln(n)})$, provided the vertices are randomly relabeled in the latter case. That is, deep learning will not distinguish between a patching of two such random graphs (on half of the vertices) versus a single such graph (on all vertices). Note that a simple depth-first search algorithm would learn the function in poly-time. More generally, we believe that deep learning would not solve community detection on such variants of random graph models\footnote{It would be interesting to investigate the approach of \cite{bruna_cd} on such models.} (with edges allowed between the clusters as in a stochastic block model with similar loop pruning), as connectivity v.s.\ disconnectivity is an extreme case of community detection.

The key issue is that no subgraph induced by fewer than $\sqrt{\ln(n)}$ vertices provides significant information on which of these cases apply. Generally, the function computed by a node in the net can be expressed as a linear combination of some expressions in small numbers of inputs and an expression that is independent of all small sets of inputs. The former cannot possibly be significantly correlated with the desired output, while the later will tend to be uncorrelated with any specified function with high probability. As such, we believe that the neural net would fail to have any nodes that were meaningfully correlated with the output, or any edges that would significantly alter its accuracy if their weights were changed. Thus, the net would have no clear way to improve.

\subsection{Arithmetic learning} Consider trying to teach a neural net arithmetic. More precisely, consider trying to teach it the following function. The function takes as input a list of $n$ numbers that are written in base $n$ and are $n$ digits long, combined with a number that is $n+1$ digits long and has all but one digit replaced by question marks, where the remaining digit is not the first. Then, it returns whether or not the sum of the first $n$ numbers matches the remaining digit of the final number. So, it would essentially take expressions like the following, and check whether there is a way to replace the question marks with digits such that the expression is true.
\begin{align*}
&120\\
+&112\\
+&121\\
=?&?0?
\end{align*}

Here, we can define a class of functions by defining a separate function for every possible ordering of the digits. If we select inputs randomly and map the outputs to $\mathbb{R}$ in such a way that the average correct output is $0$, then this class will have a low cross predictability. Obviously, we could still initialize a neural net to encode the function with the correct ordering of digits. However, if the net is initialized in a way that does not encode the digit's meanings, then deep learning will have difficulties learning this function comparable to its problems learning parity. Note that one can sort out which digit is which by taking enough samples where the expression is correct and the last digit of the sum is left, using them to derive linear equation in the digits $\pmod{n}$, and solving for the digits.

We believe that if the input contained the entire alleged sum, then deep learning with a random initialization would also be unable to learn to determine whether or not the sum was correct. However, in order to train it, one would have to give it correct expressions far more often than would arise if it was given random inputs drawn from a probability distribution that was independent of the digits' meanings. As such, our notion of cross predictability does not apply in this case, and the techniques we use in this paper do not work for the version where the entire alleged sum is provided. The techniques instead apply to the above version.

\subsection{Beyond low cross-predictability}
We showed in this paper that SGD can learn efficiently any efficiently learnable distribution despite some polyn-noise. One may wonder when this takes place for GD. 

In the case of random degree $k$ monomials, i.e., parity functions on a uniform subset $S$ of size $k$ with uniform inputs, we showed that GD fails at learning under memory or noise constraints as soon as $k=\omega(1)$. This is because the cross-predictability scales as ${n \choose k}^{-1}$, which is already super-polynomial when $k=\omega(1)$.  

On the flip side, if $k$ is constant, it is not hard to show that GD can learn this function distribution by inputting all the ${n \choose k}$ monomials in the first layer (and for example the cosine non-linearity to compute the parity in one hidden layer). Further, one can run this in a robust-to-noise fashion, say with exponentially low noise, by implementing AND or OR gates properly \cite{etienne}.  Therefore, for random degree $k$ monomials, deep learning can learn efficiently and robustly {\it if and only if} $k=O(1)$.  Thus one can only learn low-degree functions in that class.

We believe that small cross-predictability does not take place for typical labelling functions concerned with images or sounds, where many of the functions we would want to learn are correlated both with each other and with functions a random neural net is reasonably likely to compute. For instance, the objects in an image will correlate with whether the image is outside, which will in turn correlate with whether the top left pixel is sky blue. A randomly initialized neural net is likely to compute a function that is nontrivially correlated with the last of these, and some perturbations of it will correlate with it more, which means the network is in position to start learning the functions in question.

Intuitively, this is due to the fact that images and image classes have more compositional structures (i.e., their labels are well explained by combining `local' features). Instead, parity functions of large support size, i.e., not constant size but growing size, are not well explained by the composition of local features of the vectors, and require more global operations on the input. As a result, using more samples for the gradients may never hurt in such cases.

Another question is whether or not GD with noise can successfully learn a random function drawn from any distribution with a cross-predictability that is at least the inverse of a polynomial.

The first obstacle to learning such a function is that some functions cannot be computed to a reasonable approximation by any neural net of polynomial size. A probability distribution that always yields the same function has a cross-predictability of $1$, but if that function cannot be computed with nontrivial accuracy by any polynomial-sized neural net, then any method of training such a net will fail to learn it. 

Now, assume that every function drawn from $P_{\F}$ can be accurately computed by a neural net with polynomial size. If $P_{\F}$ has an inverse-polynomial cross-predictability, then two random functions drawn from the distribution will have an inverse-polynomial correlation on average. In particular, there exists a function $f_0$ and a constant $c$ such that if $F\sim P_{\F}$ then $\E_{F} (\E_{X} F(X) f_0(X))^2=\Omega(n^{-c})$. Now, consider a neural net $(G,\phi)$ that computes $f_0$. Next, let $(G',\phi)$ be the neural net formed by starting with $(G,\phi)$ then adding a new output vertex $v$ and an intermediate vertex $v'$. Also, add an edge of very low weight from the original output vertex to $v'$ and an edge of very high weight from $v'$ to $v$. This ensures that changing the weight of the edge to $v'$ will have a very large effect on the behavior of the net, and thus that SGD will tend to primarily alter its weight. That would result in a net that computes some multiple of $f_0$. If we set the loss function equal to the square of the difference between the actual output and the desired output, then the multiple of $f_0$ that has the lowest expected loss when trying to compute $F$ is $\E_{X}[f_0(X)F(X)] f_0$, with an expected loss of $1-\E^2_{X}(f_0(X)F(X))$. We would expect that training $(G',\phi)$ on $F$ would do at least this well, and thus have an expected loss over all $F$ and $X$ of at most $1-\E_{F} (\E_{X} F(X) f_0(X))^2=1-\Omega(n^{-c})$. That means that it will compute the desired function with an average accuracy of $1/2+\Omega(n^{-c})$. Therefore, if the cross-predictability is polynomial, one can indeed learn with at least a polynomial accuracy.  

However, we cannot do much better than this. To demonstrate that, consider a probability distribution over functions that returns the function that always outputs $1$ with probability $1/\ln(n)$, the function that always outputs $-1$ with probability $1/\ln(n)$, and a random function otherwise. This distribution has a cross-predictability of $\theta(1/\ln^2(n))$. However, a function drawn from this distribution is only efficiently learnable if it is one of the constant functions. As such, any method of attempting to learn a function drawn from this distribution that uses a subexponential number of samples will fail with probability $1-O(1/\ln(n))$. In particular, this type of example demonstrates that for any $g=o(1)$, there exists a probability distribution of functions with a cross-predictability of at least $g(n)$ such that no efficient algorithm can learn this distribution with an accuracy of $1/2+\Omega(1)$.

However, one can likely prove that a neural net trained by noisy GD or noisy SGD can learn $P_{\F}$ if it satisfies the following property. Let $m$ be polynomial in $n$, and assume that there exists a set of functions $g_1,...,g_m$ such that each of these functions is computable by a polynomial-sized neural net and the projection of a random function drawn from $P_{\F}$ onto the vector space spanned by $g_1,...,g_m$ has an average magnitude of $\Omega(1)$. In order to learn $P_{\F}$, we start with a neural net that has a component that computes $g_i$ for each $i$, and edges linking the outputs of all of these components to its output. Then, the training process can determine how to combine the information provided by these components to compute the function with an advantage that is within a constant factor of the magnitude of its projection onto the subspace they define. That yields an average accuracy of $1/2+\Omega(1)$. However, we do not think that this is a necessary condition to be able to learn a distribution using a neural net trained by noisy SGD or the like.

\section{Proofs of negative results}

\subsection{Proof of Theorem \ref{thm2'}}
Consider SGD with mini-batch of size $m$, i.e., 
for a sample set $S_m^{(t)}=\{X_1^{(t)},\dots,X_m^{(t)}\}$
define 
\begin{align}
    \hat{P}_{S_m^{(t)}}=\frac{1}{m}\sum_{i=1}^m \delta_{X_i^{(t)}}
\end{align}
and
\begin{align}
W^{(t)}= W^{(t-1)} -  \E_{X \sim \hat{P}_{S_m^{(t)}}} G_{t-1}(W^{(t-1)}(X),F(X)) + Z^{(t)}, \quad t=1,\dots,T \label{sda}
\end{align}
where $G_t=\gamma_t [\nabla L]_A$.

Theorem \ref{thm2'} holds for any sequential algorithm that edits its memory using \eqref{sda} for some function $G_t$ that is valued in $[-\gamma_t A, \gamma_t A]$. In particular, if one has access to a statistical query algorithm as in \cite{kearns} with a tolerance of $\tau$, one can `emulate' such an algorithm with a constant $\gamma$ by using $m=\infty$ and $\sigma / (\gamma A) =\tau$; this is however for a worst-case rather than  statistical noise model.

\begin{proof}[Proof of Theorem \ref{thm2'}]
Consider the same algorithm run on either true data labelled with $F$ or junk data labelled with random labels, i.e., 
\begin{align}
W_H^{(t)}= W_H^{(t-1)} -   \E_{(X,Y) \sim D^{(t)}_{H,m}} G_{t-1}(W^{(t-1)}(X),Y) + Z^{(t)}, \quad t=1,\dots, T,
\end{align}
where
\begin{align}
D^{(t)}_{H,m}(x,y)=
\begin{cases}
P_{S_m^{(t)}}(x) (1/2) & \text{ if } H= \star,\\
P_{S_m^{(t)}}(x) \delta_{F(X)}(y) & \text{ if } H= F.
\end{cases}
\end{align}
Denote by $Q^{(t)}_{H}$ the probability distribution of $W_H^{(t)}$
and let $S_m^t:=(S_m^{(1)},\dots, S_m^{(t)})$.
We then have the following.

\begin{align}
\pp\{W_F^{(T)}(X) = F(X)  \} &\le 
\pp\{ W_\star^{(T)}(X)= F(X)  \} + \E_{F,S_m^T} d( Q^{(T)}_{F} , Q^{(T)}_{\star} |F,S_m^T)_{TV}\\
&\le 1/2 +   \E_{F,S_m^T} d( Q^{(T)}_{F} , Q^{(T)}_{\star} |F,S_m^T)_{TV}.
\end{align}

For $t \in [T+1]$ $H,h \in \{F,\star\}$, define 
\begin{align}
W_{H,h}^{(t-1)}= W_H^{(t-1)} - \left(\E_{(X,Y) \sim D^{(t)}_{h,m}} G_{t-1}(W_H^{(t-1)}(X),Y) \right) + Z^{(t)},       
\end{align}
and denote by $Q^{(t-1)}_{H,h}$ the distribution of $W_{H,h}^{(t-1)}$. 

Using the triangular and Data-Processing inequalities, we have 
\begin{align}
&  d( Q^{(t)}_{F} , Q^{(t)}_{\star} | F,S_m^t )_{TV}\\
& \le d( Q^{(t-1)}_{F,F},Q^{(t-1)}_{\star, F} | F,S_m^t )_{TV} + d(Q^{(t-1)}_{\star, F}, Q^{(t-1)}_{\star,\star}  | F,S_m^t )_{TV}\\
& \le  d( Q^{(t-1)}_{F},Q^{(t-1)}_{\star} | F,S_m^{t-1} )_{TV} +  d(Q^{(t-1)}_{\star, F}, Q^{(t-1)}_{\star,\star}  | F,S_m^t )_{TV}\\
& =  d( Q^{(t-1)}_{F},Q^{(t-1)}_{\star}| F,S_m^{t-1}) _{TV} \\ 
& +  TV( \E_{(X,Y) \sim D^{(t)}_{m,F}} G_{t-1}(W_\star^{(t-1)}(X),Y) +Z^{(t)}, \E_{(X,Y) \sim D^{(t)}_{m,\star}} G_{t-1}(W_\star^{(t-1)}(X),Y)+Z^{(t)} | F,S_m^{t}).
\end{align}

Let $t$ fixed, $Z=(X,Y)$,  $g(Z):=G_{t-1}(W_\star^{(t-1)}(X),Y)$, $D_\cdot=D_\cdot^{(t)}$. 
By Pinsker's inequality\footnote{One can get an additional $1/\pi$ factor by exploiting the Gaussian distribution more tightly.},
\begin{align}
    TV(\E_{Z \sim D_{m,F}} g(Z) +Z^{(t)}, \E_{Z \sim D_{m,\star}} g(Z)  +Z^{(t)} | F,S_m^{t}) \le \frac{1}{2 \sigma}  \|\E_{Z \sim D_{m,F}} g(Z) - \E_{Z \sim D_{m,\star}} g(Z) \|_2
\end{align}
and by Cauchy-Schwarz,
\begin{align}
 &\E_{F}     TV(\E_{Z \sim D_{m,F}} g(Z)+Z^{(t-1)}, \E_{Z \sim D_{m,\star}} g(Z) +Z^{(t-1)} | F,S_m^{t})\\
 & \le \frac{1}{2 \sigma}  ( \E_{F}  \|\E_{Z \sim D_{m,F}} g(Z) - \E_{Z \sim D_{m,\star}} g(Z) \|_2^2)^{1/2}.\label{break}
\end{align}
We now investigate a single component $e\in E(G)$ appearing in the norm, 
\begin{align}
   & \E_{F}   (\E_{Z \sim D_{m,F}} g_e(Z) - \E_{Z \sim D_{m,\star}} g_e(Z) )^2 =     \E_{F}   ( \E_{Z \sim D_{m,\star}} g_e(Z) (1-D_{m,F}(Z)/D_{m,\star}(Z)) )^2 \label{break2} \\
   & = \E_{F}  \langle g_e, (1-D_{m,F}/D_{m,\star} ) \rangle^2_{D_{m,\star}}\\
   &= \E_{F}  \langle g_e^{\otimes 2}, (1-D_{m,F}/D_{m,\star} )^{\otimes 2} \rangle_{D_{m,\star}^2} \label{lift}\\
   &=   \langle g_e^{\otimes 2},\E_F  (1-D_{m,F}/D_{m,\star} )^{\otimes 2} \rangle_{D_{m,\star}^2}\\
    & \le [(\E_{Z \sim D_{m,\star}}  g_e(Z)^2) \|\E_F  (1-D_{m,F}/D_{m,\star} )^{\otimes 2} \|_{D_{m,\star}^2}]  \\
   & = (\E_{Z \sim D_{m,\star}}  g_e(Z)^2)( \E_{F,F'} [\E_{Z \sim D_{m,\star}}   (1-D_{m,F}(Z)/D_{m,\star}(Z)) (1-D_{F',m}(Z)/D_{m,\star}(Z)) ]^2 )^{1/2} \label{replic} \\
    & = (\E_{Z \sim D_{m,\star}}  g_e(Z)^2) CP(m,t)^{1/2} \label{cp12}
\end{align}
where \eqref{lift} uses a tensor lifting to bring the expectation over $F$ on the second component before using the Cauchy-Schwarz inequality, and where \eqref{replic} uses replicates, i.e., $(EZ)^2=\E Z_1 Z_2$ for $Z,Z_1,Z_2$ i.i.d., with 
\begin{align}
  CP(m,t)&:= \E_{F,F'} [\E_{Z \sim D^{(t)}_{m,\star}}  (1-2\delta_{F(X)}(Y)) (1-2\delta_{F'(X)}(Y)) ]^2\\
&=  \E_{F,F'} [\E_{X \sim P_{S_m^{(t)}}}  F(X)F'(X) ]^2
\end{align}
Therefore,
\begin{align}
&\E_F TV( \E_{(X,Y) \sim D^{(t)}_{m,F}} G_{t-1}(W_\star^{(t-1)}(X),Y) +Z^{(t)}, \E_{(X,Y) \sim D^{(t)}_{m,\star}} G_{t-1}(W_\star^{(t-1)}(X),Y)+Z^{(t)} | F,S_m^{t})\\
 &\le \frac{1}{2 \sigma} (\E_{Z \sim D^{(t)}_{m,\star}} \| G(W_\star^{(t-1)}(X),Y) \|_2) CP(m,t)^{1/4}
\end{align}
and
\begin{align}
&\E_{F,S_m^{t}} TV( \E_{(X,Y) \sim D^{(t)}_{m,F}} G_{t-1}(W_\star^{(t-1)}(X),Y) +Z^{(t)}, \E_{(X,Y) \sim D^{(t)}_{m,\star}} G_{t-1}(W_\star^{(t-1)}(X),Y)+Z^{(t)} | F,S_m^{t})\\
 &\le \frac{1}{2 \sigma} \E_{S_m^{t}} (\E_{Z \sim D^{(t)}_{m,\star}} \| G_{t-1}(W_\star^{(t-1)}(X),Y) \|_2) CP(m,t)^{1/4}.
\end{align}

Defining the gradient norm as 
\begin{align}
    GN(m,t):= \E_{Z \sim D^{(t)}_{m,\star}} \| G_{t-1}(W_\star^{(t-1)}(X),Y) \|_2.
\end{align}
we get
\begin{align}
 \E_{F,S_m^T}   d( Q^{(T)}_{F} , Q^{(T)}_{\star} |F,S_m^T)_{TV}
& \le \frac{1}{ \sigma} \cdot    \sum_{t=1}^{T} \E_{S_m^t} (GN(m,t) \cdot CP(m,t)^{1/4})  \\
& \le \frac{1}{ \sigma} \cdot    \sum_{t=1}^{T} (\E_{S_m^t} GN(m,t)^2)^{1/2} \cdot (\E_{S_m^t} CP(m,t)^{1/2})^{1/2} \label{csjunk}\\
& = \frac{1}{ \sigma} \cdot    \sum_{t=1}^{T} (\E_{S_m^t} GN(m,t)^2)^{1/2} \cdot (\E_{S_m} CP(m,1)^{1/2})^{1/2}
\end{align}
and thus
\begin{align}
 \E_{F,S_m^T} d( Q^{(T)}_{F} , Q^{(T)}_{\star}|F,S_m^T )_{TV}
& \le \frac{1}{ \sigma} \cdot    \sum_{t=1}^{T} (\E_{S_m^t} (\E_{Z \sim D^{(t)}_{m,\star}} \| G_{t-1}(W_\star^{(t)}(X),Y) \|_2)^2)^{1/2} \cdot CP_m^{1/4}\\
& \le \frac{1}{ \sigma} \cdot    \sum_{t=1}^{T} (\E_{S_m^t} \E_{Z \sim D^{(t)}_{m,\star}} \| G_{t-1}(W_\star^{(t)}(X),Y) \|_2^2)^{1/2} \cdot CP_m^{1/4}\\
& = \frac{1}{ \sigma} \cdot    \sum_{t=1}^{T} (\E_{X,Y\sim P_\X(1/2) }\| G_{t-1}(W_\star^{(t)}(X),Y) \|_2^2)^{1/2} \cdot CP_m^{1/4}
\end{align}

Finally note that
\begin{align}
&\E_{X,Y\sim P_\X(1/2) }\| G_{t-1}(W_\star^{(t)}(X),Y) \|_2^2=\|\E_{X,Y\sim P_\X(1/2) } G_{t-1}(W_\star^{(t)}(X),Y) \|_2^2,\\
&CP_m=\E_{F,F'}\E_{S_m}  [\E_{X\sim P_{S_m}}  F(X)F'(X) ]^2 =1/m+\left(1-1/m\right)CP_\infty.
\end{align}
\end{proof}

\begin{proof}[Proof of Corollary \ref{looseb}]
GN is trivially bounded by $A E^{1/2}$, so 
\begin{align}
 \E_{F,S_m^T} d( Q^{(T)}_{F} , Q^{(T)}_{\star}|F,S_m^T )_{TV}
& \le  \frac{A}{ \sigma} E^{1/2} T (1/m+\left(1-1/m\right)CP_{\infty})^{1/4}.\label{bias}
\end{align}
\end{proof}

\subsubsection{Proof of Theorem \ref{thm2}}

We first need the following basic inequalities.

\begin{lemma}\label{new-pred}
Let $n>0$ and $f:\B^{n+1}\rightarrow \mathbb{R}$. Also, let $X$ be a random element of $\B^n$ and $Y$ be a random element of $\B$ independent of $X$. Then
\[\sum_{s\subseteq[n]} (\E f(X,Y)-\E f(X,p_s(X)) )^2\le \E f^2(X,Y) \]
\end{lemma}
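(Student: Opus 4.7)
The plan is to reduce the inequality to Parseval's identity on the Boolean cube, by decomposing $f$ into its dependence on the last coordinate $y$.

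First I would write $f(x,y) = a(x) + b(x)\,y$ where $a(x) = (f(x,1)+f(x,-1))/2$ and $b(x) = (f(x,1)-f(x,-1))/2$; this is exactly the ``Fourier expansion in $y$'' and is valid since $y \in \{\pm 1\}$. With this decomposition, because $Y$ is an independent uniform $\pm 1$ bit, $\E f(X,Y) = \E_X a(X)$, while $f(X, p_s(X)) = a(X) + b(X)\,p_s(X)$, so
\begin{align*}
\E f(X,Y) - \E f(X,p_s(X)) \;=\; -\E_X\!\left[b(X)\,p_s(X)\right] \;=\; -\hat{b}(s),
\end{align*}
where $\hat{b}(s)$ is the Walsh--Fourier coefficient of $b$ at $s$ under the uniform measure on $\B^n$.

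Next I would invoke Parseval for the parity basis $\{p_s\}_{s \subseteq [n]}$, which is orthonormal in $L^2(\B^n)$:
\begin{align*}
\sum_{s\subseteq[n]} \bigl(\E f(X,Y) - \E f(X,p_s(X))\bigr)^2 \;=\; \sum_{s\subseteq[n]} \hat{b}(s)^2 \;=\; \E_X\, b(X)^2.
\end{align*}
Finally I would close the argument with the elementary pointwise bound $\bigl(\tfrac{u-v}{2}\bigr)^2 \le \tfrac{u^2+v^2}{2}$ (which rearranges to $(u+v)^2 \ge 0$) applied with $u = f(X,1)$, $v = f(X,-1)$, giving
\begin{align*}
\E_X\, b(X)^2 \;\le\; \E_X \tfrac{f(X,1)^2 + f(X,-1)^2}{2} \;=\; \E f(X,Y)^2,
\end{align*}
since $Y$ is uniform and independent of $X$. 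Chaining the two displays yields the claimed inequality.

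There is essentially no obstacle here: the only ``choice'' is to notice that treating $y$ as the last coordinate of a parity expansion turns the sum into a Parseval identity, after which the bound is a one-line AM--QM-style inequality. The proof also shows the inequality is tight only when $a(X) \equiv 0$, i.e.\ when $f(X,1) = -f(X,-1)$.
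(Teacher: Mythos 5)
Your proof is correct and follows essentially the same route as the paper: isolate the odd part $b(x) = \tfrac12(f(x,1)-f(x,-1))$ of $f$ in the label coordinate, recognize that the sum over $s$ is $\sum_s \hat{b}(s)^2 = \E_X b(X)^2$ by Parseval for the Walsh basis, and finish with the pointwise bound $b(x)^2 \le \tfrac12(f(x,1)^2+f(x,-1)^2)$. The paper works with the $\{0,1\}$-valued convention and writes $g(x) = f(x,1)-f(x,0)$ (so $g = 2b$), and expands the double sum over $x_1,x_2,s$ explicitly rather than citing Parseval by name, but the decomposition, the orthogonality step, and the final quadratic inequality are the same; your observation that equality requires $f(x,1)=-f(x,-1)$ also matches what one reads off from the paper's last step.
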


\begin{proof}
For each $x\in\B^n$, let $g(x)=f(x,1)-f(x,0)$.

\begin{align}
&\sum_{s\subseteq[n]} (\E[f(X,Y)]-\E[f(X,p_s(X))])^2\\
&=\sum_{s\subseteq[n]} \left(2^{-n-1}\sum_{x\in\B^n} (f(x,0)+f(x,1)-2f(x,p_s(x)))\right)^2\\
&=\sum_{s\subseteq[n]} \left(2^{-n-1}\sum_{x\in\B^n} g(x)(-1)^{p_s(x)}\right)^2   \label{pars1} \\
&=2^{-2n-2}\sum_{x_1,x_2\in\B^n,s\subseteq[n]}g(x_1)(-1)^{p_s(x_1)}\cdot g(x_2)(-1)^{p_s(x_2)}\\
&=2^{-2n-2}\sum_{x_1,x_2\in\B^n}g(x_1)g(x_2) \sum_{s\subseteq[n]}(-1)^{p_s(x_1)}(-1)^{p_s(x_2)}\\
&=2^{-2n-2}\sum_{x\in\B^n}2^n g^2(x) \label{pars2} \\
&= 2^{-n-2}\sum_{x\in\B^n} [f(x,1)-f(x,0)]^2  \\
&\le 2^{-n-1}\sum_{x\in\B^n} f^2(x,1)+f^2(x,0)\\
&=\E[f^2(X,Y)]
\end{align}
where we note that the equality from \eqref{pars1} to \eqref{pars2} is Parserval's identity for the Fourier-Walsh basis (here we used Boolean outputs for the parity functions). \end{proof}
Note that by the triangular inequality the above implies
\begin{align}
\Var_F \E_{X}f(X,F(X))\le 2^{-n} \E_{X,Y}f^2(X,Y).
\end{align}
As mentioned earlier, this is similar to Theorem 1 in \cite{ohad} that requires in addition the function to be the gradient of a 1-Lipschitz loss function.

We also mention the following corollary of Lemma \ref{new-pred} that results from Cauchy-Schwarz.

\begin{corollary}\label{parityAverage}
Let $n>0$ and $f:\B^{n+1}\rightarrow \mathbb{R}$. Also, let $X$ be a random element of $\B^n$ and $Y$ be a random element of $\B$ independent of $X$. Then
\[\sum_{s\subseteq[n]} |E[f((X,Y))]-E[f((X,p_s(X)))]|\le 2^{n/2}\sqrt{E[f^2((X,Y))]}.\]
\end{corollary}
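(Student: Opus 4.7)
The plan is to deduce this $L_1$-type bound directly from the $L_2$-type bound furnished by Lemma \ref{new-pred}, using Cauchy--Schwarz applied to the sum over subsets $s\subseteq[n]$. This is indicated by the statement of the corollary itself (``results from Cauchy--Schwarz''), so the proof is essentially a one-line computation once Lemma \ref{new-pred} is available.

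Concretely, I would set $a_s := \E[f(X,Y)] - \E[f(X,p_s(X))]$ for each $s\subseteq[n]$, and write $|a_s| = 1\cdot |a_s|$. Cauchy--Schwarz applied to the sum over the $2^n$ subsets then gives
\begin{align}
\sum_{s\subseteq[n]} |a_s| \;\le\; \Bigl(\sum_{s\subseteq[n]} 1^2\Bigr)^{1/2} \Bigl(\sum_{s\subseteq[n]} a_s^2\Bigr)^{1/2} \;=\; 2^{n/2}\Bigl(\sum_{s\subseteq[n]} a_s^2\Bigr)^{1/2}.
\end{align}
Substituting the bound from Lemma \ref{new-pred}, namely $\sum_s a_s^2 \le \E[f^2(X,Y)]$, yields
\begin{align}
\sum_{s\subseteq[n]} \bigl|\E[f(X,Y)] - \E[f(X,p_s(X))]\bigr| \;\le\; 2^{n/2}\sqrt{\E[f^2(X,Y)]},
\end{align}
which is exactly the claim.

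There is no real obstacle here: the content is entirely in Lemma \ref{new-pred} (which already used Parseval over the Fourier--Walsh basis to control the sum of squared discrepancies by $\E[f^2(X,Y)]$), and the corollary is just the standard $\ell_2$-to-$\ell_1$ conversion paying a factor of $\sqrt{2^n}$ from the number of terms. No additional structure of $f$, $X$, or the parity functions $p_s$ is needed beyond what Lemma \ref{new-pred} already exploits.
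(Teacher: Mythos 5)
Your proof is correct and matches the paper's (implicit) argument exactly: the paper states the corollary ``results from Cauchy--Schwarz,'' and your computation—Cauchy--Schwarz over the $2^n$ subsets to convert the $\ell_2$ bound of Lemma \ref{new-pred} into the $\ell_1$ bound at the cost of a factor $2^{n/2}$—is precisely that argument made explicit.
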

In other words, the expected value of any function on an input generated by a random parity function is approximately the same as the expected value of the function on a true random input. 

\begin{proof}[Proof of Theorem \ref{thm2}]
We follow the proof of Theorem \ref{thm2'} until \eqref{break2}, 
where we use instead Lemma \ref{new-pred}, to write (for $m=\infty$)
\begin{align}
   & \E_{F}   (\E_{Z \sim D_{m,F}} g_e(Z) - \E_{Z \sim D_{m,\star}} g_e(Z) )^2 \le 2^{-n} \E_{Z \sim D_{m,\star}} g_e^2(Z)  
\end{align}
where $2^{-n}$ is the CP for parities. Thus in the case of parities, we can remove a factor of $1/2$ on the exponent of the CP. Further, the Cauchy-Schwartz inequality in \eqref{csjunk} is no longer needed, and the junk flow can be defined in terms of the sum of gradient norms, rather than taking norms squared and having a root on the sum; this does not however change the scaling of the junk flow. The theorem follows by choosing the .      
\end{proof}

\subsection{Proof of Theorem \ref{thm1'}}\label{sla}

\subsubsection{Learning from a bit}
We now consider the following setup:
\begin{align}
&(X,F) \sim P_\X \times P_\F  \label{r1} \\
&Y=F(X) \text{ (denote by $P_\Y$ the marginal of $Y$)}  \label{r2} \\
&W=g(X,Y) \text{ where $g: \X \times \Y \to \B$}  \label{r3} \\
&(\tX,\tY) \sim P_\X \times U_\Y \text{ (independent of $(X,F)$)}
\end{align}
That is, a random input $X$ and a random hypothesis $F$ are drawn from the working model, leading to an output label $Y$. 
We store a bit $W$ after observing the labelled pair $(X,Y)$. 
We are interested in estimating how much information can this bit contain about $F$, no matter how ``good'' the function $g$ is. 
We start by measuring the information using the variance of the MSE or  Chi-squared mutual information\footnote{The Chi-squared mutual information should normalize this expression with respect to the variance of $W$ for non equiprobable random variables.}, i.e., 
\begin{align}
I_2(W;F)=\Var \E (W|F) 
\end{align}
which gives a measure on how random $W$ is given $F$.
We provide below a bound in terms of the cross-predictability of $P_\F$ with respect to $P_\X$, and the marginal probability that $g$ takes value 1 on two independent inputs, which is a ``inherent bias'' of $g$.

The Chi-squared is convenient to analyze and is stronger than the classical mutual information, which is itself stronger than the squared total-variation distance by Pinsker's inequality. More precisely\footnote{See for example \cite{boix} for details on these inequalities.}, for an equiprobable $W$, 
\begin{align}
TV(W;F) \lesssim I(W;F)^{1/2} \le I_2(W;F)^{1/2}. 
\end{align}
Here we will need to obtain such inequalities for arbitrary marginal distributions of $W$ and in a self-contain series of lemmas. We then bound the latter with the cross-predictability which allows us to bound the error probability of the hypothesis test deciding whether $W$ is dependent on $F$ or not, which we later use in a more general framework where $W$ relates to the updated weights of the descent algorithm. We will next derive the bounds that are  needed.\footnote{These bounds could be slightly tightened but are largely sufficient for our purpose.}

\begin{lemma}\label{lemma_pred}
\begin{align}
 &\Var \E (g(X,Y)|F) \le 
 \E_F (\pp_X(g(X,F(X))=1)-\pp_{\tX,\tY}(g(\tX,\tY)=1))^2 \\ 
 &\le
 \min_{i \in \{0,1\}}   \pp\{ g(\tX,\tY) =i\} \sqrt{\mathrm{Pred}( P_\X, P_{\F} )}
\end{align}
\end{lemma}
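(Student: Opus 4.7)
The plan is to handle the two inequalities in sequence, using a Fourier-type decomposition of $g$ in its $Y$-coordinate as the main tool, in the spirit of the argument already used for Theorem \ref{thm2'}.

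For the first inequality, since $Y = F(X)$, one has $\E(g(X,Y)\,|\,F) = \E_X g(X,F(X)) =: \mu_F$. Setting $c := \pp_{\tX,\tY}(g(\tX,\tY)=1)$, which is a constant that does not depend on $F$, the variational characterization of the variance $\Var(\mu_F) = \min_{c' \in \mR} \E_F (\mu_F - c')^2$ immediately yields $\Var(\mu_F) \le \E_F(\mu_F - c)^2$, which is the first bound.

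For the second inequality, I would Fourier-decompose $g$ in its second argument: write $g(x,y) = \alpha(x) + \beta(x)\, y$ for $y \in \{-1,+1\}$, with $\alpha(x) = (g(x,1)+g(x,-1))/2$ and $\beta(x) = (g(x,1)-g(x,-1))/2$. Since $\tY$ is uniform and independent of $\tX$, this gives $c = \E_X \alpha(X)$, and hence $\mu_F - c = \E_X \beta(X) F(X)$. Squaring and taking expectation over $F$, the tensor-lift plus Cauchy-Schwarz maneuver of equations \eqref{lift}--\eqref{replic} in the proof of Theorem \ref{thm2'} yields
\[
\E_F(\mu_F - c)^2 \;=\; \langle \beta^{\otimes 2},\, \E_F F^{\otimes 2}\rangle_{P_\X \times P_\X} \;\le\; \|\beta\|_{L^2(P_\X)}^2 \cdot \sqrt{\mathrm{Pred}(P_\X,P_\F)},
\]
where I identify $\mathrm{Pred}(P_\X,P_\F) = \|\E_F F^{\otimes 2}\|_{L^2(P_\X \times P_\X)}^2$ with the cross-predictability.

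It then remains to bound $\|\beta\|_{L^2(P_\X)}^2 = \tfrac{1}{4}\E_X(g(X,1)-g(X,-1))^2$ by $\min_i \pp(g(\tX,\tY)=i)$. Because $g$ is Boolean-valued, $(g(X,1)-g(X,-1))^2 \in \{0,1\}$ is dominated pointwise both by $g(X,1)+g(X,-1)$ (when the squared difference is $1$, at least one of the two values must be $1$) and, symmetrically, by $(1-g(X,1))+(1-g(X,-1))$. Taking expectations of each bound gives $\|\beta\|_{L^2(P_\X)}^2$ bounded by each of $\pp(g(\tX,\tY)=1)$ and $\pp(g(\tX,\tY)=0)$, producing the $\min_i$ factor. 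I do not anticipate a real obstacle: everything reduces to Fourier-analytic ingredients already used in the paper, and the only delicate point is keeping track of the constant in the final $\min_i$ bound, which depends on the $\{0,1\}$ versus $\{-1,+1\}$ encoding convention for $Y$.
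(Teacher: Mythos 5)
Your proof is correct and follows essentially the same route as the paper's, built on the same tensor-lift-plus-Cauchy--Schwarz device already used in the proof of Theorem~\ref{thm2'}. Where the paper pairs $A_g(x,y)=g(x,y)\sqrt{P_\X(x)U_\Y(y)}$ against $B_f(x,y)\propto (2\1(f(x)=y)-1)\sqrt{P_\X(x)U_\Y(y)}$ and only then applies Cauchy--Schwarz, you isolate the relevant bilinear structure up front by Fourier-decomposing $g$ in its label coordinate, $g=\alpha+\beta y$ with $y\in\{\pm1\}$, so that $\mu_F-c=\E_X\beta(X)F(X)$ is manifest. This is a genuine, if small, refinement: the inner product $\langle A_g,B_f\rangle$ already projects onto the odd part $\beta$ of $g$, but the paper's Cauchy--Schwarz step then bounds by the full $\|A_g\|_2^2=\pp\{\tW=1\}$ (using $g^2=g$), whereas yours bounds by $\|\beta\|^2_{L^2(P_\X)}=\tfrac14\E_X\bigl(g(X,1)-g(X,-1)\bigr)^2\le\tfrac12\min_i\pp\{\tW=i\}$, a factor of two tighter than the lemma's statement (you quote the looser $\min_i\pp\{\tW=i\}$ form, which is all that is needed). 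The identification $\mathrm{Pred}(P_\X,P_\F)=\|\E_F F^{\otimes 2}\|^2_{L^2(P_\X^{\otimes 2})}$ is exactly the dual form \eqref{dual} of $\mathrm{CP}_\infty$, matching what the paper computes via $\langle B_f,B_{f'}\rangle=\E_X f(X)f'(X)$. The one point to keep straight, which you already flag, is that $g$ is $\{0,1\}$-valued while the Fourier argument encodes the label $y$ and $F$ in $\{\pm 1\}$; the paper does the same thing implicitly in its definition of $B_f$.
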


\begin{proof}
Note that
\begin{align}
\Var \E (W|F) &=  \E_F (   \pp\{W=1|F\} - \pp\{W=1\}  )^2\\
&\le  \E_F (   \pp\{W=1|F\} -c  )^2
\end{align}
for any $c  \in \mR$.
Moreover,
\begin{align}
\pp\{W=1|F=f \} &= \sum_{x} \pp\{W=1|F=f, X=x \} P_\X(x)\\
&=\sum_{x,y} \pp\{W=1|X=x, Y=y \} P_\X(x) \1(f(x)=y) .
\end{align}
Pick now
\begin{align}
c:=\sum_{x,y} \pp\{W=1|X=x,Y=y \} P_\X(x) U_{\Y}(y) \label{py}
\end{align}

Therefore, 
\begin{align}
\pp\{W=1|F=f \} - c &=\sum_{x,y} A_g(x,y) B_f(x,y) =: \langle A_g,B_f \rangle
\end{align}
where 
\begin{align}
A_g(x,y):&=\pp\{W=1|X=x, Y=y \} \sqrt{P_{\X}(x)U_{\Y}(y)} \\ &= \pp\{g(X,Y)=1|X=x, Y=y \} \sqrt{P_{\X}(x)U_{\Y}(y)} \\
B_f(x,y):&=\frac{\1(f(x)=y) - U_{\Y}(y)}{U_{\Y}(y)} \sqrt{P_{\X}(x)U_{\Y}(y)}.
\end{align}
We have
\begin{align}
 \langle A_g,B_F \rangle^2 =  \langle A_g,B_F \rangle \langle B_F,A_g \rangle =   \langle A_g^{\otimes 2}, B_F^{\otimes 2} \rangle
\end{align}
and therefore 
\begin{align}
\E_F \langle A_g,B_F \rangle^2  &=   \langle A_g^{\otimes 2}, \E_F  B_F^{\otimes 2} \rangle \\
&\le \| A_g^{\otimes 2} \|_2 \| \E_F  B_F^{\otimes 2} \|_2.
\end{align}
Moreover, 
\begin{align}
 \| A_g^{\otimes 2} \|_2 &=  \| A_g \|_2^2 \\
&= \sum_{x,y} \pp\{W=1|X=x, Y=y \}^2 P_{\X}(x)U_{\Y}(y)\\
&\le \sum_{x,y} \pp\{W=1|X=x, Y=y \}  P_{\X}(x)U_{\Y}(y)\\
&= \pp\{ W(\tX,\tY)=1 \} 
\end{align}
and
\begin{align}
\| \E_F  B_F^{\otimes 2} \|_2 &= \left(\sum_{x,y,x',y'}  (\sum_{f} B_f(x,y) B_f(x',y') P_\F(f))^2 \right)^{1/2}\\
&=  \left( \E_{F,F'}  \langle B_F, B_{F'} \rangle^2  \right)^{1/2}.
\end{align}
Moreover, 
\begin{align}
\langle B_f, B_{f'} \rangle &= \sum_{x,y}  \frac{\1(f(x)=y) - U_{\Y}(y)}{U_{\Y}(y)} \frac{\1(f'(x)=y) -U_{\Y}(y)}{U_{\Y}(y)}  P_\X(x) U_{\Y}(y) \\
& =(1/2) \sum_{x,y}  (2 \1(f(x)=y) - 1)  (2 \1(f'(x)=y) - 1)  P_\X(x) \\
&=\E_X f(X)f'(X) .
\end{align}
Therefore, 
\begin{align}
 \Var \pp \{W=1| F\} \le \pp\{ \tW=1 \} \sqrt{\mathrm{Pred}( P_\X, P_{\F} )}.
\end{align}
The same expansion holds with $ \Var \pp \{W=1| F\}=\Var \pp \{W=0| F\} \le \pp\{ \tW=0 \} \sqrt{\mathrm{Pred}( P_\X, P_{\F} )}$.

\end{proof}

Consider now the new setup where $g$ is valued in $[m]$ instead of $\{0,1\}$:
\begin{align}
&(X,F) \sim P_\X \times P_\F \label{s1} \\
&Y=F(X)  \label{s2} \\
&W=g(X,Y) \text{ where $g: \B^n \times \Y \to [m]$}. \label{s3}
\end{align}
We have the following theorem. 
\begin{theorem} \label{corol_unif2}
\[E_F\|P_{W|F}-P_W\|_2^2\le\sqrt{\mathrm{Pred}( P_\X, P_{\F} )}\]
\end{theorem}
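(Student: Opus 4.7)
The plan is to reduce to Lemma \ref{lemma_pred} by expanding the squared $L_2$ distance coordinate-wise over the output alphabet $[m]$. First I would write
\[\E_F \|P_{W|F} - P_W\|_2^2 \;=\; \sum_{w \in [m]} \E_F\bigl(\pp\{W=w \mid F\} - \pp\{W=w\}\bigr)^2 \;=\; \sum_{w \in [m]} \Var_F \pp\{W=w \mid F\},\]
and, since the variance of a random variable is minimized at its own mean, I would upper bound each summand by $\E_F\bigl(\pp\{W=w\mid F\} - c_w\bigr)^2$ for the convenient proxy constant $c_w := \sum_{x,y} \pp\{g(X,Y)=w\mid X=x,Y=y\}\, P_\X(x)\,U_\Y(y) = \pp\{g(\tX,\tY)=w\}$, exactly the analogue of the constant \eqref{py} used in Lemma \ref{lemma_pred}.

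For each fixed $w$, I would then rerun the argument of Lemma \ref{lemma_pred} verbatim, replacing the Boolean output function $g$ by the indicator $\1(g(\cdot,\cdot)=w)$. The same algebraic identity gives $\pp\{W=w\mid F=f\} - c_w = \langle A_{g,w}, B_f \rangle$ with $A_{g,w}(x,y) := \1(g(x,y)=w)\sqrt{P_\X(x)\,U_\Y(y)}$ and $B_f$ as before. Tensor lifting followed by Cauchy--Schwarz on the $F$-expectation then yields
\[\E_F\bigl(\pp\{W=w\mid F\} - c_w\bigr)^2 \;\le\; \|A_{g,w}\|_2^2 \cdot \|\E_F B_F^{\otimes 2}\|_2.\]
Here $\|A_{g,w}\|_2^2 = \pp\{g(\tX,\tY)=w\}$ (with equality, because the squared indicator equals the indicator and $W$ is a deterministic function of $(X,Y)$), while the second factor equals $\sqrt{\mathrm{Pred}(P_\X,P_\F)}$ as already computed in Lemma \ref{lemma_pred}; critically, it does not depend on $w$, so it pulls out of the sum.

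Finally I would sum over $w$ and use $\sum_{w\in[m]} \pp\{g(\tX,\tY)=w\} = 1$ to collapse the geometry factor, obtaining
\[\E_F\|P_{W|F}-P_W\|_2^2 \;\le\; \sqrt{\mathrm{Pred}(P_\X,P_\F)} \cdot \sum_{w\in[m]} \pp\{g(\tX,\tY)=w\} \;=\; \sqrt{\mathrm{Pred}(P_\X,P_\F)}.\]
The only conceptual step beyond Lemma \ref{lemma_pred} is the coordinate-wise decomposition and the alphabet-summed normalization $\sum_w \|A_{g,w}\|_2^2 = 1$; no new estimates are needed. I do not anticipate any serious obstacle, since the $w$-dependence factors cleanly away from the $F$-dependence and the key cross-predictability bound on $\|\E_F B_F^{\otimes 2}\|_2$ is already established.
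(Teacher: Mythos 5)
Your proposal is correct and takes essentially the same route as the paper: the paper likewise decomposes $\E_F\|P_{W|F}-P_W\|_2^2 = \sum_{i\in[m]}\Var_F\,\pp\{W=i\mid F\}$, bounds each summand via Lemma~\ref{lemma_pred} (applied to the indicator $\1(g(\cdot,\cdot)=i)$) by $\pp\{g(\tX,\tY)=i\}\sqrt{\mathrm{Pred}(P_\X,P_\F)}$, and sums over $i$ using $\sum_i \pp\{g(\tX,\tY)=i\}=1$. The only difference is that you make explicit the indicator substitution and the equality $\|A_{g,w}\|_2^2=\pp\{g(\tX,\tY)=w\}$, which the paper leaves implicit when invoking the lemma.
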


\begin{proof}
From Lemma \ref{lemma_pred}, for any $i \in [m]$, 
\begin{align}
\Var \pp \{W=i|F\} \le \pp\{ g(\tX,\tY)=i\} \sqrt{\mathrm{Pred}( P_\X, P_{\F} )},
\end{align}
therefore, 
\begin{align}
E_F\|P_{W|F}-P_W\|_2^2  &= \sum_{i \in [m]} \sum_{f \in F} \pp\{F=f\} (\pp\{W=i|F=f\} -\pp\{W=i\})^2 \\
&\le\sum_{i \in [m]} \pp\{ g(\tX,\tY)=i\} \sqrt{\mathrm{Pred}( P_\X, P_{\F} )}\\\
& =\sqrt{\mathrm{Pred}( P_\X, P_{\F} )}.
\end{align}
\end{proof}

\begin{corollary}\label{thm_pred}
\begin{align}
\| P_{W,F} - P_W P_F \|_2^2  &\le \| P_\F \|_\infty   \sqrt{\mathrm{Pred}( P_\X, P_{\F} )}.
\end{align}
\end{corollary}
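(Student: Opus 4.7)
The plan is to reduce the claim directly to Theorem \ref{corol_unif2} by expanding the squared $\ell_2$ norm of $P_{W,F}-P_W P_F$, factoring out one copy of the marginal $P_F$, and bounding the remaining copy by $\|P_\F\|_\infty$. Since Theorem \ref{corol_unif2} already bounds the $P_F$-average of $\|P_{W|F}-P_W\|_2^2$ by $\sqrt{\mathrm{Pred}(P_\X,P_\F)}$, this is essentially a one-line manipulation.

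First I would write
\[
\|P_{W,F}-P_W P_F\|_2^2 \;=\; \sum_{w,f}\bigl(P(W=w,F=f)-P(W=w)P(F=f)\bigr)^2,
\]
and use $P(W=w,F=f)=P(F=f)\,P(W=w\mid F=f)$ to factor out $P(F=f)^2$ from each summand. Then I would bound one factor of $P(F=f)$ by $\|P_\F\|_\infty$, keeping the other, which turns the sum into
\[
\|P_\F\|_\infty \sum_{f} P(F=f)\sum_{w}\bigl(P(W=w\mid F=f)-P(W=w)\bigr)^2 \;=\; \|P_\F\|_\infty\,\E_F\|P_{W|F}-P_W\|_2^2.
\]
Finally, applying Theorem \ref{corol_unif2} to the inner expectation yields the required bound $\|P_\F\|_\infty\sqrt{\mathrm{Pred}(P_\X,P_\F)}$.

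There is no real obstacle here: the only subtlety is that the bound uses $\|P_\F\|_\infty$ rather than $1$, which is unavoidable because $\|P_{W,F}-P_W P_F\|_2^2$ weights each $f$ by $P(F=f)^2$ whereas the statement of Theorem \ref{corol_unif2} weights each $f$ by $P(F=f)$; the gap between these two weightings is exactly $\|P_\F\|_\infty$. This bound is of course most useful when $P_\F$ is close to uniform on its (large) support, in which case $\|P_\F\|_\infty$ is small and the Corollary gives a nontrivial total-variation-type control of the joint against the product of marginals.
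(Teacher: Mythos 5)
Your proof is correct, and it is the intended (indeed essentially the only) derivation: the paper states this corollary immediately after Theorem~\ref{corol_unif2} without proof, precisely because the argument is the one you give — factor $P(W=w,F=f)-P(W=w)P(F=f)=P(F=f)\bigl(P(W=w\mid F=f)-P(W=w)\bigr)$, bound one factor of $P(F=f)$ by $\|P_\F\|_\infty$, and apply the theorem to the resulting $\E_F\|P_{W|F}-P_W\|_2^2$. Your remark on the role of $\|P_\F\|_\infty$ as the gap between the $P(F=f)^2$ weighting and the $P(F=f)$ weighting is exactly the right way to see why that factor appears.
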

We next specialize the bound in Theorem \ref{thm_pred} to the case of uniform parity functions on uniform inputs, adding a bound on the $L_1$ norm due to Cauchy-Schwarz.

\begin{corollary}\label{}
Let $m,n >0$. If we consider the setup of \eqref{s1},\eqref{s2},\eqref{s3} for the case where $P_\F=P_n$, the uniform probability measure on parity functions, and $P_\X=U_n$, the uniform probability measure on $\B^n$, then 
\begin{align}
& \| P_{W,F} - P_W P_F \|_2^2 \le 2^{-(3/2)n},\\
& \| P_{W,F} - P_W P_F \|_1 \le \sqrt{m}2^{-n/4}.
\end{align}
\end{corollary}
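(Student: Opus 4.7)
The plan is to plug the cross-predictability of the uniform parity distribution into Corollary~\ref{thm_pred}, then pass from $L_2$ to $L_1$ by Cauchy--Schwarz after accounting for the size of the support of the joint distribution.

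First I would compute $\mathrm{Pred}(U_n, P_n)$. By definition,
\begin{align}
\mathrm{Pred}(U_n, P_n) = \E_{F,F'\sim P_n}\bigl(\E_{X\sim U_n} F(X)F'(X)\bigr)^2,
\end{align}
and for two parity functions $p_s, p_{s'}$ we have $p_s(X)p_{s'}(X) = p_{s\triangle s'}(X)$, so $\E_X p_s(X)p_{s'}(X)$ equals $1$ when $s=s'$ and $0$ otherwise (orthogonality of characters with respect to $U_n$). Since $F,F'$ are i.i.d.\ uniform on the $2^n$ parity functions, $\pp(F=F')=2^{-n}$ and thus $\mathrm{Pred}(U_n, P_n) = 2^{-n}$. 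Also, $\|P_n\|_\infty = 2^{-n}$. Substituting both into Corollary~\ref{thm_pred} gives
\begin{align}
\|P_{W,F} - P_W P_F\|_2^2 \le 2^{-n}\cdot \sqrt{2^{-n}} = 2^{-3n/2},
\end{align}
which is the first bound.

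For the second bound I would use the standard Cauchy--Schwarz passage from $L_2$ to $L_1$: for any signed measure $\mu$ on a finite set $S$, $\|\mu\|_1 \le \sqrt{|S|}\,\|\mu\|_2$. Here $W$ takes values in $[m]$ and $F$ takes values among the $2^n$ parity functions, so the support of the joint distribution has size $m\cdot 2^n$. Therefore
\begin{align}
\|P_{W,F} - P_W P_F\|_1 \le \sqrt{m\cdot 2^n}\cdot \sqrt{2^{-3n/2}} = \sqrt{m}\cdot 2^{-n/4},
\end{align}
yielding the second bound.

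There is no real obstacle here: both estimates reduce to the orthogonality of parity characters (which sets the cross-predictability at $2^{-n}$) combined with the already-established Corollary~\ref{thm_pred}. The only thing to be careful about is correctly identifying the cardinality of the support of $(W,F)$ when invoking Cauchy--Schwarz, since a loose bound there would degrade the exponent in the $L_1$ estimate.
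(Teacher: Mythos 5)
Your proof is correct and follows the same route the paper sketches: specialize Corollary~\ref{thm_pred} using $\mathrm{Pred}(U_n,P_n)=2^{-n}$ (by orthogonality of Walsh characters) and $\|P_n\|_\infty=2^{-n}$ to get the $L_2$ bound, then pass to $L_1$ by Cauchy--Schwarz over the support $[m]\times\{p_s:s\subseteq[n]\}$ of size $m\cdot 2^n$. The arithmetic checks out, including the exponent $-n/4$.
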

In short, the value of $W$ will not provide significant amounts of information on $F$ unless its number of possible values $m$ is exponentially large.

\begin{corollary}\label{corol_unif}
Consider the same setup as in previous corollary, with in addition $(\tX,\tY)$ independent of $(X,F)$ such that $(\tX,\tY) \sim P_\X \times U_\Y$ where $U_\Y$ is the uniform distribution on $\Y$, and $\tW=g(\tX,\tY)$. Then,
\[\sum_{i\in[m]} \sum_{s\subseteq [n]} (P[W=i|f=p_s]-P[\tW=i])^2\le 2^{n/2}.\]
\end{corollary}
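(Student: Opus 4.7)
The plan is to apply Lemma~\ref{new-pred} separately to each level set of $g$. The key observation is that in the uniform-parity setup, $\tX\sim P_\X=U_n$ and $\tY\sim U_\Y$ is independent of $\tX$, so the pair $(\tX,\tY)$ plays exactly the role of $(X,Y)$ in Lemma~\ref{new-pred}, and we can therefore feed that lemma the indicator of each level set of $g$.

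For each $i\in[m]$, I would introduce the Boolean test function $f_i(x,y):=\1(g(x,y)=i)$. Then
\[\E f_i(\tX,\tY)=P[\tW=i],\qquad \E f_i(X,p_s(X))=P[W=i\mid F=p_s],\]
and, because $f_i$ is $\{0,1\}$-valued, $\E f_i^2(\tX,\tY)=\E f_i(\tX,\tY)=P[\tW=i]$. Plugging $f_i$ into Lemma~\ref{new-pred} gives
\[\sum_{s\subseteq[n]}\bigl(P[W=i\mid F=p_s]-P[\tW=i]\bigr)^2\ \le\ P[\tW=i].\]
Summing over $i\in[m]$ and using $\sum_i P[\tW=i]=1$ produces
\[\sum_{i\in[m]}\sum_{s\subseteq[n]}\bigl(P[W=i\mid F=p_s]-P[\tW=i]\bigr)^2\ \le\ 1,\]
which is dominated by $2^{n/2}$ for every $n\ge 0$, as required.

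The only non-trivial step is recognizing that the correct test function to feed into Lemma~\ref{new-pred} is the indicator of the level set $\{g=i\}$; the identity $f_i^2=f_i$ then makes the Parseval-type bound of that lemma collapse to the stated inequality, with no further analysis required. An alternative, more indirect route would go through Theorem~\ref{corol_unif2}: specializing $\mathrm{Pred}(P_\X,P_\F)=2^{-n}$ gives $\E_F\|P_{W|F}-P_W\|_2^2\le 2^{-n/2}$, and unaveraging over the uniform $P_\F$ introduces a factor $2^n$, yielding $\sum_i\sum_s(P[W=i\mid F=p_s]-P[W=i])^2\le 2^{n/2}$; one would then swap $P_W$ for $P_\tW$ using the elementary estimate $|P_W(i)-P_\tW(i)|\le 2^{-n-1}$, which holds because $p_S(X)\mid X$ is uniform on $\B$ whenever $X\neq 0$ while $P_\X(X=0)=2^{-n}$. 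This second path lands exactly at the form $2^{n/2}$ quoted in the statement, but the direct application of Lemma~\ref{new-pred} above is both tighter and shorter.
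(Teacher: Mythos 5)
Your direct route is correct, and it is in fact tighter than the stated bound: feeding the level-set indicators $f_i=\1\{g=\cdot\,=i\}$ into Lemma~\ref{new-pred} gives $\sum_{s}\bigl(P[W=i\mid F=p_s]-P[\tW=i]\bigr)^2\le P[\tW=i]$ for each $i$, so the double sum is at most $1\le 2^{n/2}$. The paper takes a genuinely different, longer path: it specializes the preceding unnumbered corollary (itself derived from Lemma~\ref{lemma_pred}, the two-outcome analogue of your computation, via Theorem~\ref{corol_unif2}), multiplies $\|P_{W,F}-P_WP_F\|_2^2\le 2^{-3n/2}$ by $2^{2n}$ to obtain $\sum_{i,s}(P[W=i\mid F=p_s]-P[W=i])^2\le 2^{n/2}$, and then replaces $P_W$ by $P_{\tW}$ by asserting that $(X,Y)$ and $(\tX,\tY)$ are both uniform on $\B^{n+1}$. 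That assertion is not exact: at the all-zeros input $x$, every parity outputs $0$, so $(X,Y)$ puts mass $2^{-n}$ on $(0,0)$ and $0$ on $(0,1)$, whereas $(\tX,\tY)$ splits that mass evenly. Moreover, the swap only makes the left-hand side larger, since $P_W(i)=\E_s P[W=i\mid F=p_s]$ is the unique minimizer of $b\mapsto\sum_s(P[W=i\mid F=p_s]-b)^2$; so the substitution must consume slack in the $2^{n/2}$ rather than pass through an equality. Your argument sidesteps this entirely, because Lemma~\ref{new-pred} is \emph{already} phrased with the uniform pair $(X,Y)\sim U_n\times U_\Y$ playing the role of your $(\tX,\tY)$. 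This is also why your ``alternative route'' does not quite land on $2^{n/2}$ as cleanly as claimed: reproducing the paper's intermediate bound and then swapping $P_W\to P_{\tW}$ costs a constant factor, so it too has to spend the large slack. The direct Lemma-\ref{new-pred} route is the right one and is both shorter and sharper.
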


\begin{proof}
In the case where $P_{\F}=P_n$, taking the previous corollary and multiplying both sides by $2^{2n}$ yields
\[\sum_{i\in[m]} \sum_{s\subseteq [n]} (P[W=i|f=p_s]-P[W=i])^2\le 2^{n/2}.\]
Furthermore, the probability distribution of $(X,Y)$ and the probability distribution of $(\tX,\tY)$ are both $U_{n+1}$ so $P[\tW=i]=P[W=i]$ for all $i$. Thus,
\begin{align}\sum_{i\in[m]} \sum_{s\subseteq [n]} (P[W=i|f=p_s]-P[\tW=i])^2\le 2^{n/2}. \label{lastw} \end{align}
\end{proof}
Notice that for fixed values of $P_{\X}$ and $g$, changing the value of $P_{\F}$ does not change the value of $P[W=i|f=p_s]$ for any $i$ and $s$. Therefore, inequality \eqref{lastw} holds for any choice of $P_{\F}$, and we also have the  following.

\begin{corollary}
Consider the general setup of \eqref{s1},\eqref{s2},\eqref{s3} with $P_\X=U_n$, and $(\tX,\tY)$ independent of $(X,F)$ such that $(\tX,\tY) \sim P_\X \times U_\Y$, $\tW=g(\tX,\tY)$.
Then,
\[\sum_{i\in[m]} \sum_{s\subseteq [n]} (P[W=i|f=p_s]-P[\tW=i])^2\le 2^{n/2}.\]
\end{corollary}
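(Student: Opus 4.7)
The plan is to observe that the left-hand side of the asserted inequality is a functional of $(g, P_\X)$ alone, with no actual dependence on $P_\F$, so that the bound proved for one convenient choice of $P_\F$ transfers to every other choice, and in particular the preceding corollary already supplies what is needed.

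First I would unpack each term appearing on the left. For $P[W=i \mid F = p_s]$, conditioning on the event $F = p_s$ turns the setup \eqref{s1}--\eqref{s3} into the deterministic substitution $Y = p_s(X)$, so
\[
P[W = i \mid F = p_s] \;=\; \sum_{x \in \B^n} \1\{g(x, p_s(x)) = i\}\, P_\X(x),
\]
which involves only $P_\X$ and $g$. Similarly $P[\tW = i] = \sum_{x,y}\1\{g(x,y) = i\}\, P_\X(x)\, U_\Y(y)$ involves only $P_\X$, $U_\Y$, and $g$. Neither expression references the prior $P_\F$.

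Consequently, each summand $(P[W = i \mid F = p_s] - P[\tW = i])^2$ is identical for every choice of $P_\F$, and the outer sums $\sum_{i \in [m]} \sum_{s \subseteq [n]}$ are unweighted by $P_\F$. The entire left-hand side is therefore the same functional of $(g, P_\X)$ regardless of the prior over $\F$. Hence it suffices to verify the bound under any single convenient choice of $P_\F$, and Corollary \ref{corol_unif} has already done exactly this for $P_\F = P_n$, the uniform law on parity functions, yielding the bound $2^{n/2}$ which then transfers verbatim.

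There is no substantive obstacle here: the only insight needed is to recognize that the inequality of Corollary \ref{corol_unif} is already stated in a form whose two sides are determined entirely by $(g, P_\X)$, so the apparent dependence on $P_\F$ in the present hypothesis list is cosmetic. The statement is thus an immediate strengthening of the preceding corollary obtained by relabeling the prior on $\F$.
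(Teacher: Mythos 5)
Your proof is correct and matches the paper's own argument: the paper derives this corollary directly from the observation (stated in the remark immediately preceding it) that $P[W=i\mid f=p_s]$ and $P[\tW=i]$ depend only on $g$ and $P_\X$, not on $P_\F$, so the bound of Corollary~\ref{corol_unif} for the uniform prior on parities transfers to arbitrary $P_\F$. Your explicit unpacking of the two probabilities simply makes that observation precise.
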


\subsubsection{Distinguishing with SLAs}
Next, we would like to analyze the effectiveness of an algorithm that repeatedly receives an ordered pair, $(X,F(X))$, records some amount of information about that pair, and then forgets it. We recall the definition of an SLA that formalizes this.

\begin{definition}   
A sequential learning algorithm $A$ on $(\mathcal{Z}, \mathcal{W})$ is an algorithm that for an input of the form $(Z,(W_1,...,W_{t-1}))$ in $\mathcal{Z} \times \mathcal{W}^{t-1}$ produces an output $A(Z,(W_1,...,W_{t-1}))$ valued in $\mathcal{W}$. Given a probability distribution $D$ on $\mathcal{Z}$, a sequential learning algorithm $A$ on $(\mathcal{Z}, \mathcal{W})$, and $T\ge 1$, a $T$-trace of $A$ for $D$ is a series of pairs $((Z_1, W_1), ...,(Z_T, W_T))$ such that for each $i \in [T]$, $Z_i\sim D$ independently of $(Z_1,Z_2,...,Z_{i-1})$ and $W_i=A(Z_i,(W_1,W_2,...,W_{i-1}))$.
\end{definition}

If $|\mathcal{W}|$ is sufficiently small relative to $\mathrm{Pred}( P_\X, P_{\F} )$, then a sequential learning algorithm that outputs elements of $\mathcal{W}$ will be unable to effectively distinguish between a random function from $P_{\F}$ and a true random function in the following sense.

\begin{theorem} \label{SLAfail} 
Let $n>0$, $A$ be a sequential learning algorithm on $(\B^{n+1},\mathcal{W})$, $P_{\X}$ be the uniform distribution on $\B^n$, and $P_{\F}$ be a probability distribution on functions from $\B^n$ to $\B$. Let $\star$ be the probability distribution of $(X,F(X))$ when $F\sim P_{\F}$ and $X\sim P_{\X}$. Also, for each $f:\B^n\to\B$, let let $\rho_f$ be the probability distribution of $(X,f(X))$ when $X\sim P_{\X}$. Next, let $P_{\mathcal{Z}}$ be a probability distribution on $\B^{n+1}$ that is chosen by means of the following procedure: with probability $1/2$, set $P_{\mathcal{Z}}=\star$, otherwise draw $F\sim P_{\F}$ and set $P_{\mathcal{Z}}=\rho_F$. If $|\mathcal{W}|\le 1/\sqrt[24]{\mathrm{Pred}( P_\X, P_{\F} )}$, $m$ is a positive integer with $m<1/\sqrt[24]{\mathrm{Pred}( P_\X, P_{\F} )}$, and $((Z_1,W_1),...,(Z_m,W_m))$ is a $m$-trace of $A$ for $P_{\mathcal{Z}}$, then
\begin{align}
\| P_{W^m|P_{\mathcal{Z}}=\star} - P_{W^m|P_{\mathcal{Z}}\ne \star} \|_{1} =  O(\sqrt[24]{\mathrm{Pred}( P_\X, P_{\F} )}).
\end{align}
\end{theorem}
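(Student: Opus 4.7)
Write $\mathrm{Pred}:=\mathrm{Pred}(P_{\X},P_{\F})$, and for each $f$ let $Q^{(t)}_f$ denote the distribution of $W^t=(W_1,\dots,W_t)$ when $P_{\mathcal{Z}}=\rho_f$, and $Q^{(t)}_\star$ the corresponding distribution when $P_{\mathcal{Z}}=\star$. Since $P_{W^m\mid P_{\mathcal{Z}}\ne \star}=\E_F\,Q^{(m)}_F$, the triangle inequality reduces the claim to bounding $\E_F\,\|Q^{(m)}_F-Q^{(m)}_\star\|_1$, which I would attack by a hybrid/chain-rule argument driven by the single-step bound of Theorem~\ref{corol_unif2}.

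Factoring $Q^{(t)}_f(w^t)=Q^{(t-1)}_f(w^{t-1})\,P_{W_t\mid f,w^{t-1}}(w_t)$ and likewise for $\star$, then adding and subtracting the cross term $Q^{(t-1)}_\star(w^{t-1})\,P_{W_t\mid f,w^{t-1}}(w_t)$ inside the absolute value, gives the one-step inequality
\begin{align}
\|Q^{(t)}_f-Q^{(t)}_\star\|_1\le \|Q^{(t-1)}_f-Q^{(t-1)}_\star\|_1+\E_{w^{t-1}\sim Q^{(t-1)}_\star}\|P_{W_t\mid f,w^{t-1}}-P_{W_t\mid \star,w^{t-1}}\|_1.
\end{align}
Iterating and using that $Q^{(t-1)}_\star$ does not depend on $F$, I obtain
\begin{align}
\E_F\|Q^{(m)}_F-Q^{(m)}_\star\|_1\le \sum_{t=1}^{m}\E_{w^{t-1}\sim Q^{(t-1)}_\star}\,\E_F\|P_{W_t\mid F,w^{t-1}}-P_{W_t\mid \star,w^{t-1}}\|_1.
\end{align}

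For the per-step estimate, note that for each fixed history $w^{t-1}$ the map $z\mapsto A(z,w^{t-1})$ is simply a deterministic function $\B^{n+1}\to\mathcal{W}$, so Theorem~\ref{corol_unif2} applies with $g:=A(\cdot,w^{t-1})$ and yields $\E_F\|P_{W_t\mid F,w^{t-1}}-P_{W_t\mid\star,w^{t-1}}\|_2^2\le \sqrt{\mathrm{Pred}}$. Passing from $L_2$ to $L_1$ by Cauchy--Schwarz on a support of size $|\mathcal{W}|$, and then to the $F$-expectation by Jensen, gives $\E_F\|P_{W_t\mid F,w^{t-1}}-P_{W_t\mid\star,w^{t-1}}\|_1\le \sqrt{|\mathcal{W}|}\,\mathrm{Pred}^{1/4}$ uniformly in $w^{t-1}$. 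Summing over $t$ and invoking $m,|\mathcal{W}|\le \mathrm{Pred}^{-1/24}$ yields
\begin{align}
\E_F\|Q^{(m)}_F-Q^{(m)}_\star\|_1\le m\sqrt{|\mathcal{W}|}\,\mathrm{Pred}^{1/4}\le \mathrm{Pred}^{-1/24-1/48+1/4}=\mathrm{Pred}^{9/48}=O(\mathrm{Pred}^{1/24}).
\end{align}

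The main obstacle, as I see it, is not any hard calculation but rather the recognition that Theorem~\ref{corol_unif2}---stated for a single fixed function $g$---can be applied uniformly over all histories in the chain rule because its bound $\sqrt{\mathrm{Pred}}$ is $g$-independent; this is the precise point at which the sequential nature of the algorithm collapses to a single-step calculation. Once that is in place the proof is a routine combination of chain rule, Cauchy--Schwarz and Jensen, and the exponent $1/24$ is chosen for convenience of the downstream corollaries rather than for tightness (the argument above in fact delivers the stronger $O(\mathrm{Pred}^{3/16})$).
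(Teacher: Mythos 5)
Your proof is correct, and it takes a genuinely different and cleaner route than the paper's. The paper proves the same bound by a good/bad-set dichotomy: it calls a trace $w^m$ \emph{typical} if every conditional $\star$-probability $P[W_i=w_i\mid w^{i-1},\star]$ exceeds $q^3$ (with $q=\mathrm{Pred}^{1/24}$), calls a pair $(w^m,f)$ \emph{good} if all the per-step ratios $P[W_i=w_i\mid w^{i-1},\rho_f]/P[W_i=w_i\mid w^{i-1},\star]$ lie within $q^2$ of $1$, proves both hold with probability $1-O(q)$ via a Markov bound on Theorem~\ref{corol_unif2}, and then controls the product of ratios for good pairs by $e^q-1=O(q)$ and the remainder by absorption. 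You instead apply the standard total-variation chain rule for the trace factorization $Q^{(t)}_f(w^t)=Q^{(t-1)}_f(w^{t-1})P_f(w_t\mid w^{t-1})$ directly, reducing the $m$-step $L_1$ distance to a sum of $m$ one-step contributions, each bounded uniformly over histories by $\sqrt{|\mathcal{W}|}\,\mathrm{Pred}^{1/4}$ via Theorem~\ref{corol_unif2} plus Cauchy--Schwarz ($L_2\to L_1$) plus Jensen. Both proofs rest on exactly the same key lemma (Theorem~\ref{corol_unif2}), and your one-step inequality is the same TV subadditivity that appears (as ``triangle + data-processing'') in the paper's proof of Theorem~\ref{thm2'}; what your route buys is avoiding the delicate ratio/exponentiation argument altogether and a strictly sharper exponent ($m\sqrt{|\mathcal{W}|}\,\mathrm{Pred}^{1/4}=O(\mathrm{Pred}^{3/16})$ under the stated hypotheses, versus the paper's $O(\mathrm{Pred}^{1/24})$), which in particular makes it transparent that the hypothesis bounds $m,|\mathcal{W}|\le \mathrm{Pred}^{-1/24}$ are far from tight.
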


\begin{proof} 
First of all, let $q=\sqrt[24]{\mathrm{Pred}( P_\X, P_{\F} )}$ and $F'\sim P_{\F}$. Note that by the triangular inequality,
\begin{align*}
&\| P_{W^m|P_{\mathcal{Z}}=\star} - P_{W^m|P_{\mathcal{Z}}\ne \star} \|_{1} \\
&=\sum_{w_1,...,w_m\in\mathcal{W}} |P[W^m=w^m|P_{\mathcal{Z}}  \ne \star]-P[W^m=w^m|P_{\mathcal{Z}}=\star]| \\
&\le \sum_{f:\B^n\to\B}P[F=f]\sum_{w^m\in\mathcal{W}^m} |P[W^m=w^m|P_{\mathcal{Z}}=\rho_s]-P[W^m=w^m|P_{\mathcal{Z}}=\star]|
\end{align*}
and we will bound the last term by $O(q)$.

We need to prove that $P[W^m=w^m|P_{\mathcal{Z}}=\rho_f]\approx P[W^m=w^m|P_{\mathcal{Z}}=\star]$ most of the time. In order to do that, we will use the fact that
\[\frac{P[W^m=w^m|P_{\mathcal{Z}}=\rho_f]}{P[W^m=w^m|P_{\mathcal{Z}}=\star]}=\prod_{i=1}^m \frac{P[W_i=w_i|W^{i-1}=w^{i-1},P_{\mathcal{Z}}=\rho_f]}{P[W_i=w_i|W^{i-1}=w^{i-1},P_{\mathcal{Z}}=\star]}\]
So, as long as $P[W_i=w_i|W^{i-1}=w^{i-1},P_{\mathcal{Z}}=\rho_f]\approx P[W_i=w_i|W^{i-1}=w^{i-1},P_{\mathcal{Z}}=\star]$ and $P[W_i=w_i|W^{i-1}=w^{i-1},P_{\mathcal{Z}}=\star]$ is reasonably large for all $i$, this must hold for the values of $w^m$ and $f$ in question. As such, we plan to define a good value for $(w^m,f)$ to be one for which this holds, and then prove that the set of good values has high probability measure.

First, call a sequence $w^m\in \mathcal{W}^m$ {\it typical} if for each $1\le i\le m$, we have that 
\[t(w^i):= P[W_i=w_i|W^{i-1}=w^{i-1},P_{\mathcal{Z}}=\star]\ge q^3,\]
and denote by $\mathcal{T}$ the set of typical sequences
\begin{align}
\mathcal{T} &:= \{ w^m: \forall i \in [m], t(w^i)\ge q^3 \}.
\end{align}

We have
\begin{align}
1 &=  \pp\{ W^m \in \mathcal{T} | P_{\mathcal{Z}}=\star \}  +   \pp\{ W^m \notin \mathcal{T} | P_{\mathcal{Z}}=\star\}\\
&\le \pp\{ W^m \in \mathcal{T}| P_{\mathcal{Z}}=\star \}  + \sum_{i=1}^m  \pp\{  t(W^i) <  q^3| P_{\mathcal{Z}}=\star \} \\
&\le \pp\{ W^m \in \mathcal{T}| P_{\mathcal{Z}}=\star \}  +  m q^3 |\mathcal{W}|.
\end{align}
Thus
\begin{align}
\pp\{ W^m \in \mathcal{T}| P_{\mathcal{Z}}=\star \}&\ge 1-  m q^3 |\mathcal{W}| \ge 1-q.
\end{align}

Next, call an ordered pair of a sequence $w^m\in \mathcal{W}^m$ and an $f:\B^n\to\B$ {\it good} if $w^m$ is typical and
\begin{align}
\left| \frac{P[W_i=w_i|W^{i-1}=w^{i-1},P_{\mathcal{Z}}=\rho_f]}{P[W_i=w_i|W^{i-1}=w^{i-1},P_{\mathcal{Z}}=\star]}-1\right| 
\le q^2  , \quad  \forall i  \in [m],
\end{align}
and denote by $\mathcal{G}$ the set of good pairs. A pair which is not good is called bad.

Note that for any $i$ and any $w_1,...,w_{i-1}\in \mathcal{W}$, there exists a function $g_{w_1,...,w_{i-1}}$ such that $W_i=g_{w_1,...,w_{i-1}}(Z_i)$. So, theorem \ref{corol_unif2} implies that
\begin{align}
&\sum_{w_i\in\mathcal{W}}\sum_{f:\B^n\to\B} P[F'=f](P[W_i=w_i|W^{i-1}=w^{i-1},P_{\mathcal{Z}}=\rho_f]-P[W_i=w_i|W^{i-1}=w^{i-1},P_{\mathcal{Z}}=\star])^2  \\
&=\sum_{w_i\in\mathcal{W}}\sum_{f:\B^n\to\B} P[F'=f] (P[g_{w_1,...,w_{i-1}}(Z_i)=w_i|P_{\mathcal{Z}}=\rho_f]-P[g_{w_1,...,w_{i-1}}(Z_i)=w_i|P_{\mathcal{Z}}=\star])^2  \\
&\le q^{12}
\end{align}

Also, given any $w^m$ and $f:\B^{n}\to\B$ such that $w^m$ is typical but $w^m$ and $f$ are not good, there must exist $1\le i\le m$ such that 
\begin{align}
r(w^i,f)&:=|P[W_i=w_i|W^{i-1}=w^{i-1},P_{\mathcal{Z}}=\rho_f]-P[W_i=w_i|W^{i-1}=w^{i-1},P_{\mathcal{Z}}=\star]|\\
&\ge q^5.
\end{align}
Thus, for $w^m \in \mathcal{T}$
\begin{align}
\sum_{f: (w^m,f) \notin  \mathcal{G} } P[F'=f] &=\pp\{ (w^m,F') \notin  \mathcal{G} \} \\
&\le \pp\{ \exists i \in [m]: r(w^i,F') \ge q^5 \}\\
&\le \sum_{i=1}^m \sum_{f: r(w^i,f) \ge q^5} P[F'=f]\\
&\le  q^{-10} \sum_{i=1}^m \sum_{f:\B^n\to\B} P[F'=f]\cdot r(w^i,f)^2   \\
&\le q^{-10} \sum_{i=1}^m \sum_{w_i' \in \mathcal{W}}  \sum_{f:\B^n\to \B} P[F'=f]\cdot r((w_i',w^{i-1}),f)^2   \\
&\le q^{-10} m \cdot q^{12}  \\
\end{align}
This means that for a given typical $w^m$, the probability that $w^m$ and $F'$ are not good is at most $m q^2\le q$.

Therefore, if $P_{\mathcal{Z}}=\star$, the probability that $W^m$ is typical but $W^m$ and $F'$ is not good is at most $q$; in fact:
\begin{align}
& \pp\{ W^m \in \mathcal{T} , (W^m,F') \notin  \mathcal{G} | P_{\mathcal{Z}}=\star \}\\
& = \sum_{f,  w^m \in \mathcal{T} : (w^m,s) \notin  \mathcal{G} } \pp\{F'=f\}\cdot \pp\{W^m=w^m | P_{\mathcal{Z}}=\star \}  \\
& =\sum_{ w^m \in \mathcal{T}} \pp\{W^m=w^m | P_{\mathcal{Z}}=\star \} \sum_{f : (w^m,s) \notin  \mathcal{G} }  \pp\{F'=f\}  \\
& \le q \sum_{ w^m \in \mathcal{T}} \pp\{W^m=w^m | P_{\mathcal{Z}}=\star \} \\
& \le q.
\end{align}

We already knew that $W^m$ is typical with probability $1-q$ under these circumstances, so $W^m$ and $S$ is good with probability at least $1-2 q$ since
\begin{align}
&1-q  \le \pp\{ W^m \in \mathcal{T}  | P_{\mathcal{Z}}=\star \}\\
& = \pp\{ W^m \in \mathcal{T} , (W^m,F') \in  \mathcal{G} | P_{\mathcal{Z}}=\star \}  +   \pp\{ W^m \in \mathcal{T} , (W^m,F') \notin  \mathcal{G} | P_{\mathcal{Z}}=\star \}  \\
&\le \pp\{ (W^m,F') \in  \mathcal{G} | P_{\mathcal{Z}}=\star \} + q.
\end{align}

Next, recall that 
\[\frac{P[W^m=w^m|P_{\mathcal{Z}}=\rho_f]}{P[W^m=w^m|P_{\mathcal{Z}}=\star]}=\prod_{i=1}^m \frac{P[W_i=w_i|W^{i-1}=w^{i-1},P_{\mathcal{Z}}=\rho_f]}{P[W_i=w_i|W^{i-1}=w^{i-1},P_{\mathcal{Z}}=\star]}\]
So, if $w^m$ and $f$ is good (and thus each term in the above product is within $q^2$ of 1), we have 
\begin{align}
&\left| \frac{P[W^m=w^m|P_{\mathcal{Z}}=\rho_f]}{P[W^m=w^m|P_{\mathcal{Z}}=\star]}-1\right|  \le e^{q}-1 =O(q).
\end{align}
That implies that
\begin{align*}
&\sum_{(w^m,f) \in \mathcal{G}}P[F'=f]\cdot  |P[W^m=w^m|P_{\mathcal{Z}}=\rho_f]-P[W^m=w^m|P_{\mathcal{Z}}=\star]|\\
&\le \sum_{(w^m,f) \in \mathcal{G}} P[F'=f]\cdot O(q)\cdot P[W^m=w^m|P_{\mathcal{Z}}=\star]\\
&\le \sum_{w^m} O(q)\cdot P[W^m=w^m|P_{\mathcal{Z}}=\star]\\
&=O(q).
\end{align*}

Also,
\begin{align*}
&\sum_{(w^m,f) \notin \mathcal{G}} P[F'=f]\cdot (P[W^m=w^m|P_{\mathcal{Z}}=\rho_f]-P[W^m=w^m|P_{\mathcal{Z}}=\star])\\
&= P[(W^m,F') \notin \mathcal{G}|P_{\mathcal{Z}}\ne\star]-P[(W^m,F') \notin \mathcal{G}|P_{\mathcal{Z}}=\star]\\
&= P[(W^m,F') \in \mathcal{G}|P_{\mathcal{Z}}=\star]-P[W^m,F') \in \mathcal{G}|P_{\mathcal{Z}}\ne\star]\\
&= \sum_{(w^m,f) \in \mathcal{G}} P[F'=f]\cdot (P[W^m=w^m|P_{\mathcal{Z}}=\star]-P[W^m=w^m|P_{\mathcal{Z}}=\rho_f]) \\
&\le \sum_{(w^m,f) \in \mathcal{G}} P[F'=f]\cdot |P[W^m=w^m|P_{\mathcal{Z}}=\star]-P[W^m=w^m|P_{\mathcal{Z}}=\rho_f]| \\
&=O(q).
\end{align*}
That means that
\begin{align*}
&\sum_{(w^m,f) \notin \mathcal{G}} P[F'=f]\cdot  |P[W^m=w^m|P_{\mathcal{Z}}=\rho_f]-P[W^m=w^m|P_{\mathcal{Z}}=\star]|\\
&\le \sum_{(w^m,f) \notin \mathcal{G}} P[F'=f]\cdot  (P[W^m=w^m|P_{\mathcal{Z}}=\rho_f]+P[W^m=w^m|P_{\mathcal{Z}}=\star])\\
&= \sum_{(w^m,f) \notin \mathcal{G}} P[F'=f]\cdot 2P[W^m=w^m|P_{\mathcal{Z}}=\star]\\
&\qquad\qquad +\sum_{(w^m,f) \notin \mathcal{G}} P[F'=f]\cdot (P[W^m=w^m|P_{\mathcal{Z}}=\rho_f]-P[W^m=w^m|P_{\mathcal{Z}}=\star])\\
&=O(q).
\end{align*}
Therefore,
\begin{align}&\sum_{f:\B^n\to\B}\sum_{w^m\in\mathcal{W}} P[F'=f]\cdot |P[W^m=w^m|P_{\mathcal{Z}}=\rho_s]-P[W^m=w^m|P_{\mathcal{Z}}=\star]|=O(q),\end{align}
which gives the desired bound. 
\end{proof}

\begin{corollary} $\label{memLimit}$
Consider a data structure with a polynomial amount of memory that is divided into variables that are each $O(\log n)$ bits long, and define $m$, $\mathcal{Z}$, $\star$, and $P_{\mathcal{Z}}$ the same way as in Theorem \ref{SLAfail}. Also, let $A$ be an algorithm that takes the data structure's current value and an element of $\B^{n+1}$ as inputs and changes the values of at most $o(-\log(\mathrm{Pred}( P_\X, P_{\F} ))/\log(n))$ of the variables. If we draw $Z_1,...,Z_m$ independently from $P_{\mathcal{Z}}$ and then run the algorithm on each of them in sequence, then no matter how the data structure is initialized, it is impossible to determine whether or not $P_{\mathcal{Z}}=\star$ from the data structure's final value with accuracy greater than $1/2+O(\sqrt[24]{\mathrm{Pred}( P_\X, P_{\F} )})$.
\end{corollary}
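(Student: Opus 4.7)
The plan is to reduce the memory-bounded computation directly to a sequential learning algorithm (SLA) in the sense of Theorem \ref{SLAfail} and then transfer the conclusion via the data processing inequality. Concretely, I would take $\mathcal{W}$ to be the alphabet that records, at each time step $i$, the pair (set of variable indices that were modified, new values written into those variables). With this choice, the map $(Z_i,(W_1,\ldots,W_{i-1})) \mapsto W_i$ is well-defined because the current memory state is a deterministic function of the initialization together with $W_1,\ldots,W_{i-1}$, so we are squarely in the SLA setup.

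Next, I would bound $|\mathcal{W}|$. Let $N = n^{O(1)}$ be the total number of variables and $k = o(-\log \mathrm{Pred}/\log n)$ be the per-step update budget. Since each variable holds $O(\log n)$ bits, the number of possible updates per step is at most
\begin{align}
\binom{N}{k} \cdot \bigl(2^{O(\log n)}\bigr)^{k} \;=\; 2^{O(k \log n)} \;=\; 2^{o(-\log \mathrm{Pred})} \;=\; \mathrm{Pred}^{-o(1)},
\end{align}
which is at most $\mathrm{Pred}^{-1/24}$ for all sufficiently large $n$. Provided $m$ also satisfies $m < \mathrm{Pred}^{-1/24}$ (which is the intended regime, e.g.\ polynomial $m$ against super-polynomially small cross-predictability), Theorem \ref{SLAfail} applies and yields $\|P_{W^m\mid \star} - P_{W^m\mid \ne \star}\|_1 = O(\sqrt[24]{\mathrm{Pred}})$.

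Finally, I would finish by observing that the data structure's final state is a deterministic function of the initialization and of the trace $W^m$, so by the data processing inequality the total variation distance between the two posterior distributions on the final memory is at most $\tfrac{1}{2}\|P_{W^m\mid \star} - P_{W^m\mid \ne \star}\|_1 = O(\sqrt[24]{\mathrm{Pred}})$. Since any test distinguishing two hypotheses succeeds with probability at most $\tfrac{1}{2} + \tfrac{1}{2}\|P_0-P_1\|_{TV}$, this gives the claimed $\tfrac{1}{2} + O(\sqrt[24]{\mathrm{Pred}})$ bound. The main obstacle is really just the bookkeeping in the counting of $|\mathcal{W}|$; the conceptual step (recasting bounded memory updates as an SLA with small alphabet) is the essential point, and once that is in place, Theorem \ref{SLAfail} does all the work.
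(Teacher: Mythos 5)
Your proposal is correct and follows essentially the same route as the paper: define the SLA alphabet $\mathcal{W}$ to record only the per-step delta (which variables changed, and their new values), bound $|\mathcal{W}|$ by $2^{O(k\log n)}=2^{o(-\log\mathrm{Pred})}$ so that Theorem \ref{SLAfail} applies, and then transfer the $L_1$ bound on the trace to the final memory state via data processing, converting to a $\tfrac{1}{2}+O(\sqrt[24]{\mathrm{Pred}})$ accuracy bound. The paper's proof spells out the reconstruction of $W_{t-1}$ from $(W'_1,\dots,W'_{t-1})$ inside the SLA $A'$ a bit more explicitly, but the counting and the two-step reduction (deltas as SLA outputs, then data processing) are identical to yours.
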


\begin{proof}
Let $q=1/\sqrt[24]{\mathrm{Pred}( P_\X, P_{\F} )}$. Let $W_0$ be the initial state of the data structure's memory, and let $W_i=A(W_{i-1}, Z_i)$ for each $0<i\le m$. Next, for each such $i$, let $W'_i$ be the list of all variables that have different values in $W_i$ than in $W_{i-1}$, and their values in $W_i$. There are only polynomially many variables in memory, so it takes $O(\log(n))$ bits to specify one and $O(\log(n))$ bits to specify a value for that variable. $A$ only changes the values of $o(\log(q)/\log(n))$ variables at each timestep, so $W'_i$ will only ever list $o(\log(q)/\log(n))$ variables. That means that $W'_i$ can be specified with $o(\log(q))$ bits, and in particular that there exists some set $\mathcal{W}$ such that $W'_i$ will always be in $\mathcal{W}$ and $|\mathcal{W}|=2^{o(\log(q))}$. Also, note that we can determine the value of $W_i$ from the values of $W_{i-1}$ and $W'_i$, so we can reconstruct the value of $W_i$ from the values of $W'_1,W'_2,...,W'_i$. 

Now, let $A'$ be the algorithm that takes $(Z_t, (W'_1,...,W'_{t-1}))$ as input and does the following. First, it reconstructs $W_{t-1}$ from $(W'_1,...,W'_{t-1})$. Then, it computes $W_t$ by running $A$ on $W_{t-1}$ and $Z_t$. Finally, it determines the value of $W'_t$ by comparing $W_t$ to $W_{t-1}$ and returns it. This is an SLA, and $((Z_1,W'_1),...,(Z_m,W'_m))$ is an $m$-trace of $A'$ for $P_{\mathcal{Z}}$. So, by the theorem

\begin{align}&  \sum_{w_1,...,w_m} |P[W'_1=w_1,...,W'_m=w_m|P_{\mathcal{Z}}\ne \star ]-P[W'_1=w_1,...,W'_m=w_m|P_{\mathcal{Z}}=\star]|\\
&=O(1/q)\end{align}

Furthermore, since $W_m$ can be reconstructed from $(W'_1,...,W'_m)$, this implies that
\begin{align}
\sum_{w} |P[W_m=w|P_{\mathcal{Z}}\ne \star]-P[W_m=w|P_{\mathcal{Z}}=\star]|=O(1/q).
\end{align}
Finally, the probability of deciding correctly between the hypothesis $P_\Z=\star$ and $P_\Z\ne \star$ given the observation $W_m$ is at most 
\begin{align}
&1-\frac{1}{2}\sum_{w \in \mathcal{W} }  P[W_m=w|P_{\mathcal{Z}}= \star] \wedge  P[W_m=w|P_{\mathcal{Z}}\ne \star] \\
& = \frac{1}{2} + \frac{1}{4} \sum_{w \in \mathcal{W} } |P[W_m=w|P_{\mathcal{Z}}\ne \star]-P[W_m=w|P_{\mathcal{Z}}=\star]|\\
& = \frac{1}{2}  + O(1/q),
\end{align}
which implies the conclusion. 

\end{proof}

\begin{remark}
The theorem and its second corollary state that the algorithm can not determine whether or not $P_{\mathcal{Z}}=\star$. However, one could easily transform them into results showing that the algorithm can not effectively learn to compute $f$. More precisely, after running on $q/2$ pairs $(x,p_f(x))$, the algorithm will not be able to compute $p_s(x)$ with accuracy $1/2+\omega(1/\sqrt{q})$ with a probability of $\omega(1/q)$. If it could, then we could just train it on the first $m/2$ of the $Z_i$ and count how many of the next $m/2$ $Z_i$ it predicts the last bit of correctly. If $P_{\mathcal{Z}}=\star$, each of those predictions will be independently correct with probability $1/2$, so the total number it is right on will  differ from $m/4$ by $O(\sqrt{m})$ with high probability. However, if $P_{\mathcal{Z}}=\rho_f$ and the algorithm learns to compute $\rho_f$ with accuracy $1/2+\omega(1/\sqrt{q})$, then it will predict $m/4+\omega(\sqrt{m})$ of the last $m/2$ correctly with high probability. So, we could determine whether or not $P_{\mathcal{Z}}=\star$ with greater accuracy than the theorem allows by tracking the accuracy of the algorithm's predictions.
\end{remark}

\subsubsection{Application to SGD}
 
One possible variant of this is to only adjust a few weights at each time step, such as the $k$ that would change the most or a random subset. However, any such algorithm cannot learn a random parity function in the following sense.

\begin{theorem}
Let $n>0$, $k=o(n/\log(n))$, and $(f,g)$ be a neural net of size polynomial in $n$ in which each edge weight is recorded using $O(\log n)$ bits. Also, let $\star$ be the uniform distribution on $\B^{n+1}$, and for each $s\subseteq [n]$, let $\rho_s$ be the probability distribution of $(X,p_s(X))$ when $X$ is chosen randomly from $\B^n$. Next, let $P_{\mathcal{Z}}$ be a probability distribution on $\B^{n+1}$ that is chosen by means of the following procedure. First, with probability $1/2$, set $P_{\mathcal{Z}}=\star$. Otherwise, select a random $S\subseteq[n]$ and set $P_{\mathcal{Z}}=\rho_S$. Then, let $A$ be an algorithm that draws a random element from $P_{\mathcal{Z}}$ in each time step and changes at most $k$ of the weights of $g$ in response to the sample and its current values. If $A$ is run for less than $2^{n/24}$ time steps, then it is impossible to determine whether or not $P_{\mathcal{Z}}=\star$ from the resulting neural net with accuracy greater than $1/2+O(2^{-n/24})$.
\end{theorem}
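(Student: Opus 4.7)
The plan is to reduce this to Corollary \ref{memLimit} with $P_{\X} = U_n$, the uniform distribution on $\B^n$, and $P_{\F} = P_n$, the uniform distribution on parity functions. First I would compute the cross-predictability for parities: since $p_S(x) p_{S'}(x) = p_{S \triangle S'}(x)$ and $\E_X p_T(X) = \mathbb{1}\{T = \emptyset\}$ under the uniform distribution, we have
\begin{align}
\mathrm{Pred}(U_n, P_n) = \E_{S,S'}\bigl(\E_X p_S(X) p_{S'}(X)\bigr)^2 = \pp\{S = S'\} = 2^{-n}.
\end{align}
Thus $1/\sqrt[24]{\mathrm{Pred}(U_n, P_n)} = 2^{n/24}$, which matches both the permitted number of time steps and the target accuracy loss in the theorem statement. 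Moreover $-\log(\mathrm{Pred}(U_n,P_n))/\log(n) = \Theta(n/\log n)$, so the hypothesis $k = o(n/\log n)$ matches the corollary's requirement of $o(-\log(\mathrm{Pred})/\log n)$ variable updates per step.

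Next I would cast the neural net as the data structure required by Corollary \ref{memLimit}. Since $(f,g)$ has polynomially many edges, each carrying a weight recorded in $O(\log n)$ bits, the net's weights form a polynomial collection of $O(\log n)$-bit variables, exactly the setting of the corollary. The SGD-type algorithm $A$ is allowed to change at most $k = o(n/\log n)$ weights per step, which is in particular $o(-\log(\mathrm{Pred})/\log n)$.

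The final step is a direct invocation of Corollary \ref{memLimit}: running $A$ for $T < 2^{n/24} = 1/\sqrt[24]{\mathrm{Pred}(U_n,P_n)}$ steps on samples drawn from $P_{\mathcal{Z}}$, the resulting state of memory can be used to decide $\{P_{\mathcal{Z}} = \star\}$ versus $\{P_{\mathcal{Z}} \neq \star\}$ with accuracy at most $1/2 + O(\sqrt[24]{\mathrm{Pred}(U_n, P_n)}) = 1/2 + O(2^{-n/24})$, which is exactly the conclusion.

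The only part that requires care, and arguably the main (mild) obstacle, is matching the abstract SLA alphabet $\mathcal{W}$ to the SGD update description used in the corollary's proof. This was already handled in the proof of Corollary \ref{memLimit}: each per-step update is encoded by listing the (at most $k$) edges that changed together with their new weights, which takes $O(k \log n) = o(n)$ bits, hence lives in an alphabet of size $2^{o(n)} \le 2^{n/24}$ for large $n$, satisfying the $|\mathcal{W}| \le 1/\sqrt[24]{\mathrm{Pred}}$ hypothesis. Consequently the corollary applies verbatim and the theorem follows.
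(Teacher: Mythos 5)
Your proposal is correct and takes essentially the same route as the paper: the paper's own proof simply says it follows immediately from Corollary \ref{memLimit}, and your computation of $\mathrm{Pred}(U_n,P_n)=2^{-n}$ together with the matching of the variable-count, update-budget, time-step, and accuracy parameters is precisely the instantiation that the paper leaves implicit.
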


\begin{proof}
This follows immediately from corollary \ref{memLimit}.
\end{proof}

\begin{remark}
The theorem state that one cannot determine whether or not $P_{\mathcal{Z}}=\star$ from the final network. However, if we used a variant of corollary \ref{memLimit} we could get a result showing that the final network will not compute $p_s$ accurately. More precisely, after training the network on $2^{n/24-1}$ pairs $(x,p_s(x))$, the network will not be able to compute $p_s(x)$ with accuracy $1/2+\omega(2^{-n/48})$ with a probability of $\omega(2^{-n/24})$. 
\end{remark}

We can also use this reasoning to prove theorem \ref{thm1'}, which is restated below.

\begin{theorem}
Let $\epsilon>0$, and $P_{\F}$ be a probability distribution over functions with a cross-predictability of $\mathrm{c_p}=o(1)$. For each $n > 0$, let $(f,g)$ be a neural net of polynomial size in $n$ such that each edge weight is recorded using $O(\log(n))$ bits of memory. Run stochastic gradient descent on $(f,g)$ with at most $\mathrm{c_p}^{-1/24}$ time steps and with $o(|\log(\mathrm{c_p})|/\log(n))$ edge weights updated per time step.  For all sufficiently large $n$, this algorithm fails at learning functions drawn from $P_{\F}$ with accuracy $1/2 + \epsilon$.
\end{theorem}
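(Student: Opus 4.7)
The plan is to obtain Theorem \ref{thm1'} as a direct application of Corollary \ref{memLimit}, followed by a standard reduction from distinguishing to learning. First, I would verify that SGD (restricted as in the hypotheses) fits exactly the framework of Corollary \ref{memLimit}: at each time step the algorithm reads a single sample $Z_t=(X_t,F(X_t))$, and its state is the vector of edge weights of $(f,g)$, partitioned into variables of $O(\log n)$ bits each. The constraint that only $k_n := o(|\log(\mathrm{c_p})|/\log n)$ edge weights are updated per step matches verbatim the ``at most $o(-\log(\mathrm{Pred})/\log n)$ variables changed'' hypothesis of the corollary, since $-\log\mathrm{Pred}=|\log\mathrm{c_p}|$.

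Second, I would set up the two-hypothesis problem of Corollary \ref{memLimit}. Let $\star$ denote the law of $(X,Y)$ with $X\sim P_\X$ and $Y$ an independent uniform bit (using the balanced-label assumption), and let $P_{\mathcal{Z}}$ be the equiprobable mixture between $\star$ and the law $\rho_F$ of samples truly labelled by $F\sim P_\F$. Applying Corollary \ref{memLimit} with $m=T\le\mathrm{c_p}^{-1/24}$ yields: no function of the final weight vector of $(f,g)$ can decide whether $P_{\mathcal{Z}}=\star$ or $P_{\mathcal{Z}}\ne\star$ with accuracy exceeding $1/2+O(\mathrm{c_p}^{1/24})=1/2+o(1)$.

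Third, I would convert this indistinguishability into the claimed non-learnability, by a train-then-test reduction essentially identical to the one sketched in the remark after the parity special case. Suppose for contradiction that for infinitely many $n$, SGD learns $P_\F$ with accuracy $\ge 1/2+\epsilon$ for a fixed $\epsilon>0$. Run the algorithm on $T/2$ samples from $P_{\mathcal{Z}}$, freeze the resulting network, then feed $T/2$ further fresh samples from $P_{\mathcal{Z}}$ and count how many of their labels the frozen network predicts correctly. Under $\star$ the prediction is independent of the label, so the count concentrates around $T/4$ with deviation $O(\sqrt T)$ by Hoeffding. Under $\rho_F$ with $F\sim P_\F$, the assumed accuracy gives expected count at least $T/4+\epsilon T/2$, again with $O(\sqrt T)$ deviation. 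Since $T=\mathrm{c_p}^{-1/24}\to\infty$ as $c_p\to 0$, the signal $\epsilon T/2$ eventually dominates the noise $O(\sqrt T)$, so thresholding the count distinguishes $\star$ from $P_{\mathcal{Z}}\ne\star$ with accuracy $1/2+\Omega(1)$, contradicting the $1/2+o(1)$ bound above.

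The only mildly delicate step is the last one, where one must ensure that both the training run and the subsequent test queries fit within the total $\mathrm{c_p}^{-1/24}$-sample budget imposed by Corollary \ref{memLimit}; this is handled simply by halving the budget, and since the bound $1/2+O(c_p^{1/24})$ is stated in terms of the same $c_p^{-1/24}$ rate, the constant factor loss is harmless. One also needs to rule out the weaker scenario where SGD succeeds at accuracy $1/2+\epsilon$ only on a small probability subset of draws of $F$; but even a $\omega(c_p^{1/24})$-mass set of successful $F$'s with accuracy advantage $\omega(c_p^{1/48})$ would, via the same concentration argument, contradict Corollary \ref{memLimit}, so a constant advantage $\epsilon$ on a constant-mass event is overwhelmingly more than enough.
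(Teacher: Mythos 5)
Your proof reaches the right conclusion but takes a genuinely different route from the paper's, and it contains a technical gap that you dismiss too lightly.

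The paper's reduction is more economical than your train-then-test split. Instead of spending samples on a separate test phase and maintaining a running tally, the paper augments the data structure with a single auxiliary Boolean $b$ that is \emph{overwritten} at every time step to record whether the current net correctly predicted the label of the sample just received. Because $b$ is overwritten rather than accumulated, it costs only one extra variable change per step and zero extra samples: the final value of $b$ already encodes the accuracy of the trained net on a fresh draw (the $T$-th sample). Under $\star$ the label is independent of the net's prediction, so the final $b$ is an unbiased coin; Corollary \ref{memLimit} then gives $\pp[b=\mathrm{True}\mid \rho_F]\le 1/2+O(c_p^{1/24})$ directly, and that probability \emph{is} the learning accuracy. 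No Hoeffding, no Markov averaging over $F$, no budget-halving.

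The gap in your version is the counter. You treat the tally over $T/2$ test samples as a free observable, but it is itself part of the data structure and must respect the per-step variable budget of Corollary \ref{memLimit}. Counting up to $T/2\approx c_p^{-1/24}/2$ needs $\Theta(|\log c_p|)$ bits, i.e.\ $\Theta(|\log c_p|/\log n)$ variables of $O(\log n)$ bits each, and a single increment can touch all of them through carry propagation — that is $\Theta(|\log c_p|/\log n)$ variable changes in one step, which \emph{matches} the budget rather than being $o$ of it, so the corollary does not apply as you invoke it. Relatedly, ``halving the budget'' does not cover the case $T=\lfloor c_p^{-1/24}\rfloor$, since you then have no room left for test queries inside the $m<c_p^{-1/24}$ constraint of Theorem \ref{SLAfail}. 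Both issues are fixable and should be stated: since $\epsilon$ is a fixed constant, $O(1/\epsilon^2)$ test samples already suffice for your concentration argument, so the counter can be capped at a constant and stored in a single variable, and the additive constant sample overhead is absorbed by the slack in the exponent of Theorem \ref{SLAfail}. But as written, the counter violates exactly the constraint your argument rests on, and the paper's overwritten single bit avoids the whole issue.
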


\begin{proof}
Consider a data structure that consists of a neural net $(f,g')$ and a boolean value $b$. Now, consider training $(f,g)$ with any such coordinate descent algorithm while using the data structure to store the current value of the net. Also, in each time step, set $b$ to $True$ if the net computed the output corresponding to the sampled input correctly and $False$ otherwise. This constitutes a data structure with a polynomial amount of memory that is divided into variables that are $O(\log n)$ bits long, such that $o(|\log(\mathrm{c_p})|/\log(n))$ variables change value in each time step. As such, by corollary \ref{memLimit}, one cannot determine whether the samples are actually generated by a random parity function or whether they are simply random elements of $\B^{n+1}$ from the data structure's final value with accuracy $1/2+\omega(\mathrm{c_p}^{1/24})$. In particular, one cannot determine which case holds from the final value of $b$. If the samples were generated randomly, it would compute the final output correctly with probability $1/2$, so $b$ would be equally likely to be $True$ or $False$. So, when it is trained on a random parity function, the probability that $b$ ends up being $True$ must be at most $1/2+O(\mathrm{c_p}^{1/24})$. Therefore, it must compute the final output correctly with probability $1/2+O(\mathrm{c_p}^{1/24})$.
\end{proof}

\subsection{Proof of Theorem \ref{thm3}}

Our next goal is to make a similar argument for stochastic gradient descent. We argue that if we use noisy SGD to train a neural net on a random parity function, the probability distribution of the resulting net is similar to the probability distribution of the net we would get if we trained it on random values in $\B^{n+1}$. This will be significantly harder to prove than in the case of noisy gradient descent, because while the difference in the expected gradients is exponentially small, the gradient at a given sample may not be. As such, drowning out the signal will require much more noise. However, before we get into the details, we will need to formally define a noisy version of SGD, which is as follows.

\vspace{1 cm}
\noindent
{\em NoisySampleGradientDescentStep(f, G, $Y$, $X$, L, $\gamma$, B, $\delta$)}:
\begin{enumerate}
\item For each $(v,v')\in E(G)$: \begin{enumerate}
\item Set 
\[w'_{v,v'}=w_{v,v'}-\gamma \frac{\partial L(eval_{(f,G)}(X)-Y)}{\partial w_{v,v'}}+\delta_{v,v'}\]

\item If $w'_{v,v'}<-B$, set $w'_{v,v'}=-B$.

\item If $w'_{v,v'}>B$, set $w'_{v,v'}=B$.

\end{enumerate}

\item Return the graph that is identical to $G$ except that its edge weight are given by the $w'$.
\end{enumerate}

\vspace{1 cm}
\noindent
{\em NoisyStochasticGradientDescentAlgorithm(f, G, $P_\mathcal{Z}$, L, $\gamma$, B, $\Delta$, t)}:
\begin{enumerate}
\item Set $G_0=G$.

\item If any of the edge weights in $G_0$ are less than $-B$, set all such weights to $-B$.

\item If any of the edge weights in $G_0$ are greater than $B$, set all such weights to $B$.

\item For each $0\le i<t$:
\begin{enumerate}

\item Draw $(X_i, Y_i)\sim P_\mathcal{Z}$, independently of all previous values.

\item Generate $\delta^{(i)}$ by independently drawing $\delta^{(i)}_{v,v'}$ from $\Delta$ for each $(v,v')\in E(G)$.

\item Set $G_{i+1}=NoisySampleGradientDescentStep(f, G_i, Y_i, X_i, L, \gamma, B, \delta^{(i)})$
\end{enumerate}

\item Return $G_t$.
\end{enumerate}

\vspace{1 cm}
\noindent
{\em PerturbedStochasticGradientDescentAlgorithm(f, G, $P_\mathcal{Z}$, L, $\gamma$, $\delta$, t)}:
\begin{enumerate}
\item Set $G_0=G$.

\item For each $0\le i<t$:
\begin{enumerate}

\item Draw $(X_i, Y_i)\sim P_\mathcal{Z}$, independently of all previous values.

\item Set $G_{i+1}=NoisySampleGradientDescentStep(f, G_i, Y_i, X_i, L, \gamma, \infty, \delta_i)$
\end{enumerate}

\item Return $G_t$.
\end{enumerate}

\subsubsection{Uniform noise and SLAs}

The simplest way to add noise in order to impede learning a parity function would be to add noise drawn from a uniform distribution in order to drown out the information provided by the changes in edge weights. More precisely, consider setting $\Delta$ equal to the uniform distribution on $[-C,C]$. If the change in each edge weight prior to including the noise always has an absolute value less than $D$ for some $D<C$, then with probability $\frac{C-D}{C}$, the change in a given edge weight including noise will be in $[-(C-D),C-D]$. Furthermore, any value in this range is equally likely to occur regardless of what the change in weight was prior to the noise term, which means that the edge's new weight provides no information on the sample used in that step. If $D/C=o(nE(G)/\ln(n))$ then this will result in there being $o(n/\log(n))$ changes in weight that provide any relevant information in each timestep. So, the resulting algorithm will not be able to learn the parity function by an extension of corollary $\ref{memLimit}$. This leads to the following result:

\begin{theorem}
Let $n>0$, $\gamma>0$, $D>0$, $t=2^{o(n)}$, $(f,G)$ be a normal\footnote{We say that $(f,G)$ is normal if $f$ is a smooth function, the derivative of $f$ is positive everywhere, the derivative of $f$ is bounded, $\lim_{x\to -\infty} f(x)=0$, $\lim_{x\to \infty} f(x)=1$, and $G$ has an edge from the constant vertex to every other vertex except the input vertices.} neural net of size polynomial in $n$, and $L:\mathbb{R}\rightarrow\mathbb{R}$ be a smooth, convex, symmetric function with $L(0)=0$. Also, let $\Delta$ be the uniform probability distribution on $[-D|E(G)|,D|E(G)|]$. Now, let $S$ be a random subset of $[n]$ and $P_\mathcal{Z}$ be the probability distribution $(X,p_S(X))$ when $X$ is drawn randomly from $\B^n$. Then when NoisyStochasticGradientDescentAlgorithm(f, G, $P_\mathcal{Z}$, L, $\gamma$, $\infty$, $\Delta$, t) is run on a computer that uses $O(\log(n))$ bits to store each edge's weight, with probability $1-o(1)$ either there is at least one step when the adjustment to one of the weights prior to the noise term has absolute value greater than $D$ or the resulting neural net fails to compute $p_S$ with nontrivial accuracy.
\end{theorem}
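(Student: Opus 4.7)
The plan is to show that on the complement event $\bar{E}_1$---where every pre-noise adjustment $d_e := -\gamma\,\partial L/\partial w_e$ satisfies $|d_e|\le D$ throughout the run---the distribution of the final trained net is statistically indistinguishable from what it would be if the samples $(X_i,Y_i)$ were instead drawn uniformly from $\B^{n+1}$ independently of $S$. Since in that null setting the output net is independent of $S$ and cannot compute $p_S$ better than random, this reduction will give $\pp(\bar{E}_1 \cap \bar{E}_2) = o(1)$, which is what the theorem asks for.

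Setting $C := D|E(G)|$, so that $\Delta$ is uniform on $[-C,C]$, the key mechanism is the one noted in the text immediately before the theorem statement: on $\bar{E}_1$, the post-noise adjustment $d_e+\delta_e$ has density $1/(2C)$ on the sub-interval $[-(C-D),C-D]\subseteq [d_e-C,d_e+C]$, and conditional on lying in that sub-interval the value is uniform on it and hence \emph{independent of the sample used at that step}; this occurs with conditional probability $1-1/|E(G)|$, independently across edges given the sample. Call an edge update \emph{informative} when it lands outside the sub-interval. I would couple the true trajectory with a sequential learning algorithm on the informative updates alone: non-informative updates are simulated by fresh uniforms on $[-(C-D),C-D]$ drawn from external randomness independent of $S$, while only the informative ones---encoded as (edge-index, new-weight) pairs to $O(\log n)$-bit precision---carry sample-dependent information.

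Conditional on $\bar{E}_1$, the per-step informative count $N_t$ is $\mathrm{Bin}(|E(G)|,1/|E(G)|)$ with mean $1$, so by Poisson-type tail bounds and a union bound over the $t=2^{o(n)}$ steps, $\max_t N_t = O(\log n) = o(n/\log n)$ with probability $1-o(1)$. Each informative update writes a single $O(\log n)$-bit variable, so the per-step memory written in response to the sample falls within the hypothesis of Corollary \ref{memLimit} applied with $\mathrm{Pred}(U_{\B^n},P_n)=2^{-n}$ and $1/\sqrt[24]{\mathrm{Pred}}=2^{n/24}\ge t$ for $n$ large enough. The corollary, together with the distinguishing-to-accuracy reduction given in the remark following it, forces the final net to compute $p_S$ with accuracy only $1/2+o(1)$ off a probability-$o(1)$ event, yielding $\pp(\bar{E}_1\cap \bar{E}_2)=o(1)$ as desired.

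The hard part will be formalizing the coupling cleanly, because the partition of edges into informative vs.\ non-informative at each step is itself a function of the sample through the noise thresholds. The fix is to augment the SLA's per-step alphabet $\mathcal{W}$ so that it explicitly records the list of informative edges together with their new values; by the tail bound $\max_t N_t=O(\log n)$ we obtain $|\mathcal{W}|\le n^{O(\log n)}=2^{O(\log^2 n)}\ll 2^{n/24}$, comfortably inside the regime of Corollary \ref{memLimit}. One then verifies that the cross-predictability of the augmented input distribution remains $2^{-n}$, because the auxiliary randomness used to simulate non-informative updates is independent of $(X,p_S(X))$, so Theorem \ref{SLAfail} applies verbatim to the coupled process.
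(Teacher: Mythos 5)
Your proof follows the same route as the paper's: on the good event the post-noise increment, conditioned on landing in the central sub-interval $[-(C-D),C-D]$, is exactly uniform there and hence sample-independent; the remaining ``informative'' updates are few per step; encode them as $(\text{edge},\text{value})$ pairs and feed the whole trajectory into Corollary~\ref{memLimit} (via Theorem~\ref{SLAfail}) with cross-predictability $2^{-n}$. The paper's proof does exactly this, pre-drawing the uniform replacements $b^{t'}_{v,v'}$ in advance, declaring failure when a step has too many weight changes exceeding $D|E(G)|-D$, and recording only the exceptional edges.

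One quantitative step in your writeup is wrong as stated: you claim $\max_t N_t = O(\log n)$ with probability $1-o(1)$ after a union bound over $t=2^{o(n)}$ steps, but the Poisson$(1)$ tail $P(N_t > k)\approx e^{-k\log k}$ combined with $t$ as large as $2^{o(n)}$ forces $k$ to grow with $n$ (e.g.\ for $t=2^{\sqrt{n}}$ you need $k \gtrsim \sqrt{n}/\log n$, not $O(\log n)$). The bound you actually need is that the per-step informative count is $o(n/\log n)$ with probability $1-o(1)$, which does hold because $t=2^{g(n)}$ with $g(n)=o(n)$ forces a threshold $k(n) \asymp g(n)/\log g(n) = o(n/\log n)$; the paper's choice of $n/\ln^2 n$ as the cutoff sits in this window. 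With $\max_t N_t = o(n/\log n)$ instead of $O(\log n)$ your alphabet bound $|\mathcal{W}|$ must likewise be re-derived, but it still lands comfortably below $2^{n/24}$ since each update costs $O(\log n)$ bits and there are $o(n/\log n)$ of them per step. With that correction the argument closes.
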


This is a side result and we provide a concise proof. 
\begin{proof}

Consider the following attempt to simulate NoisyStochasticGradientDescentAlgorithm(f, G, $P_\mathcal{Z}$, L, $\gamma$, $\infty$, $\Delta$, t) with a sequential learning algorithm. First, independently draw $b^{t'}_{v,v'}$ from the uniform probability distribution on $[-D|E(G)|+D,D|E(G)|-D]$ for each $(v,v')\in E(G)$ and $t'\le t$. Next, simulate NoisyStochasticGradientDescentAlgorithm(f, G, $P_\mathcal{Z}$, L, $\gamma$, $\infty$, $\Delta$, t) with the following modifications. If there is ever a step where one of the adjustments to the weights before the noise term is added in is greater than $D$, record ``failure" and give up. If there is ever a step where more than $n/\ln^2(n)$ of the weights change by more than $D|E(G)|-D$ after including the noise record "failure" and give up. Otherwise, record a list of which weights changed by more than $D|E(G)|-D$ and exactly what they changed by. In all subsequent steps, assume that $W_{v,v'}$ increased by $b^{t'}_{v,v'}$ in step $t'$ unless the amount it changed by in that step is recorded.

First, note that if the values of $b$ are computed in advance, the rest of this algorithm is a sequential learning algorithm that records $O(n/\log(n))$ bits of information per step and runs for a subexponential number of steps. As such, any attempt to compute $p_S(X)$ based on the information provided by its records will have accuracy $1/2+o(1)$ with probability $1-o(1)$. Next, observe that in  a given step in which all of the adjustments to weights before the noise is added in are at most $D$, each weight has a probability of changing by more than $D|E(G)|-D$ of at most $1/|E(G)|$ and these probabilities are independent. As such, with probability $1-o(1)$, the algorithm will not record "failure" as a result of more than $n/\ln^2(n)$ of the weights changing by more than $D|E(G)|-D$. Furthermore, the probability distribution of the change in the weight of a given vertex conditioned on the assumption that said change is at most $D|E(G)|-D$ and a fixed value of said change prior to the inclusion of the noise term that has an absolute value of at most $D$ is the uniform probability distribution on $[-D|E(G)|+D,D|E(G)|-D]$. As such, substituting the values of $b^{t'}_{v,v'}$ for the actual changes in weights that change by less than $D|E(G)|-D$ has no effect on the probability distribution of the resulting graph. As such, the probability distribution of the network resulting from NoisyStochasticGradientDescentAlgorithm(f, G, $P_\mathcal{Z}$, L, $\gamma$, $\infty$, $\Delta$, t) if none of the weights change by more than $D$ before noise is factored in differs from the probabiliy distribution of the network generated by this algorithm if it suceeds by $o(1)$. Thus, the fact that the SLA cannot generate a network that computed $p_S$ with nontrivial accuracy implies that NoisyStochasticGradientDescentAlgorithm(f, G, $P_\mathcal{Z}$, L, $\gamma$, $\infty$, $\Delta$, t) also fails to generate a network that computes $p_S$ with nontrivial accuracy.
\end{proof}

\begin{remark}
At first glance, the amount of noise required by this theorem is ridiculously large, as it will almost always be the dominant contribution to the change in any weight in any given step. However, since the noise is random it will tend to largely cancel out over a longer period of time. As such, the result of this noisy version of stochastic gradient descent will tend to be similar to the result of regular stochastic gradient descent if the learning rate is small enough. In particular, this form of noisy gradient descent will be able to learn to compute most reasonable functions with nontrivial accuracy for most sets of starting weights, and it will be able to learn to compute some functions with nearly optimal accuracy. Admittedly, it still requires a learning rate that is smaller than anything people are likely to use in practice.
\end{remark}
We next move to handling lower levels of noise. 

\subsubsection{Gaussian noise, noise accumulation, and blurring}
While the previous result works, it requires more noise than we would really like. The biggest problem with it is that it ultimately argues that even given a complete list of the changes in all edge weights at each time step, there is no way to determine the parity function with nontrivial accuracy, and this requires a lot of noise. However, in order to prove that a neural net optimized by noisy SGD (NSGD) cannot learn to compute the parity function, it suffices to prove that one cannot determine the parity function from the edge weights at a single time step. Furthermore, in order to prove this, we can use the fact that noise accumulates over multiple time steps and argue that the amount of accumulated noise is large enough to drown out the information on the function provided by each input.

More formally, we plan to do the following. First of all, we will be running NSGD with a small amount of Gaussian noise added to each weight in each time step, and a larger amount of Gaussian noise added to the initial weights. Under these circumstances, the probability distribution of the edge weights resulting from running NSGD on truly random input for a given number of steps will be approximately equal to the convolution of a multivariable Gaussian distribution with something else. As such, it would be possible to construct an oracle approximating the edge weights such that the probability distribution of the edge weights given the oracle's output is essentially a multivariable Gaussian distribution. Next, we show that given any function on $\B^{n+1}$, the expected value of the function on an input generated by a random parity function is approximately equal to its expected value on a true random input. Then, we use that to show that given a slight perturbation of a Gaussian distribution for each $z\in\B^{n+1}$, the distribution resulting from averaging togetherthe perturbed distributions generated by a random parity function is approximately the same as the distribution resulting from averaging together all of the perturbed distributions. Finally, we conclude that the probability distribution of the edge weights after this time step is essentially the same when the input is generated by a random parity function is it is when the input is truly random.

Our first order of business is to establish that the probability distribution of the weights will be approximately equal to the convolution of a multivariable Gaussian distribution with something else, and to do that we will need the following definition.

\begin{definition}
For $\sigma,\epsilon\ge 0$ and a probability distribution $\widehat{P}$, a probability distribution $P$ over $\mathbb{R}^m$ is a $(\sigma,\epsilon)$-blurring of $\widehat{P}$ if
\[||P-\widehat{P} * \mathcal{N}(0,\sigma I)||_1\le 2\epsilon\]
In this situation we also say that $P$ is a $(\sigma,\epsilon)$-blurring. If $\sigma\le 0$ we consider every probability distribution as being a $(\sigma,\epsilon)$-blurring for all $\epsilon$.
\end{definition}

The following are obvious consequences of this definition:
\begin{lemma}
Let $\mathcal{P}$ be a collection of $(\sigma,\epsilon)$-blurrings for some given $\sigma$ and $\epsilon$. Now, select $P\sim \mathcal{P}$ according to some probability distribution, and then randomly select $x\sim P$. The probability distribution of $x$ is also a $(\sigma,\epsilon)$-blurring.
\end{lemma}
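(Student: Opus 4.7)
The plan is to use the obvious candidate: if every $P\in\mathcal{P}$ is a $(\sigma,\epsilon)$-blurring of some $\widehat{P}$, then a mixture of such $P$'s should be a blurring of the corresponding mixture of the $\widehat{P}$'s. Formally, let $\mu$ denote the probability measure on $\mathcal{P}$ used to draw $P$, so the overall law of $x$ is $Q = \int_{\mathcal{P}} P \, d\mu(P)$. For each $P\in\mathcal{P}$, fix (by the hypothesis) a distribution $\widehat{P}$ witnessing the blurring inequality $\|P-\widehat{P}*\mathcal{N}(0,\sigma I)\|_1\le 2\epsilon$; choose the assignment $P\mapsto \widehat{P}$ in a measurable way (e.g.\ by a selection lemma, or in practice it is explicit in every application we use). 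Then define the candidate witness $\widehat{Q}:=\int_{\mathcal{P}} \widehat{P}\,d\mu(P)$.

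The computation is then one line once we observe that convolution with a fixed kernel commutes with mixtures. First I would write
\[
\widehat{Q}*\mathcal{N}(0,\sigma I) \;=\; \int_{\mathcal{P}} \widehat{P}*\mathcal{N}(0,\sigma I)\, d\mu(P),
\]
which is just linearity of convolution (or Fubini applied to the density of the Gaussian). Then I would bound
\[
\bigl\|Q - \widehat{Q}*\mathcal{N}(0,\sigma I)\bigr\|_1
= \Bigl\|\int_{\mathcal{P}}\!\bigl(P-\widehat{P}*\mathcal{N}(0,\sigma I)\bigr)\,d\mu(P)\Bigr\|_1
\le \int_{\mathcal{P}} \bigl\|P-\widehat{P}*\mathcal{N}(0,\sigma I)\bigr\|_1 d\mu(P)
\le 2\epsilon,
\]
where the middle inequality is the triangle inequality for the $L^1$ norm on signed measures (or, equivalently, total variation is a norm on the space of signed measures and is subadditive under mixtures). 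This shows $Q$ is a $(\sigma,\epsilon)$-blurring of $\widehat{Q}$, establishing the claim. The case $\sigma\le 0$ is trivial since the definition makes every distribution a blurring.

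There is no real obstacle. The only subtle point is the measurable selection of $P\mapsto \widehat{P}$ when $\mathcal{P}$ is uncountable, which can be handled by a standard measurable selection theorem or, alternatively, avoided entirely by noting that in all downstream uses $\widehat{P}$ will be constructed as an explicit function of $P$ (so measurability is manifest). Hence the proof reduces to one invocation of linearity of convolution plus one triangle inequality.
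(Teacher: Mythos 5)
Your proof is correct and is precisely the argument the paper has in mind; the paper treats this lemma as an ``obvious consequence'' of the definition and gives no proof, and your mixture-of-witnesses construction together with linearity of convolution and the triangle inequality for the $L^1$ norm is the natural (essentially only) way to fill it in. The remark about measurable selection is a fair point of rigor, though as you note it is immaterial in the paper's applications where $\widehat{P}$ is constructed explicitly from $P$.
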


\begin{lemma} \label{addBlur}
Let $P$ be a $(\sigma,\epsilon)$-blurring and $\sigma'>0$. Then $P*\mathcal{N}(0,\sigma' I)$ is a $(\sigma+\sigma',\epsilon)$-blurring
\end{lemma}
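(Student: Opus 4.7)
The plan is to use the same distribution $\widehat{P}$ that witnesses $P$ being a $(\sigma,\epsilon)$-blurring, and show it also witnesses $P * \mathcal{N}(0,\sigma' I)$ being a $(\sigma+\sigma',\epsilon)$-blurring. By hypothesis we have $\|P - \widehat{P} * \mathcal{N}(0,\sigma I)\|_1 \le 2\epsilon$, and the target inequality is
\[
\bigl\|P * \mathcal{N}(0,\sigma' I) - \widehat{P} * \mathcal{N}(0,(\sigma+\sigma')I)\bigr\|_1 \le 2\epsilon.
\]

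The two ingredients I would combine are: (i) the semigroup property of isotropic Gaussians, namely $\mathcal{N}(0,\sigma I) * \mathcal{N}(0,\sigma' I) = \mathcal{N}(0,(\sigma+\sigma')I)$ (reading $\sigma, \sigma'$ as the relevant scale parameters in the paper's convention), so that $\widehat{P} * \mathcal{N}(0,(\sigma+\sigma')I) = (\widehat{P} * \mathcal{N}(0,\sigma I)) * \mathcal{N}(0,\sigma' I)$; and (ii) Young's convolution inequality, which for a probability measure $\mu$ gives $\|f * \mu\|_1 \le \|f\|_1 \cdot \|\mu\|_1 = \|f\|_1$, i.e.\ convolution against a probability density is a (signed) $L^1$-contraction. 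Applying (i) and then (ii) with $f = P - \widehat{P} * \mathcal{N}(0,\sigma I)$ and $\mu = \mathcal{N}(0,\sigma' I)$ gives
\[
\bigl\|P * \mathcal{N}(0,\sigma' I) - \widehat{P} * \mathcal{N}(0,(\sigma+\sigma')I)\bigr\|_1 = \bigl\|\bigl(P - \widehat{P} * \mathcal{N}(0,\sigma I)\bigr) * \mathcal{N}(0,\sigma' I)\bigr\|_1 \le \bigl\|P - \widehat{P} * \mathcal{N}(0,\sigma I)\bigr\|_1 \le 2\epsilon,
\]
which is exactly the required bound.

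There is essentially no obstacle here beyond bookkeeping. I would only need to briefly address two degenerate cases: (a) the convention $\sigma \le 0$, where the definition declares every distribution to be a $(\sigma,\epsilon)$-blurring so the conclusion is trivial since $\sigma + \sigma' $ may still be $\le 0$ or may be positive (in which case we pick $\widehat{P} = P * \mathcal{N}(0,\sigma' I)$ to make the displayed inequality exact with zero on the left), and (b) ensuring that the Gaussian convolution identity is stated in whichever parameterization (variance or covariance-scale) the paper uses consistently. Both are routine, so this lemma reduces to one application of Young's inequality combined with the Gaussian semigroup.
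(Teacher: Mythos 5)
Your proof is correct, and it fills in exactly the argument the paper leaves implicit (the paper states this lemma without proof, under the remark that it is an ``obvious consequence'' of the definition). Your two ingredients — the Gaussian semigroup identity $\mathcal{N}(0,\sigma I) * \mathcal{N}(0,\sigma' I) = \mathcal{N}(0,(\sigma+\sigma')I)$ and the $L^1$-contractivity of convolution against a probability density — are precisely the standard route, and you correctly observe that the same witness $\widehat{P}$ works. Your handling of the degenerate $\sigma\le 0$ case is also a reasonable precaution, though note that if $\sigma\le 0$ and $\sigma+\sigma'>0$ the definition as written only lets you assert $P*\mathcal{N}(0,\sigma'I)$ is a $(0,\epsilon)$-blurring automatically; choosing $\widehat{P}=P*\mathcal{N}(0,(\sigma'-|\sigma|)I)$ (or $\widehat{P}=P$ convolved appropriately) when $\sigma+\sigma'>0$ is the cleaner fix, but this edge case is not used in the paper and does not affect the substance of the lemma.
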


We want to prove that if the probability distribution of the weights at one time step is a blurring, then the probability distribution of the weights at the next time step is also a blurring. In order to do that, we need to prove that a slight distortion of a blurring is still a bluring. The first step towards that proof is the following lemma:

\begin{lemma}
Let $\sigma, B>0$, $m$ be a positive integer, $m\sqrt{2\sigma/\pi}<r\le1/(mB)$, and $f:\mathbb{R}^m\rightarrow\mathbb{R}^m$ such that $f(0)=0$, $|\frac{\partial f_i}{\partial x_j}(0)|=0$ for all $i$ and $j$, and $|\frac{\partial^2 f_i}{\partial x_j\partial x_{j'}}(x)|\le B$ for all $i$, $j$, $j'$, and all $x$ with $||x||_1<r$. Next, let $P$ be the probability distribution of $X+f(X)$ when $X\sim \mathcal{N}(0,\sigma I)$. Then $P$ is a $(\sigma,\epsilon)$-blurring for $\epsilon=\frac{4(m+2)m^2B\sqrt{2\sigma/\pi}+3m^5B^2\sigma}{8}+(1-Bmr)e^{-(r/2\sqrt{\sigma}-m/\sqrt{2\pi})^2/m}$.
\end{lemma}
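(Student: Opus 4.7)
The plan is to take $\widehat{P} = \delta_0$, so that $\widehat{P} * \mathcal{N}(0,\sigma I) = \mathcal{N}(0,\sigma I)$, and to show directly that $\|P - \mathcal{N}(0,\sigma I)\|_1 \le 2\epsilon$. The hypotheses $f(0)=0$, $Df(0)=0$, and $|\partial_j\partial_{j'} f_i|\le B$ on $U:=\{\|x\|_1<r\}$ give, by Taylor's theorem, $|f_i(x)|\le \tfrac{B}{2}\|x\|_1^2$ and $|\partial_j f_i(x)|\le B\|x\|_1$ on $U$. Combined with $r\le 1/(mB)$, this forces $\|Df(x)\|_{\mathrm{op}}<1$ throughout $U$, so $\phi(x):=x+f(x)$ is a diffeomorphism from $U$ onto its image $V$ by a Banach contraction argument, and $V$ contains the $L^1$-ball of radius $r(1-Bmr/2)$ since $\phi$ shifts each point by at most $\|f(x)\|_1\le\tfrac{mB}{2}\|x\|_1^2$.

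I split the total variation as
\[
\|P-\mathcal{N}(0,\sigma I)\|_1 \le \int_V |p_P(y)-p_X(y)|\,dy + \Pr[X\notin U] + \Pr[X\notin V],
\]
where $p_X$ is the Gaussian density. For the tail pieces, $\|\cdot\|_1$ is $\sqrt{m}$-Lipschitz with mean $m\sqrt{2\sigma/\pi}$ under $\mathcal{N}(0,\sigma I)$, so Gaussian concentration gives $\Pr[\|X\|_1\ge s]\le \exp(-(s-m\sqrt{2\sigma/\pi})^2/(4m\sigma)) = \exp(-(s/(2\sqrt{\sigma})-m/\sqrt{2\pi})^2/m)$. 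Applying this with $s=r$ and with $s$ equal to the $L^1$-inradius of $V$, then aggregating the two tail contributions, yields the exponential piece $(1-Bmr)\exp(-(r/(2\sqrt{\sigma})-m/\sqrt{2\pi})^2/m)$ of $\epsilon$; the $(1-Bmr)$ prefactor emerges from combining the two tails and using $V\supseteq\{\|y\|_1<r(1-Bmr/2)\}$.

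For the bulk piece, the change of variables $y=\phi(x)$ turns $\int_V |p_P(y)-p_X(y)|\,dy$ into $\mathbb{E}[\mathbf{1}_{X\in U}\,|1-e^{a(X)+b(X)}|]$, where $a(x):=-\tfrac{1}{\sigma}\langle x,f(x)\rangle-\tfrac{1}{2\sigma}\|f(x)\|^2$ is the log density ratio and $b(x):=\log|\det(I+Df(x))|$ is the log Jacobian. The quadratic bounds on $f$ give $|a(x)|\le \tfrac{B\|x\|_1^3}{2\sigma}+\tfrac{mB^2\|x\|_1^4}{8\sigma}$, and the expansion $\log\det(I+A)=\mathrm{tr}(A)-\tfrac{1}{2}\mathrm{tr}(A^2)+\cdots$ (valid since $\|Df\|_{\mathrm{op}}<1$) gives $|b(x)|\le mB\|x\|_1+\tfrac{m^3 B^2\|x\|_1^2}{2}+O(m^4B^3\|x\|_1^3)$. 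Applying $|1-e^u|\le|u|+\tfrac{1}{2}u^2 e^{|u|}$ and integrating against the Gaussian, with the moment bounds $\mathbb{E}\|X\|_1=m\sqrt{2\sigma/\pi}$, $\mathbb{E}\|X\|_1^2\le m^2\sigma$, $\mathbb{E}\|X\|_1^3\le m^3\mathbb{E}|X_1|^3 = 2m^3\sigma\sqrt{2\sigma/\pi}$, and $\mathbb{E}\|X\|_1^4\le m^4\mathbb{E}X_1^4 = 3m^4\sigma^2$, the first-order contribution sums to $\tfrac{(m+2)m^2 B\sqrt{2\sigma/\pi}}{2}$ (from the $mB\|x\|_1$ term of $b$ and the $\tfrac{B\|x\|_1^3}{2\sigma}$ term of $a$) and the dominant second-order contribution to $\tfrac{3m^5 B^2\sigma}{8}$ (from the $\tfrac{m^3 B^2\|x\|_1^2}{2}$ term of $b$).

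The main technical obstacle is landing on the exact coefficients $(m+2)m^2/2$ and $3m^5/8$, which requires careful bookkeeping of the constants from the Taylor expansions of both $e^a$ and $\log\det(I+Df)$, plus ensuring that the factor $e^{|a|+|b|}$ from the second-order remainder stays bounded on $U$. The latter follows from $mBr\le 1$ controlling $|b|$, while the $|a|$ contribution stays controlled in the regime where $\epsilon$ is nontrivially small (since $|a|$ on the boundary is of order $Br^3/\sigma$, which is forced to be small whenever the polynomial part of $\epsilon$ is). A secondary subtlety is verifying via the Banach fixed-point argument that $V$ is genuinely open and contains the claimed $L^1$-ball, since the pointwise Jacobian bound alone does not guarantee global injectivity of $\phi$ on $U$.
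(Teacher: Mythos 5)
Your high-level plan---pull both densities back through $\phi(x)=x+f(x)$, compare them via Taylor bounds on $f$, and handle the tail with Gaussian concentration---is the right one, and your contraction-mapping argument for injectivity of $\phi$ on $U$ is a clean substitute for the paper's direct check. However, the decomposition you chose is structurally lossier than the one the paper uses, and as written the proposal cannot reproduce the exact constants in the lemma; two of your attributions for where pieces of $\epsilon$ come from are also incorrect.

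The paper works with $\|P-\mathcal{N}\|_1 = 2-2\int\min(p_P,p_{\mathcal N})$ and, after restricting the min-integral to $U$ and changing variables, only needs a \emph{lower} bound on $\min\bigl(p_{\mathcal N}(x),\,p_{\mathcal N}(x+f(x))\,|\det(I+Df(x))|\bigr)$. The single quantity $p_{\mathcal N}(x)\,e^{-u(x)}(1-v(x))$, with $u=(B\|x\|_1^3/2+mB^2\|x\|_1^4/4)/(2\sigma)\ge0$ and $v=Bm\|x\|_1\ge0$, underestimates \emph{both} arguments of the min: trivially $p_{\mathcal N}(x)\ge p_{\mathcal N}(x)e^{-u}(1-v)$, and the Taylor bound on $\|x+f(x)\|_2^2$ together with $|\det(I+Df)|\ge 1-Bm\|x\|_1$ gives the other. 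Then the elementary one-sided inequality $1-e^{-u}(1-v)\le u+v$ for $u,v\ge0$ delivers a purely first-order bound with no quadratic remainder and, crucially, with no \emph{upper} bound on the Jacobian determinant ever required. Your split into $\int_V|p_P-p_{\mathcal N}|$ plus two tails forces you to bound $|1-e^{a(x)+b(x)}|$ from above, which needs two-sided control: an upper bound on the Jacobian (hence the $\log\det$ expansion and its $-\tfrac12\mathrm{tr}\bigl((Df)^2\bigr)$ term) and the second-order remainder from $|1-e^t|\le |t|+\tfrac12 t^2 e^{|t|}$. Both of these produce strictly positive extra contributions (each of order $m^4B^2\sigma$ after taking expectations, with a constant $\ge\tfrac12$) that are not present in the paper's bound and do not cancel against anything, so this route proves a genuinely weaker statement than the one claimed.

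Two coefficient attributions are also off. The $\tfrac{3m^5B^2\sigma}{8}$ term does not come from the second order of the $\log\det$; it comes from the $\|f(x)\|_2^2/(2\sigma)\le mB^2\|x\|_1^4/(8\sigma)$ piece of $a(x)$, via $\E\|X\|_1^4\le 3m^4\sigma^2$. And the prefactor $(1-Bmr)$ on the exponential cannot emerge by aggregating $\Pr[X\notin U]$ with $\Pr[X\notin V]$: those tails \emph{add}, giving roughly $2e^{-(\cdot)}$, and the $V$-tail involves the smaller radius $r(1-Bmr/2)$, which \emph{worsens} the exponent, so a prefactor below $1$ is not attainable this way. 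In the paper, $(1-Bmr)$ is not a tail-probability phenomenon at all: it arises algebraically when $\int_{\|x\|_1<r}Bm\|x\|_1\,p_{\mathcal N}\,dx$ is extended to all of $\mathbb{R}^m$, and the resulting overshoot $\int_{\|x\|_1\ge r}(1-Bm\|x\|_1)\,p_{\mathcal N}\,dx$ is bounded by $(1-Bmr)\int_{\|x\|_1\ge r}p_{\mathcal N}\,dx$ using $1-Bm\|x\|_1\le 1-Bmr$ on that region together with $r\le 1/(mB)$.
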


\begin{proof}
First, note that for any $x$ with $||x||_1<r$ and any $i$ and $j$, it must be the case that $|\frac{\partial f_i}{\partial x_j}(x)|\le B||x||_1< Br$. That in turn means that for any $x,x'$ with $|x||_1,||x'||_1<r$ and any $i$, it must be the case that $|f(x)_i-f(x')_i|\le Br||x-x'||_1$ with equality only if $x=x'$. In particular, this means that for any such $x,x'$, it must be the case that $||f(x)-f(x')||_1\le mBr||x-x'||_1\le ||x-x'||_1$ with equality only if $x=x'$. Thus, $x+f(x)\ne x'+f(x')$ unless $x=x'$. Also, note that the bound on the second derivatives of $f$ implies that $|f_i(x)|\le B||x||_1^2/2$ for all $||x||_1<r$ and all $i$. This means that

\begin{align*}
&||P-\mathcal{N}(0,\sigma I)||_1\\
&\le 2-2\int_{x:||x||_1<r} \min\left((2\pi \sigma)^{-m/2}e^{-||x||_2^2 /2\sigma},(2\pi \sigma)^{-m/2}e^{-||x+f(x)||_2^2 /2\sigma} |I+[\nabla f^{T}](x)|\right) dx\\
&\le 2-2\int_{x:||x||_1<r}(2\pi \sigma)^{-m/2}e^{-(||x||_2^2+B||x||_1^3/2+mB^2||x||_1^4/4)/2\sigma} (1-Bm||x||_1) dx\\
&= 2(2\pi \sigma)^{-m/2}\int_{x:||x||_1<r}e^{-||x||_2^2/2\sigma}-e^{-(||x||_2^2+B||x||_1^3/2+mB^2||x||_1^4/4)/2\sigma} (1-Bm||x||_1) dx\\
&\indent\indent +2(2\pi \sigma)^{-m/2}\int_{x:||x||_1\ge r} e^{-||x||_2^2/2\sigma} dx\\
&= 2(2\pi \sigma)^{-m/2}\int_{x:||x||_1<r}e^{-||x||_2^2/2\sigma}-e^{-(||x||_2^2+B||x||_1^3/2+mB^2||x||_1^4/4)/2\sigma} dx\\
&\indent\indent +2(2\pi \sigma)^{-m/2}\int_{x:||x||_1< r} Bm||x||_1e^{-(||x||_2^2+B||x||_1^3/2+mB^2||x||_1^4/4)/2\sigma} dx\\
&\indent\indent +2(2\pi \sigma)^{-m/2}\int_{x:||x||_1\ge r} e^{-||x||_2^2/2\sigma} dx\\
&\le 2(2\pi \sigma)^{-m/2}\int_{x:||x||_1<r}\frac{2B||x||_1^3+mB^2||x||_1^4}{8\sigma} e^{-||x||_2^2/2\sigma} dx\\
&\indent\indent +2(2\pi \sigma)^{-m/2}\int_{x:||x||_1< r} Bm||x||_1e^{-||x||_2^2/2\sigma} dx+2(2\pi \sigma)^{-m/2}\int_{x:||x||_1\ge r} e^{-||x||_2^2/2\sigma} dx\\
&\le 2(2\pi \sigma)^{-m/2}\int_{x\in\mathbb{R}^m}\frac{2B||x||_1^3+mB^2||x||_1^4}{8\sigma} e^{-||x||_2^2/2\sigma} dx\\
& \indent\indent+2(2\pi \sigma)^{-m/2}\int_{x\in\mathbb{R}^m} Bm||x||_1e^{-||x||_2^2/2\sigma} dx\\
&\indent\indent +2(2\pi \sigma)^{-m/2}(1-Bmr)\int_{x:||x||_1\ge r} e^{-||x||_2^2/2\sigma} dx\\
&\le \frac{m^3B}{2\sigma}\sqrt{8\sigma^3/\pi}+\frac{m^5B^2}{4\sigma}\cdot 3\sigma^2+2m^2B\sqrt{2\sigma/\pi}+2(2\pi \sigma)^{-m/2}(1-Bmr)\int_{x:||x||_1\ge r} e^{-||x||_2^2/2\sigma} dx\\
&= m^3B\sqrt{2\sigma/\pi}+\frac{3m^5B^2\sigma}{4}+2m^2B\sqrt{2\sigma/\pi}+2(2\pi \sigma)^{-m/2}(1-Bmr)\int_{x:||x||_1\ge r} e^{-||x||_2^2/2\sigma} dx\\
&=\frac{4(m+2)m^2B\sqrt{2\sigma/\pi}+3m^5B^2\sigma}{4}+2(2\pi \sigma)^{-m/2}(1-Bmr)\int_{x:||x||_1\ge r} e^{-||x||_2^2/2\sigma} dx
\end{align*}

Next, observe that for any $\lambda\ge 0$, it must be the case that
\begin{align*}
&(2\pi \sigma)^{-m/2}\int_{x:||x||_1\ge r} e^{-||x||_2^2/2\sigma} dx\\
&\le (2\pi \sigma)^{-m/2} e^{-\lambda r/\sigma} \int_{x:||x||_1\ge r} e^{\lambda||x||_1/\sigma} e^{-||x||_2^2/2\sigma} dx\\
&\le (2\pi \sigma)^{-m/2} e^{-\lambda r/\sigma} \int_{x\in\mathbb{R}^m} e^{\lambda||x||_1/\sigma} e^{-||x||_2^2/2\sigma} dx\\
&=e^{-\lambda r/\sigma}\left[ (2\pi \sigma)^{-1/2} \int_{x_1\in\mathbb{R}} e^{\lambda |x_1|/\sigma} e^{-x_1^2/2\sigma}
 dx_1\right]^m\\
&=e^{-\lambda r/\sigma}\left[ 2(2\pi \sigma)^{-1/2} \int_0^\infty e^{\lambda x_1/\sigma} e^{-x_1^2/2\sigma} dx_1\right]^m\\
&=e^{-\lambda r/\sigma}\left[ 2(2\pi \sigma)^{-1/2} \int_0^\infty e^{\lambda ^2/2\sigma} e^{-(x_1-\lambda)^2/2\sigma} dx_1\right]^m\\
&=e^{-\lambda r/\sigma}\left[ 2e^{\lambda ^2/2\sigma} (2\pi \sigma)^{-1/2} \int_{-\lambda}^\infty e^{-x_1^2/2\sigma} dx_1\right]^m\\
&\le e^{-\lambda r/\sigma}\left[ e^{\lambda^2/2\sigma}(1+2\lambda/\sqrt{2\pi\sigma})\right]^m\\
&\le e^{-\lambda r/\sigma+m\lambda^2/2\sigma+2m\lambda/\sqrt{2\pi\sigma}}
\end{align*}

In particular, if we set $\lambda=r/m-\sqrt{2\sigma/\pi}$, this shows that $(2\pi \sigma)^{-m/2}\int_{x:||x||_1\ge r} e^{-||x||_2^2/2\sigma} dx\le e^{-(r/2\sqrt{\sigma}-m/\sqrt{2\pi})^2/m}$. The desired conclusion follows.
\end{proof}

\begin{lemma}
Let $\sigma, B_1, B_2>0$, $m$ be a positive integer with $m<1/B_1$, $m\sqrt{2\sigma/\pi}<r\le(1-mB_1)/(mB_2)$, and $f:\mathbb{R}^m\rightarrow\mathbb{R}^m$ such that $|\frac{\partial f_i}{\partial x_j}(0)|\le B_1$ for all $i$ and $j$, and $|\frac{\partial^2 f_i}{\partial x_j\partial x_{j'}}(x)|\le B_2$ for all $i$, $j$, $j'$, and all $x$ with $||x||_1<r$. Next, let $P$ be the probability distribution of $X+f(X)$ when $X\sim \mathcal{N}(0,\sigma I)$. Then $P$ is a $((1-mB_1)^2\sigma,\epsilon)$-blurring for $\epsilon=\frac{4(m+2)m^2B_2\sqrt{2\sigma/\pi}/(1-mB_1)+3m^5B_2^2\sigma/(1-mB_1)^2}{8}+(1-(1+mB_1)B_2mr)e^{-(r/2\sqrt{\sigma}-m/\sqrt{2\pi})^2/m}$.
\end{lemma}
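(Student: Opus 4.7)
The plan is to reduce to the preceding lemma by peeling off the affine part of $f$ at the origin. Writing $A:=\nabla f(0)$, the shrinkage of the isotropic Gaussian from variance $\sigma$ to $(1-mB_1)^2\sigma$ will come from absorbing the smallest-singular-value contraction of $I+A$.

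By translation-invariance of the blurring property we may assume $f(0)=0$. The entry-wise hypothesis $|A_{ij}|\le B_1$ gives $\|A\|_{\mathrm{op}}\le\|A\|_F\le mB_1<1$, so $I+A$ is invertible with $\|(I+A)^{-1}\|_{\mathrm{op}}\le 1/(1-mB_1)$. Decompose $f(x)=Ax+g(x)$, where $g(0)=0$, $\nabla g(0)=0$, and the second partials of $g$ equal those of $f$, hence are bounded by $B_2$ on $\{\|x\|_1<r\}$. Define $\psi(x):=(I+A)^{-1}g(x)$. The elementary identity
\[
(I+A)\bigl(x+\psi(x)\bigr)\;=\;(I+A)x+g(x)\;=\;x+f(x)
\]
shows that, setting $\bar X:=X+\psi(X)$, we have $X+f(X)=(I+A)\bar X$.

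Next I would verify that $\psi$ satisfies the hypotheses of the previous lemma. Clearly $\psi(0)=0$ and $\nabla\psi(0)=(I+A)^{-1}\nabla g(0)=0$. Writing the Hessian of $\psi$ componentwise as $\partial^2_{jj'}\psi_i=\sum_k((I+A)^{-1})_{ik}\,\partial^2_{jj'}g_k$ and using the Neumann-series row-sum estimate $\sum_k|((I+A)^{-1})_{ik}|\le 1/(1-mB_1)$, the entries of the Hessian of $\psi$ on $\{\|x\|_1<r\}$ are bounded by $C_2:=B_2/(1-mB_1)$. The hypothesis $r\le (1-mB_1)/(mB_2)$ is exactly $r\le 1/(mC_2)$, and $m\sqrt{2\sigma/\pi}<r$ by assumption, so the previous lemma applies to $\psi$ with $C_2$ in place of $B$ and produces some $\hat P$ such that $\bar X$ is a $(\sigma,\epsilon_0)$-blurring of $\hat P$, with $\epsilon_0$ obtained from the previous $\epsilon$ by the substitution $B\mapsto C_2$. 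The $(1+mB_1)$ coefficient in the exponential tail term of the stated $\epsilon$ arises by redoing the Jacobian-determinant lower bound as $|\det(I+\nabla f(x))|\ge 1-(1+mB_1)B_2 m\|x\|_1$ on the tail region, using $\|A\|_{\mathrm{op}}\le mB_1$ together with the linearized expansion of $\det$.

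Finally, an invertible linear change of variables preserves total variation, so pushing the inequality $\|P_{\bar X}-\hat P*\mathcal N(0,\sigma I)\|_1\le 2\epsilon_0$ forward by $T:=I+A$ yields
\[
\bigl\|P_{X+f(X)}-T_*\hat P*\mathcal N\bigl(0,\sigma TT^{\top}\bigr)\bigr\|_1\;\le\;2\epsilon_0.
\]
Because the smallest singular value of $T$ is at least $1-mB_1$, we have $\sigma TT^{\top}\succeq (1-mB_1)^2\sigma\,I=:\sigma_1 I$, and so we may factor $\mathcal N(0,\sigma TT^{\top})=\mathcal N(0,\sigma TT^{\top}-\sigma_1 I)*\mathcal N(0,\sigma_1 I)$. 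Absorbing the first factor into $\hat P':=T_*\hat P*\mathcal N(0,\sigma TT^{\top}-\sigma_1 I)$ exhibits $X+f(X)=(I+A)\bar X$ as a $(\sigma_1,\epsilon_0)$-blurring of $\hat P'$, which is the claim. The main obstacle is the constant bookkeeping: the $1/(1-mB_1)$ denominators in the bulk terms of $\epsilon$ come from the Hessian bound on $\psi$ through the row-sum estimate for $(I+A)^{-1}$, while the $(1+mB_1)$ factor in the exponential tail comes from the Jacobian-determinant computation in the tail region; once these are in place, the rest is the verbatim application of the previous lemma together with the Gaussian convolution split that accommodates the anisotropy of $TT^{\top}$.
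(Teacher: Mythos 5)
Your proof is correct and follows essentially the same route as the paper: your $\psi=(I+A)^{-1}g$ is identical to the paper's $f^\star(x)=h^{-1}(x+f(x))-x$ (after the WLOG normalization $f(0)=0$), you apply the preceding lemma to it with $B\mapsto B_2/(1-mB_1)$, and you push forward through $T=I+A$ and split off $\mathcal{N}(0,(1-mB_1)^2\sigma I)$ from $\mathcal{N}(0,\sigma TT^\top)$ exactly as the paper does with its matrix $M$. The only small discrepancy is cosmetic: direct substitution into the preceding lemma gives the tail prefactor $1-B_2mr/(1-mB_1)$, which is $\le 1-(1+mB_1)B_2mr$ as stated, so the claimed $\epsilon$ is simply a slightly weaker (larger) bound rather than something produced by the separate Jacobian-determinant argument you sketch; otherwise, your use of the operator-norm/row-sum Neumann estimate and the smallest-singular-value bound is a cleaner way to arrive at the same inequalities the paper derives.
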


\begin{proof}
First, define $h:\mathbb{R}^m\rightarrow\mathbb{R}^m$ such that $h(x)=f(0)+x+[\nabla f^{(t)}]^T(0) x$ for all $x$. Every eigenvalue of $[\nabla f](0)$ has a magnitude of at most $mB_1$, so $h$ is invertible. Next, define $f^\star:\mathbb{R}^m\rightarrow\mathbb{R}^m$ such that $f^\star(x)=h^{-1}(x+f(x))-x$ for all $x$. Clearly, $f^\star(0)=0$, and $\frac{\partial f^\star_i}{\partial x_j}(0)=0$ for all $i$ and $j$. Furthermore, for any given $x$ it must be the case that
$\max_{i,j,j'} |\frac{\partial^2 f_i}{\partial x_j\partial x_{j'}}|\ge (1-mB_1) \max_{i,j,j'} |\frac{\partial^2 f^\star_i}{\partial x_j\partial x_{j'}}|$. So, $|\frac{\partial^2 f_i}{\partial x_j\partial x_{j'}}|\le B_2/(1-mB_1)$ for all $i$, $j$, $j'$, and all $x$ with $||x||_1<r$. Now, let $P^\star$ be the probability distribution of $x+f^\star(x)$ when  $x\sim \mathcal{N}(0,\sigma I)$. By the previous lemma, $P^\star$ is a $(\sigma,\epsilon)$-blurring for $\epsilon=\frac{4(m+2)m^2B_2\sqrt{2\sigma/\pi}/(1-mB_1)+3m^5B_2^2\sigma/(1-mB_1)^2}{8}+(1-(1+mB_1)B_2mr)e^{-(r/2\sqrt{\sigma}-m/\sqrt{2\pi})^2/m}$.

Now, let $\widehat{P^\star}$ be a probability distribution such that $P^\star$ is a $(\sigma,\epsilon)$-blurring of $\widehat{P^\star}$. Next, let $\widehat{P}$ be the probability distribution of $h(x)$ when $x$ is drawn from $\widehat{P^\star}$. Also, let $M= (I+[\nabla f^T]^T(0))(I+[\nabla f^T](0))$. The fact that $||P^\star-\widehat{P^\star} * \mathcal{N}(0,\sigma I)||_1\le 2\epsilon$ implies that 
\[||P-\widehat{P} * \mathcal{N}(0,\sigma M)||_1\le 2\epsilon\]
For any $x\in\mathbb{R}^m$, it must be the case that 
\begin{align*}
x\cdot M x&\ge ||x||_2^2-2B_1||x||_1^2-mB_1^2||x||_1^2\\
&\ge ||x||_2^2-2mB_1||x||_2^2-m^2B_1^2||x||_2^2=(1-mB_1)^2||x||_2^2\\
\end{align*}
That in turn means that $\sigma M-\sigma(1-mB_1)^2I$ is positive semidefinite. So, $\widehat{P} * \mathcal{N}(0,\sigma M)=\widehat{P} * \mathcal{N}(0,\sigma M-\sigma(1-mB_1)^2I) * \mathcal{N}(0,\sigma(1-mB_1)^2I)$, which proves that $P$ is a $((1-mB_1)^2\sigma,\epsilon)$-blurring of $\widehat{P} * \mathcal{N}(0,\sigma M-\sigma(1-mB_1)^2I)$. 
\end{proof}

Any blurring is approximately equal to a linear combination of Gaussian distributions, so this should imply a similar result for $X$ drawn from a $(\sigma,\epsilon)$ blurring. However, we are likely to use functions that have derivatives that are large in some places. Not all of the Gaussian distributions that the blurring combines will necessarily have centers that are far enough from the high derivative regions. As such, we need to add an assumption that the centers of the distributions are in regions where the derivatives are small. We formalize the concept of being in a region where the derivatives are small as follows.

\begin{definition}
Let $f:\mathbb{R}^m\rightarrow\mathbb{R}^m$, $x\in\mathbb{R}^m$, and $r, B_1, B_2>0$. Then $f$ is $(r, B_1, B_2)$-stable at $x$ if $|\frac{\partial f_i}{\partial x_j}(0)|\le B_1$ for all $i$ and $j$ and all $x'$ with $||x'-x||_1<r$, and $|\frac{\partial^2 f_i}{\partial x_j\partial x_{j'}}|\le B_2$ for all $i$, $j$, $j'$, and all $x'$ with $||x'-x||_1<2r$. Otherwise, $f$ is $(r, B_1, B_2)$-unstable at $x$.
\end{definition}

This allows us to state the following variant of the previous lemma.

\begin{lemma}\label{stableBlur}
Let $\sigma, B_1, B_2>0$, $m$ be a positive integer with $m<1/B_1$, $m\sqrt{2\sigma/\pi}<r\le(1-mB_1)/(mB_2)$, and $f:\mathbb{R}^m\rightarrow\mathbb{R}^m$ such that there exists $x$ with $||w||_1<r$ such that $f$ is $(r, B_1, B_2)$-stable at $x$. Next, let $P$ be the probability distribution of $X+f(X)$ when $X\sim \mathcal{N}(0,\sigma I)$. Then $P$ is a $((1-mB_1)^2\sigma,\epsilon)$-blurring for $\epsilon=\frac{4(m+2)m^2B_2\sqrt{2\sigma/\pi}/(1-mB_1)+3m^5B_2^2\sigma/(1-mB_1)^2}{8}+(1-(1+mB_1)B_2mr)e^{-(r/2\sqrt{\sigma}-m/\sqrt{2\pi})^2/m}$.
\end{lemma}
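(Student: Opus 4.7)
The plan is to observe that the hypotheses of the preceding lemma follow directly from the $(r, B_1, B_2)$-stability of $f$ at some point $x_0$ with $\|x_0\|_1 < r$, so no new computation is required and the new lemma is an immediate corollary. Recall that the $(r, B_1, B_2)$-stability of $f$ at $x_0$ supplies first-derivative bounds $B_1$ on the ball $\{x' : \|x' - x_0\|_1 < r\}$ and second-derivative bounds $B_2$ on the larger ball $\{x' : \|x' - x_0\|_1 < 2r\}$. The preceding lemma, on the other hand, requires $|\partial f_i/\partial x_j(0)| \le B_1$ together with $|\partial^2 f_i/\partial x_j \partial x_{j'}(x')| \le B_2$ for all $x'$ with $\|x'\|_1 < r$.

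First I would check the derivative-at-origin hypothesis: since $\|0 - x_0\|_1 = \|x_0\|_1 < r$, the origin lies in the first-derivative ball around $x_0$, so $|\partial f_i/\partial x_j(0)| \le B_1$. Next I would check the uniform second-derivative hypothesis: if $\|x'\|_1 < r$, then the triangle inequality gives $\|x' - x_0\|_1 \le \|x'\|_1 + \|x_0\|_1 < 2r$, so $x'$ lies in the second-derivative ball around $x_0$ and $|\partial^2 f_i/\partial x_j \partial x_{j'}(x')| \le B_2$. Both hypotheses of the preceding lemma are therefore met for the same $\sigma$, $B_1$, $B_2$, $r$ and the same function $f$, so an application of that lemma produces the advertised conclusion with the identical value of $\epsilon$.

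The reason the proof is essentially a one-line reduction is that the stability definition was tailored for exactly this: the doubled radius in the second-derivative condition precisely compensates for the triangle-inequality slack that arises when $x_0$ is anywhere in the radius-$r$ ball around the origin, while the single radius for first derivatives is just enough to cover the origin itself. If the stability definition had used radius $r$ for both derivative bounds, the reduction would instead require re-running the proof of the preceding lemma with a shifted Gaussian $\mathcal{N}(-x_0, \sigma I)$ in place of $\mathcal{N}(0, \sigma I)$ (after the change of variables $g(y) := f(y + x_0) - f(x_0)$, $W := X - x_0$), and that would force a more delicate exponential tail estimate since the replacement effectively shrinks the good radius from $r$ to $r - \|x_0\|_1$ in the bound $e^{-(r/2\sqrt{\sigma} - m/\sqrt{2\pi})^2/m}$. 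As written, however, this potential obstacle is sidestepped by the definition, and there is no substantive technical work beyond the two triangle-inequality checks above.
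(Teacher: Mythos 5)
Your proof is correct and is essentially the paper's own argument, which consists of the same two observations: $\|0-x_0\|_1<r$ puts the origin inside the first-derivative ball, and $\|x'\|_1<r$ together with $\|x_0\|_1<r$ gives $\|x'-x_0\|_1<2r$ by the triangle inequality, so the hypotheses of the preceding lemma hold. The paper's proof is terser (it simply asserts the two derivative bounds and invokes the previous lemma), so your version is a slightly more detailed writeup of the identical reduction.
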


\begin{proof}
$|\frac{\partial f_i}{\partial x_j}(0)|\le B_1$ for all $i$ and $j$, and $|\frac{\partial^2 f_i}{\partial x_j\partial x_{j'}}(x')|\le B_2$ for all $i$, $j$, $j'$, and all $x'$ with $||x'||_1<r$. Then, the desired conclusion follows by the previous lemma.
\end{proof}

This lemma could be relatively easily used to prove that if we draw $X$ from a $(\sigma,\epsilon)$-blurring instead of drawing it from $\mathcal{N}(0,\sigma I)$ and $f$ is stable at $X$ with high probability then the probability distribution of $X+f(X)$ will be a $(\sigma',\epsilon')$-blurring for $\sigma'\approx\sigma$ and $\epsilon'\approx\epsilon$. However, that is not quite what we will need. The issue is that we are going to repeatedly apply a transformation along these lines to a variable. If all we know is that its probability distribution is a $(\sigma^{(t)},\epsilon^{(t)})$-blurring in each step, then we potentially have a probability of $\epsilon^{(t)}$ each time step that it behaves badly in that step. That is consistent with there being a probability of $\sum \epsilon^{(t)}$ that it behaves badly eventually, which is too high.

In order to avoid this, we will think of these blurrings as approximations of a $(\sigma,0)$ blurring. Then, we will need to show that if $X$ is good in the sense of being present in the idealized form of the blurring then $X+f(X)$ will also be good. In order to do that, we will need the following definition.

\begin{definition}
Let $P$ be a $(\sigma,\epsilon)$-blurring of $\widehat{P}$, and $X\sim P$. A $\sigma$-revision of $X$ to $\widehat{P}$ is a random pair $(X',M)$ such that the probability distribution of $M$ is $\widehat{P}$, the probability distribution of $X'$ given that $M=\mu$ is $\mathcal{N}(\mu,\sigma I)$, and $P[X'\ne X]=||P-\mathcal{N}(0,\sigma I) *  \widehat{P}||_1/2$. Note that a $\sigma$-revision of $X$ to $\widehat{P}$ will always exist.
\end{definition}

\subsubsection{Means, SLAs, and Gaussian distributions}
Our plan now is to consider a version of NoisyStochasticGradientDescent in which the edge weights get revised after each step and then to show that under suitable assumptions when this algorithm is executed none of the revisions actually change the values of any of the edge weights. Then, we will show that whether the samples are generated randomly or by a parity function has minimal effect on the probability distribution of the edge weights after each step, allowing us to revise the edge weights in both cases to the same probability distribution. That will allow us to prove that the probability distribution of the final edge weights is nearly independent of which probability distribution the samples are drawn from.

The next step towards doing that is to show that if we run NoisySampleGradientDescentStep on a neural network with edge weights drawn from a linear combination of Gaussian distributions, the probability distribution of the resulting graph is essentially independent of what parity function we used to generate the sample. In order to do that, we are going to need some more results on the difficulty of distinguishing an unknown parity function from a random function. First of all, recall that corollary \ref{parityAverage} says that

\begin{corollary}
Let $n>0$ and $f:\B^{n+1}\rightarrow \mathbb{R}$. Also, let $X$ be a random element of $\B^n$ and $Y$ be a random element of $\B$. Then
\[\sum_{s\subseteq[n]} |E[f((X,Y))]-E[f((X,p_s(X)))]|\le 2^{n/2}\sqrt{E[f^2((X,Y))]}\]
\end{corollary}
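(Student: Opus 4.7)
The plan is to derive this directly from Lemma \ref{new-pred} via a one-line Cauchy--Schwarz argument. Lemma \ref{new-pred} already gives the $L^2$ bound
\[\sum_{s\subseteq[n]} (\E f(X,Y)-\E f(X,p_s(X)))^2 \le \E f^2(X,Y),\]
so all that remains is to convert an $L^2$ bound on $2^n$ numbers into an $L^1$ bound, which costs a factor of $\sqrt{2^n}=2^{n/2}$.

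Concretely, I would apply the Cauchy--Schwarz inequality to the sum over subsets $s\subseteq[n]$, pairing each term $|\E f(X,Y)-\E f(X,p_s(X))|$ with the constant $1$:
\[\sum_{s\subseteq[n]} |\E f(X,Y)-\E f(X,p_s(X))| \cdot 1 \le \sqrt{\sum_{s\subseteq[n]} 1^2}\cdot \sqrt{\sum_{s\subseteq[n]} (\E f(X,Y)-\E f(X,p_s(X)))^2}.\]
The first factor equals $2^{n/2}$ since there are $2^n$ subsets, and the second factor is bounded by $\sqrt{\E f^2(X,Y)}$ by Lemma \ref{new-pred}. Combining these yields the claimed inequality.

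There is essentially no obstacle here: the entire content of the corollary is in Lemma \ref{new-pred}, and the Cauchy--Schwarz step is a standard $L^2$-to-$L^1$ conversion for finite sums. The only thing worth double-checking is that $Y$ is uniform on $\B$ and independent of $X$, which is the hypothesis of both Lemma \ref{new-pred} and the corollary, so the application is immediate.
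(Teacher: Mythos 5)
Your proof is correct and matches the paper's approach exactly: the paper explicitly states that this corollary "results from Cauchy--Schwarz" applied to Lemma \ref{new-pred}, which is precisely the $L^2$-to-$L^1$ conversion you carry out.
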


We can apply this to probability distributions to get the following.

\begin{theorem}
Let $m>0$, and for each $z\in\B^{n+1}$, let $P_z$ be a probability distribution on $\mathbb{R}^m$ with probability density function $f_z$. Now, randomly select $Z\in\B^{n+1}$ and $X\in\B^n$ uniformly and independently. Next, draw $W$ from $P_Z$ and $W'_s$ from $P_{(X,p_s(X))}$ for each $s\subseteq[n]$. Let $P^\star$ be the probability distribution of $W$ and $P^{\star}_s$ be the probability distribution of $W'_s$ for each $s$. Then
\[2^{-n} \sum_{s\subseteq[n]} ||P^\star-P^\star_s||_1\le 2^{-n/2}\int_{\mathbb{R}^m} \max_{z\in \B^{n+1}} f_z(w) dw\]
\end{theorem}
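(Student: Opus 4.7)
The plan is to push everything inside the integral and apply Corollary \ref{parityAverage} pointwise in $w$ to the slice $z \mapsto f_z(w)$, then replace the resulting $L^2$-average of densities by a pointwise maximum.

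First I would make the densities explicit. By construction, $P^\star$ has density
\[
g(w) \;=\; 2^{-(n+1)} \sum_{z \in \B^{n+1}} f_z(w),
\]
while $P^\star_s$ has density
\[
g_s(w) \;=\; 2^{-n} \sum_{x \in \B^n} f_{(x,p_s(x))}(w),
\]
so that $\|P^\star - P^\star_s\|_1 = \int_{\mathbb{R}^m} |g(w) - g_s(w)|\, dw$. Note that $g(w) = \E\, f_{(X,Y)}(w)$ and $g_s(w) = \E\, f_{(X,p_s(X))}(w)$ for $(X,Y)$ uniform on $\B^{n+1}$, which is exactly the setup of Corollary \ref{parityAverage} applied to the scalar function $(x,y) \mapsto f_{(x,y)}(w)$.

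Second, fixing $w \in \mathbb{R}^m$ and invoking Corollary \ref{parityAverage} on this slice gives
\[
\sum_{s \subseteq [n]} |g(w) - g_s(w)| \;\le\; 2^{n/2} \sqrt{\E\, f_{(X,Y)}(w)^2}.
\]
The key next step is to bound the root-mean-square of the $f_z(w)$ by their maximum:
\[
\E\, f_{(X,Y)}(w)^2 \;=\; 2^{-(n+1)}\!\!\sum_{z \in \B^{n+1}} f_z(w)^2 \;\le\; \bigl(\max_{z \in \B^{n+1}} f_z(w)\bigr)^2,
\]
since an average is at most a maximum. This yields the pointwise estimate $\sum_s |g(w) - g_s(w)| \le 2^{n/2} \max_z f_z(w)$. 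Finally, I would integrate over $w$ (swapping the finite sum and integral by Tonelli, valid since integrands are non-negative) and multiply by $2^{-n}$, producing the stated bound.

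There is no substantial obstacle; the argument is essentially a one-line reduction once Corollary \ref{parityAverage} is available. The only real design choice is the RMS-vs-max inequality in the third step: bounding $\E f_Z(w)^2$ by $(\max_z f_z(w))^2$ rather than by the weaker $g(w)\cdot\max_z f_z(w)$ (Cauchy--Schwarz style) or by $g(w)^2$ (Jensen) is what makes the right-hand side come out as $\int \max_z f_z(w)\, dw$ in the form required by subsequent applications to the blurring argument.
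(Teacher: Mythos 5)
Your proposal is correct and follows exactly the same route as the paper: write out the densities of $P^\star$ and $P^\star_s$ explicitly, apply Corollary~\ref{parityAverage} pointwise in $w$ to the slice $z\mapsto f_z(w)$, bound the resulting root-mean-square by $\max_z f_z(w)$, and integrate. The only cosmetic difference is that you spell out the average-versus-maximum step and the Tonelli swap, which the paper leaves implicit.
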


\begin{proof}
Let $f^\star=2^{-n-1}\sum_{z\in\B^{n+1}} f_z$ be the probability density function of $P^\star$, and for each $s\subseteq[n]$, let $f^\star_s=2^{-n}\sum_{x\in\B^n} f_{(x,p_s(x))}$ be the probability density function of $P^\star_s$.

For any $w\in \mathbb{R}^m$, we have that
\begin{align*}
&\sum_{s\subseteq[n]} |f^\star(w)-f^\star_s(w)|\\
&=\sum_{s\subseteq[n]} |E[f_{Z}(w)]-E[f_{(X,p_s(X))}(w)|\\
&\le 2^{n/2}\sqrt{E[f^2_{Z}(w)]}\\
&\le 2^{n/2}\max_{z\in\B^{n+1}} f_z(w)\\
\end{align*}
That means that

\begin{align*}
& \sum_{s\subseteq[n]} ||P^\star-P^\star_s||_1\\
&= \sum_{s\subseteq[n]} \int_{\mathbb{R}^m} |f^\star(w)-f^\star_s(w)| dw\\
&\le \int_{\mathbb{R}^m} \sum_{s\subseteq[n]} |f^\star(w)-f^\star_s(w)| dw\\
&\le 2^{n/2}\int_{\mathbb{R}^m} \max_{z\in \B^{n+1}} f_z(w) dw
\end{align*}
\end{proof}

In particular, if these probability distributions are the result of applying a well-behaved distortion function to a Gaussian distribution, we have the following.

\begin{theorem} \label{gausParity}
Let $\sigma, B_0, B_1>0$, and $n$ and $m$ be positive integers with $m<1/B_1$. Also, for every $z\in\B^{n+1}$, let $f^{(z)}:\mathbb{R}^m\rightarrow\mathbb{R}^m$ be a function such that $|f^{(z)}_i(w)|\le B_0$ for all $i$ and $w$ and $|\frac{\partial f^{(z)}_i}{\partial w_j}(w)|\le B_1$ for all $i$, $j$, and $w$. Now, randomly select $Z\in\B^{n+1}$ and $X\in\B^n$ uniformly and independently. Next, draw $W_0$ from $\mathcal{N}(0,\sigma I)$, set $W=W_0+f^{(Z)}(W_0)$ and $W'_s=W_0+f^{(X,p_s(X))}(W_0)$ for each $s\subseteq[n]$. Let $P^\star$ be the probability distribution of $W$ and $P^\star_s$ be the probability distribution of $W'_s$ for each $s$. Then
\[2^{-n}\sum_{s\subseteq[n]}||P^\star-P^\star_s||_1\le 2^{-n/2}\cdot e^{2m B_0/\sqrt{2\pi\sigma}}/(1-mB_1)\]
\end{theorem}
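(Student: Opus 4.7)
The plan is to reduce Theorem \ref{gausParity} to the preceding general theorem and then bound the resulting integral of $\max_z f_z$ via a change of variables combined with a coordinatewise tensorization of the Gaussian density. First, I would apply the previous theorem to the densities $f_z$ of $W_0 + f^{(z)}(W_0)$, obtaining
\[2^{-n}\sum_{s}\|P^\star - P^\star_s\|_1 \;\le\; 2^{-n/2}\int_{\mathbb{R}^m} \max_{z\in\B^{n+1}} f_z(w)\, dw,\]
so everything reduces to bounding this integral by $e^{2mB_0/\sqrt{2\pi\sigma}}/(1-mB_1)$.

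Next, set $g^{(z)}(w_0) := w_0+f^{(z)}(w_0)$. Since every entry of $\nabla f^{(z)}$ has absolute value at most $B_1$, its spectral norm is at most $mB_1<1$, so $g^{(z)}$ is a diffeomorphism and the change-of-variables formula gives
\[f_z(w) \;=\; \phi_\sigma(w_0)\,|\det(I+\nabla f^{(z)}(w_0))|^{-1}, \qquad w_0=(g^{(z)})^{-1}(w),\]
where $\phi_\sigma$ is the density of $\mathcal{N}(0,\sigma I)$. The bound on the spectral norm of $\nabla f^{(z)}$ translates into a uniform lower bound of order $1-mB_1$ for $|\det(I+\nabla f^{(z)})|$, which I would pull out of the integral, leaving $\int \max_z \phi_\sigma(w_0(z,w))\, dw$ to be bounded by $e^{2mB_0/\sqrt{2\pi\sigma}}$.

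The heart of the argument is this last bound, obtained by tensorization. Because $|f^{(z)}_i|\le B_0$ holds coordinatewise (not just in $\ell_2$), the inverse image $w_0$ satisfies $|w_{0,i}-w_i|\le B_0$ for every $i$ simultaneously, regardless of $z$. Factoring the Gaussian as $\phi_\sigma(w_0)=\prod_i \phi^{(1)}_\sigma(w_{0,i})$ then yields
\[\max_z \phi_\sigma(w_0) \;\le\; \prod_{i=1}^m \sup_{|\mu|\le B_0}\phi^{(1)}_\sigma(w_i-\mu),\]
so the $m$-fold integral factorizes into $m$ copies of a single 1D integral. A direct computation (splitting the line into $|w|\le B_0$, where the supremum equals $1/\sqrt{2\pi\sigma}$, and $|w|>B_0$, where a shift reduces it to the Gaussian tail and contributes exactly $1$) gives
\[\int_{\mathbb{R}}\sup_{|\mu|\le B_0}\phi^{(1)}_\sigma(w-\mu)\, dw \;=\; 1+\frac{2B_0}{\sqrt{2\pi\sigma}} \;\le\; e^{2B_0/\sqrt{2\pi\sigma}},\]
and raising to the $m$th power closes the argument.

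The main obstacle is precisely this tensorization step: one must exploit the coordinatewise ($\ell_\infty$) bound $|f^{(z)}_i|\le B_0$ rather than the cruder consequence $\|f^{(z)}\|_2\le B_0\sqrt{m}$, since a naive $\ell_2$ treatment would force a $\sqrt{m}$ factor inside the exponent and yield $e^{2mB_0\sqrt{m}/\sqrt{2\pi\sigma}}$ instead of the stated $e^{2mB_0/\sqrt{2\pi\sigma}}$. A secondary technical point is producing a sharp enough lower bound on $|\det(I+\nabla f^{(z)})|$ to yield the factor $1/(1-mB_1)$ in the regime $mB_1<1$.
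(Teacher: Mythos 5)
Your proposal matches the paper's proof step for step: reduce to the integral bound via the preceding theorem, change variables, bound the Jacobian determinant below by $1-mB_1$, tensorize the Gaussian using the coordinatewise bound $|f^{(z)}_i|\le B_0$, evaluate the one-dimensional integral to $1+2B_0/\sqrt{2\pi\sigma}$, and close with $1+x\le e^x$. The one subtlety worth noting (which you flag as a ``secondary technical point,'' and which the paper also leaves unproved) is that the spectral-norm/Gershgorin argument by itself only yields $|\det(I+\nabla f^{(z)})|\ge(1-mB_1)^m$; the sharper bound $1-mB_1$ follows, e.g., from integrating $\frac{d}{dt}\log\det(I+tA)=\mathrm{tr}\bigl((I+tA)^{-1}A\bigr)$ over $t\in[0,1]$ together with the entrywise estimate $|\mathrm{tr}(A^{k+1})|\le(mB_1)^{k+1}$, which integrates to exactly $\log(1-mB_1)$.
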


\begin{proof}
First, note that the bound on $|\frac{\partial f^{(z)}_i}{\partial w_j}(w)|$ ensures that if $w+f^{(z)}(w)=w'+f^{(z)}(w')$ then $w=w'$. So, for any $z$ and $w$, the probability density function of $W_0+f^{(z)}(W_0)$ at $w$ is less than or equal to 
\[(2\pi\sigma)^{-m/2} e^{-\sum_{i=1}^m \max^2(|w_i|-B_0,0)/2\sigma}/ |I+[\nabla f^{(z)}]^T(w)|\]
which is less than or equal to 
\[(2\pi\sigma)^{-m/2} e^{-\sum_{i=1}^m \max^2(|w_i|-B_0,0)/2\sigma}/(1-mB_1)\]

By the previous theorem, that implies that
\begin{align*}
&2^{-n}\sum_{s\subseteq[n]}||P^\star-P^\star_s||_1\\
&\le 2^{-n/2}\int_{\mathbb{R}^m} (2\pi\sigma)^{-m/2} e^{-\sum_{i=1}^m \max^2(|w_i|-B_0,0)/2\sigma}/(1-mB_1) dw\\
&=2^{-n/2}\left[\int_{\mathbb{R}}  (2\pi\sigma)^{-1/2} e^{- \max^2(|w'|-B_0,0)/2\sigma} dw'\right]^m/(1-mB_1)\\
&=2^{-n/2} [1+2B_0/\sqrt{2\pi\sigma}]^m/(1-mB_1)\\
&\le 2^{-n/2}\cdot e^{2m B_0/\sqrt{2\pi\sigma}}/(1-mB_1)
\end{align*}
\end{proof}

The problem with this result is that it requires $f$ to have values and derivatives that are bounded everywhere, and the functions that we will encounter in practice will not necessarily have that property. We can reasonably require that our functions have bounded values and derivatives in the regions we are likely to evaluate them on, but not in the entire space. Our solution to this will be to replace the functions with new functions that have the same value as them in small regions that we are likely to evaluate them on, and that obey the desired bounds. The fact that we can do so is established by the following theorem.

\begin{theorem}
Let $B_0, B_1, B_2, r, \sigma>0$, $\mu\in\mathbb{R}^m$, and $f:\mathbb{R}^m\rightarrow \mathbb{R}^m$ such that there exists $x$ with $||x-\mu||_1< r$ such that $f$ is $(r,B_1, B_2)$-stable at $x$ and $|f_i(x)|\le B_0$ for all $i$. Then there exists a function $f^\star:\mathbb{R}^m\rightarrow\mathbb{R}^m$ such that $f^\star(x)=f(x)$ for all $x$ with $||x-\mu||_1< r$, and $|f_i(x)|\le B_0+2rB_1+2r^2B_2$ and $|\frac{\partial f_i}{\partial x_j}(x)|\le 2B_1+2rB_2$ for all $x\in\mathbb{R}^m$ and $i,j\in[m]$.
\end{theorem}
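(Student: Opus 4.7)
The plan is to observe that $f$ already satisfies the claimed pointwise bounds on the closed ball $\bar B := \{y : \|y-\mu\|_1 \le r\}$, and then extend it to all of $\mathbb{R}^m$ via a truncated McShane--Whitney Lipschitz extension. This leaves $B_1$ of slack in the global derivative bound, which is exactly where the factor $2B_1$ (versus $B_1$) in the statement comes from.

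\emph{Interior bounds.} First I would note that since $\|x-\mu\|_1 < r$, every $y \in \bar B$ satisfies $\|y-x\|_1 < 2r$, so the Hessian stability of $f$ at $x$ applies throughout $\bar B$. A second-order Taylor expansion of $f_i$ about $x$, using $|f_i(x)| \le B_0$, $|\partial f_i/\partial x_j(x)| \le B_1$, and the Hessian bound $B_2$, yields
\[|f_i(y)| \le B_0 + B_1\|y-x\|_1 + \tfrac{B_2}{2}\|y-x\|_1^2 \le M := B_0 + 2rB_1 + 2r^2 B_2,\]
and a first-order expansion of $\partial f_i/\partial x_j$ about $x$ yields $|\partial f_i/\partial x_j(y)| \le K := B_1 + 2rB_2$ for all $y \in \bar B$. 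Hence each $f_i|_{\bar B}$ is bounded by $M$ and $K$-Lipschitz in $\|\cdot\|_1$.

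\emph{Extension and truncation.} Next I would extend each coordinate via McShane,
\[g_i(y) := \inf_{z \in \bar B}\bigl\{f_i(z) + K\|y-z\|_1\bigr\},\]
which is globally $K$-Lipschitz in $\|\cdot\|_1$ by the standard infimal-convolution argument and, by the interior Lipschitz estimate, agrees with $f_i$ on $\bar B$ (the bound $g_i(y) \le f_i(y)$ follows from $z = y$, and the reverse bound from $K$-Lipschitzness). To enforce the pointwise value bound globally, I then clip,
\[f^\star_i(y) := \max\bigl(-M,\, \min(M,\, g_i(y))\bigr).\]
Since clipping to an interval is $1$-Lipschitz, $f^\star_i$ is globally $K$-Lipschitz in $\|\cdot\|_1$ and bounded by $M$. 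On $\bar B$ we have $|g_i(y)| = |f_i(y)| \le M$, so the clip is inactive and $f^\star_i = f_i$ on $\bar B$, giving in particular the required agreement on the open ball $\{y : \|y-\mu\|_1 < r\}$. Rademacher's theorem then yields $|\partial f^\star_i/\partial x_j(y)| \le K = B_1 + 2rB_2 \le 2B_1 + 2rB_2$ almost everywhere.

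\emph{Main obstacle.} The one technical wrinkle is that the construction above is merely Lipschitz, so pointwise differentiability holds only almost everywhere, whereas the statement seems to ask for a bound at every $y \in \mathbb{R}^m$. For the downstream use of this lemma (feeding into Lemma~\ref{stableBlur} via a Gaussian convolution, which only sees $f^\star$ through integrals), a.e.\ control is already what is needed. If a truly pointwise bound is required, one would mollify $f^\star$ outside a slight shrinking of $\bar B$ by convolution with a smooth bump of scale $\varepsilon$ fitting inside the gap $r - \|\mu-x\|_1 > 0$, and glue to the unmollified $f$ inside via a smooth partition of unity. The gradient cost of the glue is $\le M/\varepsilon$; absorbing it into the slack $B_1$ (via $\varepsilon \gtrsim M/B_1$) is the only delicate bookkeeping, and it is made possible precisely by the asymmetric factor $2B_1$ in the statement. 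All other steps are routine.
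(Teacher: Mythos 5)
Your route is genuinely different from the paper's, and in some ways cleaner. The paper bounds $f$ and its gradient on a ball of radius $r'>r$ around $\mu$ by Taylor (as you do), then extends by \emph{radially projecting} onto the $r'$-sphere: $\overline f(x')=f(\mu+(x'-\mu)r'/\|x'-\mu\|_1)$, and finally smooths the resulting kinks (on the sphere and on the hyperplanes $x'_j=\mu_j$) via a convolution $f^\star*\mathcal N(0,\delta\,h^2(x')I)$ with a spatially varying variance vanishing inside the $r$-ball. The factor $2$ in the stated bound $2B_1+2rB_2$ is incurred by the Jacobian of the radial projection: one checks $\sum_k|\partial p_k/\partial x'_j|=2r'/\|x'-\mu\|_1\le 2$, so $\|\nabla\overline f\|_\infty\le 2K$ where $K$ is the gradient bound on the sphere. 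Your McShane infimal-convolution extension preserves the Lipschitz constant exactly and thus never pays that factor, which is why you land on $B_1+2rB_2$; this is strictly tighter than what the theorem asks for, and is a genuine simplification.

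You also correctly identify the one real issue in this lemma, namely that a Lipschitz extension is only differentiable a.e., whereas the statement asks for a pointwise gradient bound — the paper handles this with the variable-variance Gaussian smoothing. However, your bookkeeping of the mollification cost is off in a way that makes the fix look more fragile than it is. The gradient cost of gluing is not $M/\varepsilon$: inside the transition annulus, both $f$ and the mollified extension $\hat f$ equal $f$ on the inner boundary and are both $K$-Lipschitz, so $|f-\hat f|=O(K\varepsilon)$ across an annulus of width $\varepsilon$, and the $\nabla\chi\cdot(f-\hat f)$ term contributes $O(K)$, not $O(M/\varepsilon)$. Even simpler, you can avoid gluing entirely: since the stability hypothesis gives you the bounds on $f$ for all $y$ with $\|y-x\|_1<2r$, hence on the ball $\{\,\|y-\mu\|_1<r'\,\}$ with $r'=2r-\|x-\mu\|_1>r$, extend by McShane from the $r'$-ball rather than the $r$-ball and mollify at any scale $\varepsilon<r'-r$. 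The convolution then leaves the function untouched on $\{\,\|y-\mu\|_1<r'-\varepsilon\,\}\supseteq\{\,\|y-\mu\|_1<r\,\}$, preserves the Lipschitz constant $K$, and preserves the sup bound $M$, with no partition of unity needed. In particular the extra factor $2$ is not what makes the smoothing possible; it is an artifact of the paper's radial construction, and your argument does not need it.
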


\begin{proof}
First, observe that the $(r,B_1, B_2)$-stability of $f$ at $x$ implies that for every $x'$ with $||x-x'||\le 2r$, we have that $|\frac{\partial f_i}{\partial x_j}(x')|\le B_1+rB_2$ and $|f_i(x')|\le B_0+2r(B_1+r B_0)$. In particular, this holds for all $x'$ with $||x'-\mu||_1\le 2r-||x-\mu||_1<r$. 

 That means that there exists $r'>r$ such that the values and derivatives of $f$ satisfy the desired bounds for all $x'$ with $||x'-\mu||_1\le r'$. Now, define the function $\overline{f}:\mathbb{R}^m\rightarrow\mathbb{R}^m$ such that $\overline{f}(x')=f(\mu+(x'-\mu)\cdot r'/||x'-\mu||_1)$. This function satisfies the bounds for all $x'$ with $||x'||_1>r'$, except that it may not be differentiable when $x'_j=\mu_j$ for some $j$. Consider defining $f^\star(x')$ to be equal to $f(x')$ when $||x'-\mu||_1\le r'$ and $\overline{f'}(x')$ otherwise. This would almost work, except that it may not be differentiable when $||x'-\mu||_1=r'$, or $||x'-\mu||_1>r'$ and $x'_j=\mu_j$ for some $j$.
 
 In order to fix this, we define a smooth function $h$ of bounded derivative such that $h(x')=0$ whenever $||x'-\mu||_1\le r$, and $h(x')\ge 1$ whenever $||x'-\mu||_1\ge r'$. Then, for all sufficiently small positive constants $\delta$, $f^\star * \mathcal{N}(0,\delta\cdot h^2(x') I)$ has the desired properties.
\end{proof}

Combining this with the previous theorem yields the following.
\begin{corollary}
Let $\sigma, B_0, B_1,B_2,r>0$, $\mu\in\mathbb{R}^m$, and $n$ and $m$ be positive integers with $m<1/(2B_1+2rB_2)$. Then, for every $z\in\B^{n+1}$, let $f^{(z)}:\mathbb{R}^m\rightarrow\mathbb{R}^m$ be a function such that there exists $x$ with $||x-\mu||_1< r$ such that $f$ is $(r,B_1, B_2)$-stable at $x$ and $|f_i(x)|\le B_0$ for all $i$. Next, draw $W_0$ from $\mathcal{N}(\mu,\sigma I)$. Now, randomly select $Z\in\B^{n+1}$ and $X\in\B^n$ uniformly and independently. Then, set $W=W_0+f^{(Z)}(W_0)$ and $W'_s=W_0+f^{(X,p_s(X))}(W_0)$ for each $s\subseteq[n]$. Let $P^\star$ be the probability distribution of $W$ and $P^\star_s$ be the probability distribution of $W'_s$ for each $s$. Then
\[2^{-n}\sum_{s\subseteq[n]}||P^\star-P^\star_s||_1\le 2^{-n/2}\cdot e^{2m ( B_0+2rB_1+2r^2B_2)/\sqrt{2\pi\sigma}}/(1-2mB_1-2rmB_2)+2e^{-(r/2\sqrt{\sigma}-m/\sqrt{2\pi})^2/m}\]
\end{corollary}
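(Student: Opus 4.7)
The plan is to reduce this corollary to Theorem \ref{gausParity} by globalizing the hypotheses via the replacement theorem just proved, and then controlling the reduction error by the Gaussian tail bound obtained earlier.

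\textbf{Step 1: Globalize each $f^{(z)}$.} For each $z \in \B^{n+1}$, the hypothesis gives a point $x^{(z)}$ with $\|x^{(z)}-\mu\|_1 < r$ at which $f^{(z)}$ is $(r,B_1,B_2)$-stable and $|f^{(z)}_i(x^{(z)})| \le B_0$. Apply the replacement theorem to obtain $f^{\star(z)}:\mathbb{R}^m\to\mathbb{R}^m$ such that $f^{\star(z)}(x)=f^{(z)}(x)$ whenever $\|x-\mu\|_1 < r$, and globally
\[
|f^{\star(z)}_i(x)| \le B_0 + 2rB_1 + 2r^2B_2, \qquad \left|\tfrac{\partial f^{\star(z)}_i}{\partial x_j}(x)\right| \le 2B_1+2rB_2.
\]
Let $W^\star = W_0 + f^{\star(Z)}(W_0)$ and $W^{\prime\star}_s = W_0+f^{\star(X,p_s(X))}(W_0)$, with $Q^\star$ and $Q^\star_s$ their laws.

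\textbf{Step 2: Apply Theorem \ref{gausParity} to the starred functions.} The bounds above are valid globally, and the condition $m < 1/(2B_1+2rB_2)$ is exactly the hypothesis of Theorem \ref{gausParity} for $B_1'=2B_1+2rB_2$. However, Theorem \ref{gausParity} is stated for $W_0 \sim \mathcal{N}(0,\sigma I)$; by translating coordinates (absorbing the mean into the definition of $f^{\star(z)}$, which does not change the global bounds on values or derivatives), it applies equally to $W_0 \sim \mathcal{N}(\mu,\sigma I)$. This yields
\[
2^{-n}\sum_{s\subseteq[n]} \|Q^\star-Q^\star_s\|_1 \;\le\; 2^{-n/2}\cdot \frac{e^{2m(B_0+2rB_1+2r^2B_2)/\sqrt{2\pi\sigma}}}{1-2mB_1-2rmB_2}.
\]

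\textbf{Step 3: Couple $W$ with $W^\star$ and bound the coupling error.} Because $W$ and $W^\star$ are built from the same $W_0$, they coincide whenever $\|W_0-\mu\|_1 < r$; hence $\|P^\star - Q^\star\|_1 \le 2\,\pp(\|W_0-\mu\|_1 \ge r)$, and likewise $\|P^\star_s - Q^\star_s\|_1 \le 2\,\pp(\|W_0-\mu\|_1 \ge r)$ for each $s$. The Gaussian tail computation already carried out in the proof of the blurring lemma (optimizing the exponential tilt $\lambda = r/m - \sqrt{2\sigma/\pi}$) gives
\[
\pp(\|W_0-\mu\|_1 \ge r) \;\le\; e^{-(r/2\sqrt{\sigma}-m/\sqrt{2\pi})^2/m}.
\]

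\textbf{Step 4: Triangle inequality.} Averaging $\|P^\star-P^\star_s\|_1 \le \|P^\star-Q^\star\|_1 + \|Q^\star-Q^\star_s\|_1 + \|Q^\star_s - P^\star_s\|_1$ over $s$ and combining Steps 2 and 3 yields the desired bound. The only mild obstacle is bookkeeping the combination of constants in Step 1; everything else is a routine triangle inequality once the replacement and tail estimates from the preceding lemmas are in hand.
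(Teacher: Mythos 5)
Your proof matches the paper's own argument essentially line for line: apply the replacement theorem to globalize each $f^{(z)}$ into $f^{\star(z)}$ with the uniform bounds $B_0+2rB_1+2r^2B_2$ and $2B_1+2rB_2$; apply Theorem \ref{gausParity} to the starred functions (your remark about absorbing the mean shift $\mu$ into the definition of $f^\star$ is correct and worth spelling out, since the theorem is stated for a centered Gaussian); and control the residual by the Gaussian tail event $\{\|W_0-\mu\|_1 \ge r\}$ using the tilt estimate already established in the blurring lemma. The paper's proof is two sentences and leaves all of this implicit, so your version is a faithful unpacking of the same argument.

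One small bookkeeping point: with the triangle decomposition in your Step 4, each of $\|P^\star-Q^\star\|_1$ and $\|Q^\star_s-P^\star_s\|_1$ contributes up to $2\,\pp(\|W_0-\mu\|_1\ge r)$, so the literal output of your steps is a tail term of $4e^{-(r/2\sqrt\sigma - m/\sqrt{2\pi})^2/m}$, not the $2e^{-(\cdot)}$ that appears in the stated bound. The paper's terse "combining these yields the desired result" glosses over this same constant, and in the subsequent corollary the tail term is absorbed into a $4\epsilon''$ in any case, so nothing downstream is affected; but you should either state $4$ and note that the paper's constant appears to be slightly optimistic, or show a tighter coupling that genuinely recovers $2$.
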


\begin{proof}
For each $z$, we can define $f^{(z)\star}$ as an approximation of $f^{(z)}$ as explained in the previous theorem. $||W_0-\mu||_1\le r$ with a probability of at least $1-e^{-(r/2\sqrt{\sigma}-m/\sqrt{2\pi})^2/m}$, in which case $f^{(z)\star}(W_0)=f^{(z)}(W_0)$ for all $z$. For a random $s$, the probability distributions of $W_0+f^{(Z)\star}(W_0)$ and $W_0+f^{(X,p_s(X))\star}(W_0)$ have an $L_1$ difference of at most $2^{-n/2}\cdot e^{2m ( B_0+2rB_1+2r^2B_2)/\sqrt{2\pi\sigma}}/(1-2mB_1-2rmB_2)$ on average by $\ref{gausParity}$. Combining these yields the desired result.
\end{proof}

That finally gives us the components needed to prove the following.

\begin{theorem}
Let $m,n>0$ and define $f^{[z]}:\mathbb{R}^m\rightarrow \mathbb{R}^m$ to be a smooth function for all $z\in \B^{n+1}$. Also, let $\sigma, B_0, B_1, B_2>0$ such that $B_1<1/2m$, $m\sqrt{2\sigma/\pi}<r\le(1-2mB_1)/(2mB_2)$, $T$ be a positive integer, and $\mu_0\in \mathbb{R}^m$. Then, let $\star$ be the uniform distribution on $\B^{n+1}$, and for each $s\subseteq [n]$, let $\rho_s$ be the probability distribution of $(X,p_s(X))$ when $X$ is chosen randomly from $\B^n$. Next, let $P_{\mathcal{Z}}$ be a probability distribution on $\B^{n+1}$ that is chosen by means of the following procedure. First, with probability $1/2$, set $P_{\mathcal{Z}}=\star$. Otherwise, select a random $S\subseteq[n]$ and set $P_{\mathcal{Z}}=\rho_S$.

Now, draw $W^{(0)}$ from $\mathcal{N}(\mu_0,\sigma I)$, independently draw $Z_i\sim P_{\mathcal{Z}}$ and $\Delta^{(i)}\sim \mathcal{N}(0,[2mB_1-m^2B_1^2]\sigma I)$ for all $0<i\le T$. Then, set $W^{(i)}=W^{(i-1)}+f^{[Z_i]}(W^{(i-1)})+\Delta^{(i)}$ for each $0<i\le T$, and let $p$ be the probability that there exists $0\le i\le T$ such that $F^{[Z_i]}$ is $(r, B_1, B_2)$-unstable at $W^{(i)}$ or $||F^{[Z_i]}(W^{(i)})||_\infty>B_0$. Finally, let $Q$ and $Q'_s$ be the probability distribution of $W^{(T)}$ given that $P_{\mathcal{Z}}=\star$ and the probability distribution of $W^{(T)}$ given that $P_{\mathcal{Z}}=\rho_s$. Then
\[2^{-n}\sum_{s\subseteq[n]}||Q-Q'_s||_1\le 4p+T(4\epsilon+\epsilon'+4\epsilon'')\]
where
 \begin{align*}
      &\epsilon=\frac{4(m+2)m^2B_2\sqrt{2\sigma/\pi}/(1-mB_1)+3m^5B_2^2\sigma/(1-mB_1)^2}{8}\\
 &+(1-(1+mB_1)B_2mr)e^{-(r/2\sqrt{\sigma}-m/\sqrt{2\pi})^2/m}
  \end{align*}

\[\epsilon'=2^{-n/2}\cdot e^{2m ( B_0+2rB_1+2r^2B_2)/\sqrt{2\pi\sigma}}/(1-2mB_1-2rmB_2)+2e^{-(r/2\sqrt{\sigma}-m/\sqrt{2\pi})^2/m}\]
\[\epsilon''=e^{-(r/2\sqrt{\sigma}-m/\sqrt{2\pi})^2/m}\]
\end{theorem}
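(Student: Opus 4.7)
The plan is to prove this by induction on $i$, maintaining two invariants: (i) the law $Q^{(i)}$ of $W^{(i)}$ is close in total variation to a $(\sigma,\cdot)$-blurring of some reference measure $\widehat{Q}^{(i)}$, and (ii) the average of $\|Q^{(i)}-Q^{(i)}_s\|_1$ over $s$ is controlled. The base case is immediate, since $Q^{(0)}=Q^{(0)}_s=\mathcal{N}(\mu_0,\sigma I)$ is a $(\sigma,0)$-blurring of $\delta_{\mu_0}$, and both laws agree.

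For the inductive step, use a revision to write $W^{(i)}=M^{(i)}+G$ with $M^{(i)}\sim \widehat{Q}^{(i)}$ and $G\sim\mathcal{N}(0,\sigma I)$, so that conditionally on $M^{(i)}=\mu$ the variable $W^{(i)}$ is Gaussian centered at $\mu$. Apply the transition $W^{(i+1)}=W^{(i)}+f^{[Z_{i+1}]}(W^{(i)})+\Delta^{(i+1)}$ and control it in two ways. First, provided $f^{[Z_{i+1}]}$ is $(r,B_1,B_2)$-stable near $\mu$, Lemma \ref{stableBlur} shows that the push-forward of $\mathcal{N}(\mu,\sigma I)$ under $x\mapsto x+f^{[Z_{i+1}]}(x)$ is a $((1-mB_1)^2\sigma,\epsilon)$-blurring. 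The key algebraic identity $(1-mB_1)^2+(2mB_1-m^2B_1^2)=1$ means that convolving with $\Delta^{(i+1)}\sim\mathcal{N}(0,[2mB_1-m^2B_1^2]\sigma I)$, via Lemma \ref{addBlur}, restores the Gaussian scale to $\sigma$. Thus invariant (i) propagates, with an additive error of $\epsilon$ per step, plus a tail error $\epsilon''$ for the event that $G$ pushes $W^{(i)}$ out of the stable neighborhood of $\mu$.

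Second, to control invariant (ii), couple the two scenarios so they share the same $M^{(i)}=\mu$, and compare the one-step push-forwards conditional on $\mu$. Because $W^{(i)}\mid \mu \sim \mathcal{N}(\mu,\sigma I)$ and the $Z_{i+1}$-dependence enters only through the localized map $f^{[Z_{i+1}]}$, the localized corollary of Theorem \ref{gausParity} (which handles the partial boundedness of $f^{[z]}$ via the surgery that cuts $f^{[z]}$ off outside a neighborhood of $\mu$, losing at most $\epsilon''$ from the Gaussian tail) bounds the average over $s$ of the $L_1$ difference between the laws of $W^{(i)}+f^{[Z_{i+1}]}(W^{(i)})$ under $\star$ and $\rho_s$ by $\epsilon'$. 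Convolution with $\Delta^{(i+1)}$ only decreases $L_1$ distance by data processing. Telescoping across $T$ steps gives the terms $T\epsilon'$ and $T\epsilon$, with the factors of $4$ arising from accounting for blurring/revision error in both $Q^{(i)}$ and $Q^{(i)}_s$ and in both the ``before'' and ``after'' halves of each step (triangle inequality). The term $4p$ absorbs the global stability-failure event, on which the inductive estimate is abandoned and the trivial bound $\|Q^{(T)}-Q^{(T)}_s\|_1\le 2$ is used.

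The main obstacle will be bookkeeping: one must simultaneously propagate the blurring invariant, the $\star$-vs-$\rho_s$ comparison, and the stability tail across a single coupling, so that the revision used to maintain (i) is compatible with the parity-comparison coupling used for (ii). The crucial observation that keeps the induction closing is that the revision produces a reference center $\mu$ \emph{before} $Z_{i+1}$ is drawn, so that conditional on $\mu$ the hypothesis of Theorem \ref{gausParity} (Gaussian inputs, $z$-dependent maps) is exactly satisfied; the accounting of $\epsilon,\epsilon',\epsilon''$ then matches the stated bound.
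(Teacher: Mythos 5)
Your plan names all the right tools --- the $\sigma$-revision device, Lemma~\ref{stableBlur}, Lemma~\ref{addBlur}, and the localized consequence of Theorem~\ref{gausParity} --- and it correctly isolates the algebraic identity $(1-mB_1)^2\sigma+[2mB_1-m^2B_1^2]\sigma=\sigma$, so that the per-step noise $\Delta^{(i)}$ restores the Gaussian scale and the blurring can be propagated. You also correctly identify the crucial ordering, that the revision produces the center $\mu$ before $Z_{i+1}$ is drawn, so that conditionally on $\mu$ the Gaussian-input hypothesis of Theorem~\ref{gausParity} is exactly in force. In all these respects your plan mirrors the paper's argument.

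There is, however, a genuine gap in how stability failure is handled, and it is precisely the place where your own summary flags ``bookkeeping'' as the main obstacle. You propose to propagate the blurring invariant for the true law $Q^{(i)}$ of $W^{(i)}$ and then, on the global stability-failure event, ``abandon the inductive estimate'' and pay $4p$ plus the trivial bound $\|Q^{(T)}-Q^{(T)}_s\|_1\le 2$. This does not close. The blurring invariant has to hold \emph{unconditionally} in order for $\widehat Q^{(i)}$ --- and hence the next $\sigma$-revision --- to be well defined at every step; if $f^{[Z_i]}$ is unstable near the revised center $\mu$ you cannot invoke Lemma~\ref{stableBlur}, and conditioning away the bad event destroys exactly the Gaussian-mixture structure you are trying to propagate. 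The paper's fix is to replace the true chain by an auxiliary chain $\widetilde W^{(i)}$ that \emph{branches} at each step: if $f^{[Z_i]}$ is quasistable at $\widetilde M^{(i-1)}$ it applies $f^{[Z_i]}$, otherwise it applies the identity (plus the noise). With this branching, at every step either $f^{[Z_i]}$ or the zero map is quasistable at the center, so the law of $\widetilde W^{(i)}$ is a $(\sigma,\epsilon)$-blurring \emph{deterministically}, and the revision is always available. The price of stability failure is paid only once, at the end, by bounding $P[\widetilde W^{(T)\prime}\ne W^{(T)}]$: if the auxiliary and true chains diverge while all revisions succeeded, then some step had $f^{[Z_i]}$ quasiunstable at $\widetilde M^{(i-1)}$, which forces either a genuine instability/overflow event (probability $\le p$) or $\|\widetilde W^{(i-1)\prime}-\widetilde M^{(i-1)}\|_1>r$ (probability $\le\epsilon''$ per step). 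Your plan omits this auxiliary-chain device, and without it the induction cannot carry the blurring invariant past a step where $f^{[Z_i]}$ may be unstable. (The factor-of-$4$ bookkeeping you sketch is also not the paper's; in the paper it comes from $\|Q-Q'_s\|_1\le 2P[\widetilde W^{(T)\prime}\ne W^{(T)}\mid\star]+2P[\widetilde W^{(T)\prime}\ne W^{(T)}\mid\rho_s]$ and then $P[\cdot\mid\star]+P[\cdot\mid\ne\star]=2P[\cdot]$, so every term except $\epsilon'$ picks up a factor $4$, while $\epsilon'$ enters as $\epsilon'/4$ and is left with a coefficient of $T$.)
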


\begin{proof}
In order to prove this, we plan to define new variables $\widetilde{W}^{(i)\prime}$ such that $\widetilde{W}^{(i)\prime}=W^{(i)}$ with high probability for each $i$ and the probability distribution of $\widetilde{W}^{(i)\prime}$ is independent of $P_{\mathcal{Z}}$. More precisely, we define the variables $\widetilde{W}^{(i)}$, $\widetilde{W}^{(i)\prime}$, and $\widetilde{M}^{(i)}$ for each $i$ as follows. First, set $\widetilde{M}^{(0)}=\mu_0$ and $\widetilde{W}^{(0)\prime}=\widetilde{W}^{(0)}=W^{(0)}$. 

Next, for a function $f$ and a point $w$, we say that $f$ is quasistable at $w$ if there exists $w'$ such that $||w'-w||_1\le r$, $f^{[Z_i]}$ is $(r, B_1,B_2)$-stable at $w'$, and $||f^{[Z_i]}(w')||_\infty\le B_0$, and that it is quasiunstable at $w$ otherwise.

for each $0<i\le T$, if $f^{[Z_i]}$ is quasistable at $\widetilde{M}^{(i-1)}$, set 
\[\widetilde{W}^{(i)}=\widetilde{W}^{(i-1)\prime}+f^{[Z_i]}(\widetilde{W}^{(i-1)\prime})+\Delta^{(i)}\]
Otherwise, set 
\[\widetilde{W}^{(i)}=\widetilde{W}^{(i-1)\prime}+\Delta^{(i)}\]

Next, for each $\rho$, let $P^{(i)}_{\rho}$ be the probability distribution of $\widetilde{W}^{(i)}$ given that $P_{\mathcal{Z}}=\rho$. Then, define $\widehat{P}^{(i)}$ as a probability distribution such that $P^{(i)}_{\star}$ is a $(\sigma,\epsilon_0)$-blurring of $\widehat{P}^{(i)}$ with $\epsilon_0$ as small as possible. Finally, for each $\rho$, if $P_{\mathcal{Z}}=\rho$, let $(\widetilde{W}^{(i)\prime},\widetilde{M}^{(i)})$ be a $\sigma$-revision of $\widetilde{W}^{(i)}$ to $\widehat{P}^{(i)}$.

In order to analyse the behavior of these variables, we will need to make a series of observations. First, note that for every $i$, $\rho$, and $\mu$ the probability distribution of $\widetilde{W}^{(i-1)\prime}$ given that $P_Z=\rho$ and $M^{(i-1)}=\mu$
is $\mathcal{N}(\mu,\sigma I)$. Also, either $f^{[Z_i]}$ is quasistable at $\mu$ or $0$ is quasistable at $\mu$. Either way, the probability distribution of $\widetilde{W}^{(i)}$ under these circumstances must be a $(\sigma,\epsilon)$-blurring by Lemma \ref{stableBlur} and Lemma \ref{addBlur}. That in turn means that $P^{(i)}_\rho$ is a $(\sigma,\epsilon)$ blurring for all $i$ and $\rho$, and thus that $P^{(i)}_\star$ must be a $(\sigma,\epsilon)$ blurring of $\widehat{P}^{(i)}$. Furthermore, by the previous corollary, 

\[2^{-n}\sum_{s\subseteq[n]}||P^{(i)}_\star-P^{(i)}_{\rho_s}||_1\le \epsilon'\]

The combination of these implies that 
\[2^{-n}\sum_{s\subseteq[n]}||\mathcal{N}(0,\sigma I) * \widehat{P}^{(i)}-P^{(i)}_{\rho_s}||_1\le 2\epsilon+\epsilon'\]
which in turn means that $P[\widetilde{W}^{(i)\prime}\ne \widetilde{W}^{(i)}]\le \epsilon+\epsilon'/4$. That in turn means that with probability at least $1-T(\epsilon+\epsilon'/4)$ it is the case that $\widetilde{W}^{(i)\prime}= \widetilde{W}^{(i)}$ for all $i$.

If $\widetilde{W}^{(i)\prime}= \widetilde{W}^{(i)}$ for all $i$ and $\widetilde{W}^{(T)\prime}\ne W^{(T)}$ then there must exist some $i$ such that $\widetilde{W}^{(i-1)\prime}= W^{(i-1)}$ but $\widetilde{W}^{(i)}\ne W^{(i)}$. That in turn means that
\begin{align*}
\widetilde{W}^{(i)} &\ne W^{(i)}\\
&=W^{(i-1)}+f^{[Z_i]}(W^{(i-1)})+\Delta^{(i)}\\
&=\widetilde{W}^{(i-1)\prime}+f^{[Z_i]}(\widetilde{W}^{(i-1)\prime})+\Delta^{(i)}
\end{align*}

If $F^{[Z_i]}$ were quasistable at $M^{(i-1)}$, that is exactly the formula that would be used to calculate $\widetilde{W}^{(i)}$, so $F^{[Z_i]}$ must be quasiunstable at $M^{(i-1)}$. That in turn requires that either $F^{[Z_i]}$ is $(r, B_1,B_2)$-unstable at $\widetilde{W}^{(i-1)\prime}= W^{(i-1)}$,  $||f^{[Z_i]}(W^{(i-1)})||_\infty> B_0$, or $||\widetilde{W}^{(i-1)\prime}-M^{(i-1)}||_1>r$. With probability at least $1-p$, neither of the first two scenarios occur for any $i$, while for any given $i$ the later occurs with a probability of at most $\epsilon''$. Thus, 
\[P[\widetilde{W}^{(T)\prime}\ne W^{(T)}]\le p+T(\epsilon+\epsilon'/4+\epsilon'')\]

The probability distribution of $\widetilde{W}^{(T)\prime}$ is independent of $P_{\mathcal{Z}}$, so it must be the case that
\[2^{-n}\sum_{s\subseteq[n]}||Q-Q'||_1\le 2P[\widetilde{W}^{(T)\prime}\ne W^{(T)}|P_{\mathcal{Z}}=\star]+2P[\widetilde{W}^{(T)\prime}\ne W^{(T)}|P_{\mathcal{Z}}\ne \star]\le 4p+T(4\epsilon+\epsilon'+4\epsilon'')\]
\end{proof}

In particular, if we let $(h,G)$ be a neural net, $G_W$ be $G$ with its edge weights changed to the elements of $W$, $L$ be a loss function,  
\[\overline{f}^{(x,y)}(W)=L(eval_{(h,G_W)}(x)-y),\]
and $f^{(x,y)}=-\gamma\nabla \overline{f}^{(x,y)}$ for each $x,y$ then this translates to the following.

\begin{corollary}
 Let $(h,G)$ be a neural net with $n$ inputs and $m$ edges, $G_W$ be $G$ with its edge weights changed to the elements of $W$, and $L$ be a loss function.
Also, let $\gamma, \sigma, B_0, B_1, B_2>0$ such that $B_1<1/2m$, $m\sqrt{2\sigma/\pi}<r\le(1-2mB_1)/(2mB_2)$, and $T$ be a positive integer. Then, let $\star$ be the uniform distribution on $\B^{n+1}$, and for each $s\subseteq [n]$, let $\rho_s$ be the probability distribution of $(X,p_s(X))$ when $X$ is chosen randomly from $\B^n$. Next, let $P_{\mathcal{Z}}$ be a probability distribution on $\B^{n+1}$ that is chosen by means of the following procedure. First, with probability $1/2$, set $P_{\mathcal{Z}}=\star$. Otherwise, select a random $S\subseteq[n]$ and set $P_{\mathcal{Z}}=\rho_S$.

Now, let $G'$ be $G$ with each of its edge weights perturbed by an independently generated variable drawn from $\mathcal{N}(0,\sigma I)$ and run 
\\$NoisyStochasticGradientDescentAlgorithm(h, G', P_{\mathcal{Z}}, L, \gamma, \infty, \mathcal{N}(0,[2mB_1-m^2B_1^2]\sigma I), T)$. Then, let $p$ be the probability that there exists $0\le i<T$ such that at least one of the following holds: \begin{enumerate}
\item One of the first derivatives of $L(eval_{(h,G_i)}(X_i)-Y_i)$ with respect to the edge weights has magnitude greater than $B_0/\gamma$. 
\item There exists a perturbation $G'_i$ of $G_i$ with no edge weight changed by more than $r$ such that one of the second derivatives of $L(eval_{(h,G'_i)}(X_i)-Y_i)$ with respect to the edge weights has magnitude greater than $B_1/\gamma$. 
\item There exists a perturbation $G'_i$ of $G_i$ with no edge weight changed by more than $2r$ such that one of the third derivatives of $L(eval_{(h,G'_i)}(X_i)-Y_i)$ with respect to the edge weights has magnitude greater than $B_2/\gamma$. 
\end{enumerate}
Finally, let $Q$ be the probability distribution of the final edge weights given that $P_{\mathcal{Z}}=\star$ and $Q'_s$ be the probability distribution of the final edge weights given that $P_{\mathcal{Z}}=\rho_s$. Then
\[2^{-n}\sum_{s\subseteq[n]}||Q-Q'_s||_1\le 4p+T(4\epsilon+\epsilon'+4\epsilon'')\]
where
 \begin{align*}
     &\epsilon=\frac{4(m+2)m^2B_2\sqrt{2\sigma/\pi}/(1-mB_1)+3m^5B_2^2\sigma/(1-mB_1)^2}{8}\\
     &+(1-(1+mB_1)B_2mr)e^{-(r/2\sqrt{\sigma}-m/\sqrt{2\pi})^2/m}
      \end{align*} 
\[\epsilon'=2^{-n/2}\cdot e^{2m ( B_0+2rB_1+2r^2B_2)/\sqrt{2\pi\sigma}}/(1-2mB_1-2rmB_2)+2e^{-(r/2\sqrt{\sigma}-m/\sqrt{2\pi})^2/m}\]

\[\epsilon''=e^{-(r/2\sqrt{\sigma}-m/\sqrt{2\pi})^2/m}\]
\end{corollary}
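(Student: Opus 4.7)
The plan is to apply the preceding theorem directly by specializing its abstract update map $f^{[z]}$ to a noisy SGD gradient step. Identify weight vectors $W\in\mathbb{R}^m$ with neural nets $G_W$ (edges relabeled as coordinates $1,\dots,m$) and set
\[
f^{[z]}(W) := -\gamma\,\nabla_W L\bigl(\mathrm{eval}_{(h,G_W)}(x)-y\bigr), \qquad z=(x,y)\in\{0,1\}^{n+1}.
\]
Then the recursion $W^{(i)} = W^{(i-1)} + f^{[Z_i]}(W^{(i-1)}) + \Delta^{(i)}$ of the preceding theorem is precisely the sequence of edge weights produced by $NoisyStochasticGradientDescentAlgorithm$ with clip bound $B=\infty$, learning rate $\gamma$, and per-step noise $\mathcal{N}(0,[2mB_1-m^2B_1^2]\sigma I)$, while the Gaussian initialization $W^{(0)}\sim\mathcal{N}(\mu_0,\sigma I)$, with $\mu_0$ equal to the original weights of $G$, matches the initial Gaussian perturbation $G\mapsto G'$ in the statement.

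The heart of the argument will be to translate the three derivative bounds of the corollary into the $(r,B_1,B_2)$-stability and $\|f^{[z]}(W)\|_\infty\le B_0$ hypotheses used in the theorem. Direct differentiation gives
\[
\frac{\partial f^{[z]}_i}{\partial W_j}(W) = -\gamma\,\frac{\partial^2 L(\mathrm{eval}_{(h,G_W)}(x)-y)}{\partial W_i\,\partial W_j}, \qquad \frac{\partial^2 f^{[z]}_i}{\partial W_j\partial W_{j'}}(W) = -\gamma\,\frac{\partial^3 L(\mathrm{eval}_{(h,G_W)}(x)-y)}{\partial W_i\,\partial W_j\,\partial W_{j'}}.
\]
Consequently conditions (1), (2), (3) of the corollary translate respectively into $\|f^{[Z_i]}(W^{(i)})\|_\infty\le B_0$, $|\partial f^{[Z_i]}_i/\partial W_j|\le B_1$ throughout the $\ell_\infty$-ball of radius $r$ around $W^{(i)}$, and $|\partial^2 f^{[Z_i]}_i/\partial W_j\partial W_{j'}|\le B_2$ throughout the $\ell_\infty$-ball of radius $2r$. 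Since $\{W' : \|W'-W^{(i)}\|_1 < r\} \subseteq \{W' : \|W'-W^{(i)}\|_\infty < r\}$ (and likewise at radius $2r$), these bounds also hold on the $\ell_1$-balls appearing in the stability definition. Hence the bad event whose probability is called $p$ in the corollary upper-bounds the bad event whose probability is called $p$ in the theorem.

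With these identifications in place, invoking the preceding theorem immediately yields
\[
2^{-n}\sum_{s\subseteq[n]}\|Q-Q'_s\|_1 \le 4p + T(4\epsilon + \epsilon' + 4\epsilon''),
\]
with exactly the same explicit constants $\epsilon,\epsilon',\epsilon''$. There is no substantive technical obstacle here: the argument is essentially a change of variables that turns the abstract iteration of the theorem into a concrete SGD trajectory, and the only mild subtlety is the passage from the corollary's $\ell_\infty$ perturbations to the theorem's $\ell_1$ stability, which goes through automatically by the norm inclusion above.
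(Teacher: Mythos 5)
Your proposal is correct and takes essentially the same route as the paper, which states the corollary immediately after the theorem with only the one-line identification $f^{(x,y)}(W)=-\gamma\nabla_W L(\mathrm{eval}_{(h,G_W)}(x)-y)$; your $\ell_1\subseteq\ell_\infty$ observation correctly shows the corollary's failure event contains the theorem's, so that the theorem's bound can be relaxed to the corollary's $p$.
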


\begin{corollary}
 Let $(h,G)$ be a neural net with $n$ inputs and $m$ edges, $G_W$ be $G$ with its edge weights changed to the elements of $W$, $L$ be a loss function, and $B>0$. Next, define $\gamma$ such that $0<\gamma\le \pi n/80m^2B$, and let $T$ be a positive integer. Then, let $\star$ be the uniform distribution on $\B^{n+1}$, and for each $s\subseteq [n]$, let $\rho_s$ be the probability distribution of $(X,p_s(X))$ when $X$ is chosen randomly from $\B^n$. Next, let $P_{\mathcal{Z}}$ be a probability distribution on $\B^{n+1}$ that is chosen by means of the following procedure. First, with probability $1/2$, set $P_{\mathcal{Z}}=\star$. Otherwise, select a random $S\subseteq[n]$ and set $P_{\mathcal{Z}}=\rho_S$.

Next, set $\sigma=\left(\frac{40m\gamma B}{n}\right)^2/2\pi$. Now, let $G'$ be $G$ with each of its edge weights perturbed by an independently generated variable drawn from $\mathcal{N}(0,\sigma I)$ and run 
\\$NoisyStochasticGradientDescentAlgorithm(h, G', P_{\mathcal{Z}}, L, \gamma, \infty, \mathcal{N}(0,[2mB\gamma-m^2B^2\gamma^2]\sigma I), T)$. Let $p$ be the probability that there exists $0\le i<T$ such that there exists a perturbation $G'_i$ of $G_i$ with no edge weight changed by more than $160m^2 \gamma B/\pi n$ such that one of the first three derivatives of $L(eval_{(h,G_i)}(X_i)-Y_i)$ with respect to the edge weights has magnitude greater than $B$. Finally, let $Q$ be the probability distribution of the final edge weights given that $P_{\mathcal{Z}}=\star$ and $Q'_s$ be the probability distribution of the final edge weights given that $P_{\mathcal{Z}}=\rho_s$. Then
\[2^{-n}\sum_{s\subseteq[n]}||Q-Q'_s||_1\le 4p+T(720m^4B^2\gamma^2/\pi n+14[e/4]^{n/4})\]
\end{corollary}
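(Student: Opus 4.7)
The statement is a direct specialization of the previous corollary with the parameter choices
\[
B_0 \;=\; B_1 \;=\; B_2 \;=\; \gamma B, \qquad r \;=\; \frac{80\, m^2 \gamma B}{\pi n}.
\]
The substitutions $B_i = \gamma B$ convert the previous corollary's bounds ``derivatives of $L$ exceed $B_i/\gamma$'' into the uniform bound ``derivatives of $L$ exceed $B$'' used here, and $2r = 160 m^2 \gamma B/(\pi n)$ matches the single neighborhood radius of the present statement. Controlling all three derivatives on the $2r$-neighborhood of $G_i$ is stronger than the previous corollary's staggered conditions (pointwise for the first derivative, on an $r$-neighborhood for the second, on a $2r$-neighborhood for the third), so the present hypothesis implies the previous one.

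First I verify the technical hypotheses of the previous corollary. A direct computation gives
\begin{align*}
\sqrt{2\pi\sigma} \;=\; \frac{40\, m \gamma B}{n}, \qquad m\sqrt{2\sigma/\pi} \;=\; \frac{40\, m^2 \gamma B}{\pi n} \;=\; \tfrac{r}{2},
\end{align*}
so $m\sqrt{2\sigma/\pi} < r$ with slack. The standing hypothesis $\gamma \le \pi n/(80\, m^2 B)$ gives $m\gamma B \le \pi n/(80\, m)$; under the mild assumption $m \ge n$ (automatic for any neural net of interest), this is at most $1/2$, whence $B_1 = \gamma B < 1/(2m)$, $r \le 1$, and after one rearrangement the bound $r \le (1 - 2 m B_1)/(2 m B_2)$.

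Second I estimate the three error terms. For $\epsilon$ the leading contribution is
\[
\frac{(m+2)\, m^2 B_2 \sqrt{2\sigma/\pi}}{2(1 - m B_1)} \;=\; O\!\left(\frac{m^4 \gamma^2 B^2}{n}\right),
\]
and the subleading term $\tfrac{3 m^5 B_2^2 \sigma}{8 (1 - m B_1)^2}$ is smaller by a factor of $O(1/m)$ under our constraint on $\gamma$. For $\epsilon'$, the key observation is
\begin{align*}
\frac{2 m\, (B_0 + 2 r B_1 + 2 r^2 B_2)}{\sqrt{2\pi\sigma}} \;=\; \frac{n}{20} + \frac{r n}{10} + \frac{r^2 n}{10} \;\le\; \frac{n}{4},
\end{align*}
using $r \le 1$. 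Since $2^{-n/2} = 4^{-n/4}$, this yields $2^{-n/2} e^{\cdots}/(1 - 2 m B_1 - 2 r m B_2) = O((e/4)^{n/4})$, the denominator being bounded below by a positive constant. Finally
\[
\frac{r}{2\sqrt{\sigma}} - \frac{m}{\sqrt{2\pi}} \;=\; m\sqrt{\tfrac{2}{\pi}} - \frac{m}{\sqrt{2\pi}} \;=\; \frac{m}{\sqrt{2\pi}},
\]
so $\epsilon'' = e^{-m/(2\pi)}$, which is dominated by $(e/4)^{n/4}$ once $m \gtrsim n$.

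Summing $4\epsilon + \epsilon' + 4\epsilon''$ with the multiplicities prescribed by the previous corollary, and absorbing every exponentially small residue $e^{-m/(2\pi)}$ into the single term $14 (e/4)^{n/4}$, yields the claimed bound $T(720\, m^4 B^2 \gamma^2/(\pi n) + 14 (e/4)^{n/4})$. The only delicate step is bookkeeping the constants $720/\pi$ and $14$; I do not anticipate any conceptual obstacle beyond that and the elementary inequalities used to discharge the hypotheses.
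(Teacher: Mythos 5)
Your proposal is correct and follows essentially the same route as the paper's proof: specialize the previous corollary with $B_0=B_1=B_2=\gamma B$ and $r=80m^2\gamma B/\pi n$, check the hypotheses, compute $r/2\sqrt{\sigma}-m/\sqrt{2\pi}=m/\sqrt{2\pi}$ so $\epsilon''=e^{-m/2\pi}\le (e/4)^{n/4}$, bound the $\epsilon'$ exponent by $n/4$ using $r\le 1$, and note the leading $\epsilon$-term is $O(m^4\gamma^2 B^2/n)$; the paper's proof is the same apart from carrying explicit constants and inserting a short case split on whether $720m^4B^2\gamma^2/\pi n\ge 2$ (trivializing the bound when it is), which your direct use of $\gamma\le \pi n/80m^2 B$ together with the implicit $m\ge n$ replaces. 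One minor slip: the ratio of the subleading $\epsilon$-term to the leading one is on the order of $m^3\gamma^2B^2/n=O(n/m)$, which under the standing hypotheses is $O(1)$ (in fact a small constant), not $O(1/m)$; the conclusion that it is dominated is unaffected.
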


\begin{proof}
First, set $r=80m^2\gamma B/\pi n$. Also, set $B_1=B_2=B_3=\gamma B$. By the previous corollary, we have that 
\[2^{-n}\sum_{s\subseteq[n]}||Q-Q'_s||_1\le 4p+T(4\epsilon+\epsilon'+4\epsilon'')\]
where
 \begin{align*}
      &\epsilon=\frac{4(m+2)m^2B_2\sqrt{2\sigma/\pi}/(1-mB_1)+3m^5B_2^2\sigma/(1-mB_1)^2}{8}\\
      &+(1-(1+mB_1)B_2mr)e^{-(r/2\sqrt{\sigma}-m/\sqrt{2\pi})^2/m}
      \end{align*}

\[\epsilon'=2^{-n/2}\cdot e^{2m ( B_0+2rB_1+2r^2B_2)/\sqrt{2\pi\sigma}}/(1-2mB_1-2rmB_2)+2e^{-(r/2\sqrt{\sigma}-m/\sqrt{2\pi})^2/m}\]
\[\epsilon''=e^{-(r/2\sqrt{\sigma}-m/\sqrt{2\pi})^2/m}\]

If $720m^4B^2\gamma^2/\pi n\ge 2$, then the conclusion of this corollary is uninterestingly true. Otherwise, $\epsilon\le 180m^4\gamma^2B^2/\pi n+\epsilon''$. Either way, $\epsilon'\le 4[e/4]^{n/4}+2\epsilon''$, and $\epsilon''\le e^{-m/2\pi}$. $m\ge n$ and $e^{1/2\pi}\ge [4/e]^{1/4}$, so $\epsilon''\le [e/4]^{n/4}$. The desired conclusion follows.
\end{proof}

That allows us to prove the following elaboration of theorem \ref{thm3}.

\begin{theorem}
Let $(h,G)$ be a neural net with $n$ inputs and $m$ edges, $G_W$ be $G$ with its edge weights changed to the elements of $W$, $L$ be a loss function, and $B>0$. Next, define $\gamma$ such that $0<\gamma\le \pi n/80m^2B$, and let $T$ be a positive integer. Then, for each $s\subseteq [n]$, let $\rho_s$ be the probability distribution of $(X,p_s(X))$ when $X$ is chosen randomly from $\B^n$. Now, select $S\subseteq[n]$ at random. Next, set $\sigma=\left(\frac{40m\gamma B}{n}\right)^2/2\pi$. Now, let $G'$ be $G$ with each of its edge weights perturbed by an independently generated variable drawn from $\mathcal{N}(0,\sigma I)$ and run 
\\$NoisyStochasticGradientDescentAlgorithm(h, G', P_{\mathcal{Z}}, L, \gamma, \infty, \mathcal{N}(0,[2mB\gamma-m^2B^2\gamma^2]\sigma I), T)$. 
Let $p$ be the probability that there exists $0\le i<T$ such that there exists a perturbation $G'_i$ of $G_i$ with no edge weight changed by more than $160m^2 \gamma B/\pi n$ such that one of the first three derivatives of $L(eval_{(h,G_i)}(X_i)-Y_i)$ with respect to the edge weights has magnitude greater than $B$.
For a random $X\in\B^n$, the probability that the resulting net computes $p_S(X)$ correctly is at most $1/2+2p+T(360m^4B^2\gamma^2/\pi n+7[e/4]^{n/4})$.
\end{theorem}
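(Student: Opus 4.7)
The strategy is to reduce the statement to the previous corollary, which already bounds the average $L_1$ distance between the weight distribution $Q'_s$ produced by running the procedure with samples from $\rho_s$ and the weight distribution $Q$ produced by running it with uniformly random samples. Since $Q$ carries no information about $S$, a net whose weights are drawn from $Q$ cannot do meaningfully better than random guessing on a uniformly random parity, so it suffices to propagate this closeness to a statement about classification accuracy.

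Write $\hat{F}_W:\B^n\to\B$ for the thresholded function computed by the net with weight vector $W$. I would first verify the ``$Q$ fails'' baseline: for any fixed $W$ and any nonzero $X\in\B^n$, the bit $p_S(X)$ is a uniformly random element of $\B$ as $S$ ranges uniformly over subsets of $[n]$, independently of $\hat{F}_W(X)$. Hence
\[\E_S\,\E_{W\sim Q,\,X}\bigl[\1\{\hat{F}_W(X)=p_S(X)\}\bigr]\;\le\;\tfrac{1}{2}+2^{-n},\]
where the $2^{-n}$ accounts for the contribution of the all-zero input. This step uses only that $Q$ is $S$-independent and the symmetry of the parity class.

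Next, for each fixed $s\subseteq[n]$ the map $W\mapsto P_X[\hat{F}_W(X)=p_s(X)]$ is $[0,1]$-valued, so
\[\bigl|P_{W\sim Q'_s,X}[\hat{F}_W(X)=p_s(X)]-P_{W\sim Q,X}[\hat{F}_W(X)=p_s(X)]\bigr|\;\le\;\tfrac{1}{2}\|Q-Q'_s\|_1.\]
Averaging over $S$ uniformly and invoking the previous corollary gives
\[\E_S\bigl[\tfrac12\|Q-Q'_S\|_1\bigr]\;\le\;2p+T\bigl(360m^4B^2\gamma^2/\pi n+7[e/4]^{n/4}\bigr),\]
and combining this with the previous display (absorbing the $2^{-n}$ term into $[e/4]^{n/4}$, which is easily verified since $2^{-n}\le (e/4)^{n/4}$ for all $n\ge 1$) yields the desired accuracy bound.

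The only real work is bookkeeping the constants: the halving from $L_1$ to total variation that explains the passage from $4p$ and $720m^4B^2\gamma^2/\pi n+14[e/4]^{n/4}$ in the corollary to $2p$ and $360m^4B^2\gamma^2/\pi n+7[e/4]^{n/4}$ in the theorem, and checking that the baseline really is $1/2$ up to an exponentially small correction. All of the analytic content (the blurring and stability lemmas and the $L_1$ bound on $2^{-n}\sum_s\|Q-Q'_s\|_1$) has already been established, so no further inequality is needed.
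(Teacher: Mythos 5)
Your proposal is correct and follows essentially the same route as the paper's own proof: compare the learned weight distribution $Q'_S$ to the $S$-independent reference distribution $Q$ via a total-variation bound on the bounded functional $W\mapsto P_X[\hat F_W(X)=p_s(X)]$, observe that a $Q$-distributed net gives no information about $S$ (the $1/2+2^{-n}$ baseline, the $2^{-n}$ coming from the one degenerate input on which all parities agree), and then average the previous corollary's $2^{-n}\sum_s\|Q-Q'_s\|_1$ bound over $S$, halving the constants. You are, if anything, slightly more careful than the paper, which writes the baseline as $1/2$ without recording the $2^{-n}$ correction; your remark that $2^{-n}\le (e/4)^{n/4}$ cleanly absorbs it into the existing error term.
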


\begin{proof}
Let $Q'_s$ be the probability distribution of the resulting neural net given that $S=s$, and let $Q$ be the probability distribution of the net output by NoisyStochasticGradientDescentAlgorithm $(h, G', \star, L, \gamma, \infty, \mathcal{N}(0,[2mB\gamma-m^2B^2\gamma^2]\sigma I), T)$, where $\star$ is the uniform distribution on $\B^{n+1}$. Also, for each $(x,y)\in\B^{n+1}$, let $R_(x,y)$ be the set of all neural nets that output $y$ when given $x$ as input. The probability that the neural net in question computes $p_S(X)$ correctly is at most
\begin{align*}
&2^{-2n}\sum_{s\subseteq[n],x\in\B^n} P_{G\sim Q'_s}[(f,G)\in R_{x,p_s(y)}]\\
&\le 2^{-2n}\sum_{s\subseteq[n],x\in\B^n} P_{G\sim Q}[(f,G)\in R_{x,p_S(x)}]+||Q-Q'_s||_1/2\\ 
&\le 1/2+2p+T(360m^4B^2\gamma^2/\pi n+7[e/4]^{n/4})
\end{align*}
\end{proof}

\section{Proofs of positive results: universality of deep learning}\label{universality}

In previous sections, we were attempting to show that under some set of conditions a neural net trained by SGD is unable to learn a function that is reasonably learnable. However, there are some fairly reasonable conditions under which we actually can use a neural net trained by SGD to learn any function that is reasonably learnable. More precisely, we claim that given any probability distribution of functions from $\{0,1\}^n \to \{0,1\}$ such that there exists an algorithm that learns a random function drawn from this distribution with accuracy $1/2+\epsilon$ using a polynomial amount of time, memory, and samples, there exists a series of polynomial-sized neural networks that can be constructed in polynomial time and that can learn a random function drawn from this distribution with an accuracy of at least $1/2+\epsilon$ after being trained by SGD on a polynomial number of samples despite possibly poly-noise.

\subsection{Emulation of arbitrary algorithms}
Any algorithm that learns a function from samples must repeatedly get a new sample and then change some of the values in its memory in a way that is determined by the current values in its memory and the value of the sample. Eventually, it must also attempt to compute the function's output based on its input and the values in memory. If the learning algorithm is efficient, then there must be a polynomial-sized circuit that computes the values in the algorithm's memory in the next timestep from the sample it was given and its memory values in the current timestep. Likewise, there must be a polynomial-sized circuit that computes its guesses of the function's output from the function's input and the values in its memory.

Any polynomial-sized circuit can be translated into a neural net of polynomial size. Normally, stochastic gradient descent would tend to alter the weights of edges in that net, which might cause it to stop performing the calculations that we want. However, we can prevent its edge weights from changing by using an activation function that is constant in some areas, and ensuring that the nodes in the translated circuit always get inputs in that range. That way, the derivatives of their activation levels with respect to the weights of any of the edges leading to them are $0$, so backpropagation will never change the edge weights in the net. That leaves the issue of giving the net some memory that it can read and write. A neural net's memory takes the form of its edge weights. Normally, we would not be able to precisely control how stochastic gradient descent would alter these weights. However, it is possible to design the net in such a way that if certain vertices output certain values, then every path to the output through a designated edge will pass through a vertex that has a total input in one of the flat parts of the activation function. So, if those vertices are set that way the derivative of the loss function with respect to the edge weight in question will be $0$, and the weight will not change. That would allow us to control whether or not the edge weight changes, which gives us a way of setting the values in memory. As such, we can create a neural net that carries out this algorithm when it is trained by means of stochastic gradient descent with appropriate samples and learning rate. This net will contain the following components:\begin{enumerate}
    \item{\em The output vertex.} This is the output vertex of the net, and the net will be designed in such a way that it always has a value of $\pm 1$.

    \item {\em The input bits.} These will include the regular input vertices for the function in question. However, there will also be a couple of extra input bits that are to be set randomly in each timestep. They will provide a source of randomness that is necessary for the net to run randomized algorithms\footnote{Two random bits will always be sufficient because the algorithm can spend as many timesteps as it needs copying random bits into memory and ignoring the rest of its input.}, in addition to some other guesswork that will turn out to be necessary (see more on this below).

    \item {\em The memory component.} For each bit of memory that the original algorithm uses, the net will have a vertex with an edge from the constant vertex that will be set to either a positive or negative value depending on whether that bit is currently set to $0$ or $1$. Each such vertex will also have an edge leading to another vertex which is connected to the output vertex by two paths. The middle vertex in each of these paths will also have an edge from a control vertex. If the control vertex has a value of $2$, then that vertex's activation will be $0$, which will result in all  subsequent vertices on that path outputting $0$, and none of the edge weights on that path changing as a result of backpropagation along that path. On the other hand, if the control vertex has a value of $0$, then that vertex will have a nonzero activation, and so will all subsequent vertices on that path. The learning rate will be chosen so that in this case, if the net gives the wrong output, the weight of every edge on this path will be multiplied by $-1$. This will allow the computation component to set values in memory using the control vertices. (See definition \ref{memDef} and lemma \ref{memLem} for details on the memory component.)
    
    \item {\em The computation component.} This component will have edges leading to it from the inputs and from the memory component. It will use the inputs and the values in memory to compute what the net should output and what to set the memory bits to at the end of the current timestep if the net's output is wrong. There will be edges leading from the appropriate vertices in this component to the control vertices in the memory component in order to set the bits to the values it has computed. If the net's output is right, the derivative of the loss function with respect to any edge weight will be $0$, so the entire net will not change. This component will be constructed in such a way that the derivative of the loss function with respect to the weights of its edges will always be $0$. As a result, none of the edge weights in the computation component will ever change, as explained in lemma \ref{em1}. This component will also decide whether or not the net has learned enough about the function in question based on the values in memory. If it thinks that it still needs to learn, then it will have the net output a random value and attempt to set the values in memory to whatever they should be set to if that guess is wrong. If it thinks that it has learned enough, then it will try to get the output right and leave the values in memory unchanged.
    
    \begin{figure}
        \centering
        \includegraphics[width=16cm]{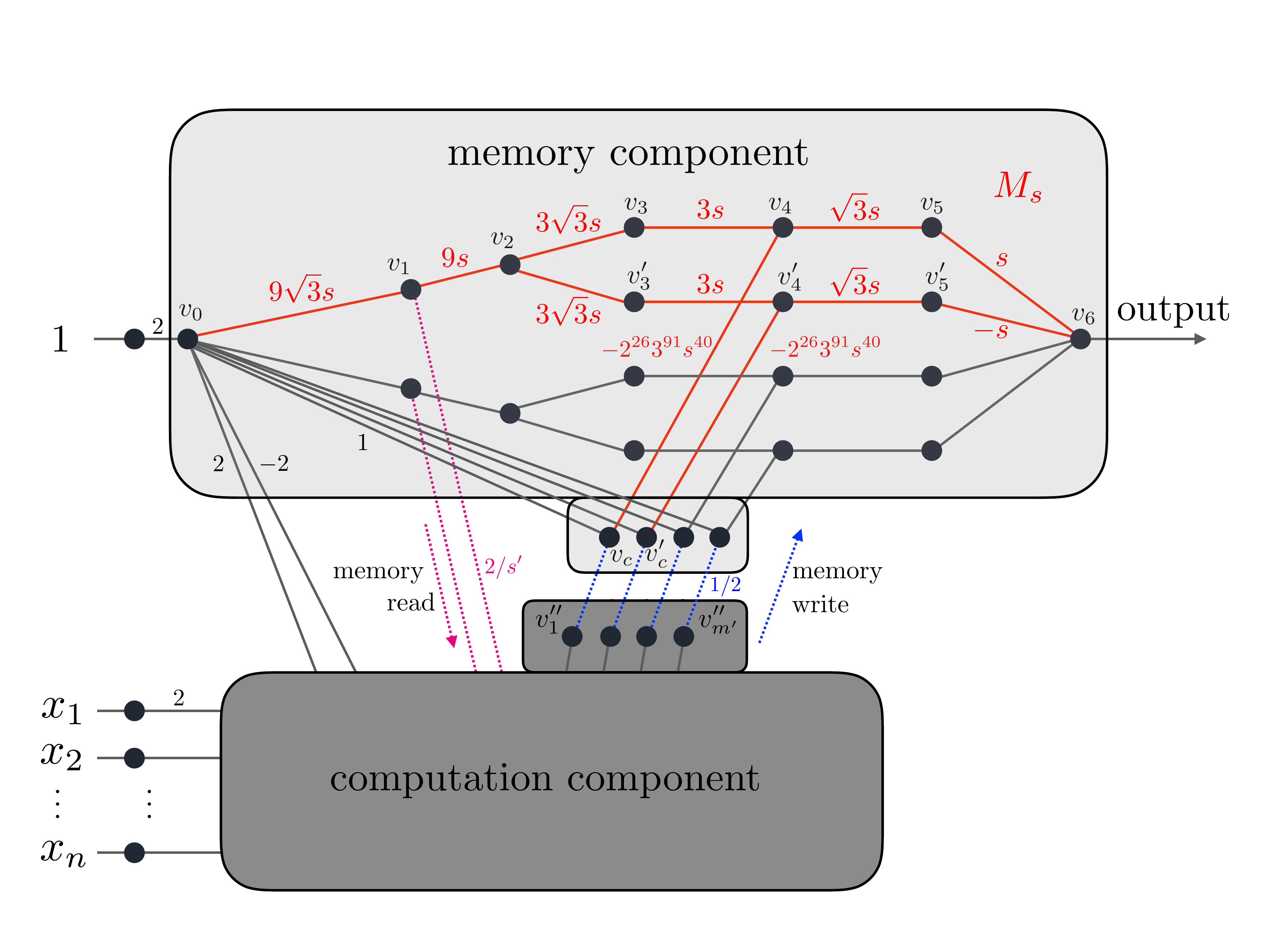}
        \caption{The emulation net. The parameters are $s=\sqrt[364]{2^{-243}3^{-1641/2}/m'}$, where $m'=\max(m,\lceil 2^{-243}3^{-1641/2}(18\sqrt{3})^{364}\rceil)$, $s'=(18\sqrt{3}s)^3$ and $m$ is the total number of bits required to perform the computation from the computation component. In this illustration, we considered only two copies of the $M_s$ from Definition \ref{memDef}; one copy is highlighted in red. The magenta dashed edges are the memory read edges and the blue dashed edges are the memory write edges. The latter allow to change the controller vertices $v_c,v_c'$ that act on $M_s$ to edit the memory. Random bit inputs are omitted in this figure.}
        \label{emulationDiagram}
    \end{figure}
\end{enumerate}

See Figure \ref{emulationDiagram} for a representation of the overall net. 
One complication that this approach encounters is that if the net outputs the correct value, then the derivative of the loss function with respect to any edge weight is $0$, so the net cannot learn from that sample.\footnote{This holds for any loss function that has a minimum when the output is correct, not just the $L_2$ loss function that we are using. We could avoid this by having the net output $\pm 1/2$ instead of $\pm 1$. However, if we did that then the change in each edge weight if the net got the right output would be $-1/3$ of the change in that edge weight if it got the wrong output, which would be likely to result in an edge weight that we did not want in at least one of those cases. There are ways to deal with that, but they do not seem clearly preferable to the current approach.} Our approach to dealing with that is to have a learning phase where we guess the output randomly and then have the net output the opposite of our guess. That way, if the guess is right the net learns from that sample, and if it is wrong it stays unchanged. Each guess is right with probability $1/2$ regardless of the sample, so the probability distribution of the samples it is actually learning from is the same as the probability distribution of the samples overall, and it only needs $(2+o(1))$ times as many samples as the original algorithm in order to learn the function. Once it thinks it has learned enough, such as after learning from a designated number of samples, it can switch to attempting to compute the function it has learned on each new input.

\begin{example}
We now give an illustration of how previous components would  run and interact for learning parities. One can learn an unknown parity function by collecting samples until one has a set that spans the space of possible inputs, at which point one can compute the function by expressing any new input as a linear combination of those inputs and returning the corresponding linear combination of their outputs. As such, if we wanted to design a neural net to learn a parity function this way, the memory component would have $n(n+1)$ bits designated for remembering samples, and $\log_2(n+1)$ bits to keep a count of the samples it had already memorized. Whenever it received a new input $x$, the computation component would get the value of $x$ from the input nodes and the samples it had previously memorized, $(x_1,y_1),...,(x_r,y_r)$, from the memory component. Then it would check whether or not $x$ could be expressed as a linear combination of $x_1,...,x_r$. If it could be, then the computation component would compute the corresponding linear combination of $y_1,...,y_r$ and have the net return it. Otherwise, the computation component would take a random value that it got from one of the extra input nodes, $y'$. Then, it would attempt to have the memory component add $(x,y')$ to its list of memorized samples and have the net return $NOT(y')$. That way, if the correct output was $y'$, then the net would return the wrong value and the edge weights would update in a way that added the sample to the net's memory. If the correct output was $NOT(y')$, then the net would return the right value, and none of the edge weights would change. As a result, it would need about $2n$ samples before it succeeded at memorizing a list that spanned the space of all possible inputs, at which point it would return the correct outputs for any subsequent inputs.
\end{example}


Before we can prove anything about how our net learns, we will need to establish some properties of our activation function. Throughout this section, we will use an activation function $f:\mathbb{R}\rightarrow\mathbb{R}$ such that $f(x)=2$ for all $x>3/2$, $f(x)=-2$ for all $x<-3/2$, and $f(x)=x^3$ for all $-1<x<1$. There is a way to define $f$ on $[-3/2,-1]\cup[1,3/2]$ such that $f$ is smooth and nondecreasing. The details of how this is done will not affect any of our arguments, so we pick some assignment of values to $f$ on these intervals with these properties. This activation function has the important property that its derivative is $0$ everywhere outside of  $[-3/2,3/2]$. As a result, if we use SGD to train a neural net using this activation function, then in any given time step, the weights of the edges leading to any vertex that had a total input that is not in $[-3/2,3/2]$ will not change. This allows us to create sections of the net that perform a desired computation without ever changing. In particular, it will allow us to construct the net's computation component in such a way that it will perform the necessary computations without ever getting altered by SGD. More formally, we have the following.

\begin{lemma}[Backpropagation-proofed circuit emulation]\label{em1}
Let $h:\{0,1\}^m\rightarrow\{0,1\}^{m'}$ be a function that can be computed by a circuit made of AND, OR, and NOT gates with a total of $b$ gates. Also, consider a neural net with $m$ input\footnote{Note that these will not be the input of the general neural net that is being built.} vertices $v'_1,...,v'_m$, and a collection of chosen real numbers $y^{(0)}_1<y^{(1)}_1,y^{(0)}_2<y^{(1)}_2,...,y^{(0)}_m<y^{(1)}_m$. It is possible to add a set of at most $b$ new vertices to the net, including output vertices $v''_1,...,v''_{m'}$, along with edges leading to them such that for any possible addition of edges leading from the new vertices to old vertices, if the net is trained by SGD and the output of $v'_i$ is either $y^{(0)}_i$ or $y^{(1)}_i$ for every $i$ in every timestep, then the following hold:
\begin{enumerate}
    \item None of the weights of the edges leading to the new vertices ever change, and no paths through the new vertices contribute to the derivative of the loss function with respect to edges leading to the $v'_i$.
    
    \item In any given time step, if the output of $v'_i$ encodes $x_i$ with $y^{(0)}_i$ and $y^{(1)}_i$ representing $0$ and $1$ respectively for each $i$\footnote{It would be convenient if $v'_1,...,v'_m$ all used the same encoding. However, the computation component will need to get inputs from the net's input vertices and from the memory component. The input vertices encode $0$ and $1$ as $\pm 1$, while the memory component encodes them as $\pm s'$ for some small $s'$. Therefore, it is necessary to be able to handle inputs that use different encodings.}, then the output of $v''_j$ encodes $h_j(x_1,...,x_m)$ for each $j$ with $-2$ and $2$ encoding $0$ and $1$ respectively.
\end{enumerate}
\end{lemma}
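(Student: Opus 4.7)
The plan is to emulate the circuit for $h$ gate-by-gate, turning each of its $b$ Boolean gates into exactly one new neural-net vertex, and to choose weights so that every new vertex always receives a pre-activation lying in one of the two ``flat'' pieces of $f$, i.e.\ outside $[-3/2,3/2]$. Since on those pieces $f\equiv\pm 2$ and $f'\equiv 0$, the new vertices will (a) output values in $\{-2,+2\}$ that give a clean Boolean encoding for the next layer, which will deliver property~(2), and (b) kill any derivative that has to flow through them in the backpropagation chain rule, which will deliver property~(1).

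First, I would design the three basic gate types assuming inputs already encoded by $\pm 2$. A \emph{NOT} gate is one new vertex with a single incoming edge of weight $-1$: a $\pm 2$ input produces pre-activation $\mp 2\notin[-3/2,3/2]$, so $f$ returns $\mp 2$. An \emph{AND} gate uses weights $(2,2)$ from its two Boolean inputs together with a bias of $-6$ from the constant vertex: input $(+2,+2)$ yields pre-activation $+2>3/2$, while each of the other three inputs yields a pre-activation of at most $-6<-3/2$, so the output is $+2$ exactly when both inputs are $+2$. An \emph{OR} gate uses weights $(2,2)$ and bias $+6$, giving pre-activations $+14,+6,+6,-2$ and the correct threshold behavior (alternatively, OR can be built from AND and NOT by De Morgan). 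Every gate uses a single new vertex.

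Second, I would handle the first layer, whose inputs arrive on the $v'_i$ with values in $\{y^{(0)}_i,y^{(1)}_i\}$ rather than $\{-2,+2\}$. For a first-layer gate it suffices to rescale and re-center each incoming edge: using weight $4/(y^{(1)}_i-y^{(0)}_i)$ on the edge from $v'_i$ replaces that input's contribution by an effective $\pm 2$ about some neutral point, and the bias from the constant vertex can then be chosen to shift the total pre-activation into exactly the threshold window used above for AND/OR/NOT on inputs encoded by $\pm 2$. Each Boolean gate of the circuit thereby corresponds to one new vertex; we designate the $m'$ vertices corresponding to the circuit's output gates to play the role of $v''_1,\ldots,v''_{m'}$, giving at most $b$ new vertices in total.

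Finally, both claims follow from the invariant that every new vertex's pre-activation lies outside $[-3/2,3/2]$. Property~(2) is an induction on the topological order of the circuit: assuming the inputs to a gate are correctly encoded (as $\pm 2$, or as $y^{(0)}_i,y^{(1)}_i$ at the first layer), the weight choices above force the gate's pre-activation into the flat region with the right sign, so its output is the correct $\pm 2$ encoding and in particular $v''_j$ outputs the encoding of $h_j$. Property~(1) uses that $f'$ vanishes at every new vertex's pre-activation, so (i) the gradient of the loss with respect to the weight of any edge ending at a new vertex carries a factor of $f'$ and vanishes, and (ii) any chain-rule path from the output vertex back to some $v'_i$ that passes through a new vertex carries a factor of $f'$ at that vertex and therefore contributes $0$, regardless of which downstream edges were added out of the new vertices. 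The main obstacle I expect is purely bookkeeping: making the threshold separation \emph{uniform} over all $2^{\mathrm{fan\text{-}in}}$ input patterns to each gate and, for the first layer, over all possible values of the pair $(y^{(0)}_i,y^{(1)}_i)$; the explicit constants $\pm 6$ together with the scaling $4/(y^{(1)}_i-y^{(0)}_i)$ are chosen precisely so that this uniform separation always holds with room to spare.
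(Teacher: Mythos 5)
Your proof is correct and follows essentially the same approach as the paper's: one new vertex per gate, explicit $\pm 2$ encodings with NOT/AND/OR realized by fixed weights and biases that force every new vertex's pre-activation out of $[-3/2,3/2]$, a first-layer rescaling to absorb the $(y^{(0)}_i,y^{(1)}_i)$ encodings, and the observation that $f'=0$ there kills both the gradients of the incoming-edge weights and any backpropagation path through the new vertices. The only difference is cosmetic (you use gate weights $(2,2)$ with biases $\pm 6$ where the paper uses $(1,1)$ with biases $\pm 2$; both land in the flat region).
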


\begin{proof}
In order to do this, add one new vertex for each gate in a circuit that computes $h$. When the new vertices are used to compute $h$, we want each vertex to output $2$ if the corresponding gate outputs a $1$ and $-2$ if the corresponding gate outputs a $0$. In order to make one new vertex compute the NOT of another new vertex, it suffices to have an edge of weight $-1$ to the vertex computing the NOT and no other edges to that vertex. We can compute an AND of two new vertices by having a vertex with two edges of weight $1$ from these vertices and an edge of weight $-2$ from the constant vertex. Similarly, we can compute an OR of two new vertices by having a vertex with two edges of weight $1$ from these vertices and an edge of weight $2$ from the constant vertex. For vertices corresponding to gates that act directly on the inputs, we have the complication that their vertices do not necessarily encode $0$ and $1$ as $\pm 2$, but we can compensate for that by changing the weights of the edges from these vertices, and the edges to these gates from the constant vertices appropriately. 

This ensures that if the outputs of the $v'_i$ encode binary values $x_1,...,x_m$ appropriately, then each of the new vertices will output the value corresponding to the output of the appropriate gate. So, these vertices compute $h(x_1,...,x_m)$ correctly. Furthermore, since the input to each of these vertices is outside of $[-3/2,3/2]$, the derivatives of their activation functions with respect to their inputs are all $0$. As such, none of the weights of the edges leading to them ever change, and paths through them do not contribute to changes in the weights of edges leading to the $v'_i$.
\end{proof}

Note that any efficient learning algorithm will have a polynomial number of bits of memory. In each time step, it might compute an output from its memory and sample input, and it will compute which memory values it should change based on its memory, sample input, and sample output. All of these computations must be performable in polynomial time, so there is a polynomial sized circuit that performs them. Therefore, by the lemma it is possible to add a polynomial sized component to any neural net that performs these calculations, and as long as the inputs to this component always take on values corresponding to $0$ or $1$, backpropagation will never alter the weights of the edges in this component. That leaves the issue of how the neural net can encode and update memory bits. Our plan for this is to add in a vertex for each memory bit that has an edge with a weight encoding the bit leading to it from a constant bit and no other edges leading to it. We will also add in paths from these vertices to the output that are designed to allow us to control how backpropagation alters the weights of the edges leading to the memory vertices. More precisely, we define the following.

\begin{definition}\label{memDef}
For any positive real number $s$, let $M_s$ be the weighted directed graph with 12 vertices, $v_0$, $v_1$, $v_2$, $v_3$, $v_4$, $v_5$, $v_c$, $v'_3$, $v'_4$, $v'_5$, $v'_c$, and $v_6$ and the following edges:
\begin{enumerate}
    \item An edge of weight $3^{3-t/2} s$ from $v_{t-1}$ to $v_t$ for each $0<t\le 6$
    
    \item An edge of weight $3\sqrt{3} s$ from $v_2$ to $v'_3$
    
    \item An edge of weight $3^{3-t/2} s$ from $v'_{t-1}$ to $v'_t$ for each $3<t< 6$
    
    \item An edge of weight $-s$ from $v'_5$ to $v_6$
    
    \item An edge of weight $-2^{26}\cdot 3^{91}s^{40}$ from $v_c$ to $v_4$.
    
    \item An edge of weight $-2^{26}\cdot 3^{91}s^{40}$ from $v'_c$ to $v'_4$.
\end{enumerate}
\end{definition}

We refer to Figure \ref{emulationDiagram} to visualize $M_s$.
The idea is that this structure can be used to remember one bit, which is encoded in the current weight of the edge from $v_0$ to $v_1$. A weight of $9\sqrt{3} s$ encodes a $0$ and a weight of $-9\sqrt{3} s$ encodes a $1$. In order to set the value of this bit, we will use $v_c$ and $v'_c$, which will be controlled by the computation component. If we want to keep the bit the same, then we will have them both output $2$, in which case $v_4$ and $v'_4$ will both output $0$, with the result that the derivative of the loss function with respect to any of the edge weights in this structure will be $0$. However, if we want to change the value of this bit, we will have one of $v_c$ and $v'_c$ output $0$. That will result in a nonzero output from $v_4$ or $v'_4$, which will lead to the net's output having a nonzero derivative with respect to some of the edge weights in this structure. Then, if the net gives the wrong output, the weights of some of the edges in the structure will be multiplied by $-1$, including the weight of the edge from $v_0$ to $v_1$. Unfortunately, if the net gives the right output then the derivative of the loss function with respect to any edge weight will be $0$, which means that any attempt to change a value in memory on that timestep will fail.

More formally, we have the following.

\begin{lemma}[Editing memory when the net gives the wrong output]\label{memLem}
Let $0<s<1/18\sqrt{3}$, $\gamma=2^{-244}\cdot 3^{-1643/2}s^{-362}$, and $L(x)=x^2$ for all $x$. Also, let $(f,G)$ be a neural net such that $G$ contains $M_s$ as a subgraph with $v_6$ as $G$'s output vertex, and there are no edges from vertices outside this subgraph to vertices in the subgraph other than $v_0$, $v_c$, and $v'_c$. Now, assume that this neural net is trained using SGD with learning rate $\gamma$ and loss function $L$ for $t$ time steps, and the following hold:
\begin{enumerate}
    \item The sample output is always $\pm 1$.
    
    \item The net gives an output of $\pm 1$ in every time step.
    
    \item $v_0$ outputs $2$ in every time step.
    
    \item $v_c$ and $v'_c$ each output $0$ or $2$ in every time step.
    
    \item $v'_c$ outputs $2$ in every time step when the net outputs $1$ and $v_c$ outputs $2$ in every time step when the net outputs $-1$.
    
    \item The derivatives of the loss function with respect to the weights of all edges leaving this subgraph are always $0$.
\end{enumerate}
Then during the training process, the weight of the edge from $v_0$ to $v_1$ is multiplied by $-1$ during every time step when the net gives the wrong output and $v_c$ and $v'_c$ do not both output $2$, and its weight stays the same during all other time steps.
\end{lemma}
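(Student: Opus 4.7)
The plan is to trace the forward pass through $M_s$ under the stated hypotheses and then verify by direct computation that the SGD update flips $w_1$ exactly in the claimed cases and leaves it unchanged otherwise. Since $v_0$ outputs $2$ and $|w_1| \le 9\sqrt{3}s$, the input $2w_1$ to $v_1$ lies in $(-1,1)$, and by induction every internal vertex of $M_s$ has input in $(-1,1)$; hence $f$ acts as $x^3$ throughout and $a_k = (a_{k-1}w_k)^3$ at each internal vertex. Iterating gives $a_3 = a'_3 = 2^{27} 3^{90} s^{39}\,\mathrm{sign}(w_1)$. I will also exploit the identity $w_k^2/3^{7-k} = s^2/3$, valid uniformly in $k$ since $w_k = 3^{3-k/2}s$, which will make the per-weight update ratio independent of which weight along the active path we are considering.

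First I would handle Case I, when $v_c = v'_c = 2$. The edge weight $-2^{26} 3^{91} s^{40}$ multiplied by $v_c$'s output $2$ contributes $-2^{27} 3^{91} s^{40}$ to $v_4$'s input, which exactly matches the magnitude of $3s\cdot a_3 = 2^{27} 3^{91} s^{40}\,\mathrm{sign}(w_1)$ coming from $v_3$. The symmetric construction forces $v_4$ and $v'_4$ to receive identical inputs, so $a_4 = a'_4$, and propagating identically along the two paths yields $a_5 = a'_5$; hence the input to $v_6$ equals $s(a_5 - a'_5) = 0$, so $a_6 = f(0) = 0$. Crucially $f'(0) = 0$ (since $f = x^3$ near the origin), so backpropagation through $v_6$ kills $\partial a_6/\partial w$ for every weight $w$ in $M_s$, giving $\Delta w_1 = 0$.

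Next I would handle Case II, where exactly one of $v_c, v'_c$ outputs $0$. By the symmetry of the two paths and the hypothesis tying the sign of the net output to which control vertex is saturated, I may take $v_c = 0$ and $v'_c = 2$ with net output $+1$. On the upper path the bias vanishes, so $a_4 = (3s\,a_3)^3$ and the chain yields a monomial with $a_6 \propto w_1^{729}$, while the lower path either cancels (when $\mathrm{sign}(w_1) = +1$) or contributes with the same sign as the upper path to the input of $v_6$. The product form gives $\partial a_6/\partial w_k = 3^{7-k}\,a_6/w_k$ for the active path, so
\[
\frac{\Delta w_k}{w_k} \;=\; -\frac{2\gamma (a_6-Y)\cdot 3^{7-k}\, a_6}{w_k^2} \;=\; -\frac{6\gamma (a_6-Y)\, a_6}{s^2},
\]
which is independent of $k$ by the key identity. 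Under the hypothesis that both $Y$ and the net output are $\pm 1$, a wrong output gives $(a_6-Y)a_6 = 2$, so plugging in the chosen $\gamma = 2^{-244} 3^{-1643/2} s^{-362}$ (which, under the relation $s^{364} = 2^{-243} 3^{-1641/2}$ forced by $a_6 \in \{-1,+1\}$, equals $s^2/6$) yields $\Delta w_k/w_k = -2$; in particular $w_1$ is multiplied by $-1$. When the output is correct, $a_6 = Y$ so $(a_6-Y) = 0$ and $\Delta w_1 = 0$.

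The main obstacle is the precise constant-tracking. For Case I one must expand three nested cubes starting from $a_0 = 2$ with weights $3^{5/2}s, 3^2s, 3^{3/2}s$ to see that $3s\cdot a_3$ has magnitude exactly $2^{27} 3^{91} s^{40}$, which is what makes the bias cancel cleanly regardless of $\mathrm{sign}(w_1)$; for Case II one must collect the scalar constants from the six-layer product form and confirm that the identity $w_k^2/3^{7-k} = s^2/3$ collapses the whole computation to the single scalar equation above. Once the input-range bound $s < 1/(18\sqrt 3)$ and the product form are pinned down, the rest is routine arithmetic on powers of $2$, $3$, $s$, and $w_1$.
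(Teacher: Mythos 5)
Your forward-pass bookkeeping inside $M_s$ is correct, but the argument has a real gap in how it treats the output vertex $v_6$, and this gap infects both of your cases.

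In the intended use of this lemma (see the subsequent construction with $m+m'$ copies of $M_s$ all sharing a single output vertex), $v_6$ receives inputs from many sources besides the copy of $M_s$ under consideration; the contribution from this one copy is tiny (of order $s^{364}$ or zero). Hypothesis~2 of the lemma, that the net outputs $\pm 1$, is precisely the assertion that the \emph{total} input to $v_6$ is $\pm 1$ — it is not derivable from the $M_s$ path alone. Your Case~I therefore cannot conclude $a_6 = f(0) = 0$: that contradicts hypothesis~2. The reason no edge weights change when $v_c = v'_c = 2$ is that the inputs to $v_4$ and $v'_4$ are exactly zero, so $f'(0) = 0$ kills backpropagation \emph{at those vertices}, not at $v_6$. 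This is the argument the paper actually makes.

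The same misreading breaks Case~II. You write $\partial a_6/\partial w_k = 3^{7-k}\,a_6/w_k$, which is the formula one would get if $a_6$ were literally $I_{\mathrm{path}}^3$, i.e.\ if $v_6$'s only input came from this copy of $M_s$. But $a_6 = f(I_{\mathrm{path}} + I_{\mathrm{other}})$, and hypothesis~2 forces $I_{\mathrm{path}} + I_{\mathrm{other}} = \pm 1$, not $I_{\mathrm{path}} = \pm 1$. Correctly, one has
\[
\frac{\partial L}{\partial w_k} \;=\; L'(a_6 - Y)\, f'(\text{input to }v_6)\, \frac{\partial I_{\mathrm{path}}}{\partial w_k}
\;=\; 12 \cdot I_{\mathrm{path}} \cdot \frac{3^{6-k}}{w_k},
\]
where $12 = 4\cdot 3$ packages $L'(\pm 2)$ and $f'(\pm 1)$, and $I_{\mathrm{path}} = 2^{243}3^{1641/2}s^{364}$ with the $3^{6-k}$ coming from the exponent of $w_k$ in $I_{\mathrm{path}}$. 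Plugging in $w_k^2 = 3^{6-k}s^2$ gives $\Delta w_k/w_k = -12\gamma I_{\mathrm{path}}/s^2 = -2$ for \emph{every} $s \in (0, 1/18\sqrt 3)$ with $\gamma = 2^{-244}3^{-1643/2}s^{-362}$ — no constraint on $s$. The relation $s^{364} = 2^{-243}3^{-1641/2}$ you impose to make your formula come out right is not a hypothesis of the lemma; it is a compensating error, and the lemma must hold for all admissible $s$. (A secondary imprecision: under the paper's induction hypothesis $a'_3 w'_4 > 0$ always, so the lower path \emph{always} zeros out when $v'_c = 2$, rather than "sometimes cancelling" depending on $\mathrm{sign}(w_1)$.)

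To repair the argument you would need to (a) carry the paper's induction on the sign pattern of the weights along each path, (b) argue Case~I via $f'(0)=0$ at $v_4, v'_4$ rather than at $v_6$, and (c) in Case~II separate the contribution of $M_s$ to $v_6$'s input from the rest of the net's, using hypothesis~2 only to evaluate $f'$ at $v_6$ and $L'$.
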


\begin{proof}
More precisely, we claim that the weight of the edge from $v_c$ to $v_4$ and the weight of the edge from $v'_c$ to $v'_4$ never change, and that all of the other edges in $M_s$ only ever change by switching signs. Also, we claim that at the end of any time step, either all of the edges on the path from $v_0$ to $v_2$ have their original weights, or all of them have weights equal to the negatives of their original weights. Furthermore, we claim that the same holds for the edges on each path from $v_2$ to $v_6$.

In order to prove this, we induct on the number of time steps. It obviously holds after $0$ time steps. Now, assume that it holds after $t'-1$ time steps, and consider time step $t'$. If the net gave the correct output, then the derivative of the loss function with respect to the output is $0$, so none of the weights change. 

Now, consider the case where the net outputs $1$ and the correct output is $-1$. By assumption, $v'_c$ outputs $2$ in this time step, so $v'_4$ gets an input of $2^{27}\cdot 3^{91}s^{40}$ from $v'_3$ and an input of $-2^{27}\cdot 3^{91}s^{40}$ from $v'_c$. So, both its output and the derivative of its output with respect to its input are $0$. That means that the same holds for $v_5$, which means that none of the edge weights on this path from $v_2$ to $v_6$ change this time step, and nothing backpropagates through this path. If $v_c$ also outputs $2$, then $v_4$ and $v_5$ output $0$ for the same reason, and none of the edge weights in this copy of $M_s$ change. On the other hand, if $v_c$ outputs $0$, then the output vertex gets an input of $2^{243}\cdot 3^{1641/2} s^{364}$ from $v_5$. The derivative of this input with respect to the weight of the edge from $v_{i-1}$ to $v_{i}$ is $2^{243}\cdot 3^{1641/2} s^{364}\cdot[3^{6-i}/ (3^{3-i/2} s)]$ if these weights are positive, and the negative of that if they are negative. Furthermore, the derivative of the loss function with respect to the input to the output vertex is $12$. So, the algorithm reduces the weights of all the edges on the path from $v_0$ to $v_6$ that goes through $v_4$ exactly enough to change them to the negatives of their former values. Also, since $v_c$ output $0$, the weight of the edge from $v_c$ to $v_4$ had no effect on anything this time step, so it stays unchanged.

The case where the net outputs $-1$ and the correct output is $1$ is analogous, with the modification that the output vertex gets an input of $-2^{243}\cdot 3^{1641/2} s^{364}$ from $v'_5$ if $v'_c$ outputs $0$ and the edges on the path from $v_0$ to $v_6$ that goes through $v'_4$ are the ones that change signs. So, by induction, the claimed properties hold at the end of every time step. Furthermore, this argument shows that the sign of the edge from $v_0$ to $v_1$ changes in exactly the time steps where the net outputs the wrong value and $v_c$ and $v'_c$ do not both output $2$.
\end{proof}

So, $M_s$ satisifes some but not all of the properties we would like a memory component to have. We can read the bit it is storing, and we can control which time steps it might change in by controlling the inputs to $v_c$ and $v'_c$. However, for it to work we need the output of the overall net to be $\pm 1$ in every time step, and each such memory component will input $\pm 2^{243}\cdot 3^{1641/2}s^{364}$ to the output vertex every time we try to flip it. More problematically, the values these components are storing can only change when the net gets the output wrong. We can deal with the first issue by choosing parameters such that $2^{243}\cdot 3^{1641/2}s^{364}$ is the inverse of an integer that is at least as large as the number of bits that we want to remember, and then adding some extraneous memory components that we can flip in order to ensure that exactly $1/2^{243}\cdot 3^{1641/2}s^{364}$ memory components get flipped in each time step. We cannot change the fact that the net will not learn from samples where it got the output right, but we can use this to emulate any efficient learning algorithm that only updates when it gets something wrong. More formally, we have the following.

\begin{lemma}
For each $n$, let $m_n$ be polynomial in $n$, and $h_n:\{0,1\}^{n+m_n}\rightarrow\{0,1\}$ and $g_n:\{0,1\}^{n+m_n}\rightarrow\{0,1\}^{m_n}$ be functions that can be computed in polynomial time. Then there exists a neural net $(G_n,f)$ of polynomial size and $\gamma>0$ such that the following holds. Let $T>0$ and $(x_t,y_t)\in\{0,1\}^n\cdot\{0,1\}$ for each $0<t\le T$. Then, let $b_0=(0,...,0)$, and for each $0<t\le T$, let $y^\star_t=h_n(x_t,b_{t-1})$ and let $b_t$ equal $b_{t-1}$ if $y^\star_t=y_t$ and $g_n(x_t,b_{t-1})$ otherwise. Then if we use stochastic gradient descent to train $(G_n,f)$ on the samples $(2x_t-1,2y_t-1)$ with a learning rate of $\gamma$, the net outputs $1$ in every time step where $y^\star_t=1$ and $-1$ in every time step where $y^\star_t=0$.
\end{lemma}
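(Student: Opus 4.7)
The plan is to realize the emulation architecture of Figure~\ref{emulationDiagram}: an array of $M$ disjoint copies of the gadget $M_s$ sharing a common output vertex $v_{\mathrm{out}}=v_6$, partitioned into $m_n$ \emph{real} slots (whose $v_0\to v_1$ edge weights will store the bits of $b_t$) and $p$ \emph{padding} slots, together with a backpropagation-invariant computation component installed via Lemma~\ref{em1} that drives the control vertices $v_c,v_c'$ of every slot. Under SGD with the learning rate mandated by Lemma~\ref{memLem}, the argument then reduces to an induction on $t$ showing that the real slots encode $b_t$ at the end of step $t$ and that the net outputs $2y_t^\star-1$ throughout.

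First I would fix parameters so both lemmas apply simultaneously. Pick an integer $N\ge \max(m_n,\lceil 2^{-243}3^{-1641/2}(18\sqrt 3)^{364}\rceil)$, set $s=(2^{-243}3^{-1641/2}/N)^{1/364}$ and $\gamma=2^{-244}3^{-1643/2}s^{-362}$, so that $s<1/(18\sqrt 3)$ puts the memory read-out $v_1$ in the cubic regime of $f$ (where it takes values $\pm s'$ with $s'=(18\sqrt 3 s)^3$) and each activated slot contributes exactly $\pm\epsilon=\pm1/N$ at $v_{\mathrm{out}}$. Take $p\ge N$ padding slots so that, for every possible $k\in[0,m_n]$, one can activate $N-k$ padding slots in addition to the $k$ real slots marked for flipping, making the total number of activated slots \emph{exactly} $N$ on every step; the contributions then sum to $\pm 1$, and, extending $f$ continuously so that $f(\pm 1)=\pm 1$ and $f'(\pm 1)=3$ (consistent with the cubic regime used in the proof of Lemma~\ref{memLem}), the net outputs precisely $2y_t^\star-1$. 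Finally attach an edge of weight $2$ from the constant vertex to each $v_0$ so hypothesis~3 of Lemma~\ref{memLem} holds.

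Next I would install the computation component via Lemma~\ref{em1}. Its inputs are the $n$ data vertices (values $\pm 1$) and the $M$ memory read-outs at the $v_1$ vertices (values $\pm s'$); its polynomial-size Boolean task is to compute from $(x_t,b_{t-1})$ the triple $(y_t^\star,\phi_t,a_t)$, where $y_t^\star=h_n(x_t,b_{t-1})$, $\phi_t=b_{t-1}\oplus g_n(x_t,b_{t-1})\in\{0,1\}^{m_n}$ is the real-slot flip pattern, and $a_t\in\{0,1\}^M$ is the activation vector with $a_{t,i}=\phi_{t,i}$ on real slots and $a_{t,m_n+j}=\mathbf 1[\,j\le N-|\phi_t|_1\,]$ on padding slots. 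Each $\pm 2$-valued output is routed into a $v_c$ or $v_c'$ through a fixed-weight edge pair implementing $x\mapsto f(x/2+1)$, so that $-2\mapsto f(0)=0$ (active) and $+2\mapsto f(2)=2$ (inactive); the identities $f(0)=f'(0)=0$ and $f(2)=2$, $f'(2)=0$ make these translator edges themselves SGD-invariant. A small extra Boolean layer (also from Lemma~\ref{em1}) routes activations to $v_c$ when $y_t^\star=1$ and to $v_c'$ when $y_t^\star=0$, so hypotheses~4 and~5 of Lemma~\ref{memLem} hold as well; the only edges leaving any $M_s$ are the memory-read edges at $v_1$, and their gradient vanishes because the receiving computation vertex lives in a saturated region of $f$, ensuring hypothesis~6.

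With all hypotheses of Lemma~\ref{memLem} in place the induction closes: when $y_t^\star=y_t$ the square loss attains its minimum and no weight moves, so $b_t=b_{t-1}$; when $y_t^\star\ne y_t$, Lemma~\ref{memLem} multiplies by $-1$ precisely the $v_0\to v_1$ weights of the activated slots, which on the real side are exactly the indices $i$ with $\phi_{t,i}=1$, so the real memory becomes $b_{t-1}\oplus\phi_t = g_n(x_t,b_{t-1})=b_t$ (sign-flips among the padding are harmless because padding is never read). The main obstacle I expect is the simultaneous bookkeeping: on every step and for both output signs $y_t^\star\in\{0,1\}$ one must check that the contributions at $v_{\mathrm{out}}$ sum to \emph{exactly} $\pm 1$ (so that Lemma~\ref{memLem}'s precise cancellation computation goes through), that every vertex inside every $M_s$ remains in the regime the inductive proof of Lemma~\ref{memLem} assumes, that the $\pm 2$-to-$\{0,2\}$ translator stays SGD-invariant, and that the padding-activation rule is realizable by a small Boolean subcircuit (e.g.\ ``flip the first $N-|\phi_t|_1$ padding units''). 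Once these are checked the lemma follows essentially directly from Lemmas~\ref{em1} and~\ref{memLem}.
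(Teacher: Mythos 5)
Your proposal is correct and takes essentially the same approach as the paper: a bank of $m_n$ real $M_s$ slots plus enough padding slots so exactly $N$ slots are activated each step (making $v_6$'s input exactly $\pm 1$), a backpropagation-proofed Lemma~\ref{em1} circuit that reads the memory at $v_1$, computes $h_n$, $g_n$, and the activation pattern, drives $v_c,v'_c$ accordingly, and an induction on $t$ closing via Lemma~\ref{memLem}. The paper's choice of $m'$ plays the dual role of your $N$ and $p$, and its function $r$ packages the same output triple $(y^\star_t,\phi_t,a_t)$ you describe, so the two constructions coincide up to notation.
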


\begin{proof}
First, let $m'=\max(m,\lceil 2^{-243}3^{-1641/2}(18\sqrt{3})^{364}\rceil)$, and $s=\sqrt[364]{2^{-243}3^{-1641/2}/m'}$. Then, set $\gamma=2^{-244}\cdot 3^{-1643/2}s^{-362}$.

We construct $G_n$ as follows. First, we take $m+m'$ copies of $M_s$, merge all of the copies of $v_6$ to make an output vertex, and merge all of the copies of $v_0$. Then we add in $n$ input vertices and a constant vertex and add an edge of weight $2$ from the constant vertex to $v_0$. Next, define $r:\{0,1\}^{n+m}\rightarrow \{0,1\}^{1+2m+2m'}$ such that given $x\in \{0,1\}^n$ and $b\in\{0,1\}^m$, $r(x,b)$ lists $h_n(x,b)$ and one half the values of the $v_c$ and $v'_c$ necessary to change the values stored by the first $m$ memory units in the net from $b$ to $g_n(x,b)$ and then flip the next $m'-|\{i: b_i\ne (g_n(x,b))_i\}|$ provided the net outputs $2 h_n(x,b)-1$. Then, add a section to the net that computes $r$ on the input bits and the bits stored in the first $m$ memory units, and connect each copy of $v_c$ or $v'_c$ to the appropriate output by an edge of weight $1/2$ and the constant bit by an edge of weight $1$.

In order to show that this works, first observe that since $h_n$ and $g_n$ can be computed efficiently, so can $r$. So, there exists a polynomial sized subnet that computes it correctly by lemma \ref{em1}. That lemma also shows that this section of the net will never change as long as all of the inputs and all of the memory bits encode $0$ or $1$ in every time step. Similarly, in every time step $v_0$ will have an input of $2$ and all of the copies of $v_c$ and $v'_c$ will have inputs of $0$ or $2$. So, the derivatives of their outputs with respect to their inputs will be $0$, which means that the weights of the edges leading to them will never change. That means that the only edges that could change in weight are those in the memory components. In each time step, $m'$ memory components each contribute $(2 h_n(x_t,b_{t-1})-1)/m'$ to the output vertex, so it takes on a value of $(2 h_n(x_t,b_{t-1})-1)$, assuming that the memory components were storing $b_{t-1}$ like they were supposed to. As such, the net outputs $y^\star_t$, the memory bits stay the same if $y^\star_t=y_t$, and the first $m$ memory bits get changed to $g_n(x_t,b_{t-1})$ otherwise with some irrelevant changes to the rest. Therefore, by induction on the time step, this net performs correctly on all time steps.
\end{proof}

\begin{remark}
With the construction in this proof, $m'$ will always be at least $10^{79}$, which ensures that this net will be impractically large. This is a result of the fact that the only edges going to the output vertex are those contained in the memory component, and the paths in the memory component take a small activation and repeatedly cube it. If we had chosen an activation function that raises its input to the $\frac{11}{9}$ when its absolute value was less than $1$ instead of cubing it, the minimum possible value of $m'$ would have been on the order of $1000$.
\end{remark}

In other words, we can train a neural net with SGD in order to duplicate any efficient algorithm that takes $n$ bits as input, gives $1$ bit as output, and only updates its memory when its output fails to match some designated ``correct" output. The only part of that that is a problem is the restriction that it can not update its memory in steps when it gets the output right. As a result, the probability distribution of the samples that the net actually learns from could be different from the true probability distribution of the samples. We do not know how an algorithm that we are emulating will behave if we draw its samples from a different probability distribution, so this could cause problems. Our solution to that will be to have a training phase where the net gives random outputs so that it will learn from each sample with probability $1/2$, and then switch to attempting to compute the actual correct output rather than learning. That allows us to prove the following (re-statement of Theorem \ref{thm_univ}).


\begin{theorem}
For each $n>0$, let $P_\X$ be a probability measure on $\{0,1\}^n$, and $P_{\F}$ be a probability measure on the set of functions from $\{0,1\}^n$ to $\{0,1\}$. Also, let $B_{1/2}$ be the uniform distribution on $\{0,1\}$. Next, define $\alpha_n$ such that there is some algorithm that takes a polynomial number of samples $(x_i,F(x_i))$ where the $x_i$ are independently drawn from $P_{\X}$ and $F\sim P_{\F}$, runs in polynomial time, and learns $(P_\F,P_\X)$ with accuracy $\alpha$. Then there exists $\gamma_n>0$, a polynomial-sized neural net $(G_n,f)$, and a polynomial $T_n$ such that using stochastic gradient descent with learning rate $\gamma_n$ and loss function $L(x)=x^2$ to train $(G_n,f)$ on $T_n$ samples $((2x_i-1,2r_i-1,2r'_i-1), F(x_i))$ where $(x_i,r_i,r'_i)\sim P_\X\times B_{1/2}^2$ learns $(P_\F,P_\X)$ with accuracy $\alpha-o(1)$.
\end{theorem}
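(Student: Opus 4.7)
The heavy machinery has already been established: Lemma \ref{em1} lets us embed any polynomial-size Boolean circuit into a backpropagation-inert subnet (via the flat plateaus of $f$), Lemma \ref{memLem} realizes read/write memory cells whose contents flip precisely on wrong-output time steps, and the lemma preceding this theorem packages both into a polynomial-size network $(G_n, f)$ that, under SGD with the prescribed $\gamma$, exactly emulates any ``mistake-driven'' algorithm of the form ``output $y^\star_t = h_n(x_t, b_{t-1})$, and if $y^\star_t \neq y_t$ then update $b_{t-1} \to g_n(x_t, b_{t-1})$.'' The remaining task is twofold: (i) reduce an arbitrary poly-time, poly-sample learner $A$ to this mistake-driven form without biasing the distribution of samples that drive updates, and (ii) arrange a post-training prediction phase that does not corrupt the learned state.

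\textbf{Construction.} Let $t_n$ be the sample complexity of $A$ and $\hat{A}(x,b)$ its predictor given memory state $b$. Augment the emulated memory with a counter $c$ (using $O(\log t_n)$ additional bits) and feed the net inputs of the form $(x_i, r_i, r'_i)$ with label $F(x_i)$. In the \emph{training regime} $\{c < t_n\}$, define $h_n(x_i, r_i, r'_i, b, c) := r_i$, so that an update is triggered exactly on the event $\{r_i \neq F(x_i)\}$; in that case, define $g_n$ to run $A$'s update rule on the pair $(x_i, 1 - r_i)$ (which equals $(x_i, F(x_i))$ whenever an update fires), and to increment $c$. In the \emph{prediction regime} $\{c \geq t_n\}$, define $h_n := \hat{A}(x_i, b)$ and set $g_n$ to leave $b$ unchanged. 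Both $h_n$ and $g_n$ are poly-size Boolean circuits (the extra random bit $r'_i$ can be routed into $A$'s update if $A$ is randomized), so the earlier emulation lemma produces a polynomial-size net $(G_n, f)$ and a learning rate $\gamma_n$ that implements this scheme under SGD with $L(x) = x^2$.

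\textbf{Correctness.} Two observations suffice. First, conditioned on the acceptance event $\{r_i \neq F(x_i)\}$, the distribution of $x_i$ is still $P_\X$, because $r_i$ is independent of $x_i$ and $\Pr[r_i \neq F(x_i) \mid x_i] \equiv 1/2$; hence the samples that actually drive updates form an i.i.d.\ stream from exactly the distribution $A$ expects. Second, choosing $T_n = 4 t_n$ and invoking a Hoeffding/Chernoff bound ensures that with probability $1 - o(1)$ at least $t_n$ updates fire during training, so the counter trips and $b$ ends up distributed as $A$'s internal memory after $t_n$ genuine samples. Evaluating on a fresh $X_T$ (with fresh random bits) in the prediction regime then produces $\hat{A}(X_T, b_{\mathrm{final}})$, which agrees with $F(X_T)$ with probability $\alpha$; the $o(1)$ loss absorbs the Chernoff tail together with the negligible probability of the counter not tripping.

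\textbf{Main obstacle.} The delicate step is ``prediction without writing'': during the prediction regime, wrong-output time steps (as many as $(1 - \alpha)(T_n - t_n)$ in expectation) still engage the flipping mechanism of Lemma \ref{memLem}, which by design cannot simply be switched off. The fix exploits the freedom in $g_n$ already present in the emulation lemma: declare $g_n$ to be the identity on the $b$-coordinates, and route all of the forced flips onto the auxiliary ``dummy'' memory units introduced to pad the output to $\pm 1$. These dummy units are never read by any circuit in the computation component, so their toggling has no semantic effect, and $b_{\mathrm{final}}$ is preserved verbatim throughout the entire prediction phase. Everything else ($|G_n|$, $T_n$, $\gamma_n$, and the $O(\log n)$ bit-precision of the counter) is bookkeeping and remains polynomial in $n$, which completes the argument.
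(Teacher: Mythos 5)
Your proposal is correct and follows essentially the same strategy as the paper's proof: delegate the mechanical emulation to the circuit-emulation and memory lemmas, make the net output a fresh random bit during training so that an update fires with probability exactly $1/2$ independently of $(x_i, F(x_i))$ (which unbiases the conditional sample distribution), and switch to a read-only prediction phase once enough updates have accumulated. The differences are cosmetic bookkeeping: you maintain an explicit $O(\log t_n)$-bit counter $c$ in the emulated memory and absorb the counter-not-tripping event via a Chernoff bound with $T_n = 4t_n$, whereas the paper simply assumes WLOG that the algorithm $A$ itself counts its samples and becomes inert afterward; you also swap the roles of $r_i$ and $r'_i$ relative to the paper's $A'$, $A''$ construction. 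Your ``main obstacle'' paragraph correctly identifies that during the prediction phase wrong outputs will still engage the flipping mechanism, but this is already resolved inside the paper's emulation lemma, whose construction flips exactly $m'$ units per step and pads with dummy units precisely so that setting $g_n$ to the identity on the real $b$-coordinates routes every forced flip onto semantically inert padding — your fix is the same one the paper has already built in.
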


\begin{proof}
We can assume that the algorithm counts the samples it has received, learns from the designated number, and then stops learning if it receives additional samples. The fact that the algorithm learns in polynomial time also means that it can only update a polynomial number of locations in memory, so it only needs a polynomial number of bits of memory, $m_n$. Also, its learning process can be divided into steps which each query at most one new sample $(x_i,F(x_i))$ and one new random bit. So, there must be an efficiently computable function $A$ such that if $b$ is the value of the algorithm's memory at the start of a step, and it receives $(x_i,y_i)$ as its sample (if any) and $r_i$ as its random bit (if any), then it ends the step with its memory set to $A(b,x_i,y_i,r_i)$. 

Now, define $A':\{0,1\}^{m_n+n+3}\rightarrow \{0,1\}^{m_n}$ such that
\[A'(b,x,y,r,r')=
\begin{cases}
b&\text{ if } y=r'\\
A(b,x,y,r)&\text{ if } y\ne r'
\end{cases}
\]
Next, let $b_0$ be the initial state of the algorithm's memory, and consider setting $b_i=A'(b_{i-1},x_i,F(x_i),r_i,r'_i)$ for each $i>0$. We know that $r'_i$ is equally likely to be $0$ or $1$ and independent of all other components, so $b_i$ is equal to $A(b_{i-1},x_i,F(x_i),r_i)$ with probability $1/2$ and $b_{i-1}$ otherwise. Furthermore, the probability distribution of $(b_{i-1},x_i,F(x_i),r_i)$ is independent of whether or not $y_i=r'_i$. Also, if we set $b'=b_0$ and then repeatedly replace $b'$ with $A(b',x,F(x),r)$, then there is some polynomial number of times we need to do that before $b'$ stops changing because the algorithm has enough samples and is no longer learning. So, with probability $1-o(1)$, the value of $b_i$ will stabilize by the time it has received $n$ times that many samples. Furthermore, the probability distribution of the value $b_i$ stabilizes at is exactly the same as the probability distribution of the value the algorithm's memory stabilizes at because the probability distribution of tuples $(b_{i-1},x_i,F(x_i))$ that actually result in changes to $b_i$ is exactly the same as the overall probability distribution of $(b_{i-1},x_i,F(x_i))$. So, given the final value of $b_i$, one can efficiently compute $F$ with an expected accuracy of at least $\alpha$.

Now, let $\overline{A}(b,x)$ be the value the algorithm outputs when trying to compute $F(x)$ if its memory has a value of $b$ after training. Then, define $A''$ such that
\[A''(b,x,r,r')=
\begin{cases}
\overline{A}(b,x) &\text{ if b corresponds to a memory state resulting from training on enough samples} \\
r' &\text{ otherwise }
\end{cases}
\]
By the previous lemma, there exists a polynomial sized neural net $(G_n,f)$ and $\gamma_n>0$ such that if we use SGD to train $(G_n,f)$ on $((2x_i-1,2r_i-1,2r'_i-1), F(x_i))$ with a learning rate of $\gamma_n$ then the net outputs $2A''(b_{i-1},x_i,r_i,r'_i)-1$ for all $i$. By the previous analysis, that means that after a polynomial number of steps, the net will compute $F$ with an expected accuracy of $\alpha-o(1)$.
\end{proof}

\begin{remark}
This net uses two random bits because it needs one in order to randomly choose outputs during the learning phase and another to supply randomness in order to emulate randomized algorithms. If we let $m$ be the minimum number of gates in a circuit that computes the algorithm's output and the contents of its memory after the current timestep from its input, its current memory values, and feedback on what the correct output was, then the neural net in question will have $\theta(m)$ vertices and $\gamma_n=\theta(m^{362/364})$. If the algorithm that we are emulating is deterministic, then $T_n$ will be approximately twice the number of samples the algorithm needs to learn the function; if it is randomized it might need a number of additional samples equal to approximately twice the number of random bits the algorithm needs.
\end{remark}

So, for any distribution of functions from $\{0,1\}^n$ to $\{0,1\}$ that can be learned in polynomial time, there is a neural net that learns it in polynomial time when it is trained by SGD.

\begin{remark}\label{kolmogorov}
    This theorem shows that each efficiently learnable $(P_\F,P_\X)$ has some neural net that learns it efficiently. However, instead of emulating an algorithm chosen for a specific distribution, we can use a ``Kolmogorov complexity'' trick and emulate a metaalgorithm such as the following:
    
    GeneralLearningMetaalgorithm(c):
    \begin{enumerate}
        \item List every algorithm that can be written in at most $\log(\log(n))$ bits.
        
        \item Get $n^c$ samples from the target distribution, and train each of these algorithms on them in parallel. If any of these algorithms takes more than $n^c$ time steps on any sample, then interrupt it and skip training it on that sample.
        
        \item Get $n^c$ more samples, have all of the aforementioned algorithms attempt to compute the function on each of them, and record which of them was most accurate. Again, if any of them take more than $n^c$ steps on one of these samples, interrupt it and consider it as having computed the function incorrectly on that sample.
        
        \item Return the function that resulted from training the most accurate algorithm.
    \end{enumerate}
    
    Given any $(P_\F,P_\X)$ that is efficiently learnable, there exist $\epsilon,c>0$ such that there is some algorithm that learns $(P_\F,P_\X)$ with accuracy $1/2+\epsilon-o(1)$, needs at most $n^c$ samples in order to do so, and takes a maximum of $n^c$ time steps on each sample. For all sufficiently large $n$, this algorithm will be less than $\log(\log(n))$ bits long, so generalLearningMetaalgorithm(c) will consider it. There are only $O(\log(n))$ algorithms that are at most $\log(\log(n))$ bits long, so in the testing phase all of them will have observed accuracies within $O(n^{-c/2}\log(n))$ of their actual accuracies with high probability. That means that the function that generalLearningMetaalgorithm(c) judges as most accurate will be at most $O(n^{-c/2}\log(n))$ less accurate than the true most accurate function considered. So, generalLearningMetaalgorithm(c) learns $(P_\F,P_\X)$ with accuracy $1/2+\epsilon-o(1)$. A bit more precisely, this shows that for any efficiently learnable $(P_\F,P_\X)$, there exists $C_0$ such that for all $c>C_0$, generalLearningMetaalgorithm(c) learns $(P_\F,P_\X)$. 
    
    Now, if we let $(f,G_c)$ be a neural net emulating generalLearningMetaalgorithm(c), then $(f,G_c)$ has polynomial size and can be constructed in polynomial time for any fixed $c$. Any efficiently learnable $(P_\F,P_\X)$ can be learned by training $(f,G_c)$ with stochastic gradient descent with the right $c$ and the right learning rate, assuming that random bits are appended to the input. Furthermore, the only thing we need to know about $(P_\F,P_\X)$ in order to choose the net and learning rate is some upper bound on the number of samples and amount of time needed to learn it.
\end{remark}

\begin{remark}
The previous remark shows that for any $c>0$, there is a polynomial sized neural net that learns any $(P_\F,P_\X)$ that can be learned by an algorithm that uses $n^c$ samples and $n^c$ time per sample. However, that is still more restrictive than we really need to be. It is actually possible to build a net that learns any $(P_\F,P_\X)$ that can be efficiently learned using $n^c$ memory, and then computed in $n^c$ time once the learning process is done. In order to show this, first observe that any learning algorithm that spends more than $n^c$ time on each sample can be rewritten to simply get a new sample and ignore it after every $n^c$ steps. That converts it to an algorithm that spends $n^c$ time after receiving each sample while multiplying the number of samples it needs by an amount that is at most polynomial in $n$. 

The fact that we do not know how many samples the algorithm needs can be dealt with by modifying the metaalgorithm to find the algorithm that performs best when trained on $1$ sample, then the algorithm that performs best when trained on $2$, then the algorithm that performs best when trained on $4$, and so on. That way, after receiving any number of samples, it will have learned to compute the function with an accuracy that is within $o(1)$ of the best accuracy attainable after learning from $1/4$ that number of samples. The fact that we do not know how many samples we need also renders us unable to have a learning phase, and then switch to attempting to compute the function accurately after we have seen enough samples. Instead, we need to have it try to learn from each sample with a gradually decreasing probability and try to compute the function otherwise. For instance, consider designing the net so that it keeps a count of exactly how many times it has been wrong. Whenever that number reaches a perfect square, it attempts to learn from the next sample; otherwise, it tries to compute the function on that input. If it takes the metaalgorithm $n^{c'}$ samples to learn the function with accuracy $1-\epsilon$, then it will take this net roughly $n^{2c'}$ samples to learn it with the same accuracy, and by that point the steps where it attempts to learn the function rather than computing it will only add another $o(1)$ to the error rate. So, if there is any efficient algorithm that learns $(P_\F,P_\X)$ with $n^c$ memory and computes it in $n^c$ time once it has learned it, then this net will learn it efficiently.
\end{remark}

\subsection{Noisy emulation of arbitrary algorithms}

So far, our discussion of emulating arbitrary learning algorithms using SGD has assumed that we are using SGD without noise. It is of particular interest to ask whether there are efficiently learnable functions that noisy SGD can never learn with inverse-polynomial noise, as GD or SQ algorithms break in such cases (for example for parities). It turns out that the emulation argument can be adapted to sufficiently small amounts of noise. The computation component is already fairly noise tolerant because the inputs to all of its vertices will normally always have absolute values of at least $2$. If these are changed by less than $1/2$, these vertices will still have activations of $\pm 2$ with the same signs as before, and the derivatives of their activations with respect to their inputs will remain $0$.

However, the memory component has more problems handling noise. In the noise-free case, whenever we do not want the value it stores to change, we arrange for some key vertices inside the component to receive input $0$ so that their outputs and the derivatives of their outputs with respect to their inputs will both be $0$. However, once we start adding noise we will no longer be able to ensure that the inputs to these vertices are exactly $0$. This could result in a feedback loop where the edge weights shift faster and faster as they get further from their desired values. In order to avoid this, we will use an activation function designed to have output $0$ whenever its input is sufficiently close to $0$. More precisely, in this section we will use an activation function $f^\star:\mathbb{R}\rightarrow\mathbb{R}$ chosen so that $f^\star(x)=0$ whenever $|x|\le 2^{-121}3^{-9}$, $f^\star(x)=x^3$ whenever $2^{-120}3^{-9}\le |x|\le 1$, and $f^\star(x)=2\text{ sign}(x)$ whenever $|x|\ge 3/2$. There must be a way to define $f^\star$ on the remaining intervals such that it is smooth and nondecreasing. The details of how this is done will not affect out argument, so we pick some such assignment.

The memory component also has trouble handling bit flips when there is noise. Any time we flip a bit stored in memory, any errors in the edge weights of the copy of $M_s$ storing that bit are likely to get worse. As a result, making the memory component noise tolerant requires a fairly substantial redesign. First of all, in order to prevent perturbations in its edge weights from being amplified until they become major problems, we will only update each value stored in memory once. That still leaves the issue that due to errors in the edge weights, we cannot ensure that the output of the net is exactly $\pm 1$. As a result, even if the net gets the output right, the edge weights will still change somewhat. That introduces the possibility that multiple unsuccessful attempts at flipping a bit in memory will eventually cause major distortions to the corresponding edge weights. In order to address that, we will have our net always give an output of $1/2$ during the learning phase so that whenever we try to change a value in memory, it will change significantly regardless of what the correct output is. Of course, that leaves each memory component with $3$ possible states, the state it is in originally, the state it changes to if the correct output is $1$, and the state it changes to if the correct output is $-1$. More precisely, each memory value will be stored in a copy of the following.

\begin{definition}
Let $M'$ be the weighted directed graph with $9$ vertices, $v_0$, $v_1$, $v_2$, $v_3$, $v_4$, $v_5$, $v_c$, $v'_c$, and $v_r$ and the following edges:
\begin{enumerate}
    \item An edge of weight $3^{-t/2}/4$ from $v_t$ to $v_{t+1}$ for each $t$
    
    \item An edge of weight $128$ from $v_1$ to $v_r$
    
    \item An edge of weight $-2^{-81}\cdot 3^{-9}$ from $v_c$ to $v_4$
    
    \item An edge of weight $-2^{-41}\cdot 3^{-9}$ from $v'_c$ to $v_4$
\end{enumerate}
\end{definition}

    \begin{figure}
        \centering
        \includegraphics[width=11cm]{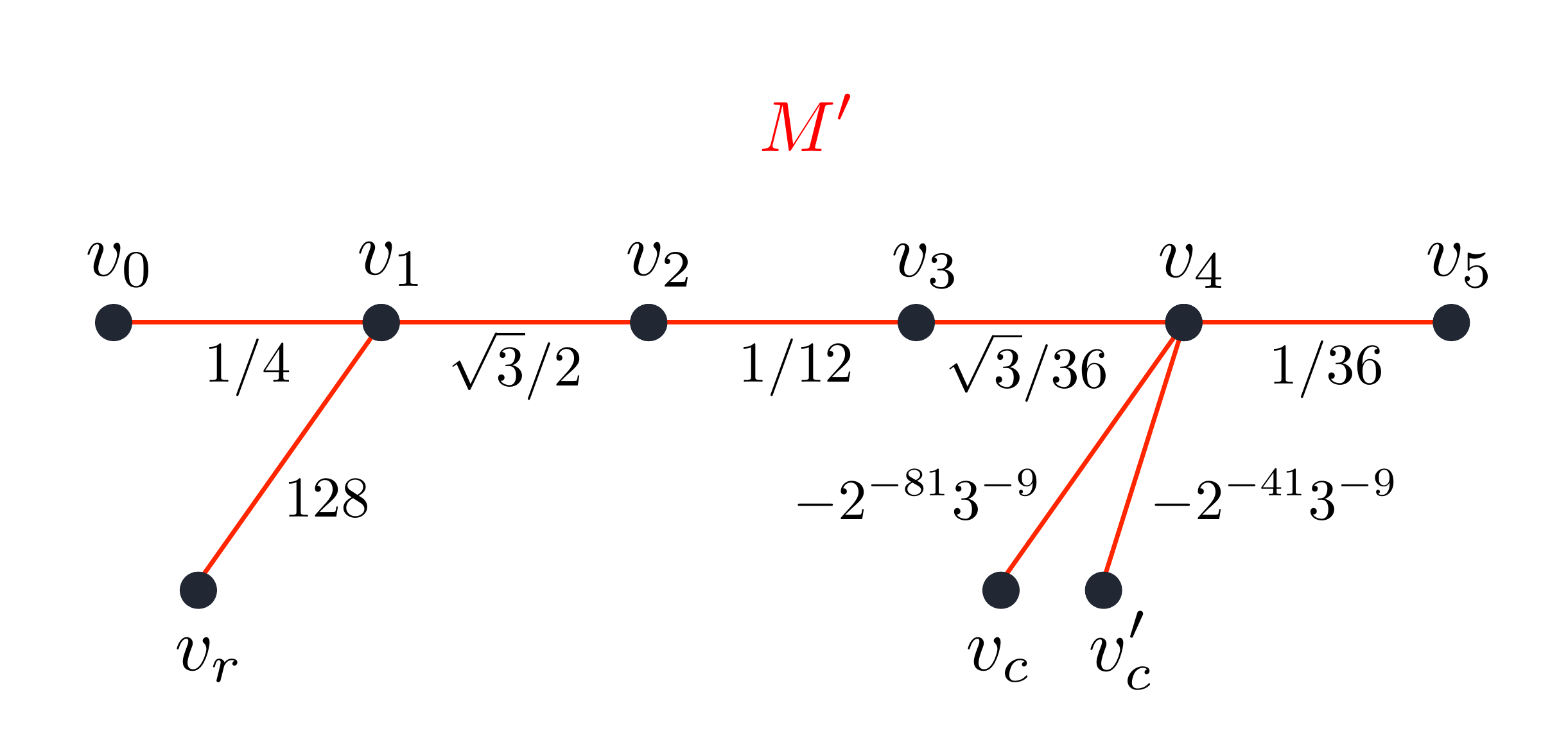}
        \caption{The noise-tolerant memory component $M'$.}
       \label{mprime}
    \end{figure}

See Figure \ref{mprime} for a representation of $M'$. 
The idea is that by controlling the values of $v_c$ and $v'_c$ we can either force $v_4$ to have an input of approximately $0$ in order to prevent any of the weights from changing or allow it to have a significant value in which case the weights will change. With the correct learning rate, if the correct output is $1$ then the weights of the edges on the path from $v_0$ to $v_5$ will double, while if the correct output is $-1$ then these weights will multiply by $-2$. That means that $v_2$ will have an output of approximately $2^{-24}3^{-3/2}$ if this has never been changed, and an output of approximately $2^{-12}3^{-3/2}$ if it has. Meanwhile, $v_r$ will have an output of $-2$ if it was changed when the correct output was $-1$ and a value of $2$ otherwise. More formally, we have the following.

\begin{lemma}[Editing memory using noisy SGD]\label{memLem2}
Let $\gamma=2^{716/3}\cdot 3^{24}$, and $L(x)=x^2$ for all $x$. Next, let $t_0, T\in\mathbb{Z}^+$ and $0<\epsilon,\epsilon'$ such that $\epsilon \le 2^{-134}3^{-11}$, $\epsilon'\le 2^{-123}3^{-11}$. Also, let $(f^\star,G)$ be a neural net such that $G$ contains $M'$ as a subgraph with $v_5$ as $G$'s output vertex, $v_0$ as the constant vertex, and no edges from vertices outside this subgraph to vertices in the subgraph other than $v_c$ and $v'_c$. Now, assume that this neural net is trained using noisy SGD with learning rate $\gamma$ and loss function $L$ for $T-1$ time steps, and then evaluated on an input, and the following hold:
\begin{enumerate}
    \item The sample label is always $\pm 1$.
    
    \item The net gives an output that is in $[1/2-\epsilon',1/2+\epsilon']$ on step $t$ for every $t<T$.
    
    \item For every $t<t_0$, $v_c$ gives an output of $2$ and $v'_c$ gives an output of $0$ on step $t$.
    
    \item If $t_0\le T$ then $v_c$ and $v'_c$ both give outputs of $0$ on step $t_0$.
    
    \item For every $t>t_0$, $v'_c$ gives an output of $2$ and $v_c$ gives an output of $0$ on step $t$.
    
    \item For each edge in the graph, the sum of the absolute values of the noise terms applied to that edge over the course of the training process is at most $\epsilon$.
    
    \item The derivatives of the loss function with respect to the weights of all edges leaving this subgraph are always $0$.
    
\end{enumerate}
Then during the training process, $v_2$ gives an output in $[2^{-25}3^{-3/2},2^{-23}3^{-3/2}]$ on step $t$ for all $t\le t_0$ and an output in $[2^{-13}3^{-3/2},2^{-11}3^{-3/2}]$ on step $t$ for all $t> t_0$. Also, on step $t$, $v_r$ gives an output of $-2$ if $t>t_0$ and the sample label was $-1$ on step $t_0$ and an output of $2$ otherwise. Thirdly, on step $t_0$ the edge from $v_4$ to $v_5$ provides an input to the output vertex in $[2^{-242}3^{-29}-2^{-201}3^{-27}\epsilon,2^{-242}3^{-29}+2^{-201}3^{-27}\epsilon]$, and for all $t\ne t_0$, the edge from $v_4$ to $v_5$ provides an input of $0$ to the output vertex on step $t$.
\end{lemma}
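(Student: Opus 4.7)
The proof would proceed by induction on the time step $t$, tracking the deviation of each edge weight in $M'$ from its designed value---the initial value for $t\le t_0$ and $\pm 2$ times the initial value for $t>t_0$---and bounding that deviation by the cumulative per-step noise up to step $t$. The central structural observation is that $f^\star$ vanishes identically on $[-2^{-121}3^{-9}, 2^{-121}3^{-9}]$, so whenever the input to $v_4$ lies in this flat region, $v_4$ outputs zero with zero derivative, no gradient propagates back through $v_4$, and the only source of change for the weights $w_{01},\ldots,w_{34}$ is the noise. A companion observation is that $v_r$ always saturates $f^\star$ at $\pm 2$ (since $128\cdot f^\star(v_1\text{-input})$ equals $\pm 2$ or $\pm 16$), so the $v_1\to v_r$ edge has zero loss-derivative and never changes, and its sign tracks $\mathrm{sign}(w_{01})$.

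First I would verify the initial activation cascade by direct computation: $v_1$ emits $(1/4)^3=2^{-6}$, $v_2$ emits $(2^{-8}3^{-1/2})^3=2^{-24}3^{-3/2}$, and the contribution of the $v_3$-branch to $v_4$'s input is exactly $w_{34}w_{23}^3 w_{12}^9 w_{01}^{27}=2^{-80}3^{-9}$. With $v_c=2$ and $v'_c=0$ the controller edges contribute $-2^{-80}3^{-9}$, so $v_4$'s input is zero, deep inside the flat region. For $t<t_0$ I would then linearize the cubic cascade around the designed weights and show that, provided the cumulative noise on each edge is at most $\epsilon\le 2^{-134}3^{-11}$, the induced perturbation of $v_4$'s input stays well inside $[-2^{-121}3^{-9},2^{-121}3^{-9}]$, so $v_4$'s output and derivative remain exactly zero and only noise alters the weights. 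The $v_2$-output bound then follows from the same perturbation estimate applied to the $v_0\to v_2$ sub-cascade.

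At step $t_0$, $v_c=v'_c=0$ removes both controller contributions, so the $v_3$-signal of magnitude $\approx 2^{-80}3^{-9}$ alone drives $v_4$; this lies in the cubic regime of $f^\star$, so $v_4$ emits $\approx 2^{-240}3^{-27}$ and feeds $\approx w_{45}\cdot 2^{-240}3^{-27}=2^{-242}3^{-29}$ into the output vertex. A first-order expansion around the designed values with perturbation at most $\epsilon$ on each of the five relevant weights yields the claimed interval with slack $2^{-201}3^{-27}\epsilon$ (the factor arising from the $w_{01}^{27}$-derivative evaluated at $w_{01}=1/4$). The core calculation is the backprop update at $t_0$: the loss gradient is $2(y'-y)\cdot \partial_{\mathrm{out}}$ with $y'\in[1/2-\epsilon',1/2+\epsilon']$, the chain rule through the five successive cubes on $v_0\to v_5$ produces a product of $3\,(\text{input})^2$ factors that combines with the weight derivatives into a tractable expression, and I would verify that $\gamma=2^{716/3}\cdot 3^{24}$ is the unique learning rate producing the simultaneous update $w_{t,t+1}\mapsto \mathrm{sign}(y)\cdot 2\,w_{t,t+1}$ for every $t\in\{0,1,2,3,4\}$ (the $w_{01}$-update already fixes $\gamma$; the four remaining updates then verify consistency). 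The same step checks that the controller edges $v_c\to v_4$ and $v'_c\to v_4$ do not update, because $v_c=v'_c=0$ kills their contribution to the gradient.

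For $t>t_0$, since $1+3+9+27=40$ is even, the doubled weights on $v_0\to v_5$ yield a $v_3$-branch contribution of $2^{40}\cdot 2^{-80}3^{-9}=2^{-40}3^{-9}$ to $v_4$'s input regardless of the sign of the doubling, while $v'_c=2$ makes the controller edge contribute $-2\cdot 2^{-41}3^{-9}=-2^{-40}3^{-9}$; these cancel, which is exactly why the weight $-2^{-41}3^{-9}$ was chosen in the definition of $M'$. The post-flip $v_2$-output is $(2^{-4}3^{-1/2})^3=2^{-12}3^{-3/2}$ (always positive, since $3+9=12$ is even), and $v_r$ saturates at $\mathrm{sign}(w_{01})\cdot 2$, which equals $+2$ when $y(t_0)=1$ and $-2$ when $y(t_0)=-1$. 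The remaining inductive argument repeats the pre-$t_0$ analysis around the doubled weights. The main obstacle is the perturbation bookkeeping: the cubic cascade is extremely sensitive, so one must verify that the stated bounds $\epsilon\le 2^{-134}3^{-11}$ and $\epsilon'\le 2^{-123}3^{-11}$ preserve the margin between the flat-region radius $2^{-121}3^{-9}$ and the designed input magnitude (zero away from $t_0$, $2^{-80}3^{-9}$ at $t_0$) at every step, while simultaneously ensuring that the gradient step at $t_0$ hits its target without overshoot despite the output error $\epsilon'$ and without producing any spurious update to the controller edges.
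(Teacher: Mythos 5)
Your proposal is correct and takes essentially the same route as the paper: induction on the time step, exploiting the flat region of $f^\star$ (so that when $v_4$'s input sits within $[-2^{-121}3^{-9},2^{-121}3^{-9}]$ no gradient propagates through $M'$ and only noise moves the path weights), a direct backpropagation computation at step $t_0$ confirming that $\gamma=2^{716/3}\cdot 3^{24}$ turns each path weight $w_{(v_i,v_{i+1})}$ into $2w_{(v_i,v_{i+1})}$ when the label is $+1$ and $-2w_{(v_i,v_{i+1})}$ when it is $-1$ (the relative change $3^{4-i}/w_{(v_i,v_{i+1})}^2=1296$ being the same for all five edges), and the even-exponent cancellation $(\pm 2)^{40}=2^{40}$ against the $-2^{-41}3^{-9}$ controller weight for $t>t_0$. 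The only cosmetic difference is bookkeeping: the paper separates the deviation into a "primary distortion" (accumulated noise) and a "secondary distortion" (residual after removing noise and target changes), and shows the latter jumps from $0$ to $O(\epsilon+\epsilon')$ at $t_0$ and then stays there, whereas you track a single total deviation—both yield the same $O(\epsilon+\epsilon')$ margin that must sit inside the $2^{-121}3^{-9}$ flat-region radius.
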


\begin{proof}
First of all, we define the target weight of an edge to be what we would like its weight to be. More precisely, the target weights of $(v_c,v_4)$, $(v'_c,v_4)$, and $(v_1,v_r)$ are defined to be equal to their initial weights at all time steps. The target weights of the edges on the path from $v_0$ to $v_5$ are defined to be equal to their initial weights until step $t_0$. After step $t_0$, these edges have target weights that are equal to double their initial weights if the sample label at step $t_0$ was $1$ and $-2$ times their initial weights if the sample label at step $t_0$ was $-1$.

Next, we define the primary distortion of a given edge at a given time to be the sum of all noise terms added to its weight by noisy SGD up to that point. Then, we define the secondary distortion of an edge to be the difference between its weight, and the sum of its target weight and its primary distortion. By our assumptions, the primary distortion of any edge always has an absolute value of at most $\epsilon$. We plan to prove that the secondary distortion stays reasonably small by inducting on the time step, at which point we will have established that the actual weights of the edges stay reasonably close to their target weights. 

Now, for all vertices $v$ and $v'$, and every time step $t$, let $w_{(v,v')[t]}$ be the weight of the edge from $v$ to $v'$ at the start of step $t$, $y_{v[t]}$ be the output of $v$ on step $t$, $d_{v[t]}$ be the derivative of the loss function with respect to the output of $v$ on step $t$, and $d'_{v[t]}$ be the derivative of the loss function with respect to the input of $v$ on step $t$. Next, consider some $t<t_0$ and assume that the secondary distortion of every edge in $M'$ is $0$ at the start of step $t$. In this case, $v_1$ has an activation in $[(1/4-\epsilon)^3,(1/4+\epsilon)^3]$, so $v_r$ has an activation of $2$ and the derivative of the loss function with respect to $w_{(v_1,v_r)}$ is $0$. Also, the activation of $v_2$ is between $2^{-25}3^{-3/2}$ and $2^{-23}3^{-3/2}$. On another note, the total input to $v_4$ on step $t$ is
\begin{align*}
&w_{(v_0,v_1)[t]}^{27}w_{(v_1,v_2)[t]}^{9}w_{(v_2,v_3)[t]}^{3}w_{(v_3,v_4)[t]}+w_{(v_c,v_4)[t]}y_{v_c[t]}+w_{(v'_c,v_4)[t]}y_{v'_c[t]}\\
&\le \left(\frac{1}{4}+\epsilon\right)^{27}\left(\frac{\sqrt{3}}{12}+\epsilon\right)^{9}\left(\frac{1}{12}+\epsilon\right)^{3}\left(\frac{\sqrt{3}}{36}+\epsilon\right)+\left(-2^{-81}3^{-9}+\epsilon\right)\cdot 2\\
&\le 2^{-80}3^{-9}e^{(144+48\sqrt{3})\epsilon}-2^{-80}3^{-9}+2\epsilon\\
&\le 3\epsilon
\end{align*}

On the flip side, the total input to $v_4$ on step $t$ is at least
\begin{align*}
&\left(\frac{1}{4}-\epsilon\right)^{27}\left(\frac{\sqrt{3}}{12}-\epsilon\right)^{9}\left(\frac{1}{12}-\epsilon\right)^{3}\left(\frac{\sqrt{3}}{36}-\epsilon\right)+(-2^{-81}3^{-9}-\epsilon)\cdot 2\\
&\ge 2^{-80}3^{-9}e^{-2(144+48\sqrt{3})\epsilon}-2^{-80}3^{-9}-2\epsilon\\
&\ge -3\epsilon
\end{align*}

So, $|y_{v_4[t]}|=0$, and the edge from $v_4$ to $v_5$ provides an input of $0$ to the output vertex on step $t$. The derivative of this contribution with respect to the weights of any of the edges in $M'$ is also $0$. So, if all of the secondary distortions are $0$ at the beginning of step $t$, then all of the secondary distortions will still be $0$ at the end of step $t$. The secondary distortions start at $0$, so by induction on $t$, the secondary distortions are all $0$ at the end of step $t$ for every $t<\min(t_0,T)$. This also implies that the edge from $v_4$ to $v_5$ provides an input of $0$ to the output, $y_{v_r[t]}=2$, and $v_{v_2[t]}\in [2^{-25}3^{-3/2},2^{-23}3^{-3/2}]$ for every $t<t_0$.

\vspace{8mm}

Now, consider the case where $t=t_0\le T$. In this case, $v_r$ has an activation of $2$ and the derivative of the loss function with respect to $w_{(v_1,v_r)}$ is $0$ for the same reasons as in the last case. Also, the activation of $v_2$ is still between $2^{-25}3^{-3/2}$ and $2^{-23}3^{-3/2}$.

On this step, the total input to $v_4$ is 
\begin{align*}
&w_{(v_0,v_1)[t]}^{27}w_{(v_1,v_2)[t]}^{9}w_{(v_2,v_3)[t]}^{3}w_{(v_3,v_4)[t]}+w_{(v_c,v_4)[t]}y_{v_c[t]}+w_{(v'_c,v_4)[t]}y_{v'_c[t]}\\
&\le \left(\frac{1}{4}+\epsilon\right)^{27}\left(\frac{\sqrt{3}}{12}+\epsilon\right)^{9}\left(\frac{1}{12}+\epsilon\right)^{3}\left(\frac{\sqrt{3}}{36}+\epsilon\right)+0\\
&\le 2^{-80}3^{-9}e^{(144+48\sqrt{3})\epsilon}\\
&\le 2^{-80}3^{-9}+2^{-74}3^{-7}\epsilon
\end{align*}
On the flip side, the total input to $v_4$ is at least
\begin{align*}
& \left(\frac{1}{4}-\epsilon\right)^{27}\left(\frac{\sqrt{3}}{12}-\epsilon\right)^{9}\left(\frac{1}{12}-\epsilon\right)^{3}\left(\frac{\sqrt{3}}{36}-\epsilon\right)+0\\
&\ge 2^{-80}3^{-9}e^{-2(144+48\sqrt{3})\epsilon}\\
&\ge 2^{-80}3^{-9}-2^{-74}3^{-7}\epsilon
\end{align*}

So, $y_{v_4[t]}\in [(2^{-80}3^{-9}-2^{-74}3^{-7}\epsilon)^3,(2^{-80}3^{-9}+2^{-74}3^{-7}\epsilon)^3]$, and the edge from $v_4$ to $v_5$ provides an input in $[2^{-242}3^{-29}-2^{-235}3^{-26}\epsilon,2^{-242}3^{-29}+2^{-235}3^{-26}\epsilon]$ to the output vertex on step $t_0$. If $t_0<T$ then the net gives an output in $[1/2-\epsilon',1/2+\epsilon']$, so $d_{v_5[t]}$ is in $[-1-2\epsilon',-1+2\epsilon']$ if the sample label is $1$ and in $[3-2\epsilon',3+2\epsilon']$ if the sample label is $-1$. That in turn means that $d'_{v_5[t]}$ is in $[\frac{3\sqrt[3]{2}}{2}(-1-4\epsilon'),\frac{3\sqrt[3]{2}}{2}(-1+4\epsilon')]$ if the sample label is $1$ and in $[\frac{3\sqrt[3]{2}}{2}(3-8\epsilon'),\frac{3\sqrt[3]{2}}{2}(3+8\epsilon')]$ if the sample label is $-1$. Either way, the derivatives of the loss function with respect to $w_{(v_c,v_4)}$ and $w_{(v'_c,v_4)}$ are both $0$. 

Also, for each $0\le i<5$, the derivative of the loss function with respect to $w_{(v_i,v_{i+1})}$ is 
\[w_{(v_0,v_1)[t]}^{81}w_{(v_1,v_2)[t]}^{27}w_{(v_2,v_3)[t]}^{9}w_{(v_3,v_4)[t]}^3w_{(v_4,v_5)}\cdot \frac{3^{4-i}}{w_{(v_i,v_{i+1})}}\cdot d'_{v[5]}\]
which is between $2^{-240}3^{-29}\cdot (1-7200\epsilon)\cdot 3^{4-i/2}\cdot d'_{v[5]}$ and $2^{-240}3^{-29}\cdot(1+7200\epsilon)\cdot 3^{4-i/2}\cdot d'_{v[5]}$

So, if the sample label is $1$, then on this step gradient descent increases the weight of each edge on the path from $v_0$ to $v_5$ by an amount that is within $3600\epsilon+2\epsilon'$ of its original value. If the sample label is $-1$, then on this step gradient descent decreases the weight of each edge on this path by an amount that is within $10800\epsilon+6\epsilon'$ of thrice its original value. Either way, it leaves the weight of the edge from $v_1$ to $v_r$ unchanged. So, all of the secondary distortions will be at most $10800\epsilon+6\epsilon'$ at the end of step $t_0$ if $t_0<T$.

\vspace{8mm}
Finally, consider the case where $t>t_0$ and assume that the secondary distortion of every edge in $M'$ is at most $10800\epsilon+6\epsilon'$ at the start of step $t$. Also, let $\epsilon''=10801\epsilon+6\epsilon'$, and $y_0$ be the sample label from step $t_0$. In this case, $v_1$ has an activation between $(1/2-\epsilon'')^3y_0$ and $(1/2+\epsilon'')^3y_0$, so $v_r$ has an activation of $2 y_0$ and the derivative of the loss function with respect to $w_{(v_1,v_r)}$ is $0$. Also, the activation of $v_2$ is between $2^{-13}3^{-3/2}$ and $2^{-11}3^{-3/2}$. On another note, the total input to $v_4$ on step $t$ is
\begin{align*}
&w_{(v_0,v_1)[t]}^{27}w_{(v_1,v_2)[t]}^{9}w_{(v_2,v_3)[t]}^{3}w_{(v_3,v_4)[t]}+w_{(v_c,v_4)[t]}y_{v_c[t]}+w_{(v'_c,v_4)[t]}y_{v'_c[t]}\\
&\le \left(\frac{y_0}{2}+\epsilon''y_0\right)^{27}\left(\frac{\sqrt{3}y_0}{6}+\epsilon''y_0\right)^{9}\left(\frac{y_0}{6}+\epsilon''y_0\right)^{3}\left(\frac{\sqrt{3}y_0}{18}+\epsilon''y_0\right)+(-2^{-41}3^{-9}+\epsilon'')\cdot 2\\
&\le 2^{-40}3^{-9}e^{(72+24\sqrt{3})\epsilon''}-2^{-40}3^{-9}+2\epsilon''\\
&\le 3\epsilon''
\end{align*}

On the flip side, the total input to $v_4$ on step $t$ is at least
\begin{align*}
&\left(\frac{y_0}{2}-\epsilon''y_0\right)^{27}\left(\frac{\sqrt{3} y_0}{6}-\epsilon''y_0\right)^{9}\left(\frac{y_0}{6}-\epsilon''y_0\right)^{3}\left(\frac{\sqrt{3} y_0}{18}-\epsilon''y_0\right)+(-2^{-41}3^{-9}-\epsilon'')\cdot 2\\
&\ge 2^{-40}3^{-9}e^{-2(72+24\sqrt{3})\epsilon''}-2^{-40}3^{-9}-2\epsilon''\\
&\ge -3\epsilon''
\end{align*}

So, $y_{v_4[t]}=0$, and the edge from $v_4$ to $v_5$ provides an input of $0$ to the output vertex on step $t$. The derivatives of this contribution with respect to the weights of any of the edges in $M'$ are also $0$. So, if all of the secondary distortions are at most $10800\epsilon+6\epsilon'$ at the beginning of step $t$, then all of the secondary distortions will still be at most $10800\epsilon+6\epsilon'$ at the end of step $t$. 
We have already established that the secondary distortions will be in that range at the end of step $t_0$, so by induction on $t$, the secondary distortions are all at most $10800\epsilon+6\epsilon'$ at the end of step $t$ for every $t_0<t<T'$. This also implies that the edge from $v_4$ to $v_5$ provides an input of $0$ to the output, $y_{v_r[t]}=2y_0$ and $v_{v_2[t]}\in [2^{-13}3^{-3/2},2^{-11}3^{-3/2}]$ for every $t>t_0$.

\end{proof}

Now that we have established that we can use $M'$ to store information in a noise tolerant manner, our next order of business is to show that we can make the computation component noise-tolerant. This is relatively simple because all of its vertices always have inputs of absolute value at least $2$, so changing these inputs by less than $1/2$ has no effect. We have the following.

\begin{lemma}[Backpropagation-proofed noise-tolerant circuit emulation]\label{noisy_em1}
Let $h:\{0,1\}^m\rightarrow\{0,1\}^{m'}$ be a function that can be computed by a circuit made of AND, OR, and NOT gates with a total of $b$ gates. Also, consider a neural net with $m$ input\footnote{Note that these will not be the $n$ data input of the general neural net that is being built; these input vertices take both the data inputs and some inputs from the memory component.} vertices $v'_1,...,v'_m$, and choose real numbers $y^{(0)}<y^{(1)}$. It is possible to add a set of at most $b$ new vertices to the net, including output vertices $v''_1,...,v''_{m'}$, along with edges leading to them such that for any possible addition of edges leading from the new vertices to old vertices, if the net is trained by noisy SGD, the output of $v'_i$ is either less than $y^{(0)}$ or more than $y^{(1)}$ for every $i$ in every timestep, and for every edge leading to one of the new vertices, the sum of the absolute values of the noise terms applied to that edge over the course of the training process is less than $1/12$, then the following hold:
\begin{enumerate}
    \item The derivative of the loss function with respect to the weight of each edge leading to a new vertex is $0$ in every timestep, and no paths through the new vertices contribute to the derivative of the loss function with respect to edges leading to the $v'_i$.
    
    \item In any given time step, if the output of $v'_i$ encodes $x_i$ with values less than $y^{(0)}$ and values greater than $y^{(1)}$ representing $0$ and $1$ respectively for each $i$\footnote{This time we can use the same values of $y^{(0)}$ and $y^{(1)}$ for all $v'_i$ because we just need them to be between whatever the vertex encodes $0$ as and whatever it encodes $1$ as for all vertices.}, then the output of $v''_j$ encodes $h_j(x_1,...,x_m)$ for each $j$ with $-2$ and $2$ encoding $0$ and $1$ respectively.
\end{enumerate}
\end{lemma}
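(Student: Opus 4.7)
The plan is to repeat the construction of Lemma \ref{em1}, now exploiting the flat saturation region $\{|x|\ge 3/2\}$ of $f^\star$ (on which $f^\star = \pm 2$ and $(f^\star)' = 0$) to absorb the $1/12$ bound on cumulative edge noise. I would add one new vertex per gate of a circuit computing $h$: NOT via an edge of weight $-1$, AND via two weight-$+1$ edges from its operands plus a $-2$ edge from the constant vertex, OR analogously with a $+2$ constant edge. For a gate whose operand is one of the $v'_i$, which uses the $y^{(0)}/y^{(1)}$ encoding rather than $\pm 2$, I would insert a translation vertex with an edge of weight $w = \pm 4/(y^{(1)}-y^{(0)})$ from $v'_i$ and a compensating constant edge, chosen so that the intended input is $+2$ at $y_{v'_i}=y^{(1)}$ and $-2$ at $y_{v'_i}=y^{(0)}$, and hence strictly exceeds $2$ in absolute value in every admissible case. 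Between new vertices the $\pm 2$ encoding is used throughout.

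By induction along a topological ordering of the new vertices, each such vertex receives a noise-free input of absolute value at least $2$ (namely $\pm 2$ or $\pm 6$ in the AND/OR/NOT cases). The noise perturbation to this input is a sum of (noise on edge) $\times$ (source output) terms; using the bound $1/12$ per edge together with source outputs bounded by $1$ at the constant vertex, exactly $\pm 2$ at other new vertices (by the induction), and $|y_{v'_i}|\le 2$ at the $v'_i$ (because in the ambient net all non-data vertices go through $f^\star$ and data input vertices are in $\{-1,+1\}$), the perturbation is at most $5/12$ at an AND/OR vertex, $1/6$ at a NOT, and $1/4$ at a translation vertex. In every case the perturbed input still satisfies $|x| \ge 2 - 5/12 = 19/12 > 3/2$, so $f^\star$ evaluates to exactly $\pm 2$ with derivative $0$, closing the induction.

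Both conclusions of the lemma then follow from the chain rule. For any edge $(u,v)$ with $v$ a new vertex, $\partial L/\partial w_{u,v}$ carries the vanishing factor $(f^\star)'(\text{input to }v) = 0$, so SGD never updates $w_{u,v}$. Likewise, any path contributing to the gradient at an edge into a $v'_i$ that passes through a new vertex picks up such a vanishing factor and hence contributes zero. The correctness of the $v''_j$ as the $\pm 2$ encoding of $h_j(x_1,\dots,x_m)$ is the inductive statement we have already verified.

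The one subtle point compared with Lemma \ref{em1} is the translation vertex: the lemma's hypothesis only gives $y_{v'_i}\notin (y^{(0)},y^{(1)})$, with no quantitative gap from the thresholds. What saves us is that the noise-free translation input still strictly exceeds $2$ in absolute value, so combined with the $\le 1/4$ noise perturbation (which uses only the ambient bound $|y_{v'_i}|\le 2$, not any gap from the thresholds) the perturbed input stays in the saturated region. The constant $1/12$ in the hypothesis is calibrated precisely so that the worst-case perturbation $5/12$ stays below the safety margin $2 - 3/2 = 1/2$; any smaller constant would work as well after the same bookkeeping.
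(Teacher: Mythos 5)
Your proof matches the paper's almost gate-for-gate: same NOT/AND/OR gadgets with $\pm 1$ and $\pm 2$ weights, same explicit translation vertices with edge weight $4/(y^{(1)}-y^{(0)})$ from $v'_i$ plus a compensating constant edge, and the same key observation that a noise-free input of absolute value at least $2$ together with a perturbation bounded away from $1/2$ keeps every new vertex in the flat part of $f^\star$ so backpropagation carries a zero factor. Your per-type noise accounting ($5/12$ at fanin-$3$ gates, etc.) is slightly sharper than the paper's uniform ``$1/6$ times in-degree'' bound, but this is cosmetic; the argument and conclusions are the same.
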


\begin{proof}
In order to do this, we will add one new vertex for each gate and each input in a circuit that computes $h$. When the new vertices are used to compute $h$, we want each vertex to output $2$ if the corresponding gate or input outputs a $1$ and $-2$ if the corresponding gate or input outputs a $0$. In order to do that, we need the vertex to receive an input of at least $3/2$ if the corresponding gate outputs a $1$ and an input of at most $-3/2$ if the corresponding gate outputs a $0$. No vertex can ever give an output with an absolute value greater than $2$, and by assumption none of the edges leading to the new vertices will have their weights changed by $1/12$ or more by the noise. As such, any noise terms added to the weights of edges leading to a new vertex will alter its input by at most $1/6$ of its in-degree. So, as long as its input without these noise terms has the desired sign and an absolute value of at least $3/2$ plus $1/6$ of its in-degree, it will give the desired output.

In order to make one new vertex compute the NOT of another new vertex, it suffices to have an edge of weight $-1$ to the vertex computing the NOT and no other edges to that vertex. We can compute an AND of two new vertices by having a vertex with two edges of weight $1$ from these vertices and an edge of weight $-2$ from the constant vertex. Similarly, we can compute an OR of two new vertices by having a vertex with two edges of weight $1$ from these vertices and an edge of weight $2$ from the constant vertex. For each $i$, in order to make a new vertex corresponding to the $i$th input, we add a vertex and give it an edge of weight $4/(y^{(1)}-y^{(0)})$ from the associated $v'_i$ and an edge of weight $-(2y^{(1)}+2y^{(0)})/(y^{(1)}-y^{(0)})$ from the constant vertex. These provide an overall input of at least $2$ to the new vertex if $v'_i$ has an output greater than $y^{(1)}$ and an input of at most $-2$ if $v'_i$ has an output less than $y^{(0)}$.

This ensures that if the outputs of the $v'_i$ encode binary values $x_1,...,x_m$ appropriately, then each of the new vertices will output the value corresponding to the output of the appropriate gate or input. So, these vertices compute $h(x_1,...,x_m)$ correctly. Furthermore, since the input to each of these vertices is outside of $(-3/2,3/2)$, the derivatives of their activation functions with respect to their inputs are all $0$. As such, the derivative of the loss function with respect to any of the edges leading to them is always $0$, and paths through them do not contribute to changes in the weights of edges leading to the $v'_i$.
\end{proof}

Now that we know that we can make the memory component and computation component work, it is time to put the pieces together. We plan to have the net simply memorize each sample it receives until it has enough information to compute the function. More precisely, if there is an algorithm that needs $T$ samples to learn functions from a given distribution, our net will have $2nT$ copies of $M'$ corresponding to every combination of a timestep $1\le t\le T$, an input bit, and a value for said bit. Then, in step $t$ it will set the copies of $M'$ corresponding to the inputs it received in that time step. That will allow the computation component to determine what the current time step is, and what the inputs and labels were in all previous times steps by checking the values of the copies of $v_2$ and $v_r$. That will allow it to either determine which copies of $M'$ to set next, or attempt to compute the function on the current input and return it. This design works in the following sense.

\begin{lemma}
For each $n>0$, let $t_n$ be a positive integer such that $t_n=\omega(1)$ and $t_n=O(n^c)$ for some constant $c$. Also, let $h_n:\{0,1\}^{(n+1)t_n+n}\rightarrow\{0,1\}$ be a function that can be computed in time polynomial in $n$. Then there exists a polynomial sized neural net $(G_n, f)$ such that the following holds. Let $\gamma=2^{716/3}\cdot 3^{24}$, $\delta\in [-1/n^2t_n,1/n^2t_n]^{t_n\times |E(G_n)|}$, $x^{(i)}\in\{0,1\}^{n}$ for all $0\le i\le t_n$, and $y^{(i)}\in\{0,1\}$ for all $0\le i<t_n$. Then if we use perturbed stochastic gradient descent with noise $\delta$, loss function $L(x)=x^2$, and learning rate $\gamma$ to train $(G_n,f)$ on $(2x^{(i)}-1,2y^{(i)}-1)$ for $0\le i<t_n$ and then run the resulting net on $2x_{t,n}-1$, we will get an output within $1/2$ of $2h\left(x^{(0)},y^{(0)},x^{(1)},y^{(1)},...,x^{(t_n)}\right)-1$ with probability $1-o(1)$. 
\end{lemma}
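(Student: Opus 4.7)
The plan is to construct $(G_n,f^\star)$ as a noise-tolerant counterpart of the noiseless emulator, replacing each memory gadget $M_s$ by the noise-robust unit $M'$ and building the combinational part via Lemma~\ref{noisy_em1}. For each training step $t\in\{0,\dots,t_n{-}1\}$ the net will reserve $n+1$ copies of $M'$: a ``label cell'' $C_t$ together with ``input cells'' $I_{t,1},\dots,I_{t,n}$; all copies of $v_0$ are identified with the net's constant vertex and all copies of $v_5$ are identified with the single output vertex. A polynomial-size computation subnet takes as inputs the $n$ data vertices and the $v_2, v_r$ readouts of every cell, and emits (a) control signals $v_c, v_c'\in\{0,2\}$ to each cell and (b) an output-shaping signal fed into $v_5$.

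At training step $t^\star$ (detected from the readouts by noting which $C_{t'}$ are still in their initial state), the computation subnet will drive $v_c=v_c'=0$ exactly on the cells $C_{t^\star}$ and those $I_{t^\star,j}$ with $x_j^{(t^\star)}=1$, will place each cell with $t'>t^\star$ into the ``pre-flip'' dormant regime $(v_c,v_c')=(2,0)$, and will place each cell with $t'<t^\star$ into the ``post-flip'' regime $(v_c,v_c')=(0,2)$, matching the two quiescent cases of Lemma~\ref{memLem2}. The separation between readout values pre- and post-flip ($y_{v_2}\in[2^{-25}3^{-3/2},2^{-23}3^{-3/2}]$ versus $[2^{-13}3^{-3/2},2^{-11}3^{-3/2}]$, and $y_{v_r}\in\{\pm 2\}$) is wide enough that Lemma~\ref{noisy_em1} applies to the computation subnet, so its internal gates output $\pm 2$ and its edges receive zero gradient.

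I would then proceed by induction on $t$ with the invariant: at the start of step $t$, every cell is in one of the three states (initial, flipped-up, or flipped-down) predicted by Lemma~\ref{memLem2}, all secondary distortions are at most $O(1/n^2)$, and the computation subnet realises its intended Boolean function. The cumulative noise on any single edge is bounded by $t_n\cdot\frac{1}{n^2 t_n}=1/n^2$, which for all sufficiently large $n$ lies below both the tolerance $\e\le 2^{-134}3^{-11}$ of Lemma~\ref{memLem2} and the tolerance $1/12$ of Lemma~\ref{noisy_em1}; each inductive step then follows by applying Lemma~\ref{memLem2} independently to each $M'$ copy, taking $t_0$ to be the designated flip step of that cell (or $t_0>T$ if the cell is never flipped). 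On the evaluation step $t_n$, every $C_{t'}$ is flipped, which the computation subnet will detect; it will then evaluate the polynomial-size Boolean circuit for $h_n$ on the decoded tuples $(x^{(0)},y^{(0)},\dots,x^{(t_n-1)},y^{(t_n-1)})$ together with the current input $x^{(t_n)}$ and route the $\{-2,+2\}$-encoded answer through a scaling edge to $v_5$.

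The principal obstacle is designing the output-shaping subnet. Lemma~\ref{memLem2} demands that $v_5$ produce a value in $[\tfrac{1}{2}-\e',\tfrac{1}{2}+\e']$ on every training step, yet each flipping cell contributes only $\approx 2^{-242}3^{-29}$ to the input of $v_5$, which is negligible. The remedy will be to supplement $v_5$ with a dedicated bank of edges emanating from vertices of the computation subnet; these vertices output $\pm 2$ in combinations that depend on the number of cells flipping on the current step (which the circuit computes from $x^{(t^\star)}$), chosen so that the total input to $v_5$ lands at the preimage of $1/2$ under $f^\star$ during training and at the preimage of $\pm 1$ during evaluation. Verifying that these auxiliary edges are themselves backprop-invariant (via Lemma~\ref{noisy_em1}) and that their contributions stay compatible with the noise budgets above is the bulk of the remaining bookkeeping. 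Since the full construction is deterministic given $\delta$, the stated ``$1-o(1)$'' success probability reduces to ``for all sufficiently large $n$''.
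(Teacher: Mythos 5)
Your overall architecture (noise-robust $M'$ cells read through their $v_2,v_r$ ports, a backprop-proof computation subnet via Lemma~\ref{noisy_em1}, induction showing the conditions of Lemma~\ref{memLem2} are preserved, a final evaluation step) matches the paper's in spirit, but your memory layout forces you into an output-balancing problem that you correctly flag as ``the principal obstacle'' and then do not actually solve, and the repair you sketch does not work.

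The trouble starts with the cell layout. You allot $n+1$ cells per step and flip $I_{t,j}$ only when $x_j^{(t)}=1$, so the number of flipping cells is the Hamming weight of $x^{(t)}$, which varies from step to step. Since each flipping cell injects roughly $2^{-242}3^{-29}$ into the output vertex, the contribution varies, and you need a step-dependent correction to hold the net output near $1/2$. You propose a ``dedicated bank'' of edges from computation-subnet vertices into $v_5$ that output $\pm 2$ in Hamming-weight-dependent combinations, and invoke Lemma~\ref{noisy_em1} to claim these edges are backprop-invariant. That invocation is wrong: Lemma~\ref{noisy_em1} only guarantees that the derivative of the loss with respect to edges \emph{leading to} the new (computation) vertices vanishes; it says nothing about edges \emph{from} those vertices to $v_5$. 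For such an edge, the derivative of $v_5$'s input with respect to the edge weight equals the vertex's output ($\pm 2$), and the output vertex sits in the $x^3$ region, so the loss gradient on that edge is nonzero on every training step. Over $t_n=\omega(1)$ steps those weights drift by an unbounded amount, breaking the invariant that the output stays in $[1/2-\epsilon',1/2+\epsilon']$ and hence the hypotheses of Lemma~\ref{memLem2}.

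There is also a smaller slip: you state that cells with $t'<t^\star$ are put in the ``post-flip'' regime $(v_c,v'_c)=(0,2)$, but the $I_{t',j}$ with $x_j^{(t')}=0$ were never flipped; their path weights were never doubled, so the $(0,2)$ cancellation is wrong by a factor of $\approx 2^{40}$, producing a large input to $v_4$ and violating Lemma~\ref{memLem2}'s conditions. The control signal must be keyed to whether the cell has actually flipped (readable from $v_2$), not merely to $t'<t^\star$.

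The paper dissolves both problems by a different layout: for each $(t,i)$ it keeps a \emph{pair} of cells $M'_{(t,i,0)},M'_{(t,i,1)}$ and always flips exactly the one indexed by $z=x_i^{(t)}$, so exactly $n$ cells flip per step and the memory contribution to $v_5$ is the same constant $n\cdot 2^{-242}3^{-29}$ on every training step. It then introduces one output-control vertex $v_{oc[t']}$ per step with an edge of weight $\sqrt[3]{4}/4-2^{-243}3^{-29}n$ to the output vertex; $v_{oc[t']}$ outputs $2$ only on step $t'$ and $0$ otherwise, so the derivative of the loss with respect to each such output edge is zero on every step but one, bounding the drift of those weights by a single gradient step rather than $t_n$ of them. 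Both ideas are load-bearing and are missing from your construction, so the proposal as written would not go through.
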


\begin{proof}
We construct $G_n$ as follows. We start with a graph consisting of $n$ input vertices. Then, we take $2n t_n$ copies of $M'$, merge all of the copies of $v_0$ to make a constant vertex, and merge all of the copies of $v_5$ to make an output vertex. We assign each of these copies a distinct label of the form $M'_{t', i, z}$, where $0\le t'<t_n$, $0< i\le n$, and $z\in\{0,1\}$. We also add edges of weight $1$ from the constant vertex to all of the control vertices. Next, for each $0\le t'< t_n$, we add an output control vertex $v_{oc[t']}$. For each such $t'$, we add an edge of weight $1$ from the constant vertex to $v_{oc[t']}$ and an edge of weight $\sqrt[3]{4}/4-2^{-243}3^{-29}n$ from $v_{oc[t']}$ to the output vertex. Then, we add a final output control vertex $v_{oc[t_n]}$. We do not add an edge from the constant vertex to $v_{oc[t_n]}$, and the edge from $v_{oc[t_n]}$ to the output vertex has weight $49/100$. 

Finally, we use the construction from the previous lemma to build a computation component. This component will get input from all of the input vertices and every copy of $v_r$ and $v_2$ in any of the copies of $T'$, interpreting anything less than $2^{-21}3^{-3/2}$ as a $0$ and anything more than $2^{-15}3^{-3/2}$ as a $1$. This should allow it to read the input bits, and determine which of the copies of $M'$ have been set and what the sample outputs were when they were set. For each control vertex from a copy of $T'$ and each of the first $n$ output control vertices, the computation component will contain a vertex with an edge of weight $1/2$ leading to that vertex. It will contain two vertices with edges of weight $1/2$ leading to $v_{oc[t_n]}$. This should allow it to set each control vertex or output control vertex to $0$ or $2$, and to set $v_{oc[t_n]}$ to $-2$, $0$, or $2$. 

The computation component will be designed so that in each time step it will do the following, assuming that its edge weights have not changed too much and the outputs of the copies of $v_r$ and $v_2$ are in the ranges given by lemma \ref{memLem2}. First, it will determine the smallest $0\le t\le t_n$ such that $M'_{(t',i,z)}$ has not been set for any $t'\ge t$, $0<i\le n$, and $z\in\{0,1\}$. That should equal the current timestep. If $t< t_n$, then it will do the following. For each $0<i\le n$, it will use the control vertices to set $M'_{(t,i,[x'_i+1]/2)}$, where $x'_i$ is the value it read from the $i$th input vertex. It will keep the rest of the copies of $M'$ the same. It will also attempt to make $v_{oc[t]}$ output $2$ and the other output control vertices output $0$. If $t=t_n$, then for each $0\le t'<t$ and $1\le i\le n$, the computation component will set $x^{\star(t')}_i$ to $1$ if $M'_{(t',i,1)}$ has been set, and $0$ otherwise. It will set $y^{\star(t')}$ to $1$ if either $M'_{(t',1,0)}$ or $M'_{(t',1,1)}$ has been set in a timestep when the sample label was $1$ and $0$ otherwise. It will also let $x^{\star(t_n)}$ be the values of $x^{(t_n)}$ inferred from the input. Then it will attempt to make $v_{oc[t_n]}$ output $4h(x^{\star(0)},y^{\star(0)},...,x^{\star(t_n)})-2$ and the other output control vertices output $0$. It will not set any of the copies of $M'$ in this case.

In order to prove that this works, we start by setting
$\epsilon=\min(2^{-134}3^{-11},2^{77}3^{15}/n)$ and $\epsilon'=2^{-123}3^{-11}$. The absolute value of the noise term applied to every edge in every time step is at most $1/n^2 t_n$, so the sums of the absolute values of the noise terms applied to every edge over the course of the algorithm are at most $\epsilon$ if $n>2^{67}3^{6}$. For the rest of the proof, assume that this holds.

Now, we claim that for every $0\le t'<t_n$, all of the following hold: \begin{enumerate}

\item Every copy of $v_r$ or $v_2$ in the memory component outputs a value that is not in $[2^{-21}3^{-3/2},2^{-15}3^{-3/2}]$ on timestep $t'$.

\item For every copy of $M'$, there exists $t_0$ such that its copies of $v_c$ and $v'_c$ take on values satisfying lemma \ref{memLem2} for timesteps $0$ through $t'$.

\item The net gives an output in $[1/2-\epsilon',1/2+\epsilon']$ on timestep $t'$.

\item The weight of every edge leading to an output control vertex ends step $t'$ with a weight that is within $\epsilon$ of its original weight.

\item For every $t''>t'$, the weight of the edge from $v_{oc[t'']}$ to the output vertex has a weight within $\epsilon$ of its original weight at the end of step $t'$.
\end{enumerate}

In order to prove this, we use strong induction on $t'$. So, let $0\le t'<t_n$, and assume that this holds for all $t''<t'$. By assumption, the conditions of lemma \ref{memLem2} were satisfied for every copy of $M'$ in the first $t'$ timesteps. So, the outputs of the copies of $v_r$ and $v_2$ encode information about their copies of $M'$ in the manner given by this lemma. In particular, that means that their outputs are not in $[2^{-21}3^{-3/2},2^{-15}3^{-3/2}]$ on timestep $t'$. By the previous lemma, the fact that this holds for timesteps $0$ through $t'$ means that the computation component will still be working properly on step $t'$, it will be able to interpret the inputs it receives correctly, and its output vertices will take on the desired values. The assumptions also imply that every copy of $v_c$ or $v'_c$ took on values of $0$ or $2$ in step $t''$ for every $t''<t'$. That means that the derivatives of the loss function with respect to the weights of the edges leading to these vertices was always $0$, so their weights at the start of step $t'$ were within $\epsilon$ of their initial weights. That means that the inputs to these copies will be in $[-4\epsilon,4\epsilon]$ for ones that are supposed to output $1$ and in $[2-4\epsilon,2+4\epsilon]$ for ones that are supposed to output $2$. Between this and the fact that the computation component is working correctly, we have that for each $(t'',i,z)$, the copies of $v_c$ and $v'_c$ in $M'_{(t'',i,z)}$ will have taken on values satisfying the conditions of lemma \ref{memLem2} in timesteps $0$ through $t'$ with $t_0$ set to $t''$ if $x_i^{(t'')}=z$ and $t_n+1$ otherwise.

Similarly, the fact that the weights of the edges leading to the output control vertices stay within $\epsilon$ of their original values for the first $t'-1$ steps implies that $v_{oc[t'']}$ outputs $2$ and all other output control vertices output $0$ on step $t''$ for all $t''\le t'$. That in turn implies that the derivatives of the loss function with respect to these weights were $0$ for the first $t'+1$ steps, and thus that their weights are still within $\epsilon$ of their original values at the end of step $t'$. Now, observe that there are exactly $n$ copies of $M'$ that get set in step $t'$, and each of them provide an input to the output vertex in $[2^{-242}3^{-29}-2^{-201}3^{-27}\epsilon,2^{-242}3^{-29}+2^{-201}3^{-27}\epsilon]$. Also, $v_{oc[t']}$ provides an input to the output in $[\sqrt[3]{4}/2-2^{-242}3^{-29}n-2\epsilon,\sqrt[3]{4}/2-2^{-242}3^{-29}n+2\epsilon]$ on step $t'$, and all other vertices with edges to the output vertex output $0$ in this time step. So, the total input to the output vertex is within $2^{-201}3^{-27}\epsilon n+2\epsilon\le \epsilon'/3$ of $\sqrt[3]{4}/2$. So, the net gives an output in $[1/2-\epsilon',1/2+\epsilon']$ on step $t'$, as desired. This also implies that the derivative of the loss function with respect to the weights of the edges from all output vertices except $v_{oc[t']}$ to the output vertex are $0$ on step $t'$. So, for every $t''>t'$, the weight of the edge from $v_{oc[t'']}$ to the output vertex is still within $\epsilon$ of its original value at the end of step $t'$. This completes the induction argument.

This means that on step $t_n$, all of the copies of $v_r$ and $v_c$ will still have outputs that encode whether or not they have been set and what the sample output was on the steps when they were set in the manner specified in lemma \ref{memLem2}, and that the computation component will still be working. So, the computation component will set $x^{\star (t')}=x^{(t')}$ and $y^{\star(t')}=y^{(t')}$ for each $t'<t_n$. It will also set $x^{\star(t_n)}=x^{(t_n)}$, and then it will compute $h\left(x^{(0)},y^{(0)},x^{(1)},y^{(1)},...,x^{(t_n)}\right)$ correctly. Call this expression $y'$. All edges leading to the output control and control vertices will still have weights within $\epsilon$ of their original values, so it will be able to make $v_{oc[t_n]}$ output $4y'-2$, all other output control vertices output $0$, and none of the copies of $M'$ provide a nonzero input to the output vertex. The output of $v_{oc[t_n]}$ is $0$ in all timesteps prior to $t_n$, so the weight of the edge leading from it to the output vertex at the start of step $t_n$ is within $\epsilon$ of its original value. So, the output vertex will receive a total input that is within $2\epsilon$ of $\frac{49}{50}(2y'-1)$, and give an output that is within $6\epsilon$ of $\frac{49^3}{50^3}(2y'-1)$. That is within $1/2$ of $2y'-1$, as desired.
\end{proof}

This allows us to prove that we can emulate an arbitrary algorithm by using the fact that the output of any efficient algorithm can be expressed as an efficiently computable function of its inputs and some random bits. More formally, we have the following (re-statement of Theorem \ref{thm_univ2}).

\begin{theorem}
For each $n>0$, let $P_\X$ be a probability measure on $\{0,1\}^n$, and $P_{\F}$ be a probability measure on the set of functions from $\{0,1\}^n$ to $\{0,1\}$. Also, let $B_{1/2}$ be the uniform distribution on $\{0,1\}$, $t_n$ be polynomial in $n$, and $\delta\in [-1/n^2t_n,1/n^2t_n]^{t_n\times |E(G_n)|}$, $x^{(i)}\in\{0,1\}^{n}$. Next, define $\alpha_n$ such that there is some algorithm that takes $t_n$ samples $(x_i,F(x_i))$ where the $x_i$ are independently drawn from $P_{\X}$ and $F\sim P_{\F}$, runs in polynomial time, and learns $(P_\F,P_\X)$ with accuracy $\alpha$. Then there exists $\gamma>0$, and a polynomial-sized neural net $(G_n,f)$ such that using perturbed stochastic gradient descent with noise $\delta$, learning rate $\gamma$, and loss function $L(x)=x^2$ to train $(G_n,f)$ on $t_n$ samples $((2x_i-1,2r_i-1), 2F(x_i)-1)$ where $(x_i,r_i)\sim P_\X\times B_{1/2}$ learns $(P_\F,P_\X)$ with accuracy $\alpha-o(1)$.
\end{theorem}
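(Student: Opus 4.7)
The plan is to reduce Theorem \ref{thm_univ2} to the noisy emulation lemma proved just above (the one that shows: for any polynomial-time computable $h:\{0,1\}^{(n+1)t_n+n}\to\{0,1\}$, there exists a polynomial-sized net $(G_n,f)$ that, under perturbed SGD with noise $\delta\in[-1/n^2t_n,1/n^2t_n]^{t_n\times|E(G_n)|}$, learning rate $\gamma=2^{716/3}3^{24}$, and loss $L(x)=x^2$ on samples $(2x^{(i)}-1,2y^{(i)}-1)$, outputs within $1/2$ of $2h(x^{(0)},y^{(0)},\dots,x^{(t_n)})-1$ with probability $1-o(1)$). The main idea is to encode the entire execution of the given learning algorithm $A$ into a single polynomial-time function $h$.

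Specifically, let $A$ be the efficient algorithm that learns $(P_\F,P_\X)$ with accuracy $\alpha$ from $t_n$ samples. Since $A$ runs in polynomial time, it uses only a polynomial number of random bits; by augmenting the input with one uniformly random bit $r_i$ per sample, the tuple $(r^{(0)},\dots,r^{(t_n)})$ furnishes an i.i.d.\ random tape of length $t_n+1$, and we may pad $t_n$ by a constant factor in advance if $A$ needs slightly more randomness (this only affects the implicit constants). Regard the input to the net on step $i$ as the pair $(x^{(i)},r^{(i)})\in\{0,1\}^{n+1}$, and define
\[
h\bigl((x^{(0)},r^{(0)}),y^{(0)},\dots,(x^{(t_n-1)},r^{(t_n-1)}),y^{(t_n-1)},(x^{(t_n)},r^{(t_n)})\bigr)
\]
to be the bit produced by simulating $A$ with random tape $(r^{(0)},\dots,r^{(t_n)})$, training it on the $t_n$ labelled pairs $(x^{(i)},y^{(i)})$, and then querying it on the test point $x^{(t_n)}$. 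Because $A$ is polynomial-time, $h$ is a polynomial-time computable Boolean function on $(n+1)t_n+(n+1)$ bits, so the hypotheses of the noisy emulation lemma are satisfied (replacing $n$ there by $n+1$).

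Invoking the lemma, there is a polynomial-sized neural net $(G_n,f)$ and a $\gamma=\Theta(1)$ such that perturbed SGD with the prescribed noise budget, learning rate, and loss, when fed the samples $\bigl((2x_i-1,2r_i-1),2F(x_i)-1\bigr)$ for $i=0,\dots,t_n-1$ and then evaluated on $(2x_{t_n}-1,2r_{t_n}-1)$, produces an output within $1/2$ of $2h(\cdot)-1\in\{-1,+1\}$ with probability $1-o(1)$ over the noise and the samples. Thresholding that output at $0$ therefore agrees with $h$ with probability $1-o(1)$. Finally, under the joint law $(x_i,r_i,F)\sim P_\X\times B_{1/2}\times P_\F$, the value $h(\cdot)$ equals the output of $A$ trained on fresh i.i.d.\ samples from the correct distribution with a genuinely independent random tape, which by hypothesis matches $F(x_{t_n})$ with probability at least $\alpha$. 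Combining the two event probabilities via a union bound gives overall accuracy $\alpha-o(1)$, as required.

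The main obstacles are entirely absorbed into the preceding lemmas: (i) the noise-tolerant memory cell $M'$ must survive both the inverse-polynomial additive perturbations and the nearly-$1/2$ output that the net always emits during training, which is handled by Lemma \ref{memLem2}; (ii) the computation subnet must remain frozen under the same noise, handled by Lemma \ref{noisy_em1} thanks to the flat activation $f^\star$ and the fact that all internal vertices receive inputs of absolute value at least $2$; and (iii) the write-once nature of each $M'$ cell forces the memorize-then-compute architecture, which is exactly what the emulation lemma already realizes by allocating $2nt_n$ cells indexed by (step, coordinate, value). Once $h$ is defined as above, no further delicate analysis is needed and the theorem follows.
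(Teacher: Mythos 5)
Your proof takes the same route as the paper's: encode the entire execution of the learning algorithm $A$ (samples, random tape, test point) into a single poly-time Boolean function $h$, invoke the noisy emulation lemma to get a net whose thresholded output equals $h$ on the given sample sequence, and then observe that under the product law the value of $h$ agrees with $F(x_{t_n})$ with probability at least $\alpha$, so a union bound with the $1-o(1)$ emulation guarantee gives accuracy $\alpha-o(1)$. The one place the paper is more careful is in your claim that ``Because $A$ is polynomial-time, $h$ is a polynomial-time computable Boolean function'': since a randomized algorithm's running time is itself a random variable (polynomial only with high probability, not worst-case), the paper replaces $h$ by $h'$ which truncates the simulation after a fixed polynomial step count $t^\star_n$ and outputs $0$ otherwise — this is what guarantees the hypotheses of the emulation lemma hold for \emph{every} input, at the cost of an extra $o(1)$ loss in accuracy that is already absorbed into the claim.
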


\begin{proof}
Let $A$ be an efficient algorithm that learns $(P_\F,P_\X)$ with accuracy $\alpha$, and $t_n$ be a polynomial in $n$ such that $A$ uses fewer than $t_n$ samples and random bits with probability $1-o(1)$. Next, define $h_n\{0,1\}^{(n+1)t_n+t_n+n\rightarrow \{0,1\}}$ such that the algorithm outputs $h_n(z_1,...,z_{t_n},b_1,...,b_{t_n},x')$ if it receives samples $z_1,...,z_{t_n}$, random bits $b_1,...,b_{t_n}$ and final input $x'$. There exists a polynomial $t^\star_n$ such that $A$ computes $h_n(z_1,...,z_{t_n},b_1,...,b_{t_n},x')$ in $t^\star_n$ or fewer steps with probability $1-o(1)$ given samples $z_1,...,z_{t_n}$ generated by a function drawn from $(P_\F,P_\X)$, random bits $b_1,...,b_{t_n}$, and $x'\sim P_\X$. So, let $h'_n(z_1,...,z_{t_n},b_1,...,b_{t_n},x')$ be $h_n(z_1,...,z_{t_n},b_1,...,b_{t_n},x')$ if $A$ computes it in $t^\star_n$ or fewer steps and $0$ otherwise. $h'_n$ can always be computed in polynomial time, so by the previous lemma there exists a polynomial sized neural net $(G_n,f)$ that gives an output within $1/2$ of $2h'_n((x_1,y_1),...,(x_{t_n},y_{t_n}),b_1,...,b_{t_n},x')-1$ with probability $1-o(1)$ when it is trained using noisy SGD with noise $\Delta$, learning rate $2^{716/3}3^{24}$, and loss function $L$ on $((2x_i-1,2b_i-1),2F(x_i)-1)$ and then run on $2x'-1$. When the $(x_i,y_i)$ are generated by a function drawn from $(P_\F,P_\X)$, and $x'\sim P_\X$, using $A$ to learn the function and then compute it on $x'$ yields $h'_n(z_1,...,z_{t_n},b_1,...,b_{t_n},x')$ with probability $1-o(1)$. Therefore, training this net with noisy SGD in the manner described learns $(P_\F,P_\X)$ with accuracy $\alpha-o(1)$.
\end{proof}

\begin{remark}Like in the noise free case it would be possible to emulate a metaalgorithm that learns any function that can be learned from $n^c$ samples in $n^c$ time instead of an algorithm for a specific distribution. However, unlike in the noise free case there is no easy way to adapt the metaalgorithm to cases where we do not have an upper bound on the number of samples needed.
\end{remark}

\begin{remark}
Throughout the learning process used by the last theorem and lemma, every control vertex, output control vertex, and vertex in the computation component always takes on a value where the activation function has derivative $0$. As such, the weights of any edges leading to these vertices stay within $\epsilon$ of their original values. Also, the conditions of lemma \ref{memLem2} are satisfied, so none of the edge weights in the memory component go above $\epsilon'$ more than double their original values. That leaves the edges from the output control vertices to the output vertex. Each output vertex only takes on a nonzero value once, and on that step it has a value of $2$. The derivative of the loss function with respect to the input to the output vertex is at most $12$, so each such edge weight changes by at most $24\gamma+\epsilon$ over the course of the algorithm. So, none of the  edge weights go above a constant (i.e., $2^{242}3^{25}$) during the training process.
\end{remark}

\subsection{Additional comments on the emulation}
The previous result uses choices of a neural net and SGD parameters that are in many ways unreasonable. This choice of activation function is not used in practice, many of the vertices do not have edges from the constant vertex, and the learning rate is deliberately chosen to be so high that it keeps overshooting the minima. If one wanted to do something normal with a neural net trained by SGD one is unlikely to do it that way, and using it to emulate an algorithm is much less efficient than just running the algorithm directly, so this is unlikely to come up.

In order to emulate a learning algorithm with a more reasonable neural net and choice of parameters, we will need to use the following ideas in addition to the ideas from the previous result. First of all, we can control which edges tend to have their weights change significantly by giving edges that we want to change a very low starting weight and then putting high weight edges after them to increase the derivative of the output with respect to them. Secondly, rather than viewing the algorithm we are trying to emulate as a fixed circuit, we will view it as a series of circuits that each compute a new output and new memory values from the previous memory values and the current inputs. Thirdly, a lower learning rate and tighter restrictions on how quickly the network can change prevent us from setting memory values in one step. Instead, we initialize the memory values to a local maximum so that once we perturb them, even slightly, they will continue to move in that direction until they take on the final value. Fourth, in most steps the network will not try to learn anything, so that with high probability all memory values that were set in one step will have enough time to stabilize before the algorithm tries to adjust anything else. Finally, once we have gotten to the point that the algorithm is ready to approximate the function, its estimates will be connected to the output vertex, and the output will gradually become more influenced by it over time as a basic consequence of SGD.

\bibliographystyle{amsalpha}
\bibliography{deep}

\end{document}